\title{\LARGE Learning under Quantization for High-Dimensional Linear Regression}
\author{
Dechen Zhang\thanks{Institute of Data Science, The University of Hong Kong. Email: \email{dechenzhang}{connect.hku.hk}}
\qquad Junwei Su\thanks{School of Computing \& Data Science, The University of Hong Kong. Email: \email{junweisu}{connect.hku.hk}}
\qquad Difan Zou\thanks{School of Computing \& Data Science and Institute of Data Science, The University of Hong Kong. Email: \email{dzou}{hku.hk} }
}
\date{\small{\today}}
\begin{document}


\maketitle

\begin{abstract}
    The use of low-bit quantization has emerged as an indispensable technique for enabling the efficient training of large-scale models. Despite its widespread empirical success, a rigorous theoretical understanding of its impact on learning performance remains notably absent, even in the simplest linear regression setting. We present the first systematic theoretical study of this fundamental question, analyzing finite-step stochastic gradient descent (SGD) for high-dimensional linear regression under a comprehensive range of quantization targets: data, label, parameter, activation, and gradient. 
    Our novel analytical framework establishes precise algorithm-dependent and data-dependent excess risk bounds that characterize how different quantization affects learning: parameter, activation, and gradient quantization amplify noise during training; data quantization distorts the data spectrum and introduces additional approximation error. Crucially, we distinguish the effects of two quantization schemes: we prove that for additive quantization (with constant quantization steps), the noise amplification benefits from a suppression effect scaled by the batch size, while multiplicative quantization (with input-dependent quantization steps) largely preserves the spectral structure, thereby reducing the spectral distortion.
    Furthermore, under common polynomial-decay data spectra, we quantitatively compare the risks of multiplicative and additive quantization, drawing a parallel to the comparison between FP and integer quantization methods. Our theory provides a powerful lens to characterize how quantization shapes the learning dynamics of optimization algorithms, paving the way to further explore learning theory under practical hardware constraints.
\end{abstract}




\settocdepth{part}
\section{Introduction}
Quantization has garnered widespread attention as an essential technique for deploying large-scale deep learning models, particularly large language models (LLMs) \citep{lang2024comprehensive,shen2024exploring}. In line with this low-precision paradigm, a new frontier of research has emerged: quantization scaling laws, which seek to formalize the trade-offs between model size, dataset size, and computational bit-width. Seminal work by \citet{kumar2024scaling} treated bit-width as a discrete measure of precision. This was extended by \citet{sun2025scaling}, who established a more comprehensive scaling law for floating-point (FP) quantization \citep{kuzmin2022fp8} by separately accounting for the distinct roles of exponent and mantissa bits. Going further, \citet{chen2025scaling} proposed a unified scaling law that models quantized error as a function of model size, training data volume, and quantization group size. Collectively, these studies provide rigorous understandings to guide the joint allocation of fixed compute or memory budgets across data size, model size and precision (quantization bit-width).

The empirical understanding of low-precision training has advanced rapidly, yet a significant theory-practice gap persists. Theoretical research remains predominantly restricted to analyzing \textit{convergence guarantees on the training loss} for quantized optimizers \citep{nadiradze2021asynchronous,9834296,markov2023quantized,xin2025global}. For example, \citet{markov2023quantized} derived convergence guarantee for the communication-efficient variant of Fully-Shared Data-Parallel distributed training under parameter and gradient quantization.
While these studies offer crucial insights into optimization, they overlook a more fundamental question: \textit{how does quantization affect the model's learning performance?} Specifically, a rigorous characterization of the interplay between quantization, model dimension, dataset size, and their joint effect on the \textit{population risk} remains largely unexplored. A notable step in this direction is \citet{zhang2022quantization}, which analyzed the generalization of quantized two-layer networks through the lens of neural tangent kernel (NTK). However, their work is limited in three key aspects: it only considers parameter quantization; its analysis is confined to the lazy-training regime; and it fails to provide explicit generalization bounds in terms of core parameters like sample size, dimension, and quantization error. These limitations restrict its applicability to modern low-precision training practices. 

Motivated by recent theoretical advances in scaling laws \citep{lin2024scaling,lin2025improved,li2025functional}, we analyze the learning performance of quantized training using a high-dimensional linear model. This model serves as a powerful and well-established testbed for isolating phenomena like learning rate and batch size effects \citep{kunstner2025scaling,luo2025multi,zhang2024does,xiao2024rethinking, ren2025emergence,bordelon2025feature}. Its simplicity provides the analytical flexibility necessary to derive precise relationships between generalization error and critical parameters such as dimension, sample size, and quantization error (or bit-width).


\textbf{Our setting.} In this paper, we consider SGD for linear regression under quantization. We first iterate the standard linear regression problem as follows:
\begin{equation*}
    \min_{\mathbf{w}}L(\mathbf{w}),\ \text{where} \ L(\mathbf{w})=\frac{1}{2}\mathbb{E}_{\mathbf{x},y}\left[\left(y-\langle \mathbf{w},\mathbf{x} \rangle \right)^2\right].
\end{equation*}
Here $\mathbf{x}\in \mathcal{H}$ is the feature vector, $\mathcal{H}$ is some (finite $d$-dimensional or countably infinite dimensional) Hilbert space, $y\in \mathbb{R}$ is the response, $\mathcal{D}$ is an unknown distribution over $\mathbf{x}$ and $y$, and $\mathbf{w}\in \mathcal{H}$ is the weight vector to be optimized. We consider the constant step size SGD under quantization: at each iteration $t$, an i.i.d. batch (with batch size $B$) of examples $(\mathbf{X}_t, \mathbf{y}_t)\in \mathbb{R}^{B\times d} \times \mathbb{R}^{B}$ is observed, and the weight $\mathbf{w}_t \in \mathbb{R}^d$ is updated according to following quantized SGD algorithm.
\begin{align}\label{eq:SGD_quantized}
    \mathbf{w}_t=\mathbf{w}_{t-1}+\gamma \frac{1}{B} \mathcal{Q}_d(\mathbf{X}_t)^\top \mathcal{Q}_o\Big(\mathcal{Q}_l(\mathbf{y}_t)-\mathcal{Q}_a\big(\mathcal{Q}_d(\mathbf{X}_t)\mathcal{Q}_p(\mathbf{w}_{t-1})\big)\Big), \quad t=1,...,N, \tag{quantized SGD}
\end{align}
where $\gamma>0$ is a constant stepsize, $N$ is the number of sample batches observed, the master weights is initialized at $\mathbf{w}_0$, and $\mathcal{Q}_d, \mathcal{Q}_l, \mathcal{Q}_p, \mathcal{Q}_a, \mathcal{Q}_o$ are independent general quantization operations for data feature, label, model parameter, activation and output gradient respectively. Notably, for theoretical simplicity, we assume all matrix operations (e.g., addition and multiplication) are computed in full precision, with quantization applied subsequently to obtain low-precision values. Then, we consider the average iterate as the algorithm output, i.e., $\overline{\mathbf{w}}_N:=\frac{1}{N}\sum_{t=0}^{N-1}\mathbf{w}_t$. Without loss of generality, we assume the initial parameter is $\mathbf{w}_0=0$.

The goal of this work is to characterize the learning performance of the quantized SGD via evaluating the population risk $L(\overline{\mathbf{w}}_N)$, and more importantly, its relationship with the quantization error. Let $\mathbf w^*=\arg\min L(\mathbf w)$, we define the following excess risk as a surrogate of the population risk:
\begin{align}\label{eq:excess_risk}
\mathcal{E}(\overline{\mathbf{w}}_N) = L(\overline{\mathbf{w}}_N) - L(\mathbf w^*).\tag{\text{Excess Risk}}
\end{align}


\textbf{Our contributions.} 
We perform a novel theoretical study on the learnability of the quantized SGD algorithm for high-dimensional linear regression problems. Our contributions are summarized as follows:

\begin{itemize}[leftmargin=*]
\item We perform systematic analysis and establish a theoretical bound for the excess risk of quantized SGD. This bound is explicitly formulated as a function of the full eigen-spectrum of the quantized data feature covariance, sample size, and quantization errors (see Theorem \ref{Theorem 4} for details). Our results precisely reveal how quantization applied to different model components impacts learning performance: quantization of data distorts the spectrum of effective data covariance and introduces an additional approximation error; while the quantization of parameter, activation, and output gradient amplify noise throughout the training process on the quantized feature space.

\item We analyze two standard quantization error models: additive and multiplicative, which conceptually relate to the integer and FP quantization techniques. For additive quantization, our theoretical bounds indicate that the noise amplification stemming from activation and output gradient quantization diminishes as the batch size increases, whereas the spectrum of the effective data covariance is distorted by a constant noise floor (see Corollary \ref{theorem new 2.5} for details). Conversely, for multiplicative quantization, our results demonstrate that data quantization preserves the intrinsic spectral structure of the effective covariance, thereby reducing spectral distortion; however, the resulting noise amplification remains independent of the batch size (see Theorem \ref{Theorem 1} for details).


\item We further derive the conditions on the quantization errors such that the learning performance of the full-precision SGD can be maintained (in orders). Our results indicate that compared with multiplicative quantization, additive quantization necessitates stricter spectral constraints on data quantization but allows for more relaxed conditions on activation and output gradient quantization, benefiting from the batch-averaging effect (see Corollary \ref{coro:comparison} for details). By applying our excess risk bounds to polynomial decay spectrum, we show that multiplicative quantization is applicable even in high-dimensional settings, whereas additive quantization is not (see Corollary \ref{coro: case study} for details). These simplified theoretical results also draw implications for comparing integer and FP quantization, allowing us to identify the conditions under which each type is likely to yield superior performance.


\end{itemize}

\textbf{Notations.} For two positive-valued functions $f(x)$ and $g(x)$, we write $f(x) \lesssim g(x)$ or $f(x) \gtrsim g(x)$ if $f(x) \leq cg(x)$ or $f(x) \geq cg(x)$ holds for some absolute (if not otherwise specified) constant $c > 0$ respectively. We write $f(x) \eqsim g(x)$ if $f(x) \lesssim g(x) \lesssim f(x)$. For two vectors $\mathbf{u}$ and $\mathbf{v}$ in a Hilbert space, we denote their inner product by $\langle \mathbf{u},\mathbf{v}\rangle$ or $\mathbf{u}^\top\mathbf{v}$. For two matrices $\mathbf{A}$ and $\mathbf{B}$ of appropriate dimensions, we define their inner product by $\langle \mathbf{A},\mathbf{B}\rangle := \mathrm{tr}\left(\mathbf{A}^\top\mathbf{B}\right)$. We use $\|\cdot \|$ to denote the operator norm for matrices and $\ell_2$-norm for vectors. For a positive semi-definite (PSD) matrix $\mathbf{A}$ and a vector $\mathbf{v}$ of appropriate dimension, we write $\|\mathbf{v}\|_\mathbf{A}^2=\mathbf{v}^\top \mathbf{A}\mathbf{v}$.
\section{Related Works}
\textbf{High-dimensional linear regression via SGD.} 
Theoretical guarantees for the generalization property have garnered significant attention in machine learning and deep learning. Seminal work by \citet{bartlett2020benign,tsigler2023benign} derived nearly tight upper and lower excess risk bounds in linear (ridge) regression for general regularization schemes. With regards to the classical underparameterized regime, a large number of works studied the learnability of iterate averaged SGD in linear regression \citep{polyak1992acceleration,defossez2015averaged,bach2013non,dieuleveut2017harder,jain2018parallelizing,jain2017markov}. With regards to modern overparameterized setting, one-pass SGD in linear regression has also been extensively studied \citep{dieuleveut2015non,berthier2020tight,varre2021last,zou2023benign,wu2022last,wu2022power,zhang2024optimality}, providing a framework to characterize how the optimization algorithm affects the generalization performance for various data distributions. 
Another line of work analyzed the behavior of multi-pass SGD on a high-dimensional $\ell^2$-regularized least-squares problem, characterizing excess risk bounds \citep{lei2021generalization,zou2022risk} and the exact dynamics of excess risk \citep{paquette2024homogenization}. 
From a technical perspective, our work builds on the sharp finite-sample and dimension-free analysis of SGD developed by \citet{zou2023benign}. However, these works did not concern the practical quantization operations. It remains unclear how quantization error affects the learning behavior of SGD for linear regression.

\textbf{Theoretical analysis for quantization.}
As a powerful technique for deploying large-scale deep learning models, quantization has attracted significant attention. From the theoretical perspective, a line of works focus on the convergence guarantee in both quantized training (SGD) algorithms \citep{de2015taming,alistarh2017qsgd,faghri2020adaptive,gorbunov2020unified,gandikota2021vqsgd,markov2023quantized,xin2025global} and post-training quantization methods \citep{lybrand2021greedy,zhang2023spfq,zhang2023post,zhang2025provable}. For low-precision training (SGD), \citet{de2015taming} was the first to consider the convergence guarantees. Assuming unbiased stochastic quantization, convexity, and gradient sparsity, they gave upper bounds on the error probability of SGD. \citet{alistarh2017qsgd} refined these results by focusing on the trade-off between communication and convergence and proposed Quantized SGD (QSGD). \citet{faghri2020adaptive} extended the fixed quantization scheme \citep{alistarh2017qsgd} to two adaptive quantization schemes, providing a more general convergence guarantee for quantized training. For post-training quantization, \citet{lybrand2021greedy} derived an error bound for ternary weight quantization under independent Gaussian data distribution. \citet{zhang2023post} extended these results to more general quantization grids and a wider range of data distributions using a different proof technique. More recently, \citet{zhang2025provable} presented the first
quantitative error bounds for OPTQ post-training algorithm framework. However, no prior work provides explicit generalization bounds. 

\textbf{Linear models for theory of scaling law.}
Several recent studies have sought to formalize and explain the empirical scaling laws using conceptually simplified linear models \citep{bahri2024explaining, atanasov2024scaling, paquette20244+, bordelon2024dynamical, lin2024scaling, lin2025improved}. Among them, \citet{bahri2024explaining} considered a linear teacher-student model with power-law spectrum and showed that the test loss of the ordinary least square estimator decreases following a power law in sample size $N$ (or model size $M$) when the other parameter goes to infinity. \citet{bordelon2024dynamical} analyzed the test error of the solution found by gradient flow in a linear random feature model and established power-law scaling in one of $N$, $M$ and training time $T$ while the other two parameters go to infinity. Building on the technique in \citet{zou2023benign}, \citet{lin2024scaling} analyzed the test error of the last iterate of one-pass SGD in a sketched linear model. They presented the first systematic study to establish a finite-sample joint scaling law (in $M$ and $N$) for linear models that aligns with empirical observations \citep{kaplan2020scaling}. More recently, \citet{lin2025improved} extended the scaling law analysis to the setting with data reuse (i.e., multi-pass SGD) in data-constrained regimes.
\section{Preliminary}

\subsection{Quantization operations} 
For all quantization operations in  (\ref{eq:SGD_quantized}), we employ the stochastic quantization method \citep{markov2023quantized}, which unbiasedly rounds values using randomly adjusted probabilities. This stochastic quantization is widely used in both empirical and theoretical analysis of quantization \citep{modoranu2024microadam,ozkara2025stochastic}.
We summarize this in the following assumption.
\begin{assumption}
    \label{ass1}
    Let $\mathcal Q_i, i\in \{d,l,p,a,o\}$ be the coordinate-wise quantization operation for data feature, label, model parameter, activation, and output gradient, respectively. We assume that the quantization operation is unbiased, i.e., for any $\mathbf{u}$,
    \begin{equation*}
        \mathbb{E}\left[\mathbf{\mathcal{Q}}_i(\mathbf{u})|\mathbf{u}\right]=\mathbf{u}.
    \end{equation*}
\end{assumption}

Furthermore, to better uncover the effect of quantization, we consider the following two types of quantization error: multiplicative quantization and additive quantization, which are motivated by abstracting the behavior of prevalent numerical formats used in practice.
\begin{definition}\label{def:type_quantization}
Let $\mathcal{Q}$ be an unbiased quantization operation. We categorize it based on the structure of its error variance:
\begin{itemize}[leftmargin=*,nosep]
\item \textbf{Multiplicative quantization.} We call the quantization is $\epsilon$-multiplicative if the conditional second moment of quantization error is proportional to the outer product of raw data itself, i.e.,
    \begin{equation*}
        \mathbb{E}\left[\left(\mathcal{Q}(\mathbf{x})-\mathbf{x}\right)\left(\mathcal{Q}(\mathbf{x})-\mathbf{x}\right)^\top\bigg|\mathbf{x}\right]=\epsilon \mathbf{x}\mathbf{x}^\top.
    \end{equation*}
\item \textbf{Additive quantization.} We call the quantization is $\epsilon$-additive if the conditional second moment of quantization error is proportional to identity, i.e.,
    \begin{equation*}
        \mathbb{E}\left[\left(\mathcal{Q}(\mathbf{x})-\mathbf{x}\right)\left(\mathcal{Q}(\mathbf{x})-\mathbf{x}\right)^\top\bigg|\mathbf{x}\right]=\epsilon \mathbf{I}.
    \end{equation*}
\end{itemize}
\end{definition}

This theoretical distinction is grounded in practical quantization schemes. For instance, integer quantization (e.g., INT8, INT16) uses a fixed bin length, resulting in an error that is largely independent of the value's magnitude. This characteristic aligns with our definition of additive quantization, where the error variance is uniform across coordinates. Conversely, floating-point quantization (e.g., FP8, FP32) employs a value-aware bin length via its exponent and mantissa bits (e.g., the E4M3 format in FP8). This structure causes the quantization error to scale with the magnitude of the value itself, corresponding to the model of multiplicative quantization. 

To precisely capture the quantization error, we further introduce some relevant notations on quantization errors during training. Denote the activation and output gradient at time $t$ as 
\begin{equation*}
    \begin{aligned}
        \mathbf{a}_t=\mathcal{Q}_d(\mathbf{X}_t)\mathcal{Q}_p(\mathbf{w}_{t-1}),\quad \mathbf{o}_t=\mathcal{Q}_l(\mathbf{y}_t)-\mathcal{Q}_a\left(\mathcal{Q}_d(\mathbf{X}_t)\mathcal{Q}_p(\mathbf{w}_{t-1})\right).
    \end{aligned}
\end{equation*}
Then we are ready to define quantization errors.
\begin{definition}
    The quantization error on data $\boldsymbol{\epsilon}^{(d)}$, on label $\epsilon^{(l)}$, on parameter $\boldsymbol{\epsilon}_t^{(p)}$ at time $t$, on activation $\boldsymbol{\epsilon}_t^{(a)}$ at time $t$ and on output gradient $\boldsymbol{\epsilon}_t^{(o)}$ at time $t$ are defined as follows.
    \begin{gather*}
        \boldsymbol{\epsilon}^{(d)}:=\mathcal{Q}_d(\mathbf{x})-\mathbf{x},\quad \epsilon^{(l)}:=\mathcal{Q}_l(y)-y,\quad \boldsymbol{\epsilon}_t^{(p)}:=\mathcal{Q}_p(\mathbf{w}_t)-\mathbf{w}_t,\\ \boldsymbol{\epsilon}_t^{(a)}:=\mathcal{Q}_a(\mathbf{a}_t)-\mathbf{a}_t,\quad \boldsymbol{\epsilon}_t^{(o)}:=\mathcal{Q}_o(\mathbf{o}_t)-\mathbf{o}_t.
    \end{gather*}
\end{definition}

\subsection{Data model} 
We then state the regularity assumptions on the data distribution, which align with those common in prior works \citep{zou2023benign, lin2024scaling}. A key distinction in our setting is that the training process is performed on quantized data, i.e., $\mathcal{Q}_d(\mathbf{x})$ and $\mathcal{Q}_l(y)$. Consequently, we formulate these assumptions directly on the quantized data rather than the full-precision versions.
\begin{assumption}[Data covariance]
    \label{ass: addd quantized}
    Let $\mathbf H=\mathbb E[\mathbf x\mathbf x^\top]$ be the data covariance matrix and 
    \begin{equation*}
        \mathbf{H}^{(q)}:=\mathbb{E}[\mathcal{Q}_d(\mathbf{x})\mathcal{Q}_d(\mathbf{x})^\top],\quad \mathbf{D}:=\mathbb{E}[(\mathcal{Q}_d(\mathbf{x})-\mathbf{x})(\mathcal{Q}_d(\mathbf{x})-\mathbf{x})^\top],
    \end{equation*}
    be the covariance matrices of the quantized data feature and quantization error of data covariance, respectively. Then we assume that $\mathrm{tr}(\mathbf{H})$ and $\mathrm{tr}(\mathbf{H}^{(q)})$ are finite.
\end{assumption}

Further let $\mathbf{H}= \sum_i \lambda_i\mathbf{v}_i\mathbf{v}_i^\top$ be the eigen-decomposition of $\mathbf H$, where $\{\lambda_i\}_{i=1}^\infty$ are the eigenvalues of $\mathbf{H}$ sorted in non-increasing order and $\mathbf{v}_i$ are the corresponding eigenvectors. As in \citet{zou2023benign}, we denote
\begin{equation*}    \mathbf{H}_{0:k}:=\sum_{i=1}^k\lambda_i\mathbf{v}_i\mathbf{v}_i^\top,\quad\mathbf{H}_{k:\infty}:=\sum_{i>k}\lambda_i\mathbf{v}_i\mathbf{v}_i^\top,\quad \mathbf{I}_{0:k}:=\sum_{i=1}^k\mathbf{v}_i\mathbf{v}_i^\top,\quad\mathbf{I}_{k:\infty}:=\sum_{i>k}\mathbf{v}_i\mathbf{v}_i^\top.
\end{equation*}
Similarly, we denote the eigen-decomposition of $\mathbf{H}^{(q)}$ as $\mathbf{H}^{(q)}= \sum_i \lambda_i^{(q)}\mathbf{v}_i^{(q)}{\mathbf{v}_i^{(q)}}^\top$ and correspondingly obtain $\mathbf{H}_{0:k}^{(q)},\mathbf{H}_{k:\infty}^{(q)},\mathbf{I}_{0:k}^{(q)},\mathbf{I}_{k:\infty}^{(q)}$. We then extend the fourth moment and noise assumptions in \citet{zou2023benign,lin2024scaling} to the low-precision setting.
\begin{assumption}[Fourth-order moment]
\label{ass2}
    Let $\mathbf{x}^{(q)}=\mathcal{Q}_d(\mathbf{x})$. Then for any PSD matrix $\mathbf A$, there exists a constant $\alpha_B>0$ such that
    \begin{equation*}
        \mathbb{E}\left[{\mathbf{x}^{(q)}}{\mathbf{x}^{(q)}}^\top\mathbf{A}{\mathbf{x}^{(q)}}{\mathbf{x}^{(q)}}^\top\right]\preceq\alpha_B\operatorname{tr}(\mathbf{H}^{(q)}\mathbf{A})\mathbf{H}^{(q)}.
    \end{equation*}
\end{assumption}

To extend the model noise assumption in \citet{zou2023benign} to the low-precision setting, we define the optimal model weight regarding the quantized data feature and label: 
\begin{equation*}
    {\mathbf{w}^{(q)}}^* = \mathrm{argmin}_\mathbf{w} \ \mathbb{E}\left[(\mathcal{Q}_l(y)-\langle \mathbf{w},\mathcal{Q}_d(\mathbf{x})\rangle)^2\right].
\end{equation*}
Then we are ready to make the assumption on the model noise $\xi:=\mathcal{Q}_l(y)-\langle{\mathbf{w}^{(q)}}^*,\mathcal{Q}_d(\mathbf{x})\rangle$.
\begin{assumption}
    \label{ass3}
    Assume there exists a positive constant $\sigma>0$ such that
    \begin{equation*}
        \mathbb{E}\left[\xi^2\mathcal{Q}_d(\mathbf{x})\mathcal{Q}_d(\mathbf{x})^\top\right]\preceq\sigma^{2}\mathbf{H}^{(q)}.
    \end{equation*}
\end{assumption}
In fact, Assumptions \ref{ass2} and \ref{ass3} can be directly inferred from the standard assumptions on the full-precision data (Assumptions 2.1 and 2.2 in \citet{zou2023benign}) under specific quantization schemes. We defer the discussion to Section \ref{Discussion of Assumptions}.

\section{Main Theoretical Results}
We first derive excess risk upper bounds for quantized SGD in Section \ref{Excess Risk Bounds}, then compare these rates with the full-precision SGD (in orders) in Section \ref{Comparisons} and perform specific case study in Section \ref{Case Study}. 

\subsection{Excess Risk Bounds}
\label{Excess Risk Bounds}
We now provide excess risk bounds under general quantization, multiplicative quantization and additive quantization. Denote the effective dimension for $\mathbf{H}^{(q)}$: $k^*=\max\left\{k: \lambda_k^{(q)} \geq \frac{1}{N\gamma}\right\}$.
\begin{theorem}[\rm \textbf{General quantization}]
    \label{Theorem 4}
    Consider general quantization. Denote $\mathbf{D}_1^{\mathbf{H}}=\mathbf{D}(\mathbf{H}+\mathbf{D})^{-1}\mathbf{H}(\mathbf{H}+\mathbf{D})^{-1}\mathbf{D}$, $\mathbf{D}_2^{\mathbf{H}}=\mathbf{H}(\mathbf{H}^{(q)})^{-1}\frac{1}{N\gamma}\left(\mathbf{I}-(\mathbf{I}-\gamma\mathbf{H}^{(q)})^N\right)(\mathbf{H}^{(q)})^{-1}\mathbf{D}(\mathbf{H}^{(q)})^{-1}\mathbf{H}$. Under Assumption \ref{ass1}, \ref{ass: addd quantized}, \ref{ass2} and \ref{ass3}, if the stepsize $\gamma < \frac{1}{\alpha_B \mathrm{tr}(\mathbf{H}^{(q)})}$, then it holds,
    \begin{equation*}
        \mathbb{E}[\mathcal E(\overline{\mathbf{w}}_N)]\leq 2\mathrm{VarErr}+2\mathrm{BiasErr}+\mathrm{ApproxErr},
    \end{equation*}
    where
    \begin{equation*}
        \begin{aligned}
            &\mathrm{VarErr}\leq\frac{2\alpha_B\left(\frac{\|{\mathbf{w}^{(q)}}^*\|_{\mathbf{I}_{0:k^*}^{(q)}}^2}{N\gamma}+\|{\mathbf{w}^{(q)}}^*\|_{\mathbf{H}_{k^*:\infty}^{(q)}}^2\right)+{\sigma_G^{(q)}}^2}{1-\gamma\alpha_B\mathrm{tr}(\mathbf{H}^{(q)})}\left(\frac{k^*}{N}+N\gamma^2\cdot\sum_{i>k^*}(\lambda_i^{(q)})^2\right),\\
            &\mathrm{BiasErr}\leq\frac{1}{\gamma^2N^2}\cdot\|{\mathbf{w}^{(q)}}^*\|_{(\mathbf{H}_{0:k^*}^{(q)})^{-1}}^2+\|{\mathbf{w}^{(q)}}^*\|_{\mathbf{H}_{k^*:\infty}^{(q)}}^2,\\
            &\mathrm{ApproxErr}\leq\left\|\mathbf{w}^*\right\|_{\mathbf{D}_1^{\mathbf{H}}}^2+\|\mathbf{w}^*\|_{\mathbf{D}_2^\mathbf{H}}^2,
        \end{aligned}
    \end{equation*}
    with ${\sigma_G^{(q)}}^2=\frac{\sigma^2+\sup_t \left\{\left\|\mathbb{E}\left[\boldsymbol{\epsilon}_t^{(o)}{\boldsymbol{\epsilon}_t^{(o)}}^\top|\mathbf{o}_t\right]+\mathbb{E}\left[\boldsymbol{\epsilon}_t^{(a)}{\boldsymbol{\epsilon}_t^{(a)}}^\top|\mathbf{a}_t\right] \right\|\right\}}{B}+\alpha_B\sup_t\mathbb{E}\left[\mathrm{tr}\left(\mathbf{H}^{(q)} \boldsymbol{\epsilon}_{t-1}^{(p)}{\boldsymbol{\epsilon}_{t-1}^{(p)}}^\top\right)\right]$.
        
\end{theorem}

Theorem \ref{Theorem 4} establishes the first excess risk bound for quantized SGD under a general quantization paradigm. The excess risk is decomposed into three components: variance error, bias error, and approximation error. Notably, the variance and bias errors mirror those of full-precision SGD \citep{zou2023benign} and exact equivalence is recovered when the quantization error vanishes. The key role that quantization plays is two-fold: data quantization significantly influences the effective (quantized) data covariance $\mathbf{H}^{(q)}$, while activation, output gradient and parameter quantization amplify the effective noise variance $\sigma_G^{(q)}$ (which will be further characterized in the subsequent theorems when given specific quantization type). Specifically, the quantized data covariance arises from performing SGD in quantized data feature space and the quantized noise variance corresponds to additional quantization error introduced in the parameter update rule. We also note that the additional approximation error, resulting from quantization of data, can be interpreted as the discrepancy between the global optimum in full-precision data space and quantized data feature space. 

Crucially, in the absence of quantization, our excess risk bound reduces exactly to the standard results presented in \citet{zou2023benign}. It is also worth noting that under the unbiased quantization assumption, the quantization of parameter, output gradient, and activation do not affect bias error \footnote{For theoretical tractability and simplicity, our framework employs the unbiased quantization assumption (Assumption \ref{ass1}). Without this assumption, the conditional expectations of the parameter, output gradient, and activation quantization errors (i.e., quantization bias) would contribute to the bias error. We believe our framework is readily extendable to this general biased quantization setting.}.

To further elucidate the effects of quantization, we examine two specific schemes: multiplicative and additive quantization. The result for additive quantization can be derived directly from Theorem \ref{Theorem 4} and is summarized below.
\begin{corollary}[\rm \textbf{Additive quantization}]
    \label{theorem new 2.5}
    Under Assumption \ref{ass1}, \ref{ass: addd quantized}, \ref{ass2} and \ref{ass3}, if there exist $\epsilon_d,\epsilon_l,\epsilon_p,\epsilon_a$ and $\epsilon_o$ such that for any $i\in \{d,l,p,a,o\}$, quantization $\mathcal{Q}_i$ is $\epsilon_i$-additive, and the stepsize satisfies $\gamma < \frac{1}{\alpha_B [\mathrm{tr}(\mathbf{H})+d\epsilon_d]}$, then
    \begin{equation*}
        \mathbb{E}[\mathcal{E}(\overline{\mathbf{w}}_N)] \lesssim \mathrm{ApproxErr}+\mathrm{ VarErr}+\mathrm{BiasErr},
    \end{equation*}
    where
    \begin{align*}
            &\mathrm{ApproxErr}\lesssim \frac{\epsilon_d}{\lambda_d+\epsilon_d}\left\|\mathbf{w}^*\right\|_\mathbf{H}^2,\quad \mathrm{BiasErr}\lesssim\frac{1}{\gamma^2N^2}\cdot\|{\mathbf{w}^{(q)}}^*\|_{(\mathbf{H}_{0:k^*}^{(q)})^{-1}}^2+\|{\mathbf{w}^{(q)}}^*\|_{\mathbf{H}_{k^*:\infty}^{(q)}}^2,\\
            &\mathrm{VarErr}\lesssim \frac{\alpha_B\|\mathbf{w}^*\|_\mathbf{H}^2+\frac{\sigma^2+\epsilon_o+\epsilon_a}{B}+\alpha_B \epsilon_p[\mathrm{tr}(\mathbf{H})+d\epsilon_d]}{1-\gamma\alpha_B[\mathrm{tr}(\mathbf{H})+d\epsilon_d]}\left(\frac{k^*}{N}+N\gamma^2\cdot\sum_{i>k^*}(\lambda_i+\epsilon_d)^2\right).
    \end{align*}
\end{corollary}

Corollary \ref{theorem new 2.5} explicitly demonstrates how data quantization distorts effective data covariance spectrum and how parameter, activation and output gradient quantization amplify noise during training under additive quantization scheme. A key observation concerns the scaling with respect to the batch size $B$. Consistent with the label noise $\sigma^2$, the noise amplification from activation and output gradient quantization ($\epsilon_a, \epsilon_o$) are scaled by a factor of $1/B$. In contrast, the noise amplification from parameter quantization ($\epsilon_p$) scales with the trace of the quantized data covariance and is independent of batch size.

The interpretation is that additive quantization imposes a constant bound on the conditional second moment of the quantization error. Consequently, the underlying data structure inherent within the activation quantization error $\boldsymbol{\epsilon}_t^{(a)}$ and output gradient quantization error $\boldsymbol{\epsilon}_t^{(o)}$ is effectively neutralized. Formally, the noise amplification from these terms is characterized as $\frac{1}{B^2}\mathbb{E}[{\mathbf{X}^q}^\top \boldsymbol{\epsilon}\boldsymbol{\epsilon}^\top \mathbf{X}^q]$. Under additive quantization, since the error variance is bounded by a constant, the dependency on data within $\boldsymbol{\epsilon}$ vanishes. However, the noise amplification from parameter quantization, which is characterized as $\frac{1}{B^2}\mathbb{E}[{\mathbf{X}^q}^\top \mathbf{X}^q\boldsymbol{\epsilon}^{(p)}{\boldsymbol{\epsilon}^{(p)}}^\top{\mathbf{X}^q}^\top \mathbf{X}^q]$, preserves the underlying dependency on data, even if the error variance itself is constant.

Moreover, a critical consequence of additive quantization is the distortion of the data covariance spectrum $\mathbf{H}^{(q)}$. Specifically, a fixed constant $\epsilon_d$ is added across the entire spectrum, effectively imposing a noise floor that prevents the tail eigenvalues from decaying. This spectral flattening severely impedes learnability, as it leads to substantial risk accumulation within the high-dimensional tail subspace.


We next examine the multiplicative quantization scheme. Unlike additive quantization, multiplicative quantization exhibits an inherent structural alignment with the full-precision dynamics, as the error scales relative to the signal magnitude. Exploiting this property allows us to derive a refined excess risk bound through a direct analysis, rather than relying on a generic application of the general result in Theorem \ref{Theorem 4}. Our theoretical findings are summarized below.
\begin{theorem}[\rm \textbf{Multiplicative quantization}]
    \label{Theorem 1}
    Under Assumption \ref{ass1}, \ref{ass: addd quantized}, \ref{ass2} and \ref{ass3}, if there exist $\epsilon_d,\epsilon_l,\epsilon_p,\epsilon_a$ and $\epsilon_o$ such that for any $i\in \{d,l,p,a,o\}$, quantization $\mathcal{Q}_i$ is $\epsilon_i$-multiplicative, and the stepsize satisfies $\gamma < \frac{1}{ \alpha_B (1+\epsilon_o)[1+\epsilon_p+\epsilon_a(1+\epsilon_p)](1+\epsilon_d)\mathrm{tr}(\mathbf{H})}$, then the excess risk can be upper bounded as follows.
    \begin{equation*}
        \mathbb{E}[\mathcal{E}(\overline{\mathbf{w}}_N)] \lesssim \mathrm{ApproxErr}+\mathrm{VarErr}+\mathrm{BiasErr},
    \end{equation*}
    where
    \begin{equation*}
        \begin{aligned}
            &\mathrm{ApproxErr}\lesssim \frac{\epsilon_d}{1+\epsilon_d}\left\|\mathbf{w}^*\right\|_\mathbf{H}^2,\quad \mathrm{BiasErr}\lesssim\frac{1}{\gamma^2N^2}\cdot\|{\mathbf{w}^{(q)}}^*\|_{(\mathbf{H}_{0:k^*}^{(q)})^{-1}}^2+\|{\mathbf{w}^{(q)}}^*\|_{\mathbf{H}_{k^*:\infty}^{(q)}}^2,\\
            &\mathrm{VarErr}\lesssim\left(\frac{k^*}{N}+N\gamma^2(1+\epsilon_d)^2\sum_{i>k^*}\lambda_i^2\right) \frac{\frac{(1+\epsilon_o)\sigma^2}{B}+\alpha_B(1+\epsilon_o) [1+\epsilon_p+\epsilon_a(1+\epsilon_p)]\left\|\mathbf{w}^*\right\|_\mathbf{H}^2}{1-\gamma\alpha_B(1+\epsilon_o)[1+\epsilon_p+\epsilon_a(1+\epsilon_p)](1+\epsilon_d)\mathrm{tr}\left(\mathbf{H}\right)}.
        \end{aligned}
    \end{equation*}
\end{theorem}

Theorem \ref{Theorem 1} characterizes the spectrum distortion and noise amplification effects induced by multiplicative quantization. Notably, in stark contrast to additive quantization, which severely flattens the tail spectrum by imposing a constant floor, multiplicative quantization largely preserves the intrinsic spectral structure. Specifically, it acts as a linear transformation that scales the entire spectrum by a factor of $(1+\epsilon_d)$ without altering the relative distribution of eigenvalues. This preservation of the spectral decay property ensures superior learnability compared to the additive quantization scheme.

Regarding noise amplification, Theorem \ref{Theorem 1} reveals a critical divergence from the additive quantization scheme. While the contribution from intrinsic label noise ($\sigma^2$) is still suppressed by the batch size factor $1/B$, the quantization noise stemming from activation and output gradients ($\epsilon_a, \epsilon_o$) is coupled with the model parameter $\|\mathbf{w}^*\|_\mathbf{H}^2$ and does not scale with $1/B$. This phenomenon arises because multiplicative quantization error (scales proportionally with the signal strength) is inherently signal-dependent and is intrinsically tied to the data structure.

We provide further analysis of quantized SGD with quantized master weights in Section \ref{sec: master weight}. Training with quantized master weights necessitates stricter step size conditions to ensure convergence and introduces additional error terms into the excess risk bounds, thereby degrading generalization performance.

\subsection{Comparisons with Standard Excess Risk Bound}
\label{Comparisons}
In this part, we will provide a detailed comparison with standard excess risk bounds and identify the conditions on the quantization error such that the excess risk bound will not be largely affected. First, let $k_0^*=\max\{k: \lambda_k \geq \frac{1}{N\gamma}\}$, we recall the standard excess risk bound \citep{zou2023benign}: 
\begin{equation*}
    \begin{aligned}
        R_0=&\left(\frac{k_0^*}{N}+N\gamma^2\cdot\sum_{i>k_0^*}\lambda_i^2\right)\frac{\alpha_B\left(\frac{1}{N\gamma}\|{\mathbf{w}}^*\|_{\mathbf{I}_{0:k_0^*}}^2+\|{\mathbf{w}}^*\|_{\mathbf{H}_{k_0^*:\infty}}^2\right)
        +\frac{\sigma^2}{B}}{1-\gamma\alpha_B\mathrm{tr}\left(\mathbf{H}\right)}\\
        +&\frac{1}{\gamma^2N^2}\cdot\|{\mathbf{w}}^*\|_{(\mathbf{H}_{0:k_0^*})^{-1}}^2+\|{\mathbf{w}}^*\|_{\mathbf{H}_{k_0^*:\infty}}^2.
    \end{aligned}
\end{equation*}
The following corollary derives the conditions on the quantization errors such that the learning performance of the full-precision SGD can be maintained (in orders).

\begin{corollary}\label{coro:comparison}
To ensure that $\mathbb E[\mathcal E(\overline{\mathbf w}_N)]\lesssim R_0$, conditions on the quantization error are as follows:
\begin{itemize}[leftmargin=*,nosep]
\item For multiplicative quantization, under the conditions in Theorem \ref{Theorem 1}, we require
    \begin{equation*}
        \epsilon_d\lesssim 1\wedge \frac{R_0}{\|\mathbf{w}^*\|_{\mathbf{H}}^2},\quad\epsilon_o,\epsilon_a,\epsilon_p \lesssim  \left(\frac{\sigma^2}{B\alpha_B\|\mathbf{w}^*\|_\mathbf{H}^2}+\frac{\frac{1}{N\gamma}\|{\mathbf{w}}^*\|_{\mathbf{I}_{0:k_0^*}}^2+\|{\mathbf{w}}^*\|_{\mathbf{H}_{k_0^*:\infty}}^2}{\|\mathbf{w}^*\|_\mathbf{H}^2}\right)\wedge 1.
    \end{equation*}
\item For additive quantization, under the conditions in Corollary \ref{theorem new 2.5}, we require
\begin{gather*}
        \epsilon_d\lesssim \sqrt{\frac{\frac{k_0^*}{N}+N\gamma^2\cdot\sum_{i>k_0^*}\lambda_i^2}{N\gamma^2(d-k_0^*)}}\wedge \frac{R_0\lambda_d}{\|\mathbf{w}^*\|_{\mathbf{H}}^2},\quad \epsilon_a,\epsilon_o\lesssim \sigma^2+B\alpha_B\left(\frac{\|{\mathbf{w}}^*\|_{\mathbf{I}_{0:k_0^*}}^2}{N\gamma}+\|{\mathbf{w}}^*\|_{\mathbf{H}_{k_0^*:\infty}}^2\right), \\ 
        \epsilon_p\lesssim \frac{\sigma^2}{B\alpha_B[\mathrm{tr}(\mathbf{H})+d\epsilon_d]}+\frac{\frac{\|{\mathbf{w}}^*\|_{\mathbf{I}_{0:k_0^*}}^2}{N\gamma}+\|{\mathbf{w}}^*\|_{\mathbf{H}_{k_0^*:\infty}}^2}{\mathrm{tr}(\mathbf{H})+d\epsilon_d}.
    \end{gather*}
\end{itemize}
\end{corollary}

Corollary \ref{coro:comparison} identifies the conditions under which the quantized excess risk matches the full-precision baseline $R_0$. 
Regarding data quantization ($\epsilon_d$), the additive scheme imposes stringent spectrum-dependent constraints compared to the multiplicative quantization scheme. Specifically, the precision requirements are notably strict to prevent the constant quantization noise floor from overwhelming weak spectral components. Conversely, for activation and output gradient quantization ($\epsilon_a,\epsilon_o$), the additive scheme exhibits a favorable dependence on the batch size. As indicated by the scaling with $B$ in the bounds for $\epsilon_a$ and $\epsilon_o$, larger batch sizes effectively relax the precision requirements for these components. In contrast, larger batch sizes may essentially tighten the requirements under the multiplicative quantization scheme.

These findings validate our core insights: (1) multiplicative data quantization is superior in maintaining the spectral structure of $\mathbf{H}$, thus tolerating larger data quantization errors; (2) additive quantization benefits from the fact that the noise variance in activation and output gradient is independent of the signal magnitude, allowing these errors to be effectively suppressed by increasing the batch size.

\subsection{Case Study on Data Distribution with Polynomial-decay Spectrum}
\label{Case Study}
Following \cite{lin2024scaling,lin2025improved}, we study the excess risk bounds assuming optimal parameter prior and the power-law spectrum for more concise theoretical results. In particular, we make the following assumption.
\begin{assumption}
\label{assumption 1}
    There exists $a > 1$ such that the eigenvalues of $\mathbf{H}$ satisfy $\lambda_i \eqsim i^{-a},\  i >0$. We also assume that $\mathbb{E}\left[\mathbf{w}^*{\mathbf{w}^*}^\top\right]=\mathbf{I}$ and $\sigma^2 \lesssim 1$.
\end{assumption}
\begin{corollary}\label{coro: case study}
    Taking expectation on $\mathbf{w}^*$, under Assumption \ref{assumption 1}, we have:
    \begin{itemize}[leftmargin=*,nosep]
        \item For multiplicative quantization, under the conditions in Theorem \ref{Theorem 1},
        \begin{equation*}
            \mathbb{E}\left[\mathcal{E}(\overline{\mathbf{w}}_N)\right]\lesssim\frac{\epsilon_d}{1+\epsilon_d}+d^{1-a}+N^{1/a-1}(1+\epsilon_o) [1+\epsilon_p+\epsilon_a(1+\epsilon_p)](1+\epsilon_d)^{1/a}.
        \end{equation*}
        \item For additive quantization, under the conditions in Corollary \ref{theorem new 2.5},
        \begin{equation*}
            \begin{aligned}
                \mathbb{E}[\mathcal{E}(\overline{\mathbf{w}}_N)]\lesssim &\left(1+\frac{\epsilon_o+\epsilon_a}{B}+\epsilon_p(1+d\epsilon_d)\right)\left(1+\frac{(d^a\epsilon_d)^2}{1+d^a\epsilon_d}\right)d^{1-a}+\frac{d^a\epsilon_d}{1+d^a\epsilon_d}\\
                +&\left(1+\frac{\epsilon_o+\epsilon_a}{B}+\epsilon_p(1+d\epsilon_d)\right)(1+d^a\epsilon_d)^{1/a}N^{1/a-1}.
            \end{aligned}
        \end{equation*}
    \end{itemize}
\end{corollary}

Our findings in polynomial-decay data spectrum scenarios reveal distinct scaling behaviors under multiplicative and additive quantization. Specifically, the excess risk induced by additive data quantization exhibits a detrimental dependency on data dimension $d$, whereas the risk under multiplicative data quantization remains dimension-independent. This dependence has critical implications for learnability: in high-dimensional regimes ($d \to \infty$), the risk bound for additive quantization diverges, rendering the generalization guarantee vacuous. In contrast, the dimension-free nature of multiplicative quantization ensures its applicability even in infinite-dimensional settings.

Intuitively, this disparity stems from how each scheme interacts with the spectral structure. Multiplicative quantization preserves the intrinsic spectral decay, thereby retaining the utility of the effective dimension ($k^*$) cut-off. This allows the learning complexity to be controlled by the intrinsic data properties rather than the data dimension. Conversely, additive quantization employs a uniform quantization strength across all dimensions. This constant noise floor prevents the tail spectrum from decaying effectively and accumulates across the entire high-dimensional tail, rendering the effective dimension mechanism failed.


\textbf{Implications for integer and FP quantization.} 
These simplified theoretical results (Corollary \ref{coro: case study}) draw critical implications for comparing integer and floating-point (FP) quantization, allowing us to identify the conditions under which each type yields superior performance. Specifically, in practical integer quantization with bit-width $b$ and FP quantization with mantissa bit-width $m$, the quantization step size for a value $x$ are approximately $\delta(x)\eqsim2^{-b}$ and $\delta(x)=|x|2^{-m}$ \footnote{We assume the exponent bits in FP quantization can cover the scaling of $x$. For integer quantization, we assume the dynamic range (i.e., $x_{\max}-x_{\min}$) is normalized to constant level.}, respectively. Since the conditional second moment of quantization error $\mathbb{E}[(\mathcal{Q}(x)-x)^2|x]$ is roughly proportional to the square of the quantization step size ($\delta(x)^2$), the quantization error parameters in our bounds can be characterized as $\epsilon_{\mathrm{add}} \approx 2^{-2b}$ for the additive (integer) quantization scheme and $\epsilon_{\mathrm{mul}} \approx 2^{-2m}$ for the multiplicative (FP) quantization scheme.

Equipped with this mapping, practitioners can directly apply Corollary \ref{coro: case study} to determine the optimal quantization scheme for specific scenarios. A notable observation concerns the distinct role of the dimension $d$ in data quantization. Roughly, FP quantization becomes preferable when $m_d\geq b_d-\frac{a}{2}\log_2 d$ whereas integer quantization is favored when $b_d \geq m_d+\frac{a}{2}\log_2 d$ \footnote{Here $b_d$ and $m_d$ are the bit-width for integer data quantization and the mantissa bit-width for FP data quantization respectively.}. This means FP quantization can outperform integer quantization even when its mantissa bit-width is smaller than the integer bit-width by $\frac{a}{2}\log_2 d$, highlighting the advantage of FP quantization in high-dimensional settings.

\begin{figure}[t!]
  \centering
  \subfigure[\textbf{Multiplicative} (FP-like)]{
    \includegraphics[width=.24\textwidth]{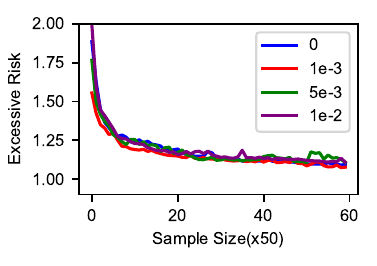}
  }\hspace{-.15in}
  \subfigure[\textbf{Additive} (INT-like) ]{
    \includegraphics[width=.24\textwidth]{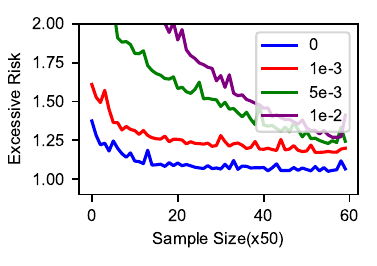}
  }\hspace{-.15in}
  \subfigure[\textbf{Multiplicative} (FP-like)]{
    \includegraphics[width=.24\textwidth]{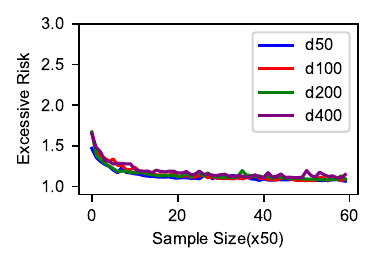}
  }\hspace{-.15in}
  \subfigure[\textbf{Additive} (INT-like)]{
    \includegraphics[width=.24\textwidth]{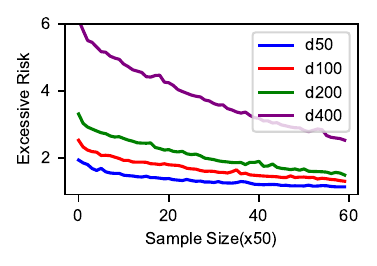}
  }
  \vskip -.15in
  \caption{\textbf{Generalization under quantization.}
  Population risk ($\mathbb{E}_{\mathbf{x},y}[(y-\langle \mathbf{w},\mathbf{x} \rangle )^2]$) for quantized SGD with iterate averaging under multiplicative (FP-like) vs.\ additive (INT-like) quantization.
(a) and (b): vary the quantization level at fixed dimension.
(c) and (d): vary dimension at fixed quantization level.}\vspace{-.2in}
  \label{fig:generalization_perf}
\end{figure}

\paragraph{Numerical experiments.}
\label{Empirical Study}
We evaluate constant–stepsize SGD with iterate averaging on a Gaussian least–squares model.
The feature distribution has covariance matrix with eigenvalues
$\lambda_i=i^{-2}$.
The ground–truth parameter is $\mathbf{w}^*$ with entries $\mathbf{w}^*[i]=1$,
and the observation noise variance is $\sigma^2=1$. This study answers two questions: \textbf{Q1}: How do \emph{additive} vs. \emph{multiplicative} quantization errors affect learning? \textbf{Q2}: How does \emph{dimension} $d$ interact with these two quantization types?

\noindent \textbf{Q1 (Quantization level).}
We fix $d=200$ and $B=1$, and vary the quantization error level
$\varepsilon\in\{0.001,\,0.005,\,0.01\}$ for each scheme.
Results are shown in Fig.~\ref{fig:generalization_perf}(a,b). Under multiplicative quantization, quantized SGD largely retains the generalization performance of full-precision SGD across a wide range of quantization levels. Conversely, under additive quantization, performance degrades as the quantization level increases. These empirical observations validate our theoretical findings: with a batch size of $B=1$, additive quantization requires stricter conditions (lower quantization level) to match the performance of full-precision SGD.


\noindent \textbf{Q2 (Dimension).}
We fix the quantization level at $\varepsilon=0.01$ and $B=1$,
and vary $d\in\{50,\,100,\,200,\,400\}$.
Results are shown in Fig.~\ref{fig:generalization_perf}(c,d). Under multiplicative quantization, generalization performance is preserved even in high-dimensional settings; conversely, under additive quantization, performance deteriorates as the data dimension increases. These empirical results corroborate our theoretical findings: multiplicative quantization remains effective in high-dimensional contexts, whereas additive quantization is ill-suited for such scenarios.

Furthermore, we conduct additional experiments on the real-world \texttt{Communities and Crime} dataset, as well as settings with larger batch sizes and exponential-decay spectra. These results, presented in Section \ref{sec: exp}, consistently align with our theoretical analysis.

\section{Conclusion and Limitations}
In this work, we presented a comprehensive theoretical framework to analyze the excess risk of quantized SGD in high-dimensional linear regression. Our analysis disentangles the distinct impacts of various quantization targets: while parameter, activation, and gradient quantization primarily serve as noise amplifiers, data quantization fundamentally distorts the effective feature covariance spectrum. Crucially, we show that multiplicative quantization excels at preserving the spectral structure of the data, thereby maintaining learnability even in high-dimensional settings. In contrast, additive quantization leverages the independence of noise variance from signal magnitude, allowing activation and gradient noise to be effectively suppressed by large batch sizes. Furthermore, our theory establishes the conditions on quantization errors required to maintain full-precision SGD performance, and identifies the scenarios under which FP and integer quantization are each likely to yield superior performance under polynomial decay spectrum.

\textbf{Future work.} Our work lays a solid foundation for several promising research avenues. Firstly, developing a lower bound analysis for the excess risk of quantized SGD. Secondly, extending single-pass SGD to more practical training configurations, such as data reuse (i.e., multi-pass SGD), learning rate scheduling, momentum, and preconditioning. Thirdly, extending training the linear models to the training of over-parameterized neural networks. Fourthly, deriving scaling laws for low-precision training.

\section*{Acknowledgments} We would like to thank the anonymous reviewers and area chairs for their helpful comments. We acknowledge the support from NSFC 62306252, Hong Kong ECS award 27309624, Guangdong NSF
2024A1515012444, and the central fund from HKU IDS.



\newpage
\bibliography{reference}

\bibliographystyle{ims}

\newpage
\appendix 

\section*{Appendix}
\tableofcontents
\resettocdepth{}
\newpage
The appendix is organized as follows. In Section \ref{Initial Study}, we begin the analysis of excess risk bounds for the iteratively averaged quantized SGD by firstly deriving the update rule for the parameter deviation $\mathbf{w}_t - {\mathbf{w}^{(q)}}^*$ (detailed in Section \ref{Deviation of the Update Rule}) and secondly performing an excess risk decomposition (detailed in Section \ref{Decomposition of the Excess Risk}):
\begin{equation*}
    \mathbb{E}[\mathcal{E}(\overline{\mathbf{w}}_N)]=\underbrace{\frac{1}{2}\langle\mathbf{H},\mathbb{E}[\overline{\boldsymbol{\eta}}_N\otimes\overline{\boldsymbol{\eta}}_N]\rangle}_{R_N}+\mathrm{ApproxErr}.
\end{equation*}
We then conduct a refined analysis for $\mathrm{ApproxErr}$ in Sections \ref{analyze R4}. 
For $R_N$, we extend techniques from \citet{zou2023benign} in Section \ref{analyze R2}. In particular, we first introduce useful notations in Section \ref{Preliminary} and then present a comprehensive analysis of the update rule for $\mathbb{E}[\boldsymbol{\eta}_t \boldsymbol{\eta}_t^\top]$ in Section \ref{Initial study R2}. This analysis is crucial for adapting previous proof techniques to the quantized SGD setting. Based on these results, we perform a bias–variance decomposition in Section \ref{Bias-Variance Decomposition}, and analyze the bias and variance errors separately in Section \ref{Bounding the Bias Error} and \ref{Bounding the Variance Error}. In Section \ref{sec: master weight}, we include bounds when master weight is quantized.

The following proof dependency graph visually encapsulates the logical structure and organizational architecture of the theoretical results in our paper. In particular, the arrow from element $X$ to element $Y$ means the proof of $Y$ relies on $X$.

\vspace{0.5cm}
\begin{adjustbox}{max width=\textwidth, center}
\begin{tikzpicture}[
    node distance=1.5cm and 3cm, 
    theorem/.style={rectangle, draw=blue!80, fill=blue!8, very thick, minimum width=6cm, minimum height=1.5cm, align=center, rounded corners=3pt, font=\Huge\bfseries\sffamily},
    mtheorem/.style={rectangle, draw=red!70!black, fill=red!12, very thick, minimum width=6cm, minimum height=1.5cm, align=center, rounded corners=3pt, font=\Huge\bfseries\sffamily},
    lemma/.style={rectangle, draw=green!70!black, fill=green!8, thick, minimum width=6cm, minimum height=1.5cm, align=center, rounded corners=3pt, font=\Huge\bfseries\sffamily},
    aux/.style={rectangle, draw=orange!80!black, fill=orange!8, thick, minimum width=6cm, minimum height=1.5cm, align=center, rounded corners=3pt, font=\Huge\bfseries\sffamily},
    prop/.style={rectangle, draw=purple!80, fill=purple!8, thick, minimum width=6cm, minimum height=1.5cm, align=center, rounded corners=3pt, font=\Huge\bfseries\sffamily},
    mArrow/.style={-Stealth, line width=1.5pt, red!60!black},
    aArrow/.style={-Stealth, line width=1.5pt, blue!50},
    bArrow/.style={-Stealth, line width=1.5pt, green!60!black, opacity=0.8},
    cArrow/.style={-Stealth, line width=1.5pt, orange!70!black, opacity=1}
]

\node[mtheorem] (C43) at (11, 8) {Corollary \ref{coro: case study}};
\node[mtheorem] (C41) at (21, 4) {Corollary \ref{theorem new 2.5}};
\node[mtheorem] (T42) at (1, 4) {Theorem \ref{Theorem 1}};

\node[mtheorem] (C42) at (11, 4) {Corollary \ref{coro:comparison}};
\node[mtheorem] (TD1) at (1, 0) {Theorem \ref{Theorem 1, proof}};
\node[mtheorem] (CD1) at (21, 0) {Corollary \ref{theorem new 2.5, proof}};

\node[mtheorem] (T41) at (25, -4) {Theorem \ref{Theorem 4}};

\node[theorem] (LC16) at (25, -8) {Lemma \ref{R_2 bound}};
\node[theorem] (LC17) at (-3, -8) {Lemma \ref{R_2 bound multi}};
\node[theorem] (LB1) at (5, -8) {Lemma \ref{lem: b1}};
\node[theorem] (LB2) at (17, -8) {Lemma \ref{lem: b2}};

\node[aux] (LA3) at (11, -4) {Lemma \ref{Refine excess risk decomposition}};

\node[lemma] (LC11) at (5, -12) {Lemma \ref{bias R2 bound multiplicative}};
\node[lemma] (LC15) at (-3, -12) {Lemma \ref{variance R2 bound multipicative}};

\node[lemma] (LC10) at (17, -12) {Lemma \ref{bias R2 bound}};
\node[lemma] (LC14) at (25, -12) {Lemma \ref{variance R2 bound}};

\node[lemma] (LC13) at (-8, -16) {Lemma \ref{A bound for M_B^{(q)} C_t^{(M)}}};
\node[lemma] (LC7) at (-0.5, -16) {Lemma \ref{(Initial Study of $S_t$) mu}};
\node[lemma] (LC9) at (7, -16) {Lemma \ref{A bound for M_B^{(q)} S_t^{(M)}}};
\node[lemma] (LC8) at (15, -16) {Lemma \ref{A bound for M_B^{(q)} S_t}};
\node[lemma] (LC6) at (22.5, -16) {Lemma \ref{(Initial Study of $S_t$)}};
\node[lemma] (LC12) at (30, -16) {Lemma \ref{A bound for M_B^{(q)} C_t}};

\node[aux] (LC5) at (1, -20) {Lemma \ref{Bias-Variance Decomposition under Multiplicative Quantization}};
\node[aux] (LC4) at (21, -20) {Lemma \ref{Bias-Variance Decomposition under General Quantization}};

\node[aux] (LC1) at (11, -24) {Lemma \ref{Initial study of R_2}};
\node[aux] (LC2) at (21, -24) {Lemma \ref{update rule for eta_t^2}};
\node[aux] (LC3) at (1, -24) {Lemma \ref{update rule for eta_t^2 multiplicative}};

\node[aux] (LA1) at (11, -28) {Lemma \ref{update rule for eta_t}};

\draw[cArrow] (LA1) -- (LC1);
\draw[cArrow] (LA1) -- (LC2);
\draw[cArrow] (LA1) -- (LC3);

\draw[cArrow] (LC1) -- (LC4);
\draw[cArrow] (LC1) -- (LC5);

\draw[cArrow] (LC2) -- (LC4);
\draw[cArrow] (LC3) -- (LC5);

\draw[cArrow] (LA3) -- (LC1);
\draw[aArrow] (LA3) -- (LB1);
\draw[aArrow] (LA3) -- (LB2);

\draw[bArrow] (LC5) -- (LC7);
\draw[bArrow] (LC5) -- (LC9);
\draw[bArrow] (LC5) -- (LC13);

\draw[bArrow] (LC4) -- (LC6);
\draw[bArrow] (LC4) -- (LC8);
\draw[bArrow] (LC4) -- (LC12);

\draw[bArrow] (LC13) -- (LC15);
\draw[bArrow] (LC7) -- (LC11);
\draw[bArrow] (LC9) -- (LC11);

\draw[bArrow] (LC12) -- (LC14);
\draw[bArrow] (LC6) -- (LC10);
\draw[bArrow] (LC8) -- (LC10);

\draw[aArrow] (LC15) -- (LC17);
\draw[aArrow] (LC11) -- (LC17);

\draw[aArrow] (LC10) -- (LC16);
\draw[aArrow] (LC14) -- (LC16);

\draw[mArrow] (LC16) -- (T41);

\draw[mArrow] (LC17) -- (TD1);
\draw[mArrow] (LB1) -- (TD1);

\draw[mArrow] (LB2) -- (CD1);
\draw[mArrow] (T41) -- (CD1);

\draw[mArrow] (CD1) -- (C42);
\draw[mArrow] (TD1) -- (C42);

\draw[mArrow] (TD1) -- (T42);
\draw[mArrow] (CD1) -- (C42);

\draw[mArrow] (T42) -- (C43);
\draw[mArrow] (C41) -- (C43);

\end{tikzpicture}
\end{adjustbox}

\newpage

\section{Initial Study}
\label{Initial Study}
For simplicity, we denote ${y}^{(q)}=\mathcal{Q}_l({y}), \mathbf{w}_t^{(q)}=\mathcal{Q}_p(\mathbf{w}_t),\mathbf{x}^{(q)}=\mathcal{Q}_d(\mathbf{x})$. For convenience, we assume that $\mathbf{H}$ is strictly positive definite and that $L(\mathbf{w})$ admits a unique global optimum as \citet{zou2023benign}. We first recall the definition of the global minima $\mathbf{w}^*$ and ${\mathbf{w}^{(q)}}^*$:
\begin{equation*}
    {\mathbf{w}}^* = \mathrm{argmin}_\mathbf{w} \ \mathbb{E}\left[(y-\langle \mathbf{w},\mathbf{x}\rangle)^2\right], \quad {\mathbf{w}^{(q)}}^* = \mathrm{argmin}_\mathbf{w} \ \mathbb{E}\left[(\mathcal{Q}_l(y)-\langle \mathbf{w},\mathcal{Q}_d(\mathbf{x})\rangle)^2\right].
\end{equation*}
The first order optimality shows that
\begin{equation}
    \label{definition w*}\mathbb{E}[(y-\langle{\mathbf{w}}^*,\mathbf{x}\rangle)\mathbf{x}]=\mathbf{0},\quad \mathbb{E}[(\mathcal{Q}_l(y)-\langle{\mathbf{w}^{(q)}}^*,\mathcal{Q}_d(\mathbf{x})\rangle)\mathcal{Q}_d(\mathbf{x})]=\mathbf{0},
\end{equation}
which implies that
\begin{equation*}
    \mathbf{w}^*=\mathbf{H}^{-1}\mathbb{E}_{(\mathbf{x},y)\sim\mathcal{D}}[y\mathbf{x}], \quad {\mathbf{w}^{(q)}}^*=(\mathbf{H}^{(q)})^{-1}\mathbb{E}\left[\mathcal{Q}_l(y)\mathcal{Q}_d(\mathbf{x})\right]=(\mathbf{H}^{(q)})^{-1}\mathbb{E}_{(\mathbf{x},y)\sim\mathcal{D}}[y\mathbf{x}].
\end{equation*}
Hence, by denoting $\mathbf{H}^{(q)}=\mathbf{H}+\mathbf{D}$, we can characterize the difference between $\mathbf{w}^{(q)^*}$ and $\mathbf{w}^*$ as:
\begin{equation}
\label{wq-w*}
    \begin{aligned}
        {\mathbf{w}^{(q)}}^*-\mathbf{w}^*=&\left[(\mathbf{H}^{(q)})^{-1}-\mathbf{H}^{-1}\right]\mathbb{E}_{(\mathbf{x},y)\sim\mathcal{D}}[y\mathbf{x}]\\
        =&(\mathbf{H}^{(q)})^{-1}\left(\mathbf{H}-\mathbf{H}^{(q)}\right)\mathbf{H}^{-1}\mathbb{E}_{(\mathbf{x},y)\sim\mathcal{D}}[y\mathbf{x}]\\
        =&(\mathbf{H}^{(q)})^{-1}\left(\mathbf{H}-\mathbf{H}^{(q)}\right)\mathbf{w}^*\\
        =&-(\mathbf{H}^{(q)})^{-1}\mathbf{D}\mathbf{w}^*\\
        =&-\left(\mathbf{H}+\mathbf{D}\right)^{-1}\mathbf{D}\mathbf{w}^*.
    \end{aligned}
\end{equation}
\subsection{Deviation of the Update Rule}
\label{Deviation of the Update Rule}
In this section, we derive the evolution of parameter deviation $\boldsymbol{\eta}_{t}:=\mathbf{w}_{t}-{\mathbf{w}^{(q)}}^*$.
\begin{lemma}[\rm{Error propagation}]
\label{update rule for eta_t}
    \begin{equation*}
        \boldsymbol{\eta}_t = \left(\mathbf{I}-\frac{1}{B}\gamma {\mathcal{Q}_d(\mathbf{X}_t)}^\top \mathcal{Q}_d(\mathbf{X}_t) \right)\boldsymbol{\eta}_{t-1}+\gamma\frac{1}{B}{\mathcal{Q}_d(\mathbf{X}_t)}^\top \left[\boldsymbol{\xi}_t+\boldsymbol{\epsilon}_t^{(o)}-\boldsymbol{\epsilon}_t^{(a)}-\mathcal{Q}_d(\mathbf{X}_t)\boldsymbol{\epsilon}_{t-1}^{(p)}\right],
    \end{equation*}
    where the quantization errors are
    \begin{equation*}
        \begin{aligned}
            \boldsymbol{\epsilon}_t^{(o)}:=&\mathcal{Q}_o\left(\mathcal{Q}_l(\mathbf{y}_t)-\mathcal{Q}_a\left(\mathcal{Q}_d(\mathbf{X}_t)\mathcal{Q}_p(\mathbf{w}_{t-1})\right)\right)-\left[\mathcal{Q}_l(\mathbf{y}_t)-\mathcal{Q}_a\left(\mathcal{Q}_d(\mathbf{X}_t)\mathcal{Q}_p(\mathbf{w}_{t-1})\right)\right],\\
            \boldsymbol{\epsilon}_t^{(a)}:=&\mathcal{Q}_a\left(\mathcal{Q}_d(\mathbf{X}_t)\mathcal{Q}_p(\mathbf{w}_{t-1})\right)-\mathcal{Q}_d(\mathbf{X}_t)\mathcal{Q}_p(\mathbf{w}_{t-1}),\\
            \boldsymbol{\epsilon}_{t-1}^{(p)}:=&\mathcal{Q}_p(\mathbf{w}_{t-1})-\mathbf{w}_{t-1},\\
            \boldsymbol{\xi}_t:=&\mathcal{Q}_l(\mathbf{y}_t)-\mathcal{Q}_d(\mathbf{X}_t){\mathbf{w}^{(q)}}^*.
        \end{aligned}
    \end{equation*}
\end{lemma}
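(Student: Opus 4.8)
The plan is to unroll the Quantized SGD recursion~\eqref{eq:SGD_quantized} one step and reorganize the terms so that the deviation $\boldsymbol{\eta}_t = \mathbf{w}_t - {\mathbf{w}^{(q)}}^*$ is expressed as a contraction on $\boldsymbol{\eta}_{t-1}$ plus an additive noise term built from the individual quantization errors. First I would introduce the shorthand $\mathbf{X}_t^{(q)} := \mathcal{Q}_d(\mathbf{X}_t)$, $\mathbf{w}_{t-1}^{(q)} := \mathcal{Q}_p(\mathbf{w}_{t-1})$, and write the quantized residual appearing in the update as
\begin{equation*}
    \mathcal{Q}_o\Big(\mathcal{Q}_l(\mathbf{y}_t)-\mathcal{Q}_a\big(\mathbf{X}_t^{(q)}\mathbf{w}_{t-1}^{(q)}\big)\Big)
    = \mathcal{Q}_l(\mathbf{y}_t) - \mathbf{X}_t^{(q)}\mathbf{w}_{t-1}^{(q)} + \boldsymbol{\epsilon}_t^{(o)} - \boldsymbol{\epsilon}_t^{(a)},
\end{equation*}
which is just the definitions of $\boldsymbol{\epsilon}_t^{(a)}$ (the error introduced by $\mathcal{Q}_a$ on the activation $\mathbf{X}_t^{(q)}\mathbf{w}_{t-1}^{(q)}$) and $\boldsymbol{\epsilon}_t^{(o)}$ (the error introduced by $\mathcal{Q}_o$ on the resulting output) added back in. Substituting and subtracting ${\mathbf{w}^{(q)}}^*$ from both sides gives
\begin{equation*}
    \boldsymbol{\eta}_t = \boldsymbol{\eta}_{t-1} + \frac{\gamma}{B}{\mathbf{X}_t^{(q)}}^\top\Big(\mathcal{Q}_l(\mathbf{y}_t) - \mathbf{X}_t^{(q)}\mathbf{w}_{t-1}^{(q)} + \boldsymbol{\epsilon}_t^{(o)} - \boldsymbol{\epsilon}_t^{(a)}\Big).
\end{equation*}

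Next I would handle the parameter-quantization error: write $\mathbf{w}_{t-1}^{(q)} = \mathbf{w}_{t-1} + \boldsymbol{\epsilon}_{t-1}^{(p)}$, so that $\mathbf{X}_t^{(q)}\mathbf{w}_{t-1}^{(q)} = \mathbf{X}_t^{(q)}\mathbf{w}_{t-1} + \mathbf{X}_t^{(q)}\boldsymbol{\epsilon}_{t-1}^{(p)}$, and then use $\mathbf{w}_{t-1} = \boldsymbol{\eta}_{t-1} + {\mathbf{w}^{(q)}}^*$ to get $\mathbf{X}_t^{(q)}\mathbf{w}_{t-1} = \mathbf{X}_t^{(q)}\boldsymbol{\eta}_{t-1} + \mathbf{X}_t^{(q)}{\mathbf{w}^{(q)}}^*$. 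Introducing the label-side shorthand $\boldsymbol{\xi}_t := \mathcal{Q}_l(\mathbf{y}_t) - \mathbf{X}_t^{(q)}{\mathbf{w}^{(q)}}^*$, the bracketed residual becomes $\boldsymbol{\xi}_t - \mathbf{X}_t^{(q)}\boldsymbol{\eta}_{t-1} - \mathbf{X}_t^{(q)}\boldsymbol{\epsilon}_{t-1}^{(p)} + \boldsymbol{\epsilon}_t^{(o)} - \boldsymbol{\epsilon}_t^{(a)}$. Plugging this in and collecting the $-\frac{\gamma}{B}{\mathbf{X}_t^{(q)}}^\top\mathbf{X}_t^{(q)}\boldsymbol{\eta}_{t-1}$ term together with the leading $\boldsymbol{\eta}_{t-1}$ yields exactly
\begin{equation*}
    \boldsymbol{\eta}_t = \Big(\mathbf{I} - \frac{\gamma}{B}{\mathbf{X}_t^{(q)}}^\top\mathbf{X}_t^{(q)}\Big)\boldsymbol{\eta}_{t-1} + \frac{\gamma}{B}{\mathbf{X}_t^{(q)}}^\top\big(\boldsymbol{\xi}_t + \boldsymbol{\epsilon}_t^{(o)} - \boldsymbol{\epsilon}_t^{(a)} - \mathbf{X}_t^{(q)}\boldsymbol{\epsilon}_{t-1}^{(p)}\big),
\end{equation*}
which is the claimed identity once $\mathbf{X}_t^{(q)}$ is rewritten as $\mathcal{Q}_d(\mathbf{X}_t)$.

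This is a purely algebraic bookkeeping argument, so there is no genuine ``hard part''; the only thing that requires care is the ordering of the nested quantization operations and making sure each $\boldsymbol{\epsilon}^{(\cdot)}$ is defined as ``output of the operator minus its input'' with the correct argument — in particular that $\boldsymbol{\epsilon}_t^{(a)}$ takes the pre-activation $\mathcal{Q}_d(\mathbf{X}_t)\mathcal{Q}_p(\mathbf{w}_{t-1})$ as its argument and $\boldsymbol{\epsilon}_t^{(o)}$ takes the already-$\mathcal{Q}_a$-rounded residual as its argument, exactly as stated in the lemma. No assumptions beyond the definition of the iteration are needed; unbiasedness (Assumption~\ref{ass1}) plays no role here and is only invoked later when taking conditional expectations of this identity.
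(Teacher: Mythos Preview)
Your proposal is correct and follows essentially the same approach as the paper: both unroll the Quantized SGD update, peel off the quantization errors $\boldsymbol{\epsilon}_t^{(o)}$, $\boldsymbol{\epsilon}_t^{(a)}$, $\boldsymbol{\epsilon}_{t-1}^{(p)}$ one at a time via their ``output minus input'' definitions, then use $\mathbf{w}_{t-1} = \boldsymbol{\eta}_{t-1} + {\mathbf{w}^{(q)}}^*$ and the definition of $\boldsymbol{\xi}_t$ to collect terms. Your observation that Assumption~\ref{ass1} is not needed here is also correct --- the paper only invokes unbiasedness later when taking conditional expectations.
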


\begin{proof}
    The lemma can be proved directly by the parameter update rule. By definition and the update rule of $\mathbf{w}_t$ (\ref{eq:SGD_quantized}),
    \begin{equation*}
        \begin{aligned}
            \boldsymbol{\eta}_t=&\mathbf{w}_t - {\mathbf{w}^{(q)}}^*\\
            =&\mathbf{w}_{t-1}-{\mathbf{w}^{(q)}}^*+\gamma \frac{1}{B}\mathcal{Q}_d(\mathbf{X}_t)^\top \mathcal{Q}_o\left(\mathcal{Q}_l(\mathbf{y}_t)-\mathcal{Q}_a\left(\mathcal{Q}_d(\mathbf{X}_t)\mathcal{Q}_p(\mathbf{w}_{t-1})\right)\right)\\
            =&\boldsymbol{\eta}_{t-1}+\gamma \frac{1}{B}\mathcal{Q}_d(\mathbf{X}_t)^\top \mathcal{Q}_o\left(\mathcal{Q}_l(\mathbf{y}_t)-\mathcal{Q}_a\left(\mathcal{Q}_d(\mathbf{X}_t)\mathcal{Q}_p(\mathbf{w}_{t-1})\right)\right).
        \end{aligned}
    \end{equation*}
    We then introduce quantization errors to better characterize each quantization operation $\mathcal{Q}(\cdot)$. In particular, define quantization erros:
    \begin{equation*}
        \begin{aligned}
            \boldsymbol{\epsilon}_t^{(o)}:=&\mathcal{Q}_o\left(\mathcal{Q}_l(\mathbf{y}_t)-\mathcal{Q}_a\left(\mathcal{Q}_d(\mathbf{X}_t)\mathcal{Q}_p(\mathbf{w}_{t-1})\right)\right)-\left[\mathcal{Q}_l(\mathbf{y}_t)-\mathcal{Q}_a\left(\mathcal{Q}_d(\mathbf{X}_t)\mathcal{Q}_p(\mathbf{w}_{t-1})\right)\right],\\
            \boldsymbol{\epsilon}_t^{(a)}:=&\mathcal{Q}_a\left(\mathcal{Q}_d(\mathbf{X}_t)\mathcal{Q}_p(\mathbf{w}_{t-1})\right)-\mathcal{Q}_d(\mathbf{X}_t)\mathcal{Q}_p(\mathbf{w}_{t-1}),\\
            \boldsymbol{\epsilon}_{t-1}^{(p)}:=&\mathcal{Q}_p(\mathbf{w}_{t-1})-\mathbf{w}_{t-1},\\
            \boldsymbol{\xi}_t:=&\mathcal{Q}_l(\mathbf{y}_t)-\mathcal{Q}_d(\mathbf{X}_t){\mathbf{w}^{(q)}}^*.
        \end{aligned}
    \end{equation*}
    Then the update rule for the parameter deviation can be expressed as:
    \begin{equation*}
        \begin{aligned}
            \boldsymbol{\eta}_t = &\boldsymbol{\eta}_{t-1}+\gamma \frac{1}{B}\mathcal{Q}_d(\mathbf{X}_t)^\top \mathcal{Q}_o\left(\mathcal{Q}_l(\mathbf{y}_t)-\mathcal{Q}_a\left(\mathcal{Q}_d(\mathbf{X}_t)\mathcal{Q}_p(\mathbf{w}_{t-1})\right)\right)\\
            =&\boldsymbol{\eta}_{t-1}+\gamma \mathcal{Q}_d(\mathbf{X}_t)^\top \frac{1}{B}\left[\mathcal{Q}_l(\mathbf{y}_t)-\mathcal{Q}_a\left(\mathcal{Q}_d(\mathbf{X}_t)\mathcal{Q}_p(\mathbf{w}_{t-1})\right)\right]+\gamma \frac{1}{B}\mathcal{Q}_d(\mathbf{X}_t)^\top \boldsymbol{\epsilon}_t^{(o)}\\
            =&\boldsymbol{\eta}_{t-1}+\gamma \mathcal{Q}_d(\mathbf{X}_t)^\top \frac{1}{B}\left[\mathcal{Q}_l(\mathbf{y}_t)-\mathcal{Q}_d(\mathbf{X}_t)\mathcal{Q}_p(\mathbf{w}_{t-1})\right]+\gamma \frac{1}{B}\mathcal{Q}_d(\mathbf{X}_t)^\top (\boldsymbol{\epsilon}_t^{(o)}-\boldsymbol{\epsilon}_t^{(a)})\\
            =&\boldsymbol{\eta}_{t-1}+\gamma \frac{1}{B}\mathcal{Q}_d(\mathbf{X}_t)^\top (\boldsymbol{\epsilon}_t^{(o)}-\boldsymbol{\epsilon}_t^{(a)})+\gamma \mathcal{Q}_d(\mathbf{X}_t)^\top \frac{1}{B}\\
            &\left[\mathcal{Q}_l(\mathbf{y}_t)-\mathcal{Q}_d(\mathbf{X}_t){\mathbf{w}^{(q)}}^*-\mathcal{Q}_d(\mathbf{X}_t)\boldsymbol{\eta}_{t-1}-\mathcal{Q}_d(\mathbf{X}_t)\mathcal{Q}_p(\mathbf{w}_{t-1})+\mathcal{Q}_d(\mathbf{X}_t)\mathbf{w}_{t-1}\right]\\
            =&\boldsymbol{\eta}_{t-1}+\gamma \mathcal{Q}_d(\mathbf{X}_t)^\top (\boldsymbol{\epsilon}_t^{(o)}-\boldsymbol{\epsilon}_t^{(a)})+\gamma \mathcal{Q}_d(\mathbf{X}_t)^\top \frac{1}{B}\\
            &\left[\mathcal{Q}_l(\mathbf{y}_t)-\mathcal{Q}_d(\mathbf{X}_t){\mathbf{w}^{(q)}}^*-\mathcal{Q}_d(\mathbf{X}_t)\boldsymbol{\eta}_{t-1}-\mathcal{Q}_d(\mathbf{X}_t)\boldsymbol{\epsilon}_{t-1}^{(p)}\right]\\
            =&\boldsymbol{\eta}_{t-1}+\gamma \frac{1}{B}\mathcal{Q}_d(\mathbf{X}_t)^\top (\boldsymbol{\epsilon}_t^{(o)}-\boldsymbol{\epsilon}_t^{(a)}+\boldsymbol{\xi}_t)-\gamma \mathcal{Q}_d(\mathbf{X}_t)^\top \frac{1}{B}\left[\mathcal{Q}_d(\mathbf{X}_t)\boldsymbol{\eta}_{t-1}+\mathcal{Q}_d(\mathbf{X}_t)\boldsymbol{\epsilon}_{t-1}^{(p)}\right]\\
            =&\left(\mathbf{I}-\frac{1}{B}\gamma {\mathcal{Q}_d(\mathbf{X}_t)}^\top \mathcal{Q}_d(\mathbf{X}_t) \right)\boldsymbol{\eta}_{t-1}+\gamma\frac{1}{B}{\mathcal{Q}_d(\mathbf{X}_t)}^\top \left[\boldsymbol{\xi}_t+\boldsymbol{\epsilon}_t^{(o)}-\boldsymbol{\epsilon}_t^{(a)}-\mathcal{Q}_d(\mathbf{X}_t)\boldsymbol{\epsilon}_{t-1}^{(p)}\right].
        \end{aligned}
    \end{equation*}
\end{proof}

\subsection{Decomposition of the Excess Risk}
\label{Decomposition of the Excess Risk}
In this section, we take the initial step to analyze the excess risk of averaged SGD iterate $\overline{\mathbf{w}}_N$. In particular, we define the deviation of the averaged SGD iterate as $\overline{\boldsymbol{\eta}}_{N}:=\frac{1}{N}\sum_{t=0}^{N-1}\boldsymbol{\eta}_{t}$. We decompose the excess risk as follows.
\begin{lemma}[\rm Excess risk decomposition]
\label{Excess Risk Decomposition}
    Under Assumption \ref{ass1} and Assumption \ref{ass: addd quantized},
    \begin{equation*}
        \begin{aligned}
            \mathbb{E}[\mathcal{E}(\overline{\mathbf{w}}_N)]=R_1+R_2+R_3+R_4,
        \end{aligned}
    \end{equation*}
    where
    \begin{equation*}
        \begin{aligned}
            R_1=&-\frac{1}{2}\mathbb{E}\left[\langle\overline{\mathbf{w}}_N, \mathcal{Q}_d(\mathbf{x})-\mathbf{x} \rangle^2\right],\\
            R_2=&\frac{1}{2}\langle\mathbf{H}^{(q)},\mathbb{E}[\overline{\boldsymbol{\eta}}_N\otimes\overline{\boldsymbol{\eta}}_N]\rangle,\\
            R_3=&\frac{1}{2}\mathbb{E}\left[\langle {\mathbf{w}^{(q)}}^*,\mathcal{Q}_d(\mathbf{x})-\mathbf{x}\rangle^2\right],\\
            R_4=&\frac{1}{2}\left\langle\mathbf{H},({\mathbf{w}}^*-{\mathbf{w}^{(q)}}^*)\otimes({\mathbf{w}}^*-{\mathbf{w}^{(q)}}^*)\right\rangle.
        \end{aligned}
    \end{equation*}
\end{lemma}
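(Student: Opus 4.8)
The plan is to prove the identity by telescoping the true excess risk through two auxiliary objects: the \emph{quantized} population objective $L^{(q)}(\mathbf w):=\tfrac12\,\mathbb E\bigl[(\mathcal Q_l(y)-\langle\mathbf w,\mathcal Q_d(\mathbf x)\rangle)^2\bigr]$, which is a quadratic in $\mathbf w$ with Hessian $\mathbf H^{(q)}$ and minimizer ${\mathbf w^{(q)}}^*$, and the true loss $L$ evaluated at ${\mathbf w^{(q)}}^*$. Writing $\mathbb E[\mathcal E(\overline{\mathbf w}_N)]=\mathbb E[L(\overline{\mathbf w}_N)]-L(\mathbf w^*)$ and inserting $\mathbb E[L^{(q)}(\overline{\mathbf w}_N)]$, $L^{(q)}({\mathbf w^{(q)}}^*)$, $L({\mathbf w^{(q)}}^*)$ in turn, the excess risk becomes a sum of four consecutive differences, and I claim these reduce (modulo the straightforward sign bookkeeping, which is pinned down by the nonnegativity of each term) to $R_1,R_2,R_3,R_4$ respectively.

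The first and third differences — $L$ versus $L^{(q)}$ at $\overline{\mathbf w}_N$ and at ${\mathbf w^{(q)}}^*$ — are treated identically. Working conditionally on $\overline{\mathbf w}_N$, which is measurable w.r.t.\ the training data and training-time quantizations and hence independent of the fresh test sample $(\mathbf x,y)$ and its fresh quantizations, I write $\mathcal Q_l(y)-\langle\mathbf w,\mathcal Q_d(\mathbf x)\rangle=(y-\langle\mathbf w,\mathbf x\rangle)+\bigl(\epsilon^{(l)}-\langle\mathbf w,\boldsymbol\epsilon^{(d)}\rangle\bigr)$ and expand the square. By Assumption~\ref{ass1} the errors $\epsilon^{(l)},\boldsymbol\epsilon^{(d)}$ are conditionally mean zero given $(\mathbf x,y)$, so the cross term with the true residual $y-\langle\mathbf w,\mathbf x\rangle$ vanishes; and since $\mathcal Q_l$ and $\mathcal Q_d$ are independent operations, the $\epsilon^{(l)}\langle\mathbf w,\boldsymbol\epsilon^{(d)}\rangle$ cross term vanishes too. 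What survives is $\tfrac12\mathbb E[(\epsilon^{(l)})^2]+\tfrac12\,\mathbb E[\langle\mathbf w,\boldsymbol\epsilon^{(d)}\rangle^2]=\tfrac12\mathbb E[(\epsilon^{(l)})^2]+\tfrac12\langle\mathbf D,\mathbb E[\mathbf w\otimes\mathbf w]\rangle$, which is exactly $R_1$ for $\mathbf w=\overline{\mathbf w}_N$ and $R_3$ for $\mathbf w={\mathbf w^{(q)}}^*$. The second difference uses the pointwise completing-the-square identity for the quadratic $L^{(q)}$: $L^{(q)}(\mathbf w)-L^{(q)}({\mathbf w^{(q)}}^*)=\tfrac12\|\mathbf w-{\mathbf w^{(q)}}^*\|_{\mathbf H^{(q)}}^2$; setting $\mathbf w=\overline{\mathbf w}_N$ and taking expectation gives $\tfrac12\langle\mathbf H^{(q)},\mathbb E[\overline{\boldsymbol\eta}_N\otimes\overline{\boldsymbol\eta}_N]\rangle=R_2$, where $\overline{\boldsymbol\eta}_N=\overline{\mathbf w}_N-{\mathbf w^{(q)}}^*$. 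The last difference is the same identity, now for the true quadratic $L$ with minimizer $\mathbf w^*$, applied at the deterministic point ${\mathbf w^{(q)}}^*$, yielding $\tfrac12\|{\mathbf w^{(q)}}^*-\mathbf w^*\|_{\mathbf H}^2=R_4$ (the expectation is trivial since, as already computed, ${\mathbf w^{(q)}}^*-\mathbf w^*=-(\mathbf H+\mathbf D)^{-1}\mathbf D\mathbf w^*$ is nonrandom).

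The step I expect to be the main source of care — more bookkeeping than depth — is keeping the probability space straight: the argument must cleanly separate the training randomness (entering only through $\overline{\mathbf w}_N$), the test-point randomness, and the three independent layers of test-time quantization, so that in each square expansion every cross term is annihilated by either unbiasedness or the mutual independence of the quantizers. One also needs to record that all relevant moments are finite — $\mathrm{tr}(\mathbf H)$ and $\mathrm{tr}(\mathbf H^{(q)})$ by Assumption~\ref{ass: addd quantized}, strict positive definiteness of $\mathbf H$ so that $\mathbf w^*,{\mathbf w^{(q)}}^*$ are well defined, and a finite second moment for $\overline{\mathbf w}_N$ under the stepsize restriction, so that $\langle\mathbf D,\mathbb E[\overline{\mathbf w}_N\otimes\overline{\mathbf w}_N]\rangle$ and $\langle\mathbf H^{(q)},\mathbb E[\overline{\boldsymbol\eta}_N\otimes\overline{\boldsymbol\eta}_N]\rangle$ make sense. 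Notably, this decomposition is purely population-level: it does not invoke the SGD recursion of Lemma~\ref{update rule for eta_t} at all, and the dynamics of $\overline{\boldsymbol\eta}_N$ (and hence $\overline{\mathbf w}_N$) are only needed afterward, when $R_2$, $R_1$, and $R_4$ are bounded quantitatively.
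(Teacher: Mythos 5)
Your telescoping through $L^{(q)}(\mathbf w):=\tfrac12\mathbb E[(\mathcal Q_l(y)-\langle\mathbf w,\mathcal Q_d(\mathbf x)\rangle)^2]$ is exactly the paper's route, and your treatments of $R_2$, $R_3$, $R_4$ (completing the square for the quadratics $L$ and $L^{(q)}$, and using unbiasedness plus the mutual independence of the quantizers to kill the cross terms) are correct and agree with the paper's. The gap is precisely the step you defer to ``sign bookkeeping pinned down by nonnegativity'': the first telescoping term $L(\overline{\mathbf w}_N)-L^{(q)}(\overline{\mathbf w}_N)$ is \emph{nonpositive}, not nonnegative. Your own expansion gives $L^{(q)}(\mathbf w)=L(\mathbf w)+\tfrac12\mathbb E[(\epsilon^{(l)})^2]+\tfrac12\mathbb E[\langle\mathbf w,\boldsymbol{\epsilon}^{(d)}\rangle^2]\geq L(\mathbf w)$ for every $\mathbf w$ (quantization is additional residual variance), so the first and third differences are \emph{not} ``treated identically'': the third is $L^{(q)}-L\geq 0$ at ${\mathbf w^{(q)}}^*$, while the first is $L-L^{(q)}\leq 0$ at $\overline{\mathbf w}_N$ --- the \emph{negative} of what you identify with $R_1$. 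Nonnegativity cannot pin down the sign because the term is in fact nonpositive; with your sign, the telescoping identity is off by exactly $2R_1$.

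For what it is worth, the paper's own proof of this lemma contains the same sign slip: the last line of its $R_1$ derivation asserts $\tfrac12\mathbb E[(a-b)(a+b)]=\tfrac12\mathbb E[(y-\mathcal Q_l(y))^2]+\tfrac12\mathbb E[\langle\overline{\mathbf w}_N,\mathcal Q_d(\mathbf x)-\mathbf x\rangle^2]$ for $a=y-\langle\overline{\mathbf w}_N,\mathbf x\rangle$, $b=\mathcal Q_l(y)-\langle\overline{\mathbf w}_N,\mathcal Q_d(\mathbf x)\rangle$, whereas the correct value is the \emph{negative} of that (compare with the $R_3$ computation, where the roles of the two residuals are reversed and the answer is rightly positive). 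So Lemma~\ref{Excess Risk Decomposition} as stated is not an identity; $R_1$ should carry a minus sign, or the lemma should be weakened to $\mathbb E[\mathcal E(\overline{\mathbf w}_N)]\leq R_1+R_2+R_3+R_4$. The downstream theorems survive, only with some slack: since the true $R_1\leq 0$, replacing it by the stated nonnegative surrogate and then bounding that surrogate from above still yields a valid upper bound. But a careful blind proof should have caught this, and by deferring the bookkeeping yours did not.
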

\begin{proof}
    By the definition of the excess risk (\ref{eq:excess_risk}),
    \begin{equation*}
        \begin{aligned}
            \mathbb{E}[\mathcal{E}(\overline{\mathbf{w}}_N)]
            =&\frac{1}{2}\mathbb{E}\left[\left(y-\langle \overline{\mathbf{w}}_N,\mathbf{x} \rangle \right)^2\right]-\frac{1}{2}\mathbb{E}\left[\left(y-\langle \mathbf{w}^*,\mathbf{x} \rangle \right)^2\right]\\
            =&\underbrace{\frac{1}{2}\mathbb{E}\left[\left(y-\langle \overline{\mathbf{w}}_N,\mathbf{x} \rangle \right)^2\right]-\frac{1}{2}\mathbb{E}\left[(\mathcal{Q}_l(y)-\langle \overline{\mathbf{w}}_N,\mathcal{Q}_d(\mathbf{x})\rangle)^2\right]}_{E_1}\\
            +&\underbrace{\frac{1}{2}\mathbb{E}\left[(\mathcal{Q}_l(y)-\langle \overline{\mathbf{w}}_N,\mathcal{Q}_d(\mathbf{x})\rangle)^2\right]-\frac{1}{2}\mathbb{E}\left[(\mathcal{Q}_l(y)-\langle {\mathbf{w}^{(q)}}^*,\mathcal{Q}_d(\mathbf{x})\rangle)^2\right]}_{E_2}\\
            +&\underbrace{\frac{1}{2}\mathbb{E}\left[(\mathcal{Q}_l(y)-\langle {\mathbf{w}^{(q)}}^*,\mathcal{Q}_d(\mathbf{x})\rangle)^2\right]-\frac{1}{2}\mathbb{E}\left[\left(y-\langle {\mathbf{w}^{(q)}}^*,\mathbf{x} \rangle \right)^2\right]}_{E_3}\\
            +&\underbrace{\frac{1}{2}\mathbb{E}\left[\left(y-\langle {\mathbf{w}^{(q)}}^*,\mathbf{x} \rangle \right)^2\right]-\frac{1}{2}\mathbb{E}\left[\left(y-\langle \mathbf{w}^*,\mathbf{x} \rangle \right)^2\right]}_{E_4},
        \end{aligned}
    \end{equation*}
    where $E_1$ captures the gap of the averaged SGD iterate between the full-precision and quantized domains, $E_2$ characterizes the distance from the averaged SGD iterate to the quantized optimal solution within the quantized domain, $E_3$ represents the mismatch of the quantized optimal solution in full-precision data space and quantized data space and $E_4$ defines the discrepancy between the averaged SGD iterate and the quantized optimal solution in the full-precision domain. 
    
    We would like to remark that the quantization operations $\mathcal{Q}_l(y)$ and $\mathcal{Q}_d(\mathbf{x})$ introduced in excess risk decomposition are independent of those quantization operators introduced in the training stage, i.e., $\overline{\mathbf{w}}_N$. Next, we analyze $E_1,E_2,E_3$ and $E_4$ respectively. These computations are mainly based on the first order optimality condition (\ref{definition w*}) and the unbiased quantization Assumption \ref{ass1}. For $E_4$,
    \begin{equation}
        \label{eq: a3}
        \begin{aligned}
            E_4=&\frac{1}{2}\mathbb{E}\left[\left(y-\langle {\mathbf{w}^{(q)}}^*,\mathbf{x} \rangle \right)^2\right]-\frac{1}{2}\mathbb{E}\left[\left(y-\langle \mathbf{w}^*,\mathbf{x} \rangle \right)^2\right]\\
            =&\frac{1}{2}\mathbb{E}\left[\langle {\mathbf{w}}^*-{\mathbf{w}^{(q)}}^*,\mathbf{x} \rangle\cdot \left(2y-\langle {\mathbf{w}}^*+{\mathbf{w}^{(q)}}^*,\mathbf{x} \rangle\right)\right]\\
            =&\frac{1}{2}\mathbb{E}\left[\langle {\mathbf{w}}^*-{\mathbf{w}^{(q)}}^*,\mathbf{x} \rangle^2\right]\\
            =&\frac{1}{2}\left({\mathbf{w}}^*-{\mathbf{w}^{(q)}}^*\right)^\top \mathbf{H} \left({\mathbf{w}}^*-{\mathbf{w}^{(q)}}^*\right)\\
            =&\frac{1}{2}\langle\mathbf{H},({\mathbf{w}}^*-{\mathbf{w}^{(q)}}^*)\otimes({\mathbf{w}}^*-{\mathbf{w}^{(q)}}^*)\rangle,
        \end{aligned}
    \end{equation}
    where the third equality uses the first order optimality condition that $\mathbb{E}_{(\mathbf{x},y)\sim\mathcal{D}}[(y-\langle{\mathbf{w}}^*,\mathbf{x}\rangle)\mathbf{x}]=\mathbf{0}$.

    For $E_2$, similarly by the first order optimality condition (\ref{definition w*}) with respect to $\mathbf{w}^{(q)^*}$, it holds
    \begin{equation}
        \label{eq: a4}
        \begin{aligned}
            E_2=&\frac{1}{2}\mathbb{E}\left[(\mathcal{Q}_l(y)-\langle \overline{\mathbf{w}}_N,\mathcal{Q}_d(\mathbf{x})\rangle)^2\right]-\frac{1}{2}\mathbb{E}\left[(\mathcal{Q}_l(y)-\langle {\mathbf{w}^{(q)}}^*,\mathcal{Q}_d(\mathbf{x})\rangle)^2\right]\\
            =&\frac{1}{2}\mathbb{E}\left[\langle {\mathbf{w}^{(q)}}^*-\overline{\mathbf{w}}_N,\mathcal{Q}_d(\mathbf{x})\rangle \cdot \left(2\mathcal{Q}_l(y)-\langle {\mathbf{w}^{(q)}}^*+\overline{\mathbf{w}}_N,\mathcal{Q}_d(\mathbf{x}) \rangle\right) \right]\\
            =&\frac{1}{2}\mathbb{E}\left[\langle {\mathbf{w}^{(q)}}^*-\overline{\mathbf{w}}_N,\mathcal{Q}_d(\mathbf{x}) \rangle^2\right]\\
            =&\frac{1}{2}\langle\mathbf{H}^{(q)},\mathbb{E}[\overline{\boldsymbol{\eta}}_N\otimes\overline{\boldsymbol{\eta}}_N]\rangle.
        \end{aligned}
    \end{equation}

    For $E_3$,
    \begin{equation*}
        \begin{aligned}
            E_3=&\frac{1}{2}\mathbb{E}\left[(\mathcal{Q}_l(y)-\langle {\mathbf{w}^{(q)}}^*,\mathcal{Q}_d(\mathbf{x})\rangle)^2\right]-\frac{1}{2}\mathbb{E}\left[\left(y-\langle {\mathbf{w}^{(q)}}^*,\mathbf{x} \rangle \right)^2\right]\\
            =&\frac{1}{2}\mathbb{E}\left[\left(\mathcal{Q}_l(y)-y-\langle {\mathbf{w}^{(q)}}^*,\mathcal{Q}_d(\mathbf{x})-\mathbf{x}\rangle\right)\cdot \left(\mathcal{Q}_l(y)+y-\langle {\mathbf{w}^{(q)}}^*,\mathcal{Q}_d(\mathbf{x})+\mathbf{x}\rangle\right)\right]\\
            =&\frac{1}{2}\mathbb{E}\left[\mathcal{Q}_l(y)^2-y^2+\langle {\mathbf{w}^{(q)}}^*,\mathcal{Q}_d(\mathbf{x})-\mathbf{x}\rangle\langle {\mathbf{w}^{(q)}}^*,\mathcal{Q}_d(\mathbf{x})+\mathbf{x}\rangle\right],
        \end{aligned}
    \end{equation*}
    where the last equality utilizes the unbiased quantization Assumption \ref{ass1}.

    For $E_1$, similarly by the unbiased quantization Assumption \ref{ass1}, it holds
    \begin{equation*}
        \begin{aligned}
            E_1=&\frac{1}{2}\mathbb{E}\left[\left(y-\langle \overline{\mathbf{w}}_N,\mathbf{x} \rangle \right)^2\right]-\frac{1}{2}\mathbb{E}\left[(\mathcal{Q}_l(y)-\langle \overline{\mathbf{w}}_N,\mathcal{Q}_d(\mathbf{x})\rangle)^2\right]\\
            =&\frac{1}{2}\mathbb{E}\left[\left(y-\mathcal{Q}_l(y)-\langle \overline{\mathbf{w}}_N ,\mathbf{x}-\mathcal{Q}_d(\mathbf{x}) \rangle\right)\cdot \left(y+\mathcal{Q}_l(y)-\langle \overline{\mathbf{w}}_N ,\mathbf{x}+\mathcal{Q}_d(\mathbf{x}) \rangle\right) \right]\\
            =&\frac{1}{2}\mathbb{E}\left[y^2-\mathcal{Q}_l(y)^2\right]+\frac{1}{2}\mathbb{E}\left[\langle \overline{\mathbf{w}}_N ,\mathbf{x}-\mathcal{Q}_d(\mathbf{x}) \rangle\langle \overline{\mathbf{w}}_N ,\mathbf{x}+\mathcal{Q}_d(\mathbf{x}) \rangle\right].
        \end{aligned}
    \end{equation*}

    Hence,
    \begin{equation}
        \label{eq: a5}
        E_1+E_3=\frac{1}{2}\mathbb{E}\left[\langle {\mathbf{w}^{(q)}}^*,\mathcal{Q}_d(\mathbf{x})-\mathbf{x}\rangle^2\right]-\frac{1}{2}\mathbb{E}\left[\langle \overline{\mathbf{w}}_N ,\mathbf{x}-\mathcal{Q}_d(\mathbf{x}) \rangle^2\right].
    \end{equation}
    Therefore, combining (\ref{eq: a3}), (\ref{eq: a4}) and (\ref{eq: a5}) we have
    \begin{equation*}
        \begin{aligned}
            \mathbb{E}[\mathcal{E}(\overline{\mathbf{w}}_N)]=&\frac{1}{2}\langle\mathbf{H}^{(q)},\mathbb{E}[\overline{\boldsymbol{\eta}}_N\otimes\overline{\boldsymbol{\eta}}_N]\rangle
            +\frac{1}{2}\langle\mathbf{H},({\mathbf{w}}^*-{\mathbf{w}^{(q)}}^*)\otimes({\mathbf{w}}^*-{\mathbf{w}^{(q)}}^*)\rangle\\
            +&\frac{1}{2}\mathbb{E}\left[\langle {\mathbf{w}^{(q)}}^*,\mathcal{Q}_d(\mathbf{x})-\mathbf{x}\rangle^2\right]-\frac{1}{2}\mathbb{E}\left[\langle \overline{\mathbf{w}}_N ,\mathbf{x}-\mathcal{Q}_d(\mathbf{x}) \rangle^2\right].
        \end{aligned}
    \end{equation*}
\end{proof}

\begin{lemma}[\rm Refine excess risk decomposition]
    \label{Refine excess risk decomposition}
    Under Assumption \ref{ass1} and Assumption \ref{ass: addd quantized}, if the stepsize $\gamma<\frac{1}{{\lambda}_1^{(q)}}$,
    \begin{equation*}
        \begin{aligned}
            \mathbb{E}[\mathcal{E}(\overline{\mathbf{w}}_N)]=&\underbrace{\frac{1}{2}\langle\mathbf{H},\mathbb{E}[\overline{\boldsymbol{\eta}}_N\otimes\overline{\boldsymbol{\eta}}_N]\rangle}_{R_N}+\mathrm{ApproxErr},
        \end{aligned}
    \end{equation*}
    where
    \begin{equation*}
        \begin{aligned}
            \mathrm{ApproxErr}=&\frac{1}{2}\langle\mathbf{H},({\mathbf{w}}^*-{\mathbf{w}^{(q)}}^*)\otimes({\mathbf{w}}^*-{\mathbf{w}^{(q)}}^*)\rangle\\
            +&\left({\mathbf{w}^{(q)}}^*\right)^\top\frac{1}{N\gamma}\left(\mathbf{I}-(\mathbf{I}-\gamma\mathbf{H}^{(q)})^N\right)(\mathbf{H}^{(q)})^{-1}\left(\mathbf{H}^{(q)}-\mathbf{H}\right){\mathbf{w}^{(q)}}^*.
        \end{aligned}
    \end{equation*}
\end{lemma}
\begin{proof}
    By Lemma \ref{Excess Risk Decomposition},
    \begin{equation*}
        \begin{aligned}
            \mathbb{E}[\mathcal{E}(\overline{\mathbf{w}}_N)]=&\frac{1}{2}\langle\mathbf{H}^{(q)},\mathbb{E}[\overline{\boldsymbol{\eta}}_N\otimes\overline{\boldsymbol{\eta}}_N]\rangle\\
            +&\frac{1}{2}\langle\mathbf{H},({\mathbf{w}}^*-{\mathbf{w}^{(q)}}^*)\otimes({\mathbf{w}}^*-{\mathbf{w}^{(q)}}^*)\rangle\\
            +&\frac{1}{2}\mathbb{E}\left[\langle {\mathbf{w}^{(q)}}^*,\mathcal{Q}_d(\mathbf{x})-\mathbf{x}\rangle^2\right]-\frac{1}{2}\mathbb{E}\left[\langle \overline{\mathbf{w}}_N ,\mathbf{x}-\mathcal{Q}_d(\mathbf{x}) \rangle^2\right].
        \end{aligned}
    \end{equation*}
    We then focus on $\frac{1}{2}\mathbb{E}\left[\langle \overline{\mathbf{w}}_N ,\mathbf{x}-\mathcal{Q}_d(\mathbf{x}) \rangle^2\right]$. Recall that
    \begin{equation*}
        \overline{\mathbf{w}}_N=\overline{\mathbf{w}}_N-{\mathbf{w}^{(q)}}^*+{\mathbf{w}^{(q)}}^*=\overline{\boldsymbol{\eta}}_N+{\mathbf{w}^{(q)}}^*,
    \end{equation*}
    we have
    \begin{equation*}
        \begin{aligned}
            \frac{1}{2}\mathbb{E}\left[\langle \overline{\mathbf{w}}_N ,\mathbf{x}-\mathcal{Q}_d(\mathbf{x}) \rangle^2\right]=&\frac{1}{2}\mathbb{E}\left[\overline{\mathbf{w}}_N^\top \left(\mathbf{H}^{(q)}-\mathbf{H}\right)\overline{\mathbf{w}}_N\right]\\
            =&\frac{1}{2}\mathbb{E}\left[\overline{\boldsymbol{\eta}}_N^\top \left(\mathbf{H}^{(q)}-\mathbf{H}\right)\overline{\boldsymbol{\eta}}_N\right]+\frac{1}{2}\mathbb{E}\left[\left({\mathbf{w}^{(q)}}^*\right)^\top \left(\mathbf{H}^{(q)}-\mathbf{H}\right){\mathbf{w}^{(q)}}^*\right]\\
            +&\mathbb{E}\left[\overline{\boldsymbol{\eta}}_N^\top \left(\mathbf{H}^{(q)}-\mathbf{H}\right){\mathbf{w}^{(q)}}^*\right].
        \end{aligned}
    \end{equation*}
    Hence,
    \begin{equation}
        \label{eq: a6}
        \begin{aligned}
            \mathbb{E}[\mathcal{E}(\overline{\mathbf{w}}_N)]=&\frac{1}{2}\langle\mathbf{H},\mathbb{E}[\overline{\boldsymbol{\eta}}_N\otimes\overline{\boldsymbol{\eta}}_N]\rangle
            +\frac{1}{2}\langle\mathbf{H},({\mathbf{w}}^*-{\mathbf{w}^{(q)}}^*)\otimes({\mathbf{w}}^*-{\mathbf{w}^{(q)}}^*)\rangle\\
            -&\mathbb{E}\left[\overline{\boldsymbol{\eta}}_N^\top \left(\mathbf{H}^{(q)}-\mathbf{H}\right){\mathbf{w}^{(q)}}^*\right].
        \end{aligned}
    \end{equation}
    Noticing that by Lemma \ref{update rule for eta_t},
    \begin{equation*}
        \boldsymbol{\eta}_t = \left(\mathbf{I}-\frac{1}{B}\gamma {\mathcal{Q}_d(\mathbf{X}_t)}^\top \mathcal{Q}_d(\mathbf{X}_t) \right)\boldsymbol{\eta}_{t-1}+\gamma\frac{1}{B}{\mathcal{Q}_d(\mathbf{X}_t)}^\top \left[\boldsymbol{\xi}_t+\boldsymbol{\epsilon}_t^{(o)}-\boldsymbol{\epsilon}_t^{(a)}-\mathcal{Q}_d(\mathbf{X}_t)\boldsymbol{\epsilon}_{t-1}^{(p)}\right],
    \end{equation*}
    it follows by Assumption \ref{ass1} that
    \begin{equation*}
        \mathbb{E}\left[\boldsymbol{\eta}_t\right]=\mathbb{E}\left[\mathbb{E}\left[\boldsymbol{\eta}_t|\boldsymbol{\eta}_{t-1}\right]\right]=\mathbb{E}\left[\left(\mathbf{I}-\gamma\mathbf{H}^{(q)}\right)\boldsymbol{\eta}_{t-1}\right]=\left(\mathbf{I}-\gamma\mathbf{H}^{(q)}\right)\mathbb{E}\left[\boldsymbol{\eta}_{t-1}\right]=\left(\mathbf{I}-\gamma\mathbf{H}^{(q)}\right)^t\boldsymbol{\eta}_0.
    \end{equation*}
    Hence,
    \begin{equation*}
        \begin{aligned}
            -\mathbb{E}\left[\overline{\boldsymbol{\eta}}_N^\top \left(\mathbf{H}^{(q)}-\mathbf{H}\right){\mathbf{w}^{(q)}}^*\right]=&-\boldsymbol{\eta}_0^\top\frac{1}{N}\sum_{t=0}^{N-1}\left(\mathbf{I}-\gamma\mathbf{H}^{(q)}\right)^t\left(\mathbf{H}^{(q)}-\mathbf{H}\right){\mathbf{w}^{(q)}}^*\\
            =&\left({\mathbf{w}^{(q)}}^*\right)^\top\frac{1}{N}\sum_{t=0}^{N-1}\left(\mathbf{I}-\gamma\mathbf{H}^{(q)}\right)^t\left(\mathbf{H}^{(q)}-\mathbf{H}\right){\mathbf{w}^{(q)}}^*\\
            =&\left({\mathbf{w}^{(q)}}^*\right)^\top\frac{1}{N\gamma}\left(\mathbf{I}-(\mathbf{I}-\gamma\mathbf{H}^{(q)})^N\right)(\mathbf{H}^{(q)})^{-1}\left(\mathbf{H}^{(q)}-\mathbf{H}\right){\mathbf{w}^{(q)}}^*.
        \end{aligned}
    \end{equation*}
    Combining (\ref{eq: a6}) completes the proof.
\end{proof}

\section{Analysis of Approximation Error}
\label{analyze R4}
In this section, we analyze $\mathrm{ApproxErr}$ under multiplicative quantization and additive quantization, respectively. We first apply the definition of ${\mathbf{w}^{(q)}}^*$:
\begin{equation*}
        {\mathbf{w}^{(q)}}^*=(\mathbf{H}^{(q)})^{-1}\mathbf{H}\mathbf{w}^*.
\end{equation*}

We first handle $\frac{1}{2}\langle\mathbf{H},({\mathbf{w}}^*-{\mathbf{w}^{(q)}}^*)\otimes({\mathbf{w}}^*-{\mathbf{w}^{(q)}}^*)\rangle$. Recall $\mathbf{D}=\mathbf{H}^{(q)}-\mathbf{H}$, we have
\begin{equation}
    \label{eq: b1}
    \begin{aligned}
            \frac{1}{2}\langle\mathbf{H},({\mathbf{w}}^*-{\mathbf{w}^{(q)}}^*)\otimes({\mathbf{w}}^*-{\mathbf{w}^{(q)}}^*)\rangle=&\frac{1}{2}\mathbb{E}\left[\mathrm{tr}\left(\mathbf{H}(\mathbf{H}+\mathbf{D})^{-1}\mathbf{D}\mathbf{w}^*{\mathbf{w}^*}^\top\mathbf{D}(\mathbf{H}+\mathbf{D})^{-1}\right)\right]\\
            =&\frac{1}{2}\mathbb{E}\left[\mathrm{tr}\left({\mathbf{w}^*}^\top\mathbf{D}(\mathbf{H}+\mathbf{D})^{-1}\mathbf{H}(\mathbf{H}+\mathbf{D})^{-1}\mathbf{D}\mathbf{w}^*\right)\right]\\
            =&\frac{1}{2}\|\mathbf{w}^*\|_{\mathbf{D}(\mathbf{H}+\mathbf{D})^{-1}\mathbf{H}(\mathbf{H}+\mathbf{D})^{-1}\mathbf{D}}^2.
    \end{aligned}
\end{equation}
We then handle $\left({\mathbf{w}^{(q)}}^*\right)^\top\frac{1}{N\gamma}\left(\mathbf{I}-(\mathbf{I}-\gamma\mathbf{H}^{(q)})^N\right)(\mathbf{H}^{(q)})^{-1}\left(\mathbf{H}^{(q)}-\mathbf{H}\right){\mathbf{w}^{(q)}}^*$.
\begin{equation}
    \label{eq: b2}
    \begin{aligned}
            &\left({\mathbf{w}^{(q)}}^*\right)^\top\frac{1}{N\gamma}\left(\mathbf{I}-(\mathbf{I}-\gamma\mathbf{H}^{(q)})^N\right)(\mathbf{H}^{(q)})^{-1}\left(\mathbf{H}^{(q)}-\mathbf{H}\right){\mathbf{w}^{(q)}}^*\\
            =&{\mathbf{w}^*}^\top \mathbf{H}(\mathbf{H}^{(q)})^{-1}\frac{1}{N\gamma}\left(\mathbf{I}-(\mathbf{I}-\gamma\mathbf{H}^{(q)})^N\right)(\mathbf{H}^{(q)})^{-1}\left(\mathbf{H}^{(q)}-\mathbf{H}\right)(\mathbf{H}^{(q)})^{-1}\mathbf{H}\mathbf{w}^*.
    \end{aligned}
\end{equation}
\begin{lemma}[\rm Approximation error under multiplicative quantization]
    \label{lem: b1}
    If there exists $\epsilon_d$ such that $\mathcal{Q}_d$ is $\epsilon_d$-multiplicative, under the assumptions and notations in Lemma \ref{Refine excess risk decomposition}, 
    \begin{equation*}
        \mathrm{ApproxErr}\leq \frac{\epsilon_d^2}{2(1+\epsilon_d)^2}\|\mathbf{w}^*\|_\mathbf{H}^2+\frac{\epsilon_d}{(1+\epsilon_d)^2}\left\|\mathbf{w}^*\right\|_\mathbf{H}^2.
    \end{equation*}
\end{lemma}
\begin{proof}
    Under multiplicative quantization, 
    \begin{equation*}
        \mathbf{H}^{(q)}=(1+\epsilon_d)\mathbf{H}, \quad \mathbf{D}=\epsilon_d\mathbf{H}.
    \end{equation*}
    It follows by (\ref{eq: b1}) that 
    \begin{equation*}
        \begin{aligned}
            \frac{1}{2}\langle\mathbf{H},({\mathbf{w}}^*-{\mathbf{w}^{(q)}}^*)\otimes({\mathbf{w}}^*-{\mathbf{w}^{(q)}}^*)\rangle=&\frac{1}{2}\|\mathbf{w}^*\|_{\mathbf{D}(\mathbf{H}+\mathbf{D})^{-1}\mathbf{H}(\mathbf{H}+\mathbf{D})^{-1}\mathbf{D}}^2=\frac{\epsilon_d^2}{2(1+\epsilon_d)^2}\|\mathbf{w}^*\|_\mathbf{H}^2.
        \end{aligned}
    \end{equation*}
    Similarly, by (\ref{eq: b2}),
    \begin{equation*}
        \begin{aligned}
            &\left({\mathbf{w}^{(q)}}^*\right)^\top\frac{1}{N\gamma}\left(\mathbf{I}-(\mathbf{I}-\gamma\mathbf{H}^{(q)})^N\right)(\mathbf{H}^{(q)})^{-1}\left(\mathbf{H}^{(q)}-\mathbf{H}\right){\mathbf{w}^{(q)}}^*\\
            =&{\mathbf{w}^*}^\top \mathbf{H}(\mathbf{H}^{(q)})^{-1}\frac{1}{N\gamma}\left(\mathbf{I}-(\mathbf{I}-\gamma\mathbf{H}^{(q)})^N\right)(\mathbf{H}^{(q)})^{-1}\left(\mathbf{H}^{(q)}-\mathbf{H}\right)(\mathbf{H}^{(q)})^{-1}\mathbf{H}\mathbf{w}^*\\
            =&\frac{\epsilon_d}{N\gamma(1+\epsilon_d)}{\mathbf{w}^*}^\top \mathbf{H}(\mathbf{H}^{(q)})^{-1}\left(\mathbf{I}-(\mathbf{I}-\gamma\mathbf{H}^{(q)})^N\right)(\mathbf{H}^{(q)})^{-1}\mathbf{H}\mathbf{w}^*\\
            \leq& \frac{\epsilon_d}{N\gamma(1+\epsilon_d)^2} \left\|\mathbf{w}^*\right\|_\mathbf{H}^2\left\|(\mathbf{H}^{(q)})^{-1/2}\left(\mathbf{I}-(\mathbf{I}-\gamma\mathbf{H}^{(q)})^N\right)(\mathbf{H}^{(q)})^{-1/2}\right\|\\
            \leq&\frac{\epsilon_d}{N\gamma(1+\epsilon_d)^2} \left\|\mathbf{w}^*\right\|_\mathbf{H}^2\max_i \frac{\min\{1,N\gamma\lambda_i^{(q)}\}}{{\lambda}_i^{(q)}}\\
            \leq&\frac{\epsilon_d}{(1+\epsilon_d)^2}\left\|\mathbf{w}^*\right\|_\mathbf{H}^2.
        \end{aligned}
    \end{equation*}
\end{proof}

\begin{lemma}[\rm Approximation error under additive quantization]
    \label{lem: b2}
    If there exists $\epsilon_d$ such that $\mathcal{Q}_d$ is $\epsilon_d$-additive, under the assumptions and notations in Lemma \ref{Refine excess risk decomposition}, 
    \begin{equation*}
        \mathrm{ApproxErr}\leq \frac{\epsilon_d^2}{2(\lambda_d+\epsilon_d)^2}\|\mathbf{w}^*\|_\mathbf{H}^2+\frac{\lambda_1\epsilon_d}{(\lambda_d+\epsilon_d)(\lambda_1+\epsilon_d)}\left\|\mathbf{w}^*\right\|_\mathbf{H}^2.
    \end{equation*}
\end{lemma}
\begin{proof}
    Under additive quantization, 
    \begin{equation*}
        \mathbf{H}^{(q)}=\mathbf{H}+\epsilon_d\mathbf{I}, \quad \mathbf{D}=\epsilon_d\mathbf{I}.
    \end{equation*}
    It follows by (\ref{eq: b1}) that 
    \begin{equation*}
        \begin{aligned}
            \frac{1}{2}\langle\mathbf{H},({\mathbf{w}}^*-{\mathbf{w}^{(q)}}^*)\otimes({\mathbf{w}}^*-{\mathbf{w}^{(q)}}^*)\rangle=&\frac{1}{2}\|\mathbf{w}^*\|_{\mathbf{D}(\mathbf{H}+\mathbf{D})^{-1}\mathbf{H}(\mathbf{H}+\mathbf{D})^{-1}\mathbf{D}}^2\leq\frac{\epsilon_d^2}{2(\lambda_d+\epsilon_d)^2}\|\mathbf{w}^*\|_\mathbf{H}^2.
        \end{aligned}
    \end{equation*}
    Similarly, by (\ref{eq: b2}),
    \begin{equation*}
        \begin{aligned}
            &\left({\mathbf{w}^{(q)}}^*\right)^\top\frac{1}{N\gamma}\left(\mathbf{I}-(\mathbf{I}-\gamma\mathbf{H}^{(q)})^N\right)(\mathbf{H}^{(q)})^{-1}\left(\mathbf{H}^{(q)}-\mathbf{H}\right){\mathbf{w}^{(q)}}^*\\
            =&{\mathbf{w}^*}^\top \mathbf{H}(\mathbf{H}^{(q)})^{-1}\frac{1}{N\gamma}\left(\mathbf{I}-(\mathbf{I}-\gamma\mathbf{H}^{(q)})^N\right)(\mathbf{H}^{(q)})^{-1}\left(\mathbf{H}^{(q)}-\mathbf{H}\right)(\mathbf{H}^{(q)})^{-1}\mathbf{H}\mathbf{w}^*\\
            \leq&\frac{\epsilon_d}{N\gamma(\lambda_d+\epsilon_d)}{\mathbf{w}^*}^\top \mathbf{H}(\mathbf{H}^{(q)})^{-1}\left(\mathbf{I}-(\mathbf{I}-\gamma\mathbf{H}^{(q)})^N\right)(\mathbf{H}^{(q)})^{-1}\mathbf{H}\mathbf{w}^*\\
            \leq& \frac{\lambda_1\epsilon_d}{N\gamma(\lambda_d+\epsilon_d)(\lambda_1+\epsilon_d)} \left\|\mathbf{w}^*\right\|_\mathbf{H}^2\left\|(\mathbf{H}^{(q)})^{-1/2}\left(\mathbf{I}-(\mathbf{I}-\gamma\mathbf{H}^{(q)})^N\right)(\mathbf{H}^{(q)})^{-1/2}\right\|\\
            \leq&\frac{\lambda_1\epsilon_d}{N\gamma(\lambda_d+\epsilon_d)(\lambda_1+\epsilon_d)} \left\|\mathbf{w}^*\right\|_\mathbf{H}^2\max_i \frac{\min\{1,N\gamma\lambda_i^{(q)}\}}{{\lambda}_i^{(q)}}\\
            \leq&\frac{\lambda_1\epsilon_d}{(\lambda_d+\epsilon_d)(\lambda_1+\epsilon_d)}\left\|\mathbf{w}^*\right\|_\mathbf{H}^2.
        \end{aligned}
    \end{equation*}
\end{proof}

\section{Analysis of \texorpdfstring{$R_N$}{}}
\label{analyze R2}
\subsection{Preliminary}
\label{Preliminary}
We first define the following linear operators as in \citet{zou2023benign}:
\begin{gather*}
        \mathcal{I}=\mathbf{I}\otimes\mathbf{I},\quad\mathcal{M}^{(q)}=\mathbb{E}[\mathbf{x}^{(q)}\otimes\mathbf{x}^{(q)}\otimes\mathbf{x}^{(q)}\otimes\mathbf{x}^{(q)}],\quad\widetilde{\mathcal{M}}^{(q)}=\mathbf{H}^{(q)}\otimes\mathbf{H}^{(q)},\\
        \mathcal{T}^{(q)}=\mathbf{H}^{(q)}\otimes\mathbf{I}+\mathbf{I}\otimes\mathbf{H}^{(q)}-\gamma\mathcal{M}^{(q)},\quad\widetilde{\mathcal{T}}^{(q)}=\mathbf{H}^{(q)}\otimes\mathbf{I}+\mathbf{I}\otimes\mathbf{H}^{(q)}-\gamma\mathbf{H}^{(q)}\otimes\mathbf{H}^{(q)}.
\end{gather*}
For a symmetric matrix $\mathbf{A}$, the above definitions result in:
\begin{gather*}
        \mathcal{I}\circ\mathbf{A}=\mathbf{A},\quad\mathcal{M}^{(q)}\circ\mathbf{A}=\mathbb{E}[({\mathbf{x}^{(q)}}^\top\mathbf{A}\mathbf{x}^{(q)})\mathbf{x}^{(q)}{\mathbf{x}^{(q)}}^\top],\quad\widetilde{\mathcal{M}}^{(q)}\circ\mathbf{A}=\mathbf{H}^{(q)}\mathbf{A}\mathbf{H}^{(q)},\\
        (\mathcal{I}-\gamma\mathcal{T}^{(q)})\circ\mathbf{A}=\mathbb{E}[(\mathbf{I}-\gamma\mathbf{x}^{(q)}{\mathbf{x}^{(q)}}^\top)\mathbf{A}(\mathbf{I}-\gamma\mathbf{x}^{(q)}{\mathbf{x}^{(q)}}^\top)],\quad(\mathcal{I}-\gamma\widetilde{\mathcal{T}}^{(q)})\circ\mathbf{A}=(\mathbf{I}-\gamma\mathbf{H}^{(q)})\mathbf{A}(\mathbf{I}-\gamma\mathbf{H}^{(q)}).
\end{gather*}
Further, we generalize the linear operators from \citet{zou2023benign} to account for batch size effects. For a symmetric matrix $\mathbf{A}$, we define
\begin{equation*}
    \begin{aligned}
        \mathcal{M}_B^{(q)}\circ\mathbf{A}=&\mathbb{E}\left[\frac{1}{B^2} {\mathbf{X}^{(q)}}^\top{\mathbf{X}^{(q)}} \mathbf{A} {\mathbf{X}^{(q)}}^\top{\mathbf{X}^{(q)}} \right],\\
        (\mathcal{I}-\gamma\mathcal{T}_B^{(q)})\circ\mathbf{A}=&\mathbb{E}\left[\left(\mathbf{I}-\gamma\frac{1}{B} {\mathbf{X}^{(q)}}^\top{\mathbf{X}^{(q)}}\right)\mathbf{A}\left(\mathbf{I}-\gamma\frac{1}{B} {\mathbf{X}^{(q)}}^\top{\mathbf{X}^{(q)}}\right)\right].
    \end{aligned}
\end{equation*}
\subsection{Initial Study}
\label{Initial study R2}
To analyze $R_N$, we firstly utilize the fact that
\begin{equation}
    \label{eq: c1}
    R_N=\frac{1}{2}\langle\mathbf{H},\mathbb{E}[\overline{\boldsymbol{\eta}}_N\otimes\overline{\boldsymbol{\eta}}_N]\rangle\leq \mu_{\max}\left(\mathbf{H}(\mathbf{H}^{(q)})^{-1}\right)\frac{1}{2}\langle\mathbf{H}^{(q)},\mathbb{E}[\overline{\boldsymbol{\eta}}_N\otimes\overline{\boldsymbol{\eta}}_N]\rangle\leq \underbrace{\frac{1}{2}\langle\mathbf{H}^{(q)},\mathbb{E}[\overline{\boldsymbol{\eta}}_N\otimes\overline{\boldsymbol{\eta}}_N]\rangle}_{R_N^{(0)}}.
\end{equation}
We secondly substitute $\overline{\boldsymbol{\eta}}_{N}$ with the summation of $\boldsymbol{\eta}_t$. This step mainly based on the propagation in Lemma \ref{update rule for eta_t}, the unbiased quantization Assumption \ref{ass1} and the first order optimality condition (\ref{definition w*}). We summarize as the following lemma.
\begin{lemma}
    \label{Initial study of R_2}
    Under Assumption \ref{ass1} and Assumption \ref{ass: addd quantized},
    \begin{equation*}
        R_N^{(0)}\leq \frac{1}{N^2}\cdot\sum_{t=0}^{N-1}\sum_{k=t}^{N-1}\left\langle(\mathbf{I}-\gamma\mathbf{H}^{(q)})^{k-t}\mathbf{H}^{(q)},\mathbb{E}[\boldsymbol{\eta}_t\otimes\boldsymbol{\eta}_t]\right\rangle.
    \end{equation*}
\end{lemma}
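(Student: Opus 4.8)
The plan is to unfold the iterate average $\overline{\boldsymbol{\eta}}_N=\frac1N\sum_{t=0}^{N-1}\boldsymbol{\eta}_t$ inside the identity $R_2=\frac12\langle\mathbf{H}^{(q)},\mathbb{E}[\overline{\boldsymbol{\eta}}_N\otimes\overline{\boldsymbol{\eta}}_N]\rangle$ from Lemma \ref{Excess Risk Decomposition}, symmetrize the resulting double time-sum to pass to an upper-triangular sum, and then collapse each off-diagonal term via the one-step drift identity $\mathbb{E}[\boldsymbol{\eta}_j\mid\mathcal{F}_{j-1}]=(\mathbf{I}-\gamma\mathbf{H}^{(q)})\boldsymbol{\eta}_{j-1}$ (which holds under Assumption \ref{ass1} and the first-order optimality condition (\ref{definition w*}), as noted after Lemma \ref{update rule for eta_t}; here $\mathcal{F}_j$ denotes the $\sigma$-algebra generated by all batch and quantization randomness through step $j$). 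Unfolding gives
\begin{equation*}
R_2=\frac{1}{2N^2}\sum_{t=0}^{N-1}\sum_{s=0}^{N-1}\langle\mathbf{H}^{(q)},\mathbb{E}[\boldsymbol{\eta}_t\otimes\boldsymbol{\eta}_s]\rangle,
\end{equation*}
and since $\mathbf{H}^{(q)}$ is symmetric each summand equals $\mathbb{E}[\boldsymbol{\eta}_t^\top\mathbf{H}^{(q)}\boldsymbol{\eta}_s]$, so the array of summands is symmetric under $t\leftrightarrow s$.

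Next I split the double sum into its $s\ge t$ part and its $s<t$ part. Relabelling the summation variables in the $s<t$ part and applying the symmetry above rewrites it as $\sum_{t}\sum_{s>t}\langle\mathbf{H}^{(q)},\mathbb{E}[\boldsymbol{\eta}_t\otimes\boldsymbol{\eta}_s]\rangle$, which is at most $\sum_{t}\sum_{s\ge t}\langle\mathbf{H}^{(q)},\mathbb{E}[\boldsymbol{\eta}_t\otimes\boldsymbol{\eta}_s]\rangle$ because the difference between the two is exactly the discarded diagonal $\sum_{t=0}^{N-1}\langle\mathbf{H}^{(q)},\mathbb{E}[\boldsymbol{\eta}_t\otimes\boldsymbol{\eta}_t]\rangle=\sum_{t=0}^{N-1}\mathbb{E}\big[\|\boldsymbol{\eta}_t\|_{\mathbf{H}^{(q)}}^2\big]\ge0$, using $\mathbf{H}^{(q)}\succeq\mathbf{0}$. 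Hence the whole double sum is bounded by twice the upper-triangular one, giving
\begin{equation*}
R_2\le\frac{1}{N^2}\sum_{t=0}^{N-1}\sum_{k=t}^{N-1}\langle\mathbf{H}^{(q)},\mathbb{E}[\boldsymbol{\eta}_t\otimes\boldsymbol{\eta}_k]\rangle.
\end{equation*}

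It remains to evaluate the off-diagonal terms. Iterating the one-step drift identity along the filtration and applying the tower property yields $\mathbb{E}[\boldsymbol{\eta}_k\mid\mathcal{F}_t]=(\mathbf{I}-\gamma\mathbf{H}^{(q)})^{k-t}\boldsymbol{\eta}_t$ for every $k\ge t$. Consequently, for $k\ge t$,
\begin{equation*}
\mathbb{E}[\boldsymbol{\eta}_t\otimes\boldsymbol{\eta}_k]=\mathbb{E}\big[\boldsymbol{\eta}_t\,\mathbb{E}[\boldsymbol{\eta}_k\mid\mathcal{F}_t]^\top\big]=\mathbb{E}[\boldsymbol{\eta}_t\otimes\boldsymbol{\eta}_t]\,(\mathbf{I}-\gamma\mathbf{H}^{(q)})^{k-t},
\end{equation*}
where I used that $\mathbf{I}-\gamma\mathbf{H}^{(q)}$ is symmetric. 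Since $(\mathbf{I}-\gamma\mathbf{H}^{(q)})^{k-t}$ commutes with $\mathbf{H}^{(q)}$, cyclicity of the trace gives $\langle\mathbf{H}^{(q)},\mathbb{E}[\boldsymbol{\eta}_t\otimes\boldsymbol{\eta}_k]\rangle=\langle(\mathbf{I}-\gamma\mathbf{H}^{(q)})^{k-t}\mathbf{H}^{(q)},\mathbb{E}[\boldsymbol{\eta}_t\otimes\boldsymbol{\eta}_t]\rangle$. Substituting this into the previous display yields exactly the stated bound.

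The argument is short, and the two points that need care are the following. First, upgrading the given drift identity to $\mathbb{E}[\boldsymbol{\eta}_k\mid\mathcal{F}_t]=(\mathbf{I}-\gamma\mathbf{H}^{(q)})^{k-t}\boldsymbol{\eta}_t$: one must check that $\mathbb{E}[\boldsymbol{\eta}_j\mid\mathcal{F}_{j-1}]=(\mathbf{I}-\gamma\mathbf{H}^{(q)})\boldsymbol{\eta}_{j-1}$ holds with respect to the full history $\mathcal{F}_{j-1}$ (not merely $\sigma(\boldsymbol{\eta}_{j-1})$), so that the tower-property induction closes; this is exactly where the conditional unbiasedness of all quantization operators (Assumption \ref{ass1}) and the optimality condition (\ref{definition w*}) enter, forcing the cross terms in Lemma \ref{update rule for eta_t} involving $\boldsymbol{\xi}_t,\boldsymbol{\epsilon}_t^{(o)},\boldsymbol{\epsilon}_t^{(a)},\boldsymbol{\epsilon}_{t-1}^{(p)}$ to vanish in conditional expectation. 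Second, observing that the quantity dropped in the symmetrization step is nonnegative — which is precisely positive semidefiniteness of $\mathbf{H}^{(q)}$.
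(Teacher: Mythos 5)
Your proof is correct and follows essentially the same route as the paper's: unfold the iterate average into a double time-sum, discard the nonnegative diagonal contribution to pass to an upper-triangular sum, and collapse each off-diagonal block via the conditional drift $\mathbb{E}[\boldsymbol{\eta}_k\mid\mathcal{F}_t]=(\mathbf{I}-\gamma\mathbf{H}^{(q)})^{k-t}\boldsymbol{\eta}_t$ and commutation with $\mathbf{H}^{(q)}$. The only cosmetic difference is that you take the inner product with $\mathbf{H}^{(q)}$ before symmetrizing so the discarded diagonal is a scalar $\ge 0$, whereas the paper discards the corresponding PSD matrix before taking the inner product; both steps are equivalent, and your explicit note about conditioning on the full filtration $\mathcal{F}_t$ (rather than merely $\sigma(\boldsymbol{\eta}_t)$) is a fair point of rigor that the paper's notation glosses over but which holds here by the Markov structure of the update.
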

\begin{proof}
    Recall that by (\ref{eq: c1}),
    \begin{equation*}
        R_N^{(0)}=\frac{1}{2}\langle\mathbf{H}^{(q)},\mathbb{E}[\overline{\boldsymbol{\eta}}_N\otimes\overline{\boldsymbol{\eta}}_N]\rangle,
    \end{equation*}
    we then focus on $\mathbb{E}[\overline{\boldsymbol{\eta}}_{N}\otimes\overline{\boldsymbol{\eta}}_{N}]$. By definition $\overline{\boldsymbol{\eta}}_{N}=\frac{1}{N}\sum_{t=0}^{N-1}\boldsymbol{\eta}_{t}$,
    \begin{equation}
    \label{D0}
        \begin{aligned}
            \mathbb{E}[\overline{\boldsymbol{\eta}}_{N}\otimes\overline{\boldsymbol{\eta}}_{N}]
            =&\frac{1}{N^2}\cdot\left(\sum_{0\leq k\leq t\leq N-1}\mathbb{E}[\boldsymbol{\eta}_t\otimes\boldsymbol{\eta}_k]+\sum_{0\leq t<k\leq N-1}\mathbb{E}[\boldsymbol{\eta}_t\otimes\boldsymbol{\eta}_k]\right) \\
            \preceq&\frac{1}{N^2}\cdot\left(\sum_{0\leq k\leq t\leq N-1}\mathbb{E}\left[\mathbb{E}[\boldsymbol{\eta}_t\otimes\boldsymbol{\eta}_k|\boldsymbol{\eta}_k]\right]+\sum_{0\leq t\leq k\leq N-1}\mathbb{E}\left[\mathbb{E}[\boldsymbol{\eta}_t\otimes\boldsymbol{\eta}_k|\boldsymbol{\eta}_t]\right]\right).
        \end{aligned}
    \end{equation}
    Note that by the unbiased Assumption \ref{ass1},
    \begin{equation*}
        \mathbb{E}\left[\gamma{\mathcal{Q}_d(\mathbf{X}_t)}^\top \left(\boldsymbol{\epsilon}_t^{(o)}-\boldsymbol{\epsilon}_t^{(a)}-\mathcal{Q}_d(\mathbf{X}_t)\boldsymbol{\epsilon}_{t-1}^{(p)}\right)\bigg|\boldsymbol{\eta}_{t-1}\right]=\boldsymbol{0}.
    \end{equation*}
    Further, by the optimality (\ref{definition w*}),
    \begin{equation*}
        \mathbb{E}\left[\gamma{\mathcal{Q}_d(\mathbf{X}_t)}^\top \boldsymbol{\xi}_t\bigg|\boldsymbol{\eta}_{t-1}\right]=\mathbb{E}\left[\gamma{\mathcal{Q}_d(\mathbf{X}_t)}^\top \left[\mathcal{Q}_l(\mathbf{y}_t)-\mathcal{Q}_d(\mathbf{X}_t){\mathbf{w}^{(q)}}^*\right]\bigg|\boldsymbol{\eta}_{t-1}\right]=\boldsymbol{0}.
    \end{equation*}
    Hence, by Lemma \ref{update rule for eta_t},
    \begin{equation}
        \label{conditional update}
        \mathbb{E}\left[\boldsymbol{\eta}_t|\boldsymbol{\eta}_{t-1}\right]=\left(\mathbf{I}-\gamma \mathbf{H}^{(q)}\right)\boldsymbol{\eta}_{t-1}.
    \end{equation}
    Therefore, by (\ref{D0}) and (\ref{conditional update}),
    \begin{equation}
    \label{etaN etaN}
        \begin{aligned}
            & \mathbb{E}[\overline{\boldsymbol{\eta}}_{N}\otimes\overline{\boldsymbol{\eta}}_{N}] \\
            \preceq &\frac{1}{N^2}\cdot\left(\sum_{0\leq k\leq t\leq N-1}\mathbb{E}\left[\mathbb{E}[\boldsymbol{\eta}_t\otimes\boldsymbol{\eta}_k|\boldsymbol{\eta}_k]\right]+\sum_{0\leq t\leq k\leq N-1}\mathbb{E}\left[\mathbb{E}[\boldsymbol{\eta}_t\otimes\boldsymbol{\eta}_k|\boldsymbol{\eta}_t]\right]\right)\\
            =&\frac{1}{N^2}\cdot\left(\sum_{0\leq k\leq t\leq N-1}(\mathbf{I}-\gamma\mathbf{H}^{(q)})^{t-k}\mathbb{E}[\boldsymbol{\eta}_k\otimes\boldsymbol{\eta}_k]+\sum_{0\leq t\leq k\leq N-1}\mathbb{E}[\boldsymbol{\eta}_t\otimes\boldsymbol{\eta}_t](\mathbf{I}-\gamma\mathbf{H}^{(q)})^{k-t}\right) \\
            =&\frac{1}{N^2}\cdot\sum_{t=0}^{N-1}\sum_{k=t}^{N-1}\left((\mathbf{I}-\gamma\mathbf{H}^{(q)})^{k-t}\mathbb{E}[\boldsymbol{\eta}_t\otimes\boldsymbol{\eta}_t]+\mathbb{E}[\boldsymbol{\eta}_t\otimes\boldsymbol{\eta}_t](\mathbf{I}-\gamma\mathbf{H}^{(q)})^{k-t}\right).
        \end{aligned}
    \end{equation}
    Applying (\ref{etaN etaN}) into $R_N^{(0)}$, we have
    \begin{equation*}
        \begin{aligned}
            R_N^{(0)}=&\frac{1}{2}\langle\mathbf{H}^{(q)},\mathbb{E}[\overline{\boldsymbol{\eta}}_N\otimes\overline{\boldsymbol{\eta}}_N]\rangle\\
            \leq&\frac{1}{2N^2}\cdot\sum_{t=0}^{N-1}\sum_{k=t}^{N-1}\left\langle\mathbf{H}^{(q)},(\mathbf{I}-\gamma\mathbf{H}^{(q)})^{k-t}\mathbb{E}[\boldsymbol{\eta}_t\otimes\boldsymbol{\eta}_t]+\mathbb{E}[\boldsymbol{\eta}_t\otimes\boldsymbol{\eta}_t](\mathbf{I}-\gamma\mathbf{H}^{(q)})^{k-t}\right\rangle \\
            =&\frac{1}{N^2}\cdot\sum_{t=0}^{N-1}\sum_{k=t}^{N-1}\left\langle(\mathbf{I}-\gamma\mathbf{H}^{(q)})^{k-t}\mathbf{H}^{(q)},\mathbb{E}[\boldsymbol{\eta}_t\otimes\boldsymbol{\eta}_t]\right\rangle,
        \end{aligned}
    \end{equation*}
    where the last equality holds since $\mathbf{H}^{(q)}$ and $(\mathbf{I}-\gamma\mathbf{H}^{(q)})^{k-t}$ commute. This completes the proof.
\end{proof}

Lemma \ref{Initial study of R_2} implies that, to bound $R_N^{(0)}$, the main goal is to bound $\mathbb{E}[\boldsymbol{\eta}_t\otimes\boldsymbol{\eta}_t]$. Recall that by Lemma \ref{update rule for eta_t},
    \begin{equation*}
        \boldsymbol{\eta}_t = \left(\mathbf{I}-\frac{1}{B}\gamma {\mathcal{Q}_d(\mathbf{X}_t)}^\top \mathcal{Q}_d(\mathbf{X}_t) \right)\boldsymbol{\eta}_{t-1}+\gamma\frac{1}{B}{\mathcal{Q}_d(\mathbf{X}_t)}^\top \left[\boldsymbol{\xi}_t+\boldsymbol{\epsilon}_t^{(o)}-\boldsymbol{\epsilon}_t^{(a)}-\mathcal{Q}_d(\mathbf{X}_t)\boldsymbol{\epsilon}_{t-1}^{(p)}\right].
    \end{equation*}
Denote
\begin{equation*}
    \boldsymbol{\eta}_t^{\rm bias}=\left(\mathbf{I}-\frac{1}{B}\gamma {\mathcal{Q}_d(\mathbf{X}_t)}^\top \mathcal{Q}_d(\mathbf{X}_t) \right)\boldsymbol{\eta}_{t-1}^{\rm bias},\quad \boldsymbol{\eta}_{0}^{\rm bias}=\boldsymbol{\eta}_0,
\end{equation*}
\begin{equation*}
    \boldsymbol{\eta}_t^{\rm var}=\left(\mathbf{I}-\frac{1}{B}\gamma {\mathcal{Q}_d(\mathbf{X}_t)}^\top \mathcal{Q}_d(\mathbf{X}_t) \right)\boldsymbol{\eta}_{t-1}^{\rm var}+\gamma\frac{1}{B}{\mathcal{Q}_d(\mathbf{X}_t)}^\top \left[\boldsymbol{\xi}_t+\boldsymbol{\epsilon}_t^{(o)}-\boldsymbol{\epsilon}_t^{(a)}-\mathcal{Q}_d(\mathbf{X}_t)\boldsymbol{\epsilon}_{t-1}^{(p)}\right],
\end{equation*}
with $\boldsymbol{\eta}_0^{\rm var}=\boldsymbol{0}$. Then 
\begin{equation*}
    \boldsymbol{\eta}_t=\boldsymbol{\eta}_t^{\rm var}+\boldsymbol{\eta}_t^{\rm bias},
\end{equation*}
and
\begin{equation}
    \mathbb{E}\left[\boldsymbol{\eta}_t\otimes\boldsymbol{\eta}_t\right] \preceq 2\left(\underbrace{\mathbb{E}\left[\boldsymbol{\eta}_t^{\rm bias}\otimes\boldsymbol{\eta}_t^{\rm bias}\right]}_{\mathbf{B}_t}+\underbrace{\mathbb{E}\left[\boldsymbol{\eta}_t^{\rm var}\otimes\boldsymbol{\eta}_t^{\rm var}\right]}_{\mathbf{C}_t}\right).
\end{equation}
Regarding $\mathbf{B}_t$,
\begin{equation}
    \label{eq: bt}
    \mathbf{B}_t=\mathbb{E}\left[\left(\mathbf{I}-\gamma\frac{1}{B} {\mathcal{Q}_d(\mathbf{X})}^\top \mathcal{Q}_d(\mathbf{X}) \right)\mathbf{B}_{t-1}\left(\mathbf{I}-\gamma\frac{1}{B} {\mathcal{Q}_d(\mathbf{X})}^\top \mathcal{Q}_d(\mathbf{X}) \right)\right].
\end{equation}
Regarding $\mathbf{C}_t$, by the unbiased quantization Assumption \ref{ass1} and $\boldsymbol{\eta}_0^{\rm var}=\boldsymbol{0}$, it holds,
\begin{equation}
    \label{eq: ct}
    \mathbf{C}_t=\mathbb{E}\left[\left(\mathbf{I}-\gamma\frac{1}{B} {\mathcal{Q}_d(\mathbf{X})}^\top \mathcal{Q}_d(\mathbf{X}) \right)\mathbf{C}_{t-1}\left(\mathbf{I}-\gamma\frac{1}{B} {\mathcal{Q}_d(\mathbf{X})}^\top \mathcal{Q}_d(\mathbf{X}) \right)\right]+\gamma^2 \boldsymbol{\Sigma}_t,
\end{equation}
where
    \begin{equation*}
        \begin{aligned}
            \boldsymbol{\Sigma}_t:=&\frac{1}{B^2}\mathbb{E}\left[{\mathcal{Q}_d(\mathbf{X}_t)}^\top\left[\boldsymbol{\xi}_t+\boldsymbol{\epsilon}_t^{(o)}-\boldsymbol{\epsilon}_t^{(a)}-\mathcal{Q}_d(\mathbf{X}_t)\boldsymbol{\epsilon}_{t-1}^{(p)}\right]\left[\boldsymbol{\xi}_t+\boldsymbol{\epsilon}_t^{(o)}-\boldsymbol{\epsilon}_t^{(a)}-\mathcal{Q}_d(\mathbf{X}_t)\boldsymbol{\epsilon}_{t-1}^{(p)}\right]^\top{\mathcal{Q}_d(\mathbf{X}_t)}\right]\\
            =&\underbrace{\frac{1}{B^2}\mathbb{E}\left[{\mathcal{Q}_d(\mathbf{X}_t)}^\top \boldsymbol{\xi}_t\boldsymbol{\xi}_t^\top \mathcal{Q}_d(\mathbf{X}_t)\right]}_{\boldsymbol{\Sigma}_t^\xi}+\underbrace{\frac{1}{B^2}\mathbb{E}\left[{\mathcal{Q}_d(\mathbf{X}_t)}^\top \boldsymbol{\epsilon}_t^{(o)}{\boldsymbol{\epsilon}_t^{(o)}}^\top \mathcal{Q}_d(\mathbf{X}_t)\right]}_{\boldsymbol{\Sigma}_t^{\epsilon^{(o)}}}\\
            +&\underbrace{\frac{1}{B^2}\mathbb{E}\left[{\mathcal{Q}_d(\mathbf{X}_t)}^\top \boldsymbol{\epsilon}_t^{(a)}{\boldsymbol{\epsilon}_t^{(a)}}^\top \mathcal{Q}_d(\mathbf{X}_t)\right]}_{\boldsymbol{\Sigma}_t^{\epsilon^{(a)}}}+\underbrace{\frac{1}{B^2}\mathbb{E}\left[{\mathcal{Q}_d(\mathbf{X}_t)}^\top {\mathcal{Q}_d(\mathbf{X}_t)}\boldsymbol{\epsilon}_{t-1}^{(p)}{\boldsymbol{\epsilon}_{t-1}^{(p)}}^\top {\mathcal{Q}_d(\mathbf{X}_t)}^\top\mathcal{Q}_d(\mathbf{X}_t)\right]}_{\boldsymbol{\Sigma}_t^{\epsilon^{(p)}}}.
        \end{aligned}
    \end{equation*}
    We then summarize the update rule for $\mathbb{E}\left[\boldsymbol{\eta}_t\otimes\boldsymbol{\eta}_t \right]$ as follows.

\begin{lemma}[\rm Update rule under general quantization]
    \label{update rule for eta_t^2}
    Under Assumption \ref{ass1}, Assumption \ref{ass: addd quantized}, Assumption \ref{ass2}, and Assumption \ref{ass3},
    \begin{gather*}
            \mathbf{C}_t
            \preceq \mathbb{E}\left[\left(\mathbf{I}-\gamma\frac{1}{B} {\mathcal{Q}_d(\mathbf{X})}^\top \mathcal{Q}_d(\mathbf{X}) \right)\mathbf{C}_{t-1}\left(\mathbf{I}-\gamma\frac{1}{B} {\mathcal{Q}_d(\mathbf{X})}^\top \mathcal{Q}_d(\mathbf{X}) \right)\right]+\gamma^2 {\sigma_G^{(q)}}^2\mathbf{H}^{(q)},\\
            \mathbf{B}_t
            = \mathbb{E}\left[\left(\mathbf{I}-\gamma\frac{1}{B} {\mathcal{Q}_d(\mathbf{X})}^\top \mathcal{Q}_d(\mathbf{X}) \right)\mathbf{B}_{t-1}\left(\mathbf{I}-\gamma\frac{1}{B} {\mathcal{Q}_d(\mathbf{X})}^\top \mathcal{Q}_d(\mathbf{X}) \right)\right],
    \end{gather*}
    where
    \begin{equation*}
        \begin{aligned}
            {\sigma_G^{(q)}}^2=&\frac{\sup_t \left\{\left\|\mathbb{E}\left[\boldsymbol{\epsilon}_t^{(o)}{\boldsymbol{\epsilon}_t^{(o)}}^\top|\mathbf{o}_t\right]+\mathbb{E}\left[\boldsymbol{\epsilon}_t^{(a)}{\boldsymbol{\epsilon}_t^{(a)}}^\top|\mathbf{a}_t\right] \right\|\right\}}{B}\\
            +&\alpha_B\sup_t\mathbb{E}_{\mathbf{w}_{t-1}}\left[\mathrm{tr}\left(\mathbf{H}^{(q)} \mathbb{E}\left[\boldsymbol{\epsilon}_{t-1}^{(p)}{\boldsymbol{\epsilon}_{t-1}^{(p)}}^\top \big| \mathbf{w}_{t-1}\right]\right)\right]
        +\frac{\sigma^2}{B},
        \end{aligned}
    \end{equation*}
    with $\mathbf{a}_t=\mathcal{Q}_d(\mathbf{X}_t)\mathcal{Q}_p(\mathbf{w}_{t-1}),\ \mathbf{o}_t=\mathcal{Q}_l(\mathbf{y}_t)-\mathcal{Q}_a\left(\mathcal{Q}_d(\mathbf{X}_t)\mathcal{Q}_p(\mathbf{w}_{t-1})\right)$ and $\|\cdot\|$ denoting the spectral norm.
\end{lemma}
\begin{proof}
    We cope with each term in $\boldsymbol{\Sigma}_t$ to provide an upper bound. For $\boldsymbol{\Sigma}_t^{\epsilon^{(p)}}$,
    \begin{equation*}
        \begin{aligned}
            \boldsymbol{\Sigma}_t^{\epsilon^{(p)}}=&\frac{1}{B^2}\mathbb{E}\left[{\mathcal{Q}_d(\mathbf{X}_t)}^\top {\mathcal{Q}_d(\mathbf{X}_t)}\boldsymbol{\epsilon}_{t-1}^{(p)}{\boldsymbol{\epsilon}_{t-1}^{(p)}}^\top {\mathcal{Q}_d(\mathbf{X}_t)}^\top\mathcal{Q}_d(\mathbf{X}_t)\right]\\
            \preceq &\alpha_B \sup_t\mathbb{E}_{\mathbf{w}_{t-1}}\left[\mathrm{tr}\left(\mathbf{H}^{(q)} \mathbb{E}\left[\boldsymbol{\epsilon}_{t-1}^{(p)}{\boldsymbol{\epsilon}_{t-1}^{(p)}}^\top \big| \mathbf{w}_{t-1}\right]\right)\right]\mathbf{H}^{(q)},
        \end{aligned}
    \end{equation*}
    where the inequality holds by Assumption \ref{ass2}. For $\boldsymbol{\Sigma}_t^\xi$,
    \begin{equation}
    \label{Sigma epsilon t}
        \begin{aligned}
            \boldsymbol{\Sigma}_t^\xi=&\frac{1}{B^2}\mathbb{E}\left[{\mathcal{Q}_d(\mathbf{X}_t)}^\top \boldsymbol{\xi}_t\boldsymbol{\xi}_t^\top \mathcal{Q}_d(\mathbf{X}_t)\right]\\
            =&\frac{1}{B^2}\mathbb{E}\left[\sum_{i=1}^B\sum_{j=1}^B {{\mathcal{Q}_d(\mathbf{X}_t)}^i}^\top \boldsymbol{\xi}_t^i \left({{\mathcal{Q}_d(\mathbf{X}_t)}^j}^\top \boldsymbol{\xi}_t^j\right)^\top\right]\\
            =&\frac{1}{B^2}\sum_{i=1}^B\mathbb{E}\left[{{\mathcal{Q}_d(\mathbf{X}_t)}^i}^\top \boldsymbol{\xi}_t^i \left({{\mathcal{Q}_d(\mathbf{X}_t)}^i}^\top \boldsymbol{\xi}_t^i\right)^\top\right]\\
            =&\frac{1}{B}\cdot\mathbb{E}\left[{{\mathcal{Q}_d(\mathbf{x})}} \xi \left({{\mathcal{Q}_d(\mathbf{x}}}) \xi\right)^\top\right]\\
            =&\frac{1}{B}\cdot\mathbb{E}\left[\xi^2\mathcal{Q}_d(\mathbf{x})\mathcal{Q}_d(\mathbf{x})^\top\right]\\
            \preceq&\frac{\sigma^2}{B}\cdot\mathbf{H}^{(q)},
        \end{aligned}
    \end{equation}
    where the third equality holds as samples are independent and data quantization is applied to each sample independently, the inequality holds by Assumption \ref{ass3}. For $\boldsymbol{\Sigma}_t^{\epsilon^{(o)}}+\boldsymbol{\Sigma}_t^{\epsilon^{(a)}}$,
    \begin{equation*}
        \begin{aligned}
            \boldsymbol{\Sigma}_t^{\epsilon^{(o)}}+\boldsymbol{\Sigma}_t^{\epsilon^{(a)}}=&\frac{1}{B^2}\mathbb{E}\left[{\mathcal{Q}_d(\mathbf{X}_t)}^\top (\boldsymbol{\epsilon}_t^{(o)}{\boldsymbol{\epsilon}_t^{(o)}}^\top+\boldsymbol{\epsilon}_t^{(a)}{\boldsymbol{\epsilon}_t^{(a)}}^\top) \mathcal{Q}_d(\mathbf{X}_t)\right]\\
            =&\frac{1}{B^2}\mathbb{E}\left[{\mathcal{Q}_d(\mathbf{X}_t)}^\top \left(\mathbb{E}\left[\boldsymbol{\epsilon}_t^{(o)}{\boldsymbol{\epsilon}_t^{(o)}}^\top|\mathbf{o}_t\right] +\mathbb{E}\left[\boldsymbol{\epsilon}_t^{(a)}{\boldsymbol{\epsilon}_t^{(a)}}^\top|\mathbf{a}_t\right] \right)\mathcal{Q}_d(\mathbf{X}_t)\right]\\
            \preceq &\frac{1}{B^2}\mathbb{E}\left[\left(\left\|\mathbb{E}\left[\boldsymbol{\epsilon}_t^{(o)}{\boldsymbol{\epsilon}_t^{(o)}}^\top|\mathbf{o}_t\right] +\mathbb{E}\left[\boldsymbol{\epsilon}_t^{(a)}{\boldsymbol{\epsilon}_t^{(a)}}^\top|\mathbf{a}_t\right] \right\|\right){\mathcal{Q}_d(\mathbf{X}_t)}^\top  \mathcal{Q}_d(\mathbf{X}_t)\right]\\
            \preceq & \frac{1}{B^2}\sup_t \left[\left\|\mathbb{E}\left[\boldsymbol{\epsilon}_t^{(o)}{\boldsymbol{\epsilon}_t^{(o)}}^\top|\mathbf{o}_t\right]+\mathbb{E}\left[\boldsymbol{\epsilon}_t^{(a)}{\boldsymbol{\epsilon}_t^{(a)}}^\top|\mathbf{a}_t\right] \right\|\right] \mathbb{E}\left[{\mathcal{Q}_d(\mathbf{X}_t)}^\top  \mathcal{Q}_d(\mathbf{X}_t)\right]\\
            =&\frac{1}{B}\sup_t \left[\left\|\mathbb{E}\left[\boldsymbol{\epsilon}_t^{(o)}{\boldsymbol{\epsilon}_t^{(o)}}^\top|\mathbf{o}_t\right]+\mathbb{E}\left[\boldsymbol{\epsilon}_t^{(a)}{\boldsymbol{\epsilon}_t^{(a)}}^\top|\mathbf{a}_t\right] \right\|\right] \mathbf{H}^{(q)},
        \end{aligned}
    \end{equation*}
    where $\|\cdot\|$ represents the matrix spectral norm.

    Combining the upper bounds for $\boldsymbol{\Sigma}_t^{\epsilon^{(p)}}$, $\boldsymbol{\Sigma}_t^\xi$, $\boldsymbol{\Sigma}_t^{\epsilon^{(o)}}+\boldsymbol{\Sigma}_t^{\epsilon^{(a)}}$, (\ref{eq: bt}) and (\ref{eq: ct}) immediately completes the proof.
\end{proof}

For multiplicative quantization, the explicit dependence of the conditional expectations on $\mathbf{w}_t$ renders Lemma \ref{update rule for eta_t^2} inapplicable to the update rule for $\mathbb{E}[\boldsymbol{\eta}_t\otimes\boldsymbol{\eta}_t]$. We thus propose the following alternative update rule.

\begin{lemma}[\rm Update rule under multiplicative quantization]
    \label{update rule for eta_t^2 multiplicative}
    If there exist $\epsilon_d,\epsilon_l,\epsilon_p,\epsilon_a$ and $\epsilon_o$ such that for any $i\in \{d,l,p,a,o\}$, quantization $\mathcal{Q}_i$ is $\epsilon_i$-multiplicative, then under Assumption \ref{ass1}, Assumption \ref{ass: addd quantized}, Assumption \ref{ass2}, and Assumption \ref{ass3}, it holds
    \begin{equation*}
        \begin{aligned}
            \mathbf{C}_t
            \preceq&\mathbb{E}\left[\left(\mathbf{I}-\frac{1}{B}\gamma {\mathcal{Q}_d(\mathbf{X})}^\top \mathcal{Q}_d(\mathbf{X}) \right)\mathbf{C}_{t-1}\left(\mathbf{I}-\frac{1}{B}\gamma {\mathcal{Q}_d(\mathbf{X})}^\top \mathcal{Q}_d(\mathbf{X}) \right)\right]\\
            +&\tilde{\epsilon}\mathbb{E}\left[\frac{\gamma}{B}{\mathcal{Q}_d(\mathbf{X})}^\top {\mathcal{Q}_d(\mathbf{X})}(\mathbf{B}_{t-1}+\mathbf{C}_{t-1})\frac{\gamma}{B}{\mathcal{Q}_d(\mathbf{X})}^\top\mathcal{Q}_d(\mathbf{X})\right]
            +\gamma^2 {\sigma_M^{(q)}}^2\mathbf{H}^{(q)},\\
            \mathbf{B}_t=& \mathbb{E}\left[\left(\mathbf{I}-\frac{1}{B}\gamma {\mathcal{Q}_d(\mathbf{X})}^\top \mathcal{Q}_d(\mathbf{X}) \right)\mathbf{B}_{t-1}\left(\mathbf{I}-\frac{1}{B}\gamma {\mathcal{Q}_d(\mathbf{X})}^\top \mathcal{Q}_d(\mathbf{X}) \right)\right],
        \end{aligned}
    \end{equation*}
    where
    \begin{gather*}
            \tilde{\epsilon}=8\epsilon_o(1+\epsilon_p)(1+\epsilon_a)+4\epsilon_p+4\epsilon_a(1+\epsilon_p),\\
            {\sigma_M^{(q)}}^2=\frac{(1+4\epsilon_o)\sigma^2}{B} + \frac{\|\mathbf{w}^*\|_\mathbf{H}^2}{1+\epsilon_d}\alpha_B\left(4\epsilon_o[(1+\epsilon_p)(1+\epsilon_a)+1]+2\epsilon_a(1+\epsilon_p)+2\epsilon_p \right).
    \end{gather*}
\end{lemma}
\begin{proof}
    To complete the proof, we merely need to derive the upper bound for $\boldsymbol{\Sigma}_t=\boldsymbol{\Sigma}_t^\xi+\boldsymbol{\Sigma}_t^{\epsilon^{(a)}}+\boldsymbol{\Sigma}_t^{\epsilon^{(o)}}+\boldsymbol{\Sigma}_t^{\epsilon^{(p)}}$. Regarding $\boldsymbol{\Sigma}_t^\xi$, by the computation in the proof of Lemma \ref{update rule for eta_t^2}, i.e., (\ref{Sigma epsilon t}),
    \begin{equation}
        \label{sigma xi t}\boldsymbol{\Sigma}_t^\xi\preceq \frac{\sigma^2}{B}\mathbf{H}^{(q)}.
    \end{equation}
    Regarding $\boldsymbol{\Sigma}_t^{\epsilon^{(p)}}$,
    \begin{equation}
    \label{bound for sigma tp}
        \begin{aligned}
            \boldsymbol{\Sigma}_t^{\epsilon^{(p)}}=&\frac{1}{B^2}\mathbb{E}\left[{\mathcal{Q}_d(\mathbf{X}_t)}^\top {\mathcal{Q}_d(\mathbf{X}_t)}\mathbb{E}\left[\boldsymbol{\epsilon}_{t-1}^{(p)}{\boldsymbol{\epsilon}_{t-1}^{(p)}}^\top \big| \mathbf{w}_{t-1}\right]{\mathcal{Q}_d(\mathbf{X}_t)}^\top\mathcal{Q}_d(\mathbf{X}_t)\right]\\
            =&\frac{\epsilon_p}{B^2}\mathbb{E}\left[{\mathcal{Q}_d(\mathbf{X}_t)}^\top {\mathcal{Q}_d(\mathbf{X}_t)}\mathbf{w}_{t-1}\mathbf{w}_{t-1}^\top{\mathcal{Q}_d(\mathbf{X}_t)}^\top\mathcal{Q}_d(\mathbf{X}_t)\right]\\
            \preceq &\frac{2\epsilon_p}{B^2}\mathbb{E}\left[{\mathcal{Q}_d(\mathbf{X}_t)}^\top {\mathcal{Q}_d(\mathbf{X}_t)}\boldsymbol{\eta}_{t-1}{\boldsymbol{\eta}_{t-1}}^\top{\mathcal{Q}_d(\mathbf{X}_t)}^\top\mathcal{Q}_d(\mathbf{X}_t)\right]\\
            +&\frac{2\epsilon_p}{B^2}\mathbb{E}\left[{\mathcal{Q}_d(\mathbf{X}_t)}^\top {\mathcal{Q}_d(\mathbf{X}_t)}{\mathbf{w}^{(q)}}^*{{\mathbf{w}^{(q)}}^*}^\top{\mathcal{Q}_d(\mathbf{X}_t)}^\top\mathcal{Q}_d(\mathbf{X}_t)\right].
        \end{aligned}
    \end{equation}
    Regarding $\boldsymbol{\Sigma}_t^{\epsilon^{(a)}}$, 
    \begin{equation}
    \label{sigma a t mul}
        \begin{aligned}
            \boldsymbol{\Sigma}_t^{\epsilon^{(a)}}=&\frac{1}{B^2}\mathbb{E}\left[{\mathcal{Q}_d(\mathbf{X}_t)}^\top \boldsymbol{\epsilon}_t^{(a)}{\boldsymbol{\epsilon}_t^{(a)}}^\top \mathcal{Q}_d(\mathbf{X}_t)\right]\\
            =&\frac{\epsilon_a}{B^2}\mathbb{E}\left[{\mathcal{Q}_d(\mathbf{X}_t)}^\top \mathcal{Q}_d(\mathbf{X}_t)\mathbf{w}_{t-1}^{(q)}{\mathbf{w}_{t-1}^{(q)}}^\top{\mathcal{Q}_d(\mathbf{X}_t)}^\top  \mathcal{Q}_d(\mathbf{X}_t)\right]\\
            =&\frac{(1+\epsilon_p)\epsilon_a}{B^2}\mathbb{E}\left[{\mathcal{Q}_d(\mathbf{X}_t)}^\top \mathcal{Q}_d(\mathbf{X}_t)\mathbf{w}_{t-1}{\mathbf{w}_{t-1}}^\top{\mathcal{Q}_d(\mathbf{X}_t)}^\top  \mathcal{Q}_d(\mathbf{X}_t)\right]\\
            \preceq &\frac{2(1+\epsilon_p)\epsilon_a}{B^2}\mathbb{E}\left[{\mathcal{Q}_d(\mathbf{X}_t)}^\top \mathcal{Q}_d(\mathbf{X}_t)\boldsymbol{\eta}_{t-1}{\boldsymbol{\eta}_{t-1}}^\top{\mathcal{Q}_d(\mathbf{X}_t)}^\top  \mathcal{Q}_d(\mathbf{X}_t)\right]\\
            +&\frac{2(1+\epsilon_p)\epsilon_a}{B^2}\mathbb{E}\left[{\mathcal{Q}_d(\mathbf{X}_t)}^\top \mathcal{Q}_d(\mathbf{X}_t){\mathbf{w}^{(q)}}^*{{\mathbf{w}^{(q)}}^*}^\top{\mathcal{Q}_d(\mathbf{X}_t)}^\top  \mathcal{Q}_d(\mathbf{X}_t)\right].
        \end{aligned}
    \end{equation}
    Regarding $\boldsymbol{\Sigma}_t^{\epsilon^{(o)}}$,
    similar to $\boldsymbol{\Sigma}_t^{\epsilon^{(a)}}$, it holds
    \begin{equation*}
        \begin{aligned}
            \boldsymbol{\Sigma}_t^{\epsilon^{(o)}}=&\frac{1}{B^2}\mathbb{E}\left[{\mathcal{Q}_d(\mathbf{X}_t)}^\top \boldsymbol{\epsilon}_t^{(o)}{\boldsymbol{\epsilon}_t^{(o)}}^\top \mathcal{Q}_d(\mathbf{X}_t)\right]\\
            =&\frac{\epsilon_o}{B^2}\mathbb{E}\left[{\mathcal{Q}_d(\mathbf{X}_t)}^\top \mathbf{o}_t{\mathbf{o}_t}^\top \mathcal{Q}_d(\mathbf{X}_t)\right]\\
            \preceq&\frac{2\epsilon_o}{B^2}\mathbb{E}\left[{\mathcal{Q}_d(\mathbf{X}_t)}^\top \mathcal{Q}_l(\mathbf{y}_t)\mathcal{Q}_l(\mathbf{y}_t)^\top \mathcal{Q}_d(\mathbf{X}_t)\right]
            +\frac{2\epsilon_o}{B^2}\mathbb{E}\left[{\mathcal{Q}_d(\mathbf{X}_t)}^\top \mathcal{Q}_a(\mathbf{a}_t)\mathcal{Q}_a(\mathbf{a}_t)^\top \mathcal{Q}_d(\mathbf{X}_t)\right].
        \end{aligned}
    \end{equation*}
    For the second term,
    \begin{equation*}
        \begin{aligned}
            &\frac{2\epsilon_o}{B^2}\mathbb{E}\left[{\mathcal{Q}_d(\mathbf{X}_t)}^\top \mathcal{Q}_a(\mathbf{a}_t)\mathcal{Q}_a(\mathbf{a}_t)^\top \mathcal{Q}_d(\mathbf{X}_t)\right]\\
            \preceq&\frac{2(1+\epsilon_a)\epsilon_o}{B^2}\mathbb{E}\left[{\mathcal{Q}_d(\mathbf{X}_t)}^\top \mathbf{a}_t\mathbf{a}_t^\top \mathcal{Q}_d(\mathbf{X}_t)\right]\\
            \preceq&\frac{4(1+\epsilon_p)(1+\epsilon_a)\epsilon_o}{B^2}\mathbb{E}\left[{\mathcal{Q}_d(\mathbf{X}_t)}^\top {\mathcal{Q}_d(\mathbf{X}_t)}\boldsymbol{\eta}_{t-1}{\boldsymbol{\eta}_{t-1}}^\top{\mathcal{Q}_d(\mathbf{X}_t)}^\top  \mathcal{Q}_d(\mathbf{X}_t)\right]\\
            +&\frac{4(1+\epsilon_p)(1+\epsilon_a)\epsilon_o}{B^2}\mathbb{E}\left[{\mathcal{Q}_d(\mathbf{X}_t)}^\top {\mathcal{Q}_d(\mathbf{X}_t)} {\mathbf{w}^{(q)}}^*{{\mathbf{w}^{(q)}}^*}^\top{\mathcal{Q}_d(\mathbf{X}_t)}^\top \mathcal{Q}_d(\mathbf{X}_t)\right].
        \end{aligned}
    \end{equation*}
    For the first term,
    \begin{equation*}
        \begin{aligned}
            \frac{2\epsilon_o}{B^2}\mathbb{E}\left[{\mathcal{Q}_d(\mathbf{X}_t)}^\top \mathcal{Q}_l(\mathbf{y}_t)\mathcal{Q}_l(\mathbf{y}_t)^\top \mathcal{Q}_d(\mathbf{X}_t)\right]
            \preceq &\frac{4\epsilon_o}{B^2}\mathbb{E}\left[{\mathcal{Q}_d(\mathbf{X}_t)}^\top \boldsymbol{\xi}_t\boldsymbol{\xi}_t^\top \mathcal{Q}_d(\mathbf{X}_t)\right]\\
            +&\frac{4\epsilon_o}{B^2}\mathbb{E}\left[{\mathcal{Q}_d(\mathbf{X}_t)}^\top {\mathcal{Q}_d(\mathbf{X}_t)} {\mathbf{w}^{(q)}}^*{{\mathbf{w}^{(q)}}^*}^\top{\mathcal{Q}_d(\mathbf{X}_t)}^\top \mathcal{Q}_d(\mathbf{X}_t)\right],
        \end{aligned}
    \end{equation*}
    where we use $\boldsymbol{\xi}_t=\mathcal{Q}_l(\mathbf{y}_t)-\mathcal{Q}_d(\mathbf{X}_t)\mathbf{w}^{(q)^*}$. 
    Further, by the bound for $\boldsymbol{\Sigma}_t^\xi$ (\ref{Sigma epsilon t}), we have
    \begin{equation*}
        \frac{1}{B^2}\mathbb{E}\left[{\mathcal{Q}_d(\mathbf{X}_t)}^\top \boldsymbol{\xi}_{t}{\boldsymbol{\xi}_{t}}^\top  \mathcal{Q}_d(\mathbf{X}_t)\right]\preceq \frac{\sigma^2}{B}\mathbf{H}^{(q)},
    \end{equation*}
    it follows that
    \begin{equation}
    \label{sigma o t mul}
        \begin{aligned}
            \boldsymbol{\Sigma}_t^{\epsilon^{(o)}} \preceq &\frac{4\epsilon_o\sigma^2}{B}\mathbf{H}^{(q)}+\frac{4(1+\epsilon_p)(1+\epsilon_a)\epsilon_o}{B^2}\mathbb{E}\left[{\mathcal{Q}_d(\mathbf{X}_t)}^\top {\mathcal{Q}_d(\mathbf{X}_t)}\boldsymbol{\eta}_{t-1}{\boldsymbol{\eta}_{t-1}}^\top{\mathcal{Q}_d(\mathbf{X}_t)}^\top  \mathcal{Q}_d(\mathbf{X}_t)\right]\\
            +&\frac{4\epsilon_o[(1+\epsilon_p)(1+\epsilon_a)+1]}{B^2}\mathbb{E}\left[{\mathcal{Q}_d(\mathbf{X}_t)}^\top {\mathcal{Q}_d(\mathbf{X}_t)} {\mathbf{w}^{(q)}}^*{{\mathbf{w}^{(q)}}^*}^\top{\mathcal{Q}_d(\mathbf{X}_t)}^\top \mathcal{Q}_d(\mathbf{X}_t)\right].
        \end{aligned}
    \end{equation}

    Further, by Assumption \ref{ass2}, it holds
    \begin{equation*}
        \frac{1}{B^2}\mathbb{E}\left[{\mathcal{Q}_d(\mathbf{X}_t)}^\top {\mathcal{Q}_d(\mathbf{X}_t)} {\mathbf{w}^{(q)}}^*{{\mathbf{w}^{(q)}}^*}^\top{\mathcal{Q}_d(\mathbf{X}_t)}^\top \mathcal{Q}_d(\mathbf{X}_t)\right]\preceq \alpha_B\mathrm{tr}\left(\mathbf{H}^{(q)}{\mathbf{w}^{(q)}}^*{{\mathbf{w}^{(q)}}^*}^\top\right)\mathbf{H}^{(q)},
    \end{equation*}
    then together with (\ref{sigma xi t}), (\ref{bound for sigma tp}), (\ref{sigma a t mul}) and (\ref{sigma o t mul}) it holds
    \begin{equation*}
        \begin{aligned}
            \boldsymbol{\Sigma}_t \preceq& \frac{(1+4\epsilon_o)\sigma^2}{B}\mathbf{H}^{(q)} + \alpha_B\left(4\epsilon_o[(1+\epsilon_p)(1+\epsilon_a)+1]+2\epsilon_a(1+\epsilon_p)+2\epsilon_p \right)\mathrm{tr}\left(\mathbf{H}^{(q)}{\mathbf{w}^{(q)}}^*{{\mathbf{w}^{(q)}}^*}^\top \right)\mathbf{H}^{(q)}\\
            +&\frac{4\epsilon_o(1+\epsilon_p)(1+\epsilon_a)+2\epsilon_p+2\epsilon_a(1+\epsilon_p)}{B^2}\mathbb{E}\left[{\mathcal{Q}_d(\mathbf{X}_t)}^\top {\mathcal{Q}_d(\mathbf{X}_t)}\boldsymbol{\eta}_{t-1}{\boldsymbol{\eta}_{t-1}}^\top{\mathcal{Q}_d(\mathbf{X}_t)}^\top\mathcal{Q}_d(\mathbf{X}_t)\right].
        \end{aligned}
    \end{equation*}
    Note that by the definition of multiplicative quantization,
    \begin{equation*}
        \begin{aligned}
            \mathrm{tr}\left(\mathbf{H}^{(q)}{\mathbf{w}^{(q)}}^*{{\mathbf{w}^{(q)}}^*}^\top \right)=\frac{\|\mathbf{w}^*\|_\mathbf{H}^2}{1+\epsilon_d},
        \end{aligned}
    \end{equation*}
    then
    \begin{equation}
    \label{sigma t mul}
        \begin{aligned}
            \boldsymbol{\Sigma}_t \preceq& \left[\frac{(1+4\epsilon_o)\sigma^2}{B} + \frac{\|\mathbf{w}^*\|_\mathbf{H}^2}{1+\epsilon_d}\alpha_B\left(4\epsilon_o[(1+\epsilon_p)(1+\epsilon_a)+1]+2\epsilon_a(1+\epsilon_p)+2\epsilon_p \right)\right]\mathbf{H}^{(q)}\\
            +&\frac{4\epsilon_o(1+\epsilon_p)(1+\epsilon_a)+2\epsilon_p+2\epsilon_a(1+\epsilon_p)}{B^2}\mathbb{E}\left[{\mathcal{Q}_d(\mathbf{X}_t)}^\top {\mathcal{Q}_d(\mathbf{X}_t)}\boldsymbol{\eta}_{t-1}{\boldsymbol{\eta}_{t-1}}^\top{\mathcal{Q}_d(\mathbf{X}_t)}^\top\mathcal{Q}_d(\mathbf{X}_t)\right].
        \end{aligned}
    \end{equation}
    Hence, by (\ref{sigma t mul}), (\ref{eq: ct}) and $\mathbb{E}\left[\boldsymbol{\eta}_t\otimes\boldsymbol{\eta}_t\right] \preceq 2(\mathbf{B}_t+\mathbf{C}_t)$, we have
    \begin{equation*}
        \begin{aligned}
            &\mathbf{C}_t\preceq\mathbb{E}\left[\left(\mathbf{I}-\frac{1}{B}\gamma {\mathcal{Q}_d(\mathbf{X})}^\top \mathcal{Q}_d(\mathbf{X}) \right)\mathbf{C}_{t-1}\left(\mathbf{I}-\frac{1}{B}\gamma {\mathcal{Q}_d(\mathbf{X})}^\top \mathcal{Q}_d(\mathbf{X}) \right)\right]\\
            +&\left[8\epsilon_o(1+\epsilon_p)(1+\epsilon_a)+4\epsilon_p+4\epsilon_a(1+\epsilon_p)\right]\mathbb{E}\left[\frac{\gamma}{B}{\mathcal{Q}_d(\mathbf{X})}^\top {\mathcal{Q}_d(\mathbf{X})}(\mathbf{B}_{t-1}+\mathbf{C}_{t-1})\frac{\gamma}{B}{\mathcal{Q}_d(\mathbf{X})}^\top\mathcal{Q}_d(\mathbf{X})\right]\\
            +&\gamma^2 \left[\frac{(1+4\epsilon_o)\sigma^2}{B} + \frac{\|\mathbf{w}^*\|_\mathbf{H}^2}{1+\epsilon_d}\alpha_B\left(4\epsilon_o[(1+\epsilon_p)(1+\epsilon_a)+1]+2\epsilon_a(1+\epsilon_p)+2\epsilon_p \right)\right]\mathbf{H}^{(q)}.
        \end{aligned}
    \end{equation*}
\end{proof}

Equipped with Lemma \ref{Initial study of R_2}, Lemma \ref{update rule for eta_t^2} and Lemma \ref{update rule for eta_t^2 multiplicative}, we are ready to derive bounds for $R_N^{(0)}$. As shown in \citet{zou2023benign}, we first perform bias-variance decomposition.

\subsection{Bias-Variance Decomposition}
\label{Bias-Variance Decomposition}
As in \citet{zou2023benign}, we perform bias-variance for excess risk, which is summarized as the following lemma. Here we slightly abuse the notations of $\mathbf{B}_t$ and $\mathbf{C}_t$.
\begin{lemma}[\rm Bias-variance decomposition under general quantization]
\label{Bias-Variance Decomposition under General Quantization}
    Under Assumption \ref{ass1}, Assumption \ref{ass: addd quantized}, Assumption \ref{ass2}, and Assumption \ref{ass3},
    \begin{equation*}
        R_N^{(0)}/2\leq \underbrace{\frac{1}{N^2}\cdot\sum_{t=0}^{N-1}\sum_{k=t}^{N-1}\left\langle(\mathbf{I}-\gamma\mathbf{H}^{(q)})^{k-t}\mathbf{H}^{(q)},\mathbf{B}_t\right\rangle}_{\mathrm{bias}}+\underbrace{\frac{1}{N^2}\cdot\sum_{t=0}^{N-1}\sum_{k=t}^{N-1}\left\langle(\mathbf{I}-\gamma\mathbf{H}^{(q)})^{k-t}\mathbf{H}^{(q)},\mathbf{C}_t\right\rangle}_{\mathrm{variance}},
    \end{equation*}
    where
    \begin{equation*}
        \mathbf{B}_t:=(\mathcal{I}-\gamma\mathcal{T}_B^{(q)})^t \circ \mathbf{B}_0, \quad \mathbf{B}_0=\mathbb{E}\left[\boldsymbol{\eta}_0 \otimes \boldsymbol{\eta}_0\right].
    \end{equation*}
    \begin{equation*}
        \mathbf{C}_t := (\mathcal{I}-\gamma \mathcal{T}_B^{(q)})\circ\mathbf{C}_{t-1}+\gamma^2 {\sigma_G^{(q)}}^2\mathbf{H}^{(q)},\quad \mathbf{C}_0=\boldsymbol{0}.
    \end{equation*}
\end{lemma}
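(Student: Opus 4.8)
The plan is to split the recursion for $\mathbf A_t := \mathbb E[\boldsymbol\eta_t\otimes\boldsymbol\eta_t]$ into a homogeneous \emph{bias} piece carrying the initialization and an inhomogeneous \emph{variance} piece carrying the noise covariance, establish the operator inequality $\mathbf A_t\preceq\mathbf B_t+\mathbf C_t$, and then feed this into the bound of Lemma \ref{Initial study of R_2}. Concretely, I would define $\mathbf B_t$ and $\mathbf C_t$ exactly as in the statement and prove $\mathbf A_t\preceq \mathbf B_t+\mathbf C_t$ for every $t\ge 0$ by induction on $t$. The base case is immediate since $\mathbf C_0=\mathbf 0$ and $\mathbf B_0=\mathbb E[\boldsymbol\eta_0\otimes\boldsymbol\eta_0]=\mathbf A_0$.

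For the inductive step I would use the one-step estimate $\mathbf A_t\preceq(\mathcal I-\gamma\mathcal T_B^{(q)})\circ\mathbf A_{t-1}+\gamma^2{\sigma_G^{(q)}}^2\mathbf H^{(q)}$ from Lemma \ref{update rule for eta_t^2}, together with two elementary properties of $\mathcal I-\gamma\mathcal T_B^{(q)}$: it is linear, and it is monotone on the PSD cone, i.e.\ $\mathbf 0\preceq\mathbf A\preceq\mathbf A'$ implies $(\mathcal I-\gamma\mathcal T_B^{(q)})\circ\mathbf A\preceq(\mathcal I-\gamma\mathcal T_B^{(q)})\circ\mathbf A'$. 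Monotonicity follows directly from the explicit conjugate-and-average form $(\mathcal I-\gamma\mathcal T_B^{(q)})\circ\mathbf A=\mathbb E\big[(\mathbf I-\frac{\gamma}{B}{\mathbf X^{(q)}}^\top\mathbf X^{(q)})\mathbf A(\mathbf I-\frac{\gamma}{B}{\mathbf X^{(q)}}^\top\mathbf X^{(q)})\big]$ recorded in Section \ref{Preliminary}: $\mathbf M(\mathbf A'-\mathbf A)\mathbf M^\top\succeq\mathbf 0$ whenever $\mathbf A'-\mathbf A\succeq\mathbf 0$, and PSD-ness survives the expectation. Applying monotonicity to the hypothesis $\mathbf A_{t-1}\preceq\mathbf B_{t-1}+\mathbf C_{t-1}$, then linearity to write $(\mathcal I-\gamma\mathcal T_B^{(q)})\circ(\mathbf B_{t-1}+\mathbf C_{t-1})=(\mathcal I-\gamma\mathcal T_B^{(q)})\circ\mathbf B_{t-1}+(\mathcal I-\gamma\mathcal T_B^{(q)})\circ\mathbf C_{t-1}$, and regrouping the noise term with the $\mathbf C$ part, yields $\mathbf A_t\preceq\mathbf B_t+\mathbf C_t$ by the defining recursions of $\mathbf B_t$ and $\mathbf C_t$; the closed form $\mathbf B_t=(\mathcal I-\gamma\mathcal T_B^{(q)})^t\circ\mathbf B_0$ is just the unrolled homogeneous recursion.

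To conclude, I would start from $R_2\le\frac{1}{N^2}\sum_{t=0}^{N-1}\sum_{k=t}^{N-1}\big\langle(\mathbf I-\gamma\mathbf H^{(q)})^{k-t}\mathbf H^{(q)},\mathbf A_t\big\rangle$ (Lemma \ref{Initial study of R_2}), substitute $\mathbf A_t\preceq\mathbf B_t+\mathbf C_t$, and split each trace inner product additively. This is valid because every kernel $(\mathbf I-\gamma\mathbf H^{(q)})^{k-t}\mathbf H^{(q)}$ is PSD: $\mathbf H^{(q)}$ commutes with $\mathbf I-\gamma\mathbf H^{(q)}$, and under the stepsize condition of the ambient theorem $\mathbf 0\preceq\mathbf I-\gamma\mathbf H^{(q)}$ since $\gamma\lambda_1^{(q)}\le\gamma\alpha_B\mathrm{tr}(\mathbf H+\mathbf D)<1$, so $\mathbf A_t\preceq\mathbf B_t+\mathbf C_t$ gives $\langle\cdot,\mathbf A_t\rangle\le\langle\cdot,\mathbf B_t\rangle+\langle\cdot,\mathbf C_t\rangle$. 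Collecting the $\mathbf B_t$ contributions as the bias term and the $\mathbf C_t$ contributions as the variance term gives the claim. There is no genuine obstacle here; the only items needing (routine) care are the positivity/monotonicity of $\mathcal I-\gamma\mathcal T_B^{(q)}$ and of the kernels $(\mathbf I-\gamma\mathbf H^{(q)})^{k-t}\mathbf H^{(q)}$, and noting that $\mathbf C_t\succeq\mathbf 0$ (a nonnegative combination of PSD matrices), which ensures the separate bias and variance analyses carried out in the subsequent sections are well posed.
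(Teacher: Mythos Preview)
Your proposal is correct and follows essentially the same approach as the paper: invoke Lemma~\ref{Initial study of R_2}, then decompose $\mathbf A_t$ into the homogeneous bias sequence $\mathbf B_t$ and the inhomogeneous variance sequence $\mathbf C_t$. The paper's own proof is terser---it simply asserts $\mathbf A_t=\mathbf B_t+\mathbf C_t$ ``where $\mathbf A_t$ is defined in \eqref{A_t update rule}'' (read literally this is only $\preceq$, since the recursion in \eqref{A_t update rule} is an inequality); your induction argument and the explicit check that $\mathcal I-\gamma\mathcal T_B^{(q)}$ is PSD-monotone make this step rigorous.
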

\begin{proof}
    By Lemma \ref{Initial study of R_2},
    \begin{equation*}
        \begin{aligned}
            R_N^{(0)}\leq &\frac{1}{N^2}\cdot\sum_{t=0}^{N-1}\sum_{k=t}^{N-1}\left\langle(\mathbf{I}-\gamma\mathbf{H}^{(q)})^{k-t}\mathbf{H}^{(q)},\mathbb{E}[\boldsymbol{\eta}_t\otimes \boldsymbol{\eta}_t]\right\rangle.
        \end{aligned}
    \end{equation*}
    The proof is immediately completed by Lemma \ref{update rule for eta_t^2} and $\mathbb{E}[\boldsymbol{\eta}_t\otimes \boldsymbol{\eta}_t]\preceq 2(\mathbf{B}_t+\mathbf{C}_t).$
\end{proof}

For multiplicative quantization, we can directly deduce from Lemma \ref{Bias-Variance Decomposition under General Quantization} by the update rule under multiplicative quantization (Lemma \ref{update rule for eta_t^2 multiplicative}).
\begin{lemma}[\rm Bias-variance decomposition under multiplicative quantization]
\label{Bias-Variance Decomposition under Multiplicative Quantization}
    Under Assumption \ref{ass1}, Assumption \ref{ass: addd quantized}, Assumption \ref{ass2}, and Assumption \ref{ass3}, if there exist $\epsilon_d,\epsilon_l,\epsilon_p,\epsilon_a$ and $\epsilon_o$ such that for any $i\in \{d,l,p,a,o\}$, quantization $\mathcal{Q}_i$ is $\epsilon_i$-multiplicative, then
    \begin{equation*}
        R_N^{(0)}/2\leq \underbrace{\frac{1}{N^2}\cdot\sum_{t=0}^{N-1}\sum_{k=t}^{N-1}\left\langle(\mathbf{I}-\gamma\mathbf{H}^{(q)})^{k-t}\mathbf{H}^{(q)},\mathbf{B}_t^{(M)}\right\rangle}_{\mathrm{bias}}+\underbrace{\frac{1}{N^2}\cdot\sum_{t=0}^{N-1}\sum_{k=t}^{N-1}\left\langle(\mathbf{I}-\gamma\mathbf{H}^{(q)})^{k-t}\mathbf{H}^{(q)},\mathbf{C}_t^{(M)}\right\rangle}_{\mathrm{variance}},
    \end{equation*}
    where
    \begin{equation*}
        \mathbf{B}_t^{(M)}:=(\mathcal{I}-\gamma\mathcal{T}_B^{(q)}+\tilde{\epsilon}\gamma^2\mathcal{M}_B^{(q)})^t \circ \mathbf{B}_0^{(M)}, \quad \mathbf{B}_0^{(M)}=\mathbb{E}\left[\boldsymbol{\eta}_0 \otimes \boldsymbol{\eta}_0\right].
    \end{equation*}
    \begin{equation*}
        \mathbf{C}_t^{(M)}: = (\mathcal{I}-\gamma \mathcal{T}_B^{(q)}+\tilde{\epsilon}\gamma^2\mathcal{M}_B^{(q)})\circ\mathbf{C}_{t-1}^{(M)}+\gamma^2 {\sigma_M^{(q)}}^2\mathbf{H}^{(q)}, \quad \mathbf{C}_0^{(M)}=0.
    \end{equation*}
\end{lemma}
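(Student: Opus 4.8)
The plan is to mirror the proof of Lemma~\ref{Bias-Variance Decomposition under General Quantization}, replacing the general recursion (\ref{A_t update rule}) by its multiplicative counterpart (\ref{A_t update rule multiplicative}) established in Lemma~\ref{update rule for eta_t^2 multiplicative}. Writing $\mathbf{A}_t:=\mathbb{E}[\boldsymbol{\eta}_t\otimes\boldsymbol{\eta}_t]$ and introducing the linear operator $\mathcal{S}:=\mathcal{I}-\gamma\mathcal{T}_B^{(q)}+\tilde{\epsilon}\gamma^2\mathcal{M}_B^{(q)}$, inequality (\ref{A_t update rule multiplicative}) reads $\mathbf{A}_t\preceq\mathcal{S}\circ\mathbf{A}_{t-1}+\gamma^2{\sigma_M^{(q)}}^2\mathbf{H}^{(q)}$. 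I would proceed in three steps: (i) show $\mathcal{S}$ is monotone with respect to the Loewner order; (ii) unroll the recursion to obtain $\mathbf{A}_t\preceq\mathbf{B}_t^{(M)}+\mathbf{C}_t^{(M)}$; and (iii) substitute this into the bound of Lemma~\ref{Initial study of R_2} and split the sum along the two summands.

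For step (i), I would recall from Section~\ref{Preliminary} that $(\mathcal{I}-\gamma\mathcal{T}_B^{(q)})\circ\mathbf{A}=\mathbb{E}\big[(\mathbf{I}-\tfrac{\gamma}{B}{\mathbf{X}^{(q)}}^\top{\mathbf{X}^{(q)}})\mathbf{A}(\mathbf{I}-\tfrac{\gamma}{B}{\mathbf{X}^{(q)}}^\top{\mathbf{X}^{(q)}})\big]$ and $\mathcal{M}_B^{(q)}\circ\mathbf{A}=\mathbb{E}\big[\tfrac{1}{B^2}{\mathbf{X}^{(q)}}^\top{\mathbf{X}^{(q)}}\,\mathbf{A}\,{\mathbf{X}^{(q)}}^\top{\mathbf{X}^{(q)}}\big]$, each being an average of congruences $\mathbf{M}\mapsto\mathbf{G}\mathbf{M}\mathbf{G}^\top$ and hence PSD-preserving; since $\tilde{\epsilon}\ge 0$, the combination $\mathcal{S}$ is PSD-preserving, and by linearity order-preserving: $\mathbf{X}\preceq\mathbf{Y}$ implies $\mathcal{S}\circ\mathbf{X}\preceq\mathcal{S}\circ\mathbf{Y}$.

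For step (ii), I would induct on $t$. The base case is $\mathbf{A}_0=\mathbb{E}[\boldsymbol{\eta}_0\otimes\boldsymbol{\eta}_0]=\mathbf{B}_0^{(M)}=\mathbf{B}_0^{(M)}+\mathbf{C}_0^{(M)}$ since $\mathbf{C}_0^{(M)}=\mathbf{0}$. Assuming $\mathbf{A}_{t-1}\preceq\mathbf{B}_{t-1}^{(M)}+\mathbf{C}_{t-1}^{(M)}$, monotonicity of $\mathcal{S}$ together with (\ref{A_t update rule multiplicative}) gives
\[
\mathbf{A}_t\preceq\mathcal{S}\circ\mathbf{A}_{t-1}+\gamma^2{\sigma_M^{(q)}}^2\mathbf{H}^{(q)}\preceq\mathcal{S}\circ\big(\mathbf{B}_{t-1}^{(M)}+\mathbf{C}_{t-1}^{(M)}\big)+\gamma^2{\sigma_M^{(q)}}^2\mathbf{H}^{(q)}=\mathbf{B}_t^{(M)}+\mathbf{C}_t^{(M)},
\]
the final equality being precisely the defining recursions $\mathbf{B}_t^{(M)}=\mathcal{S}\circ\mathbf{B}_{t-1}^{(M)}$ and $\mathbf{C}_t^{(M)}=\mathcal{S}\circ\mathbf{C}_{t-1}^{(M)}+\gamma^2{\sigma_M^{(q)}}^2\mathbf{H}^{(q)}$.

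For step (iii), Lemma~\ref{Initial study of R_2} yields $R_2\le\frac{1}{N^2}\sum_{t=0}^{N-1}\sum_{k=t}^{N-1}\langle(\mathbf{I}-\gamma\mathbf{H}^{(q)})^{k-t}\mathbf{H}^{(q)},\mathbf{A}_t\rangle$; since $(\mathbf{I}-\gamma\mathbf{H}^{(q)})^{k-t}\mathbf{H}^{(q)}\succeq\mathbf{0}$ (as in the proof of Lemma~\ref{Bias-Variance Decomposition under General Quantization}), the trace pairing against it is monotone, so inserting $\mathbf{A}_t\preceq\mathbf{B}_t^{(M)}+\mathbf{C}_t^{(M)}$ and separating the two contributions produces exactly the stated bias and variance terms. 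I do not expect a genuine obstacle: the only point requiring care is that adding the extra $\tilde{\epsilon}\gamma^2\mathcal{M}_B^{(q)}$ term (the signature of multiplicative quantization, supplied by Lemma~\ref{update rule for eta_t^2 multiplicative}) does not destroy the monotonicity of $\mathcal{S}$, which holds because $\tilde{\epsilon}\ge 0$ and $\mathcal{M}_B^{(q)}$ is itself PSD-preserving.
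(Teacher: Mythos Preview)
Your proposal is correct and follows essentially the same approach as the paper, which simply remarks that Lemma~\ref{Bias-Variance Decomposition under Multiplicative Quantization} is a direct analogue of Lemma~\ref{Bias-Variance Decomposition under General Quantization} once the multiplicative update rule~(\ref{A_t update rule multiplicative}) replaces~(\ref{A_t update rule}). You supply more detail than the paper does---in particular, you make explicit the monotonicity of $\mathcal{S}=\mathcal{I}-\gamma\mathcal{T}_B^{(q)}+\tilde{\epsilon}\gamma^2\mathcal{M}_B^{(q)}$ and the induction $\mathbf{A}_t\preceq\mathbf{B}_t^{(M)}+\mathbf{C}_t^{(M)}$---which is exactly what the paper's one-line deferral leaves implicit.
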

\begin{proof}
    By Lemma \ref{update rule for eta_t^2 multiplicative},
    \begin{equation*}
        \begin{aligned}
            \mathbf{C}_t
            \preceq&\mathbb{E}\left[\left(\mathbf{I}-\frac{1}{B}\gamma {\mathcal{Q}_d(\mathbf{X})}^\top \mathcal{Q}_d(\mathbf{X}) \right)\mathbf{C}_{t-1}\left(\mathbf{I}-\frac{1}{B}\gamma {\mathcal{Q}_d(\mathbf{X})}^\top \mathcal{Q}_d(\mathbf{X}) \right)\right]\\
            +&\tilde{\epsilon}\mathbb{E}\left[\frac{\gamma}{B}{\mathcal{Q}_d(\mathbf{X})}^\top {\mathcal{Q}_d(\mathbf{X})}(\mathbf{B}_{t-1}+\mathbf{C}_{t-1})\frac{\gamma}{B}{\mathcal{Q}_d(\mathbf{X})}^\top\mathcal{Q}_d(\mathbf{X})\right]
            +\gamma^2 {\sigma_M^{(q)}}^2\mathbf{H}^{(q)},\\
            \mathbf{B}_t=& \mathbb{E}\left[\left(\mathbf{I}-\frac{1}{B}\gamma {\mathcal{Q}_d(\mathbf{X})}^\top \mathcal{Q}_d(\mathbf{X}) \right)\mathbf{B}_{t-1}\left(\mathbf{I}-\frac{1}{B}\gamma {\mathcal{Q}_d(\mathbf{X})}^\top \mathcal{Q}_d(\mathbf{X}) \right)\right].
        \end{aligned}
    \end{equation*}
    Hence,
    \begin{equation*}
        \begin{aligned}
            \mathbb{E}\left[\boldsymbol{\eta}_t \otimes \boldsymbol{\eta}_t\right] \preceq &2(\mathbf{B}_t+\mathbf{C}_t)\\
            \preceq&2\left[(\mathcal{I}-\gamma \mathcal{T}_B^{(q)}+\tilde{\epsilon}\gamma^2\mathcal{M}_B^{(q)})\circ\left(\mathbf{B}_{t-1}+\mathbf{C}_{t-1}\right)+\gamma^2 {\sigma_M^{(q)}}^2\mathbf{H}^{(q)}\right]\\
            \preceq&2\left(\mathbf{B}_t^{(M)}+\mathbf{C}_t^{(M)}\right).
        \end{aligned}
    \end{equation*}
    Applying Lemma \ref{Initial study of R_2} completes the proof.
\end{proof}
\subsection{Bounding the Bias Error}
\label{Bounding the Bias Error}
By Lemma \ref{Bias-Variance Decomposition under General Quantization},
\begin{equation}
\label{bias express}
    \begin{aligned}
        \mathrm{bias}=&\frac{1}{N^{2}}\sum_{t=0}^{N-1}\sum_{k=t}^{N-1}\left\langle(\mathbf{I}-\gamma\mathbf{H}^{(q)})^{k-t}\mathbf{H}^{(q)},\mathbf{B}_{t}\right\rangle \\
        =&\frac{1}{\gamma N^{2}}\sum_{t=0}^{N-1}\left\langle\mathbf{I}-(\mathbf{I}-\gamma\mathbf{H}^{(q)})^{N-t},\mathbf{B}_{t}\right\rangle \\
        \leq&\frac{1}{\gamma N^{2}}\langle\mathbf{I}-(\mathbf{I}-\gamma\mathbf{H}^{(q)})^{N},\sum_{t=0}^{N-1}\mathbf{B}_{t}\rangle.
    \end{aligned}
\end{equation}
For $1 \leq n\leq N$, let $\mathbf{S}_n=\sum_{t=0}^{n-1}\mathbf{B}_t,\ \mathbf{S}_n^{(M)}=\sum_{t=0}^{n-1}\mathbf{B}_t^{(M)}$, then we only need to bound $\mathbf{S}_N$ and $\mathbf{S}_N^{(M)}$ to bound bias term under general quantization and multiplicative quantization, respectively. We first derive the update rule for $\mathbf{S}_t$ and $\mathbf{S}_t^{(M)}$.
\begin{lemma}[\rm Initial study of $\mathbf{S}_t$]
\label{(Initial Study of $S_t$)}
    For $1 \leq t\leq N$,
    \begin{equation*}
        \mathbf{S}_t \preceq (\mathcal{I}-\gamma\tilde{\mathcal{T}}^{(q)})\circ\mathbf{S}_{t-1}+\gamma^{2}\mathcal{M}_B^{(q)}\circ\mathbf{S}_{N}+\mathbf{B}_{0}.
    \end{equation*}
\end{lemma}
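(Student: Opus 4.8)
The plan is to convert the closed form $\mathbf{B}_t=(\mathcal{I}-\gamma\mathcal{T}_B^{(q)})^t\circ\mathbf{B}_0$ into a one-step recursion for the partial sums $\mathbf{S}_t=\sum_{k=0}^{t-1}\mathbf{B}_k$, then trade the batch operator $\mathcal{I}-\gamma\mathcal{T}_B^{(q)}$ for the mean-field operator $\mathcal{I}-\gamma\widetilde{\mathcal{T}}^{(q)}$ at the cost of a PSD correction term, and finally bound that correction using monotonicity of $\mathcal{M}_B^{(q)}$ together with $\mathbf{S}_{t-1}\preceq\mathbf{S}_N$. For the first step, note that for $t\ge1$ the semigroup property gives $\mathbf{B}_t=(\mathcal{I}-\gamma\mathcal{T}_B^{(q)})\circ\mathbf{B}_{t-1}$, so summing over $k=1,\dots,t-1$ and peeling off the $k=0$ term yields the exact identity $\mathbf{S}_t=\mathbf{B}_0+(\mathcal{I}-\gamma\mathcal{T}_B^{(q)})\circ\mathbf{S}_{t-1}$. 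This is the only place the definition of $\mathbf{B}_t$ is used.

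Next I would compare the two operators. Writing $\mathbf{M}:=\tfrac1B\mathcal{Q}_d(\mathbf{X})^\top\mathcal{Q}_d(\mathbf{X})$, so that $\mathbb{E}[\mathbf{M}]=\mathbf{H}^{(q)}$, a direct expansion of $\mathbb{E}[(\mathbf{I}-\gamma\mathbf{M})\mathbf{A}(\mathbf{I}-\gamma\mathbf{M})]$ shows that the linear-in-$\gamma$ parts of $\mathcal{I}-\gamma\mathcal{T}_B^{(q)}$ and $\mathcal{I}-\gamma\widetilde{\mathcal{T}}^{(q)}$ agree (both equal $\mathbf{A}-\gamma\mathbf{H}^{(q)}\mathbf{A}-\gamma\mathbf{A}\mathbf{H}^{(q)}$), while the quadratic parts differ only by $\gamma^{2}\bigl(\mathcal{M}_B^{(q)}\circ\mathbf{A}-\mathbf{H}^{(q)}\mathbf{A}\mathbf{H}^{(q)}\bigr)$. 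Since $\mathbf{H}^{(q)}\mathbf{A}\mathbf{H}^{(q)}\succeq\mathbf0$ whenever $\mathbf{A}\succeq\mathbf0$, dropping it gives the one-sided bound $(\mathcal{I}-\gamma\mathcal{T}_B^{(q)})\circ\mathbf{A}\preceq(\mathcal{I}-\gamma\widetilde{\mathcal{T}}^{(q)})\circ\mathbf{A}+\gamma^{2}\mathcal{M}_B^{(q)}\circ\mathbf{A}$.

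Finally I would apply this with $\mathbf{A}=\mathbf{S}_{t-1}$. Every $\mathbf{B}_k$ is PSD: $\mathbf{B}_0=\mathbb{E}[\boldsymbol{\eta}_0\otimes\boldsymbol{\eta}_0]\succeq\mathbf0$, and $\mathcal{I}-\gamma\mathcal{T}_B^{(q)}$ preserves the PSD cone because it is the map $\mathbf{A}\mapsto\mathbb{E}[(\mathbf{I}-\gamma\mathbf{M})\mathbf{A}(\mathbf{I}-\gamma\mathbf{M})]$; hence $\mathbf{S}_{t-1}\preceq\mathbf{S}_N$. Because $\mathcal{M}_B^{(q)}\circ\mathbf{A}=\mathbb{E}[\mathbf{M}\mathbf{A}\mathbf{M}]$ is monotone in the PSD order, $\gamma^{2}\mathcal{M}_B^{(q)}\circ\mathbf{S}_{t-1}\preceq\gamma^{2}\mathcal{M}_B^{(q)}\circ\mathbf{S}_N$. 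Substituting the operator bound into the recursion of the first step then gives $\mathbf{S}_t\preceq\mathbf{B}_0+(\mathcal{I}-\gamma\widetilde{\mathcal{T}}^{(q)})\circ\mathbf{S}_{t-1}+\gamma^{2}\mathcal{M}_B^{(q)}\circ\mathbf{S}_N$, which is precisely the claim.

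I do not anticipate a real obstacle here; the only points requiring care are tracking the $1/B$ normalization so that $\mathbb{E}[\mathbf{M}]=\mathbf{H}^{(q)}$ exactly and the cross terms collapse to $\mathbf{H}^{(q)}\mathbf{A}+\mathbf{A}\mathbf{H}^{(q)}$ with no batch-dependent factor, and carefully invoking PSD-order preservation of both $\mathcal{I}-\gamma\mathcal{T}_B^{(q)}$ and $\mathcal{M}_B^{(q)}$ so that the inequality $\mathbf{S}_{t-1}\preceq\mathbf{S}_N$ and the final monotone replacement are legitimate. In particular, no stepsize restriction is needed for this lemma, since $(\mathbf{I}-\gamma\mathbf{M})\mathbf{A}(\mathbf{I}-\gamma\mathbf{M})\succeq\mathbf0$ holds for every $\gamma$ when $\mathbf{A}\succeq\mathbf0$.
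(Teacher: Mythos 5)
Your proposal is correct and follows essentially the same route as the paper's proof: derive the one-step recursion $\mathbf{S}_t=\mathbf{B}_0+(\mathcal{I}-\gamma\mathcal{T}_B^{(q)})\circ\mathbf{S}_{t-1}$, rewrite $\mathcal{I}-\gamma\mathcal{T}_B^{(q)}$ as $(\mathcal{I}-\gamma\widetilde{\mathcal{T}}^{(q)})+\gamma^2(\mathcal{M}_B^{(q)}-\widetilde{\mathcal{M}}^{(q)})$, drop the PSD term $\gamma^2\widetilde{\mathcal{M}}^{(q)}\circ\mathbf{S}_{t-1}=\gamma^2\mathbf{H}^{(q)}\mathbf{S}_{t-1}\mathbf{H}^{(q)}\succeq 0$, and replace $\mathbf{S}_{t-1}$ by $\mathbf{S}_N$ under $\mathcal{M}_B^{(q)}$ using PSD monotonicity. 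The only cosmetic difference is that you phrase the operator comparison as a standalone one-sided bound before applying it, while the paper does the substitution inline, and your explicit remark that no stepsize restriction enters is a correct and useful observation that the paper leaves implicit.
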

\begin{proof}
    By definition,
    \begin{equation}
        \label{22222}
        \begin{aligned}
            \mathbf{S}_t=&\sum_{k=0}^{t-1}(\mathcal{I}-\gamma\mathcal{T}_B^{(q)})^k\circ\mathbf{B}_0\\
            =&(\mathcal{I}-\gamma\mathcal{T}_B^{(q)})\circ\left(\sum_{k=1}^{t-1}(\mathcal{I}-\gamma\mathcal{T}_B^{(q)})^{k-1}\circ\mathbf{B}_0\right)+\mathbf{B}_0\\
            =&(\mathcal{I}-\gamma\mathcal{T}_B^{(q)})\circ\mathbf{S}_{t-1}+\mathbf{B}_0.
        \end{aligned}
    \end{equation}
    Then we convert $\mathcal{T}_B^{(q)}$ to $\widetilde{\mathcal{T}}^{(q)}$. By (\ref{22222}),
    \begin{equation*}
        \begin{aligned}
            \mathbf{S}_{t} =& (\mathcal{I}-\gamma\mathcal{T}_B^{(q)})\circ\mathbf{S}_{t-1}+\mathbf{B}_0\\
            =& (\mathcal{I}-\gamma\widetilde{\mathcal{T}}^{(q)})\circ\mathbf{S}_{t-1}+\gamma (\widetilde{\mathcal{T}}^{(q)}-{\mathcal{T}}_B^{(q)})\circ\mathbf{S}_{t-1}+\mathbf{B}_0\\
            =& (\mathcal{I}-\gamma\widetilde{\mathcal{T}}^{(q)})\circ\mathbf{S}_{t-1}+\gamma^2 ({\mathcal{M}}_B^{(q)}-\widetilde{\mathcal{M}}^{(q)})\circ\mathbf{S}_{t-1}+\mathbf{B}_0\\
            \preceq&(\mathcal{I}-\gamma\tilde{\mathcal{T}}^{(q)})\circ\mathbf{S}_{t-1}+\gamma^{2}\mathcal{M}_B^{(q)}\circ\mathbf{S}_{N}+\mathbf{B}_{0},
        \end{aligned}
    \end{equation*}
    where the third equality holds by the definition of linear operators.
\end{proof}
\begin{lemma}[\rm Initial study of $\mathbf{S}_t^{(M)}$]
\label{(Initial Study of $S_t$) mu}
    For $1 \leq t\leq N$,
    \begin{equation*}           
        \mathbf{S}_{t}^{(M)}\preceq(\mathcal{I}-\gamma\tilde{\mathcal{T}}^{(q)})\circ\mathbf{S}_{t-1}^{(M)}+(1+\tilde{\epsilon})\gamma^{2}\mathcal{M}_B^{(q)}\circ\mathbf{S}_{N}^{(M)}+\mathbf{B}_{0}.
    \end{equation*}
\end{lemma}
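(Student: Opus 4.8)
The plan is to replay the proof of Lemma~\ref{(Initial Study of $S_t$)} essentially verbatim, simply carrying the extra operator $\tilde\epsilon\gamma^{2}\mathcal{M}_B^{(q)}$ through every step. Write $\mathcal{L}^{(M)}:=\mathcal{I}-\gamma\mathcal{T}_B^{(q)}+\tilde\epsilon\gamma^{2}\mathcal{M}_B^{(q)}$, so that by Lemma~\ref{Bias-Variance Decomposition under Multiplicative Quantization} we have $\mathbf{B}_t^{(M)}=(\mathcal{L}^{(M)})^{t}\circ\mathbf{B}_0^{(M)}$ and $\mathbf{S}_n^{(M)}=\sum_{k=0}^{n-1}(\mathcal{L}^{(M)})^{k}\circ\mathbf{B}_0^{(M)}$. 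First I would peel off one copy of $\mathcal{L}^{(M)}$ from this partial sum, exactly as in (\ref{22222}), to obtain the one-step recursion
\[
\mathbf{S}_t^{(M)}=\mathcal{L}^{(M)}\circ\mathbf{S}_{t-1}^{(M)}+\mathbf{B}_0^{(M)}
=(\mathcal{I}-\gamma\mathcal{T}_B^{(q)})\circ\mathbf{S}_{t-1}^{(M)}+\tilde\epsilon\gamma^{2}\mathcal{M}_B^{(q)}\circ\mathbf{S}_{t-1}^{(M)}+\mathbf{B}_0^{(M)}.
\]

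Next I would convert $\mathcal{T}_B^{(q)}$ into $\widetilde{\mathcal{T}}^{(q)}$ using the operator identities of Section~\ref{Preliminary}, namely $\widetilde{\mathcal{T}}^{(q)}-\mathcal{T}_B^{(q)}=\gamma(\mathcal{M}_B^{(q)}-\widetilde{\mathcal{M}}^{(q)})$, so that $\mathcal{I}-\gamma\mathcal{T}_B^{(q)}=(\mathcal{I}-\gamma\widetilde{\mathcal{T}}^{(q)})+\gamma^{2}(\mathcal{M}_B^{(q)}-\widetilde{\mathcal{M}}^{(q)})$. Substituting and collecting the two $\mathcal{M}_B^{(q)}$ contributions yields
\[
\mathbf{S}_t^{(M)}=(\mathcal{I}-\gamma\widetilde{\mathcal{T}}^{(q)})\circ\mathbf{S}_{t-1}^{(M)}+(1+\tilde\epsilon)\gamma^{2}\mathcal{M}_B^{(q)}\circ\mathbf{S}_{t-1}^{(M)}-\gamma^{2}\widetilde{\mathcal{M}}^{(q)}\circ\mathbf{S}_{t-1}^{(M)}+\mathbf{B}_0^{(M)},
\]
which already exhibits the coefficient $1+\tilde\epsilon$ as the merger of the $\mathcal{M}_B^{(q)}$ piece hidden inside $\mathcal{I}-\gamma\mathcal{T}_B^{(q)}$ with the explicit $\tilde\epsilon\gamma^{2}\mathcal{M}_B^{(q)}$ term. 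To finish I would drop $-\gamma^{2}\widetilde{\mathcal{M}}^{(q)}\circ\mathbf{S}_{t-1}^{(M)}\preceq\mathbf{0}$, replace $\mathbf{S}_{t-1}^{(M)}$ by the larger $\mathbf{S}_N^{(M)}$ inside $(1+\tilde\epsilon)\gamma^{2}\mathcal{M}_B^{(q)}\circ(\cdot)$, and note $\mathbf{B}_0^{(M)}=\mathbb{E}[\boldsymbol{\eta}_0\otimes\boldsymbol{\eta}_0]=\mathbf{B}_0$, which gives the claimed bound.

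The only step requiring care — and it is bookkeeping rather than a genuine obstacle — is verifying that the two $\preceq$-domination steps are legitimate, i.e. that the operators involved map the PSD cone to itself. Since $(\mathcal{I}-\gamma\mathcal{T}_B^{(q)})\circ\mathbf{A}=\mathbb{E}\big[(\mathbf{I}-\tfrac{\gamma}{B}{\mathbf{X}^{(q)}}^{\top}\mathbf{X}^{(q)})\mathbf{A}(\mathbf{I}-\tfrac{\gamma}{B}{\mathbf{X}^{(q)}}^{\top}\mathbf{X}^{(q)})\big]$ and $\mathcal{M}_B^{(q)}\circ\mathbf{A}=\mathbb{E}\big[\tfrac{1}{B^{2}}{\mathbf{X}^{(q)}}^{\top}\mathbf{X}^{(q)}\mathbf{A}{\mathbf{X}^{(q)}}^{\top}\mathbf{X}^{(q)}\big]$ are averages of congruence transformations, both are PSD-preserving, and hence so is their nonnegative combination $\mathcal{L}^{(M)}$ (recall $\tilde\epsilon\ge 0$). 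Consequently each $\mathbf{B}_k^{(M)}=(\mathcal{L}^{(M)})^{k}\circ\mathbf{B}_0^{(M)}\succeq\mathbf{0}$, so $\mathbf{S}_{t-1}^{(M)}\succeq\mathbf{0}$ — giving $\widetilde{\mathcal{M}}^{(q)}\circ\mathbf{S}_{t-1}^{(M)}=\mathbf{H}^{(q)}\mathbf{S}_{t-1}^{(M)}\mathbf{H}^{(q)}\succeq\mathbf{0}$ — and also $\mathbf{S}_N^{(M)}-\mathbf{S}_{t-1}^{(M)}=\sum_{k=t-1}^{N-1}\mathbf{B}_k^{(M)}\succeq\mathbf{0}$ whenever $t\le N$, so that PSD-preservation of $\mathcal{M}_B^{(q)}$ yields $\mathcal{M}_B^{(q)}\circ\mathbf{S}_{t-1}^{(M)}\preceq\mathcal{M}_B^{(q)}\circ\mathbf{S}_N^{(M)}$. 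That closes the argument; no step presents a real difficulty beyond faithfully tracking the single extra operator term relative to Lemma~\ref{(Initial Study of $S_t$)}.
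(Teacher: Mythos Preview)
Your proposal is correct and follows essentially the same route as the paper: derive the one-step recursion $\mathbf{S}_t^{(M)}=(\mathcal{I}-\gamma\mathcal{T}_B^{(q)}+\tilde\epsilon\gamma^2\mathcal{M}_B^{(q)})\circ\mathbf{S}_{t-1}^{(M)}+\mathbf{B}_0$, rewrite $\mathcal{T}_B^{(q)}$ in terms of $\widetilde{\mathcal{T}}^{(q)}$ to collect the $(1+\tilde\epsilon)\gamma^2\mathcal{M}_B^{(q)}$ coefficient, then drop the $-\gamma^2\widetilde{\mathcal{M}}^{(q)}$ term and majorize $\mathbf{S}_{t-1}^{(M)}$ by $\mathbf{S}_N^{(M)}$. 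Your added justification of the PSD-preservation steps (which the paper leaves implicit) is a welcome bit of care.
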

\begin{proof}
    The proof is similar to the proof for Lemma \ref{(Initial Study of $S_t$)}.
    \begin{equation*}
        \begin{aligned}
            \mathbf{S}_{t}^{(M)} =& (\mathcal{I}-\gamma\mathcal{T}_B^{(q)}+\tilde{\epsilon}\gamma^2\mathcal{M}_B^{(q)})\circ\mathbf{S}_{t-1}^{(M)}+\mathbf{B}_0\\
            =& (\mathcal{I}-\gamma\widetilde{\mathcal{T}}^{(q)})\circ\mathbf{S}_{t-1}^{(M)}+\gamma (\widetilde{\mathcal{T}}^{(q)}-{\mathcal{T}}_B^{(q)})\circ\mathbf{S}_{t-1}^{(M)}+\tilde{\epsilon}\gamma^2\mathcal{M}_B^{(q)}\circ \mathbf{S}_{t-1}^{(M)}+\mathbf{B}_0\\
            =& (\mathcal{I}-\gamma\widetilde{\mathcal{T}}^{(q)})\circ\mathbf{S}_{t-1}^{(M)}+\gamma^2 ((1+\tilde{\epsilon}){\mathcal{M}}_B^{(q)}-\widetilde{\mathcal{M}}^{(q)})\circ\mathbf{S}_{t-1}^{(M)}+\mathbf{B}_0\\
            \preceq&(\mathcal{I}-\gamma\tilde{\mathcal{T}}^{(q)})\circ\mathbf{S}_{t-1}^{(M)}+(1+\tilde{\epsilon})\gamma^{2}\mathcal{M}_B^{(q)}\circ\mathbf{S}_{N}^{(M)}+\mathbf{B}_{0}.
        \end{aligned}
    \end{equation*}
\end{proof}

\begin{lemma}[\rm A bound for $\mathcal{M}_B^{(q)}\circ\mathbf{S}_t$]
    For $1 \leq t\leq N$, under Assumption \ref{ass1}, Assumption \ref{ass: addd quantized}, Assumption \ref{ass2}, and Assumption \ref{ass3}, if $\gamma < \frac{1}{\alpha_B\mathrm{tr}(\mathbf{H}^{(q)})}$, then
    \label{A bound for M_B^{(q)} S_t}
    \begin{equation*}
         \mathcal{M}_B^{(q)} \circ \mathbf{S}_t \preceq \frac{\alpha_B\cdot\mathrm{tr}\left(\left[\mathcal{I}-(\mathcal{I}-\gamma\widetilde{\mathcal{T}}^{(q)})^t\right]\circ\mathbf{B}_0\right)}{\gamma(1-\gamma\alpha_B\operatorname{tr}(\mathbf{H}^{(q)}))}\cdot\mathbf{H}^{(q)}.
    \end{equation*}
\end{lemma}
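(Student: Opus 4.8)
The plan is to collapse the operator inequality into a scalar recursion for $v_t:=\operatorname{tr}(\mathbf H^{(q)}\mathbf S_t)$ and solve it. The first ingredient I will use is the batch form of Assumption~\ref{ass2}, namely $\mathcal M_B^{(q)}\circ\mathbf A\preceq\alpha_B\operatorname{tr}(\mathbf H^{(q)}\mathbf A)\,\mathbf H^{(q)}$ for PSD $\mathbf A$ (this is how Assumption~\ref{ass2} is already applied in the proof of Lemma~\ref{update rule for eta_t^2}); it follows by writing $\tfrac1B{\mathbf X^{(q)}}^\top\mathbf X^{(q)}=\tfrac1B\sum_{i=1}^B\mathbf x_i^{(q)}{\mathbf x_i^{(q)}}^\top$, expanding the product, and bounding the $i\ne j$ cross terms via $\mathbf H^{(q)}\mathbf A\mathbf H^{(q)}\preceq\operatorname{tr}(\mathbf H^{(q)}\mathbf A)\mathbf H^{(q)}$ and the $B$ diagonal terms by Assumption~\ref{ass2} (taking $\mathbf A=\mathbf I$ and tracing also gives $\alpha_B\ge1$, which I use throughout). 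Applying this to $\mathbf S_t$ reduces the lemma to showing $v_t\le g_t\big/\big(\gamma\,(1-\gamma\alpha_B\operatorname{tr}\mathbf H^{(q)})\big)$, where $g_t:=\operatorname{tr}\big([\mathcal I-(\mathcal I-\gamma\widetilde{\mathcal T}^{(q)})^t]\circ\mathbf B_0\big)$; since $(\mathcal I-\gamma\widetilde{\mathcal T}^{(q)})^t\circ\mathbf B_0=(\mathbf I-\gamma\mathbf H^{(q)})^t\mathbf B_0(\mathbf I-\gamma\mathbf H^{(q)})^t$ is PSD and nonincreasing in $t$, the sequence $g_t$ is nondecreasing in $t$, a fact I will need.

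Next I will set up the recursion, following the proof of Lemma~\ref{(Initial Study of $S_t$)} but keeping $\mathbf S_{t-1}$ in place of $\mathbf S_N$: from $\mathbf S_t=(\mathcal I-\gamma\mathcal T_B^{(q)})\circ\mathbf S_{t-1}+\mathbf B_0$, the identity $\mathcal I-\gamma\mathcal T_B^{(q)}=(\mathcal I-\gamma\widetilde{\mathcal T}^{(q)})+\gamma^2(\mathcal M_B^{(q)}-\widetilde{\mathcal M}^{(q)})$, and discarding the PSD term $\gamma^2\widetilde{\mathcal M}^{(q)}\circ\mathbf S_{t-1}=\gamma^2\mathbf H^{(q)}\mathbf S_{t-1}\mathbf H^{(q)}$, I get $\mathbf S_t\preceq(\mathcal I-\gamma\widetilde{\mathcal T}^{(q)})\circ\mathbf S_{t-1}+\gamma^2\mathcal M_B^{(q)}\circ\mathbf S_{t-1}+\mathbf B_0$; the batch bound above then gives $\mathbf S_t\preceq(\mathcal I-\gamma\widetilde{\mathcal T}^{(q)})\circ\mathbf S_{t-1}+\gamma^2\alpha_B v_{t-1}\mathbf H^{(q)}+\mathbf B_0$. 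Retaining $\mathbf S_{t-1}$ rather than passing to $\mathbf S_N$ is exactly what lets the final numerator come out as $g_t$ instead of the coarser $g_N$.

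I will then unroll this with $\mathbf S_0=\mathbf 0$ to obtain $\mathbf S_t\preceq\sum_{s=0}^{t-1}(\mathcal I-\gamma\widetilde{\mathcal T}^{(q)})^s\circ\big(\gamma^2\alpha_B v_{t-1-s}\mathbf H^{(q)}+\mathbf B_0\big)$, replace $v_{t-1-s}$ by $v_{t-1}$ (legitimate since $\mathbf S_0\preceq\mathbf S_1\preceq\cdots$, because $\mathbf B_t\succeq\mathbf 0$), and take $\operatorname{tr}(\mathbf H^{(q)}\cdot)$. Diagonalizing in the eigenbasis of $\mathbf H^{(q)}$ — where $\mathcal I-\gamma\widetilde{\mathcal T}^{(q)}$ multiplies the $(i,j)$ entry by $(1-\gamma\lambda_i^{(q)})(1-\gamma\lambda_j^{(q)})$ — and using $\sum_{s=0}^{t-1}(1-\gamma\lambda_i^{(q)})^{2s}=\frac{1-(1-\gamma\lambda_i^{(q)})^{2t}}{\gamma\lambda_i^{(q)}(2-\gamma\lambda_i^{(q)})}$ with $2-\gamma\lambda_i^{(q)}\ge1$ (valid since $\gamma\lambda_i^{(q)}\le\gamma\operatorname{tr}\mathbf H^{(q)}<1/\alpha_B\le1$), the $\mathbf H^{(q)}$-block contributes at most $\gamma\alpha_B\operatorname{tr}(\mathbf H^{(q)})\,v_{t-1}$ and the $\mathbf B_0$-block at most $g_t/\gamma$. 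This yields $v_t\le c\,v_{t-1}+g_t/\gamma$ with $c:=\gamma\alpha_B\operatorname{tr}(\mathbf H^{(q)})\in(0,1)$ by hypothesis; iterating from $v_0=0$ and using $g_j\le g_t$ for $j\le t$ gives $v_t\le\tfrac1\gamma\sum_{j=1}^t c^{t-j}g_j\le\tfrac{g_t}{\gamma(1-c)}$, and $\mathcal M_B^{(q)}\circ\mathbf S_t\preceq\alpha_B v_t\mathbf H^{(q)}$ finishes the proof.

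The main obstacle will be the self-referential term $\gamma^2\mathcal M_B^{(q)}\circ\mathbf S_{t-1}$, which reinjects $\operatorname{tr}(\mathbf H^{(q)}\mathbf S_{t-1})$ — the quantity being bounded — back into the estimate. The stepsize condition $\gamma<1/(\alpha_B\operatorname{tr}\mathbf H^{(q)})$ makes this feedback a strict contraction, so the Neumann-type summation converges; the delicate point is to thread the $t$-dependence through the unrolling (work with $\mathbf S_{t-1}$, exploit monotonicity of $v_\bullet$ and $g_\bullet$) so the resulting bound is sharp in $t$. Everything else is routine geometric-series manipulation, for which the only inputs are $\alpha_B\ge1$ and the stepsize bound.
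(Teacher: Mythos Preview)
Your proposal is correct and reaches the same bound, but the route is genuinely different from the paper's. The paper works at the operator level: it writes $\mathbf S_t=\gamma^{-1}{\mathcal T_B^{(q)}}^{-1}\circ\big[\mathcal I-(\mathcal I-\gamma\mathcal T_B^{(q)})^t\big]\circ\mathbf B_0$, replaces the bracket by $\mathbf A:=\big[\mathcal I-(\mathcal I-\gamma\widetilde{\mathcal T}^{(q)})^t\big]\circ\mathbf B_0$ via the operator comparison $\mathcal I-\gamma\widetilde{\mathcal T}^{(q)}\preceq\mathcal I-\gamma\mathcal T_B^{(q)}$, and then develops an operator Neumann series $\mathcal M_B^{(q)}\circ{\mathcal T_B^{(q)}}^{-1}\circ\mathbf A\preceq\sum_{k\ge0}\big(\gamma\mathcal M_B^{(q)}\circ{\widetilde{\mathcal T}^{(q)}}^{-1}\big)^k\circ\big(\mathcal M_B^{(q)}\circ{\widetilde{\mathcal T}^{(q)}}^{-1}\circ\mathbf A\big)$, bounding each factor via Assumption~\ref{ass2} and ${\widetilde{\mathcal T}^{(q)}}^{-1}\circ\mathbf H^{(q)}\preceq\mathbf I$. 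You instead collapse everything to the scalar $v_t=\operatorname{tr}(\mathbf H^{(q)}\mathbf S_t)$, obtain the one-step contraction $v_t\le c\,v_{t-1}+g_t/\gamma$ from the time-domain recursion, and close with the monotonicity of $g_t$. Your approach is more elementary (no operator inverses, whose PSD-preservation the paper has to invoke separately), and it makes transparent why the numerator is $g_t$ rather than $g_N$. The paper's operator machinery, on the other hand, ports directly to the multiplicative variant (Lemma~\ref{A bound for M_B^{(q)} S_t^{(M)}}) by swapping ${\mathcal T_B^{(q)}}$ for ${\mathcal T_B^{(q)}}-\tilde\epsilon\gamma\mathcal M_B^{(q)}$, whereas your scalar recursion would need minor re-derivation there. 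One small remark: your sentence ``Retaining $\mathbf S_{t-1}$ rather than passing to $\mathbf S_N$ is exactly what lets the final numerator come out as $g_t$'' slightly mischaracterizes the contrast---the paper also gets $g_t$, just via the operator comparison at level $t$ rather than via monotonicity---but this does not affect your argument.
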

\begin{proof}
    We prove by deriving a crude bound for $\mathbf{S}_t$ and applying $\mathcal{M}_B^{(q)}$ to this crude bound. Take summation via the update rule, we have
    \begin{equation*}
        \mathbf{S}_{t} =\sum_{k=0}^{t-1}(\mathcal{I}-\gamma\mathcal{T}_B^{(q)})^k\circ\mathbf{B}_0=\gamma^{-1}{\mathcal{T}^{(q)}_B}^{-1}\circ\left[\mathcal{I}-(\mathcal{I}-\gamma\mathcal{T}_B^{(q)})^t\right]\circ\mathbf{B}_0.
    \end{equation*}
    Note that
    \begin{equation*}
        \mathcal{I}-\gamma\widetilde{\mathcal{T}}^{(q)} \preceq \mathcal{I}-\gamma\mathcal{T}_B^{(q)}, \quad (\mathcal{I}-(\mathcal{I}-\gamma\mathcal{T}_B^{(q)})^t) \preceq (\mathcal{I}-(\mathcal{I}-\gamma\widetilde{\mathcal{T}}^{(q)})^t),
    \end{equation*}
    and further note that ${\mathcal{T}_B^{(q)}}^{-1}$ is a PSD mapping \footnote{${\mathcal{T}_B^{(q)}}^{-1}$ is a PSD mapping under the condition that $\gamma < \frac{1}{\alpha_B\mathrm{tr}(\mathbf{H}^{(q)})}$, which can be directly deduced by Lemma B.1 in \citet{zou2023benign}. We omit the proof here for simplicity.}, and $[\mathcal{I}-(\mathcal{I}-\gamma\widetilde{\mathcal{T}}^{(q)})^t]\circ\mathbf{B}_0$ is a PSD matrix, we obtain
    \begin{equation*}
        \mathbf{S}_t\preceq\gamma^{-1}{\mathcal{T}_B^{(q)}}^{-1}\circ(\mathcal{I}-(\mathcal{I}-\gamma\widetilde{\mathcal{T}}^{(q)})^t)\circ\mathbf{B}_0.
    \end{equation*}
    For simplicity, we denote $\mathbf{A}:=(\mathcal{I}-(\mathcal{I}-\gamma\widetilde{\mathcal{T}}^{(q)})^t)\circ\mathbf{B}_0$. We then tackle ${\mathcal{T}_B^{(q)}}^{-1}\circ\mathbf{A}$. To be specific, we apply $\widetilde{\mathcal{T}}^{(q)}$.
    \begin{equation*}
        \begin{aligned}
            \widetilde{\mathcal{T}}^{(q)}\circ{\mathcal{T}_B^{(q)}}^{-1}\circ\mathbf{A}\preceq \gamma\mathcal{M}_B^{(q)}\circ{\mathcal{T}_B^{(q)}}^{-1}\circ\mathbf{A}+\mathbf{A}.
        \end{aligned}
    \end{equation*}
    Therefore,
    \begin{equation*}
        \begin{aligned}
            {\mathcal{T}_B^{(q)}}^{-1}\circ\mathbf{A} \preceq \gamma{(\widetilde{\mathcal{T}}^{(q)})}^{-1}\circ\mathcal{M}_B^{(q)}\circ{\mathcal{T}_B^{(q)}}^{-1}\circ\mathbf{A}+(\widetilde{\mathcal{T}}^{(q)})^{-1}\circ\mathbf{A}.
        \end{aligned}
    \end{equation*}

    Then we apply $\mathcal{M}_B^{(q)}$ on both sides.
    \begin{equation}
        \begin{aligned}
        \label{11111}
            \mathcal{M}_B^{(q)} \circ ({\mathcal{T}_B^{(q)}}^{-1}\circ\mathbf{A}) &\preceq \mathcal{M}_B^{(q)} \circ\gamma{(\widetilde{\mathcal{T}}^{(q)})}^{-1}\circ\mathcal{M}_B^{(q)}\circ{\mathcal{T}_B^{(q)}}^{-1}\circ\mathbf{A}+ \mathcal{M}_B^{(q)} \circ(\widetilde{\mathcal{T}}^{(q)})^{-1}\circ\mathbf{A}\\
            &\preceq\sum_{t=0}^\infty(\gamma\mathcal{M}_B^{(q)}\circ{(\widetilde{\mathcal{T}}^{(q)})}^{-1})^t\circ(\mathcal{M}_B^{(q)}\circ{(\widetilde{\mathcal{T}}^{(q)})}^{-1}\circ\mathbf{A}) \ \text{(By recursion)}.
        \end{aligned}
    \end{equation}
    By Assumption \ref{ass2}, 
    \begin{equation*}
        \begin{aligned}
            {\mathcal{M}_B^{(q)}}\circ{(\widetilde{\mathcal{T}}^{(q)})}^{-1}\circ\mathbf{A}&\preceq\alpha_B\operatorname{tr}(\mathbf{H}^{(q)}{(\widetilde{\mathcal{T}}^{(q)})}^{-1}\circ\mathbf{A})\mathbf{H}^{(q)}\\
            &= \alpha_B \gamma\operatorname{tr}\left(\sum_{t=0}^{\infty}\mathbf{H}^{(q)}(\mathbf{I}-\gamma\mathbf{H}^{(q)})^{t}\mathbf{A}(\mathbf{I}-\gamma\mathbf{H}^{(q)})^{t}\right) \mathbf{H}^{(q)}\\
            &= \alpha_B \mathrm{tr}\left(\mathbf{H}^{(q)}(2\mathbf{H}^{(q)}-\gamma(\mathbf{H}^{(q)})^{2})^{-1}\mathbf{A}\right)\mathbf{H}^{(q)}\\
            &\preceq \alpha_B \mathrm{tr}(\mathbf{A})\mathbf{H}^{(q)},
        \end{aligned}
    \end{equation*}
    where the first equality holds by the definition of $\widetilde{\mathcal{T}}^{(q)}$ and the last inequality requires the condition that $\gamma < \frac{1}{\alpha_B\mathrm{tr}(\mathbf{H}^{(q)})}$. Hence, by (\ref{11111}), and further by ${(\widetilde{\mathcal{T}}^{(q)})}^{-1}\mathbf{H}^{(q)}\preceq\mathbf{I}$ and $\mathcal{M}_B^{(q)}\circ\mathbf{I}\preceq\alpha_B\operatorname{tr}(\mathbf{H}^{(q)})\mathbf{H}^{(q)}$, we obtain
    \begin{equation*}
        \begin{aligned}
            \mathcal{M}_B^{(q)} \circ ({\mathcal{T}_B^{(q)}}^{-1}\circ\mathbf{A}) &\preceq\sum_{t=0}^\infty(\gamma\mathcal{M}_B^{(q)}\circ{(\widetilde{\mathcal{T}}^{(q)})}^{-1})^t\circ(\mathcal{M}_B^{(q)}\circ{(\widetilde{\mathcal{T}}^{(q)})}^{-1}\circ\mathbf{A})\\
            &\preceq\alpha_B\operatorname{tr}(\mathbf{A})\sum_{t=0}^{\infty}(\gamma\alpha_B\operatorname{tr}(\mathbf{H}^{(q)}))^t\mathbf{H}^{(q)}\\
            &\preceq\frac{\alpha_B\operatorname{tr}(\mathbf{A})}{1-\gamma\alpha_B\operatorname{tr}(\mathbf{H}^{(q)})}\cdot\mathbf{H}^{(q)}.
        \end{aligned}
    \end{equation*}
    Therefore,
    \begin{equation*}
        \begin{aligned}
            \mathcal{M}_B^{(q)} \circ \mathbf{S}_t \preceq \gamma^{-1}\frac{\alpha_B\operatorname{tr}(\mathbf{A})}{1-\gamma\alpha_B\operatorname{tr}(\mathbf{H}^{(q)})}\cdot\mathbf{H}^{(q)}=\frac{\alpha_B\cdot\mathrm{tr}\left(\left[\mathcal{I}-(\mathcal{I}-\gamma\widetilde{\mathcal{T}}^{(q)})^t\right]\circ\mathbf{B}_0\right)}{\gamma(1-\gamma\alpha_B\operatorname{tr}(\mathbf{H}^{(q)}))}\cdot\mathbf{H}^{(q)}.
        \end{aligned}
    \end{equation*}
\end{proof}

\begin{lemma}[\rm A bound for $\mathcal{M}_B^{(q)}\circ\mathbf{S}_t^{(M)}$]
    \label{A bound for M_B^{(q)} S_t^{(M)}}
    For $1 \leq t\leq N$, under Assumption \ref{ass1}, Assumption \ref{ass: addd quantized}, Assumption \ref{ass2}, and Assumption \ref{ass3}, if $\gamma < \frac{1}{(1+\tilde{\epsilon})\alpha_B\mathrm{tr}(\mathbf{H}^{(q)})}$,
    \begin{equation*}
         \mathcal{M}_B^{(q)} \circ \mathbf{S}_t^{(M)} \preceq \frac{\alpha_B\cdot\mathrm{tr}\left(\left[\mathcal{I}-(\mathcal{I}-\gamma\widetilde{\mathcal{T}}^{(q)})^t\right]\circ\mathbf{B}_0\right)}{\gamma(1-(1+\tilde{\epsilon})\gamma\alpha_B\operatorname{tr}(\mathbf{H}^{(q)}))}\cdot\mathbf{H}^{(q)}.
    \end{equation*}
\end{lemma}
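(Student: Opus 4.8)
The plan is to follow the proof of Lemma~\ref{A bound for M_B^{(q)} S_t} essentially verbatim, carrying the extra multiplicative correction $\tilde{\epsilon}$ through every step. First I would introduce the shorthand linear operator $\mathcal{T}_B^{(M,q)}:=\mathbf{H}^{(q)}\otimes\mathbf{I}+\mathbf{I}\otimes\mathbf{H}^{(q)}-(1+\tilde{\epsilon})\gamma\mathcal{M}_B^{(q)}$, chosen so that $\mathcal{I}-\gamma\mathcal{T}_B^{(M,q)}=\mathcal{I}-\gamma\mathcal{T}_B^{(q)}+\tilde{\epsilon}\gamma^{2}\mathcal{M}_B^{(q)}$ is exactly the one-step operator appearing in the definition of $\mathbf{B}_t^{(M)}$ in Lemma~\ref{Bias-Variance Decomposition under Multiplicative Quantization}. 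Summing the geometric operator series as in the earlier proof then gives the closed form $\mathbf{S}_t^{(M)}=\sum_{k=0}^{t-1}(\mathcal{I}-\gamma\mathcal{T}_B^{(M,q)})^{k}\circ\mathbf{B}_0=\gamma^{-1}(\mathcal{T}_B^{(M,q)})^{-1}\circ\big[\mathcal{I}-(\mathcal{I}-\gamma\mathcal{T}_B^{(M,q)})^{t}\big]\circ\mathbf{B}_0$. Since $\mathcal{T}_B^{(M,q)}$ is just $\mathcal{T}_B^{(q)}$ with its quadratic term rescaled by $1+\tilde{\epsilon}$, the rescaled analog of Lemma~B.1 in \citet{zou2023benign} shows that $(\mathcal{T}_B^{(M,q)})^{-1}$ exists and is a PSD map precisely under $\gamma<\big((1+\tilde{\epsilon})\alpha_B\operatorname{tr}(\mathbf{H}^{(q)})\big)^{-1}$; this is the only place the stepsize hypothesis is used.

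Next I would pass to the tilded operator. The key observation is that $\widetilde{\mathcal{T}}^{(q)}-\mathcal{T}_B^{(M,q)}=\gamma\big(\mathcal{M}_B^{(q)}-\widetilde{\mathcal{M}}^{(q)}\big)+\tilde{\epsilon}\gamma\mathcal{M}_B^{(q)}$ sends PSD matrices to PSD matrices --- the first summand by the batch Jensen inequality $\mathcal{M}_B^{(q)}\succeq\widetilde{\mathcal{M}}^{(q)}$ (the same fact used in Lemma~\ref{A bound for M_B^{(q)} S_t}), the second because $\tilde{\epsilon}\geq0$ and $\mathcal{M}_B^{(q)}$ is PSD. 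Hence $\mathcal{I}-\gamma\widetilde{\mathcal{T}}^{(q)}\preceq\mathcal{I}-\gamma\mathcal{T}_B^{(M,q)}$ on PSD inputs, so $\big[\mathcal{I}-(\mathcal{I}-\gamma\mathcal{T}_B^{(M,q)})^{t}\big]\circ\mathbf{B}_0\preceq\big[\mathcal{I}-(\mathcal{I}-\gamma\widetilde{\mathcal{T}}^{(q)})^{t}\big]\circ\mathbf{B}_0=:\mathbf{A}$, which is PSD, and applying the PSD map $(\mathcal{T}_B^{(M,q)})^{-1}$ gives the crude bound $\mathbf{S}_t^{(M)}\preceq\gamma^{-1}(\mathcal{T}_B^{(M,q)})^{-1}\circ\mathbf{A}$.

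For the bootstrap I would write $\widetilde{\mathcal{T}}^{(q)}=\mathcal{T}_B^{(M,q)}+(1+\tilde{\epsilon})\gamma\mathcal{M}_B^{(q)}-\gamma\widetilde{\mathcal{M}}^{(q)}$, apply both sides to the PSD matrix $(\mathcal{T}_B^{(M,q)})^{-1}\circ\mathbf{A}$, drop the PSD-negative $-\gamma\widetilde{\mathcal{M}}^{(q)}$ term, and invert $\widetilde{\mathcal{T}}^{(q)}$ to get $(\mathcal{T}_B^{(M,q)})^{-1}\circ\mathbf{A}\preceq(\widetilde{\mathcal{T}}^{(q)})^{-1}\circ\mathbf{A}+(1+\tilde{\epsilon})\gamma(\widetilde{\mathcal{T}}^{(q)})^{-1}\circ\mathcal{M}_B^{(q)}\circ(\mathcal{T}_B^{(M,q)})^{-1}\circ\mathbf{A}$. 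Applying $\mathcal{M}_B^{(q)}$ and iterating this inequality yields $\mathcal{M}_B^{(q)}\circ(\mathcal{T}_B^{(M,q)})^{-1}\circ\mathbf{A}\preceq\sum_{s=0}^{\infty}\big((1+\tilde{\epsilon})\gamma\,\mathcal{M}_B^{(q)}\circ(\widetilde{\mathcal{T}}^{(q)})^{-1}\big)^{s}\circ\big(\mathcal{M}_B^{(q)}\circ(\widetilde{\mathcal{T}}^{(q)})^{-1}\circ\mathbf{A}\big)$. Exactly as in Lemma~\ref{A bound for M_B^{(q)} S_t}, Assumption~\ref{ass2} together with $(\widetilde{\mathcal{T}}^{(q)})^{-1}\circ\mathbf{A}=\gamma\sum_{s\geq0}(\mathbf{I}-\gamma\mathbf{H}^{(q)})^{s}\mathbf{A}(\mathbf{I}-\gamma\mathbf{H}^{(q)})^{s}$ gives $\mathcal{M}_B^{(q)}\circ(\widetilde{\mathcal{T}}^{(q)})^{-1}\circ\mathbf{A}\preceq\alpha_B\operatorname{tr}(\mathbf{A})\mathbf{H}^{(q)}$, while $(\widetilde{\mathcal{T}}^{(q)})^{-1}\circ\mathbf{H}^{(q)}\preceq\mathbf{I}$ and $\mathcal{M}_B^{(q)}\circ\mathbf{I}\preceq\alpha_B\operatorname{tr}(\mathbf{H}^{(q)})\mathbf{H}^{(q)}$ show that each further factor of $\mathcal{M}_B^{(q)}\circ(\widetilde{\mathcal{T}}^{(q)})^{-1}$ contributes $\alpha_B\operatorname{tr}(\mathbf{H}^{(q)})$. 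Summing the geometric series --- which converges precisely because $(1+\tilde{\epsilon})\gamma\alpha_B\operatorname{tr}(\mathbf{H}^{(q)})<1$ --- and multiplying by $\gamma^{-1}$ then produces the stated bound.

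The main --- and essentially the only non-bookkeeping --- obstacle is establishing that $(\mathcal{T}_B^{(M,q)})^{-1}$ is a PSD map under the stated stepsize restriction, i.e., the rescaled version of Lemma~B.1 of \citet{zou2023benign} with $\gamma\mathcal{M}_B^{(q)}$ replaced by $(1+\tilde{\epsilon})\gamma\mathcal{M}_B^{(q)}$. Once that is in hand, the remainder is a faithful transcription of the proof of Lemma~\ref{A bound for M_B^{(q)} S_t}, with the extra factor $1+\tilde{\epsilon}$ carried through the recursion and surfacing as the denominator $1-(1+\tilde{\epsilon})\gamma\alpha_B\operatorname{tr}(\mathbf{H}^{(q)})$.
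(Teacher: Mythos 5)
Your proposal is correct and mirrors the paper's own proof step for step. You introduce the shorthand $\mathcal{T}_B^{(M,q)}:=\mathcal{T}_B^{(q)}-\tilde{\epsilon}\gamma\mathcal{M}_B^{(q)}$, but this is exactly the operator the paper works with; you drop both order-loosening PSD terms (the $\mathcal{M}_B^{(q)}-\widetilde{\mathcal{M}}^{(q)}$ piece and the $\tilde\epsilon\gamma^2\mathcal{M}_B^{(q)}$ piece) at once when defining $\mathbf{A}$, whereas the paper drops the second one only at the final trace bound, a purely cosmetic reordering; and both you and the paper invoke, without a full re-derivation, the rescaled version of Lemma~B.1 of \citet{zou2023benign} to ensure $(\mathcal{T}_B^{(M,q)})^{-1}$ is a PSD map under $\gamma<\big((1+\tilde\epsilon)\alpha_B\operatorname{tr}(\mathbf{H}^{(q)})\big)^{-1}$. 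The bootstrap inequality, the geometric-series contraction via $\mathcal{M}_B^{(q)}\circ(\widetilde{\mathcal{T}}^{(q)})^{-1}\circ\mathbf{A}\preceq\alpha_B\operatorname{tr}(\mathbf{A})\mathbf{H}^{(q)}$ and $(\widetilde{\mathcal{T}}^{(q)})^{-1}\circ\mathbf{H}^{(q)}\preceq\mathbf{I}$, and the final summation all coincide with the paper's.
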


\begin{proof}
    The first step is to derive a crude bound for $\mathbf{S}_t^{(M)}$. Take summation via the update rule, we have \footnote{$({\mathcal{T}_B^{(q)}}-\tilde{\epsilon}\gamma \mathcal{M}_B^{(q)})^{-1}$ is a PSD mapping under the condition that $\gamma < \frac{1}{(1+\tilde{\epsilon})\alpha_B\mathrm{tr}(\mathbf{H}^{(q)})}$, which can be directly deduced by Lemma B.1 in \citet{zou2023benign}. We omit the proof here for simplicity.}
    \begin{equation*}
        \mathbf{S}_t^{(M)}=\sum_{k=0}^{t-1}(\mathcal{I}-\gamma\mathcal{T}_B^{(q)}+\tilde{\epsilon}\gamma^2\mathcal{M}_B^{(q)})^k\circ\mathbf{B}_0=\gamma^{-1}({\mathcal{T}^{(q)}_B}-\tilde{\epsilon}\gamma\mathcal{M}_B^{(q)})^{-1}\circ\left[\mathcal{I}-(\mathcal{I}-\gamma\mathcal{T}_B^{(q)}+\tilde{\epsilon}\gamma^2\mathcal{M}_B^{(q)})^t\right]\circ\mathbf{B}_0.
    \end{equation*}
    Note that
    \begin{equation*}
        \mathcal{I}-\gamma\widetilde{\mathcal{T}}^{(q)} \preceq \mathcal{I}-\gamma\mathcal{T}_B^{(q)}, \quad (\mathcal{I}-(\mathcal{I}-\gamma\mathcal{T}_B^{(q)}+\tilde{\epsilon}\gamma^2\mathcal{M}_B^{(q)})^t) \preceq (\mathcal{I}-(\mathcal{I}-\gamma\widetilde{\mathcal{T}}^{(q)}+\tilde{\epsilon}\gamma^2\mathcal{M}_B^{(q)})^t),
    \end{equation*}
    we obtain
    \begin{equation*}
        \mathbf{S}_t^{(M)}\preceq\gamma^{-1}({\mathcal{T}_B^{(q)}}-\tilde{\epsilon}\gamma \mathcal{M}_B^{(q)})^{-1}\circ(\mathcal{I}-(\mathcal{I}-\gamma\widetilde{\mathcal{T}}^{(q)}+\tilde{\epsilon}\gamma^2\mathcal{M}_B^{(q)})^t)\circ\mathbf{B}_0.
    \end{equation*}
    Denote $\mathbf{A}:=(\mathcal{I}-(\mathcal{I}-\gamma\widetilde{\mathcal{T}}^{(q)}+\tilde{\epsilon}\gamma^2\mathcal{M}_B^{(q)})^t)\circ\mathbf{B}_0$, then applying $\widetilde{\mathcal{T}}^{(q)}$
    \begin{equation*}
        \begin{aligned}
            \widetilde{\mathcal{T}}^{(q)}\circ({\mathcal{T}_B^{(q)}}-\tilde{\epsilon}\gamma \mathcal{M}_B^{(q)})^{-1}\circ\mathbf{A}&\preceq(1+\tilde{\epsilon})\gamma\mathcal{M}_B^{(q)}\circ({\mathcal{T}_B^{(q)}}-\tilde{\epsilon}\gamma \mathcal{M}_B^{(q)})^{-1}\circ\mathbf{A}+\mathbf{A}.
        \end{aligned}
    \end{equation*}
    Therefore
    \begin{equation*}
        \begin{aligned}
            ({\mathcal{T}_B^{(q)}}-\tilde{\epsilon}\gamma \mathcal{M}_B^{(q)})^{-1}\circ\mathbf{A} \preceq (1+\tilde{\epsilon})\gamma{(\widetilde{\mathcal{T}}^{(q)})}^{-1}\circ\mathcal{M}_B^{(q)}\circ({\mathcal{T}_B^{(q)}}-\tilde{\epsilon}\gamma \mathcal{M}_B^{(q)})^{-1}\circ\mathbf{A}+(\widetilde{\mathcal{T}}^{(q)})^{-1}\circ\mathbf{A}.
        \end{aligned}
    \end{equation*}

    Then we undertake the second step, applying $\mathcal{M}_B^{(q)}$ on both sides.
    \begin{equation}
        \label{11111 M}
        \mathcal{M}_B^{(q)} \circ({\mathcal{T}_B^{(q)}}-\tilde{\epsilon}\gamma \mathcal{M}_B^{(q)})^{-1}\circ\mathbf{A}\preceq\sum_{t=0}^\infty((1+\tilde{\epsilon})\gamma\mathcal{M}_B^{(q)}\circ{(\widetilde{\mathcal{T}}^{(q)})}^{-1})^t\circ(\mathcal{M}_B^{(q)}\circ{(\widetilde{\mathcal{T}}^{(q)})}^{-1}\circ\mathbf{A}).
    \end{equation}
    By Assumption \ref{ass2}, 
    \begin{equation}
    \label{d16}
        \begin{aligned}
            {\mathcal{M}_B^{(q)}}\circ{(\widetilde{\mathcal{T}}^{(q)})}^{-1}\circ\mathbf{A}&\preceq\alpha_B\operatorname{tr}(\mathbf{H}^{(q)}{(\widetilde{\mathcal{T}}^{(q)})}^{-1}\circ\mathbf{A})\mathbf{H}^{(q)}\\
            &= \alpha_B \gamma\operatorname{tr}\left(\sum_{t=0}^{\infty}\mathbf{H}^{(q)}(\mathbf{I}-\gamma\mathbf{H}^{(q)})^{t}\mathbf{A}(\mathbf{I}-\gamma\mathbf{H}^{(q)})^{t}\right) \mathbf{H}^{(q)}\\
            &= \alpha_B \mathrm{tr}\left(\mathbf{H}^{(q)}(2\mathbf{H}^{(q)}-\gamma(\mathbf{H}^{(q)})^{2})^{-1}\mathbf{A}\right)\mathbf{H}^{(q)}\\
            &\preceq \alpha_B \mathrm{tr}(\mathbf{A})\mathbf{H}^{(q)},
        \end{aligned}
    \end{equation}
    where the last inequality requires the condition that $\gamma < \frac{1}{\alpha_B\mathrm{tr}(\mathbf{H}^{(q)})}$.
    Hence, by (\ref{11111 M}), (\ref{d16}), and further by ${(\widetilde{\mathcal{T}}^{(q)})}^{-1}\mathbf{H}^{(q)}\preceq\mathbf{I}$ and $\mathcal{M}_B^{(q)}\circ\mathbf{I}\preceq\alpha_B\operatorname{tr}(\mathbf{H}^{(q)})\mathbf{H}^{(q)}$, we obtain
    \begin{equation*}
        \begin{aligned}
            \mathcal{M}_B^{(q)} \circ (({\mathcal{T}_B^{(q)}}-\tilde{\epsilon}\gamma \mathcal{M}_B^{(q)})^{-1}\circ\mathbf{A}) &\preceq\sum_{t=0}^\infty((1+\tilde{\epsilon})\gamma\mathcal{M}_B^{(q)}\circ{(\widetilde{\mathcal{T}}^{(q)})}^{-1})^t\circ(\mathcal{M}_B^{(q)}\circ{(\widetilde{\mathcal{T}}^{(q)})}^{-1}\circ\mathbf{A})\\
            &\preceq\alpha_B\operatorname{tr}(\mathbf{A})\sum_{t=0}^{\infty}((1+\tilde{\epsilon})\gamma\alpha_B\operatorname{tr}(\mathbf{H}^{(q)}))^t\mathbf{H}^{(q)}\\
            &\preceq\frac{\alpha_B\operatorname{tr}(\mathbf{A})}{1-(1+\tilde{\epsilon})\gamma\alpha_B\operatorname{tr}(\mathbf{H}^{(q)})}\cdot\mathbf{H}^{(q)}.
        \end{aligned}
    \end{equation*}
    Therefore,
    \begin{equation*}
        \begin{aligned}
            \mathcal{M}_B^{(q)} \circ \mathbf{S}_t^{(M)} \preceq \gamma^{-1}\frac{\alpha_B\operatorname{tr}(\mathbf{A})}{1-(1+\tilde{\epsilon})\gamma\alpha_B\operatorname{tr}(\mathbf{H}^{(q)})}\cdot\mathbf{H}^{(q)}\preceq \frac{\alpha_B\cdot\mathrm{tr}\left(\left[\mathcal{I}-(\mathcal{I}-\gamma\widetilde{\mathcal{T}}^{(q)})^t\right]\circ\mathbf{B}_0\right)}{\gamma(1-(1+\tilde{\epsilon})\gamma\alpha_B\operatorname{tr}(\mathbf{H}^{(q)}))}\cdot\mathbf{H}^{(q)}.
        \end{aligned}
    \end{equation*}
\end{proof}

By Lemma \ref{(Initial Study of $S_t$)}, Lemma \ref{(Initial Study of $S_t$) mu}, Lemma \ref{A bound for M_B^{(q)} S_t} and Lemma \ref{A bound for M_B^{(q)} S_t^{(M)}}, we can provide a refined bound for $\mathbf{S}_t$ and $\mathbf{S}_t^{(M)}$. Then we are ready to bound the bias error.

\begin{lemma}[\rm A bound for bias under general quantization]
    \label{bias R2 bound}
    Under Assumption \ref{ass1}, Assumption \ref{ass: addd quantized}, Assumption \ref{ass2}, and Assumption \ref{ass3}, if the stepsize satisfies $\gamma < \frac{1}{\alpha_B\mathrm{tr}(\mathbf{H}^{(q)})}$, then
    \begin{equation*}
        \begin{aligned}
            \mathrm{bias}
            \leq & \frac{2\alpha_B\left(\|\mathbf{w}_0-{\mathbf{w}^{(q)}}^*\|_{\mathbf{I}_{0:k^*}^{(q)}}^2+N\gamma\|\mathbf{w}_0-{\mathbf{w}^{(q)}}^*\|_{\mathbf{H}_{k^*:\infty}^{(q)}}^2\right)}{N\gamma(1-\gamma\alpha_B\operatorname{tr}(\mathbf{H}^{(q)}))}\cdot\left(\frac{k^*}{N}+N\gamma^2\sum_{i>k^*}(\lambda_i^{(q)})^2\right)\\
            +& \frac{1}{\gamma^2N^2}\cdot\|\mathbf{w}_0-{\mathbf{w}^{(q)}}^*\|_{(\mathbf{H}_{0:k^*}^{(q)})^{-1}}^2+\|\mathbf{w}_0-{\mathbf{w}^{(q)}}^*\|_{\mathbf{H}_{k^*:\infty}^{(q)}}^2.
        \end{aligned}
    \end{equation*}
\end{lemma}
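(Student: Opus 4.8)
The plan is to reduce everything to a sharp PSD-order bound on $\mathbf{S}_N=\sum_{t=0}^{N-1}\mathbf{B}_t$, since (\ref{bias express}) already gives $\mathrm{bias}\le\frac{1}{\gamma N^2}\langle\mathbf{I}-(\mathbf{I}-\gamma\mathbf{H}^{(q)})^N,\mathbf{S}_N\rangle$. First I would feed the estimate of Lemma \ref{A bound for M_B^{(q)} S_t} — namely $\mathcal{M}_B^{(q)}\circ\mathbf{S}_N\preceq\frac{\alpha_B\,\mathrm{tr}\!\left([\mathcal{I}-(\mathcal{I}-\gamma\widetilde{\mathcal{T}}^{(q)})^N]\circ\mathbf{B}_0\right)}{\gamma(1-\gamma\alpha_B\mathrm{tr}(\mathbf{H}^{(q)}))}\mathbf{H}^{(q)}$, valid under the stated stepsize condition — into the self-referential recursion of Lemma \ref{(Initial Study of $S_t$)}. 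This turns Lemma \ref{(Initial Study of $S_t$)} into the genuine (non-circular) recursion $\mathbf{S}_t\preceq(\mathcal{I}-\gamma\widetilde{\mathcal{T}}^{(q)})\circ\mathbf{S}_{t-1}+\mathbf{B}_0+C\,\mathbf{H}^{(q)}$, where $C:=\frac{\gamma\alpha_B\,\mathrm{tr}([\mathcal{I}-(\mathcal{I}-\gamma\widetilde{\mathcal{T}}^{(q)})^N]\circ\mathbf{B}_0)}{1-\gamma\alpha_B\mathrm{tr}(\mathbf{H}^{(q)})}$. Unrolling from $\mathbf{S}_0=\mathbf{0}$ and summing the operator geometric series gives $\mathbf{S}_N\preceq\gamma^{-1}(\widetilde{\mathcal{T}}^{(q)})^{-1}\circ[\mathcal{I}-(\mathcal{I}-\gamma\widetilde{\mathcal{T}}^{(q)})^N]\circ(\mathbf{B}_0+C\,\mathbf{H}^{(q)})$.

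Next I would diagonalize in the eigenbasis $\{\mathbf{v}_i^{(q)}\}$ of $\mathbf{H}^{(q)}$. There $\widetilde{\mathcal{T}}^{(q)}$ acts entrywise: $(\mathcal{I}-\gamma\widetilde{\mathcal{T}}^{(q)})$ multiplies the $(i,j)$-entry by $(1-\gamma\lambda_i^{(q)})(1-\gamma\lambda_j^{(q)})$ and $(\widetilde{\mathcal{T}}^{(q)})^{-1}$ divides it by $\lambda_i^{(q)}+\lambda_j^{(q)}-\gamma\lambda_i^{(q)}\lambda_j^{(q)}$; since $\langle\mathbf{I}-(\mathbf{I}-\gamma\mathbf{H}^{(q)})^N,\cdot\rangle$ sees only the diagonal, the bound collapses to the scalar sum
\[
\mathrm{bias}\le\frac{1}{\gamma^2N^2}\sum_i\frac{\bigl(1-(1-\gamma\lambda_i^{(q)})^N\bigr)\bigl(1-(1-\gamma\lambda_i^{(q)})^{2N}\bigr)}{2-\gamma\lambda_i^{(q)}}\left(C+\frac{\langle\mathbf{v}_i^{(q)},\mathbf{w}_0-{\mathbf{w}^{(q)}}^*\rangle^2}{\lambda_i^{(q)}}\right),
\]
together with $\mathrm{tr}([\mathcal{I}-(\mathcal{I}-\gamma\widetilde{\mathcal{T}}^{(q)})^N]\circ\mathbf{B}_0)=\sum_i\bigl(1-(1-\gamma\lambda_i^{(q)})^{2N}\bigr)\langle\mathbf{v}_i^{(q)},\mathbf{w}_0-{\mathbf{w}^{(q)}}^*\rangle^2$ for $C$. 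I would then split both sums at the effective dimension $k^*$, using $1-(1-\gamma\lambda_i^{(q)})^{mN}=\Theta(1)$ and $2-\gamma\lambda_i^{(q)}=\Theta(1)$ for $i\le k^*$ (where $\lambda_i^{(q)}\ge 1/(N\gamma)$), and $1-(1-\gamma\lambda_i^{(q)})^{mN}\le mN\gamma\lambda_i^{(q)}$, $2-\gamma\lambda_i^{(q)}\ge 1$ for $i>k^*$. The head/tail split of $C$ yields $C\lesssim\frac{\gamma\alpha_B(\|\mathbf{w}_0-{\mathbf{w}^{(q)}}^*\|_{\mathbf{I}_{0:k^*}^{(q)}}^2+N\gamma\|\mathbf{w}_0-{\mathbf{w}^{(q)}}^*\|_{\mathbf{H}_{k^*:\infty}^{(q)}}^2)}{1-\gamma\alpha_B\mathrm{tr}(\mathbf{H}^{(q)})}$; substituting it into the $C$-part of the head ($\Theta(Ck^*/(\gamma^2N^2))$) and tail ($O(C\sum_{i>k^*}(\lambda_i^{(q)})^2)$) produces precisely the first summand of the claimed bound, with factor $\frac{k^*}{N}+N\gamma^2\sum_{i>k^*}(\lambda_i^{(q)})^2$, while the $\mathbf{B}_0/\lambda_i^{(q)}$-part produces $\frac{1}{\gamma^2N^2}\|\mathbf{w}_0-{\mathbf{w}^{(q)}}^*\|_{(\mathbf{H}_{0:k^*}^{(q)})^{-1}}^2$ (head) and $\|\mathbf{w}_0-{\mathbf{w}^{(q)}}^*\|_{\mathbf{H}_{k^*:\infty}^{(q)}}^2$ (tail).

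I expect the bookkeeping around the coefficient $C$ to be the main obstacle: one must make sure that substituting the Lemma \ref{A bound for M_B^{(q)} S_t} bound for $\mathcal{M}_B^{(q)}\circ\mathbf{S}_N$ back into Lemma \ref{(Initial Study of $S_t$)} does not create a circular estimate, and that $C$ ends up controlled by the very same $\mathbf{B}_0$-weighted norms $\|\cdot\|_{\mathbf{I}_{0:k^*}^{(q)}}$ and $\|\cdot\|_{\mathbf{H}_{k^*:\infty}^{(q)}}$ that appear in the final bound, rather than by a cruder $\mathrm{tr}(\mathbf{B}_0)$, which would be far too lossy in the tail directions and would not respect the stepsize constraint. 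The remaining pieces — the operator diagonalization, the elementary inequalities on $1-(1-x)^m$, and the head/tail truncation at $k^*$ — are routine adaptations of the corresponding bias bound in \citet{zou2023benign}, now run with $\mathbf{H}^{(q)}$, ${\mathbf{w}^{(q)}}^*$ and the batched operator $\mathcal{M}_B^{(q)}$ replacing their full-precision analogues.
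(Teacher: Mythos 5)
Your plan is correct and essentially matches the paper's own proof: you use the same key facts (the recursion of Lemma \ref{(Initial Study of $S_t$)}, the decoupled bound of Lemma \ref{A bound for M_B^{(q)} S_t} for $\mathcal{M}_B^{(q)}\circ\mathbf{S}_N$, and the bound $\mathrm{tr}(\mathbf{B}_{0,N})\lesssim\|\mathbf{w}_0-{\mathbf{w}^{(q)}}^*\|_{\mathbf{I}_{0:k^*}^{(q)}}^2+N\gamma\|\mathbf{w}_0-{\mathbf{w}^{(q)}}^*\|_{\mathbf{H}_{k^*:\infty}^{(q)}}^2$), and you identify correctly both the non-circularity issue and the fact that $C$ must be controlled by the $k^*$-split norms rather than a raw trace. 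The only cosmetic difference from the paper is that you package the geometric sum over $k$ as the closed-form operator $\gamma^{-1}(\widetilde{\mathcal{T}}^{(q)})^{-1}\circ[\mathcal{I}-(\mathcal{I}-\gamma\widetilde{\mathcal{T}}^{(q)})^N]$ and keep the factor $(1-(1-\gamma\lambda_i^{(q)})^N)(1-(1-\gamma\lambda_i^{(q)})^{2N})/(2-\gamma\lambda_i^{(q)})$, whereas the paper leaves the $k$-sum explicit and first relaxes $(\mathbf{I}-\gamma\mathbf{H}^{(q)})^{2k}\preceq(\mathbf{I}-\gamma\mathbf{H}^{(q)})^{k}$ to obtain $[1-(1-\gamma\lambda_i^{(q)})^N]^2$; these agree up to absolute constants, so the rest of the bookkeeping goes through identically.
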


\begin{proof}
    Recalling Lemma \ref{(Initial Study of $S_t$)}, we can derive a refined upper bound for $\mathbf{S}_t$ by Lemma \ref{A bound for M_B^{(q)} S_t}:
    \begin{equation}
    \label{Bound for S_t}
        \begin{aligned}
            \mathbf{S}_t\preceq&(\mathcal{I}-\gamma\tilde{\mathcal{T}}^{(q)})\circ\mathbf{S}_{t-1}+\gamma^{2}\mathcal{M}_B^{(q)}\circ\mathbf{S}_{N}+\mathbf{B}_{0}\\
            \preceq&(\mathcal{I}-\gamma\tilde{\mathcal{T}}^{(q)})\circ\mathbf{S}_{t-1}+\frac{\gamma\alpha_B\cdot\mathrm{tr}\left(\left[\mathcal{I}-(\mathcal{I}-\gamma\widetilde{\mathcal{T}}^{(q)})^N\right]\circ\mathbf{B}_0\right)}{(1-\gamma\alpha_B\operatorname{tr}(\mathbf{H}^{(q)}))}\cdot\mathbf{H}^{(q)}+\mathbf{B}_{0}\\
            \preceq&\sum_{k=0}^{t-1}(\mathcal{I}-\gamma\tilde{\mathcal{T}}^{(q)})^k\left(\frac{\gamma\alpha_B\cdot\mathrm{tr}\left(\left[\mathcal{I}-(\mathcal{I}-\gamma\widetilde{\mathcal{T}}^{(q)})^N\right]\circ\mathbf{B}_0\right)}{(1-\gamma\alpha_B\operatorname{tr}(\mathbf{H}^{(q)}))}\cdot\mathbf{H}^{(q)}+\mathbf{B}_{0}\right)\\
            =&\sum_{k=0}^{t-1}(\mathbf{I}-\gamma\mathbf{H}^{(q)})^k\left(\frac{\gamma\alpha_B\cdot\mathrm{tr}\left(\mathbf{B}_0-(\mathbf{I}-\gamma \mathbf{H}^{(q)})^N\mathbf{B}_0(\mathbf{I}-\gamma \mathbf{H}^{(q)})^N\right)}{(1-\gamma\alpha_B\operatorname{tr}(\mathbf{H}^{(q)}))}\cdot\mathbf{H}^{(q)}+\mathbf{B}_{0}\right)(\mathbf{I}-\gamma\mathbf{H}^{(q)})^k.
        \end{aligned}
    \end{equation}

    Before providing our upper bound for the bias error, we denote \begin{equation*}
        \mathbf{B}_{a,b}:=\mathbf{B}_a-(\mathbf{I}-\gamma\mathbf{H}^{(q)})^{b-a}\mathbf{B}_a(\mathbf{I}-\gamma\mathbf{H}^{(q)})^{b-a}.
    \end{equation*}
    Then by (\ref{bias express}) and (\ref{Bound for S_t}),
    \begin{equation*}
        \begin{aligned}
            \mathrm{bias}\leq&\frac{1}{\gamma N^{2}}\langle\mathbf{I}-(\mathbf{I}-\gamma\mathbf{H}^{(q)})^{N},\sum_{t=0}^{N-1}\mathbf{B}_{t}\rangle\\
            \leq&\frac{1}{\gamma N^{2}}\sum_{k=0}^{N-1}\left\langle\mathbf{I}-(\mathbf{I}-\gamma\mathbf{H}^{(q)})^{N},(\mathbf{I}-\gamma\mathbf{H}^{(q)})^k\left(\frac{\gamma\alpha_B\cdot\mathrm{tr}\left(\mathbf{B}_{0,N}\right)}{1-\gamma\alpha_B\operatorname{tr}(\mathbf{H}^{(q)})}\cdot\mathbf{H}^{(q)}+\mathbf{B}_{0}\right)(\mathbf{I}-\gamma\mathbf{H}^{(q)})^k\right\rangle\\
            =&\frac{1}{\gamma N^{2}}\sum_{k=0}^{N-1}\left\langle(\mathbf{I}-\gamma\mathbf{H}^{(q)})^{2k}-(\mathbf{I}-\gamma\mathbf{H}^{(q)})^{N+2k},\left(\frac{\gamma\alpha_B\cdot\mathrm{tr}\left(\mathbf{B}_{0,N}\right)}{1-\gamma\alpha_B\operatorname{tr}(\mathbf{H}^{(q)})}\cdot\mathbf{H}^{(q)}+\mathbf{B}_{0}\right)\right\rangle.
        \end{aligned}
    \end{equation*}
    Note that 
    \begin{equation*}
        \begin{aligned}
            (\mathbf{I}-\gamma\mathbf{H}^{(q)})^{2k}-(\mathbf{I}-\gamma\mathbf{H}^{(q)})^{N+2k} & =\left(\mathbf{I}-\gamma\mathbf{H}^{(q)}\right)^{k}\left(\left(\mathbf{I}-\gamma\mathbf{H}^{(q)}\right)^{k}-\left(\mathbf{I}-\gamma\mathbf{H}^{(q)}\right)^{N+k}\right) \\
            & \preceq(\mathbf{I}-\gamma\mathbf{H}^{(q)})^{k}-(\mathbf{I}-\gamma\mathbf{H}^{(q)})^{N+k},
        \end{aligned}
    \end{equation*}
    we obtain
    \begin{equation*}
        \mathrm{bias}\leq\frac{1}{\gamma N^{2}}\sum_{k=0}^{N-1}\left\langle(\mathbf{I}-\gamma\mathbf{H}^{(q)})^{k}-(\mathbf{I}-\gamma\mathbf{H}^{(q)})^{N+k},\frac{\gamma\alpha_B\cdot\mathrm{tr}\left(\mathbf{B}_{0,N}\right)}{1-\gamma\alpha_B\operatorname{tr}(\mathbf{H}^{(q)})}\cdot\mathbf{H}^{(q)}+\mathbf{B}_{0}\right\rangle.
    \end{equation*}
    
    Therefore, it suffices to upper bound the following two terms
    \begin{equation*}
        \begin{aligned}
            & I_{1}=\frac{\alpha_B\operatorname{tr}(\mathbf{B}_{0,N})}{N^2(1-\gamma\alpha\operatorname{tr}(\mathbf{H}^{(q)}))}\sum_{k=0}^{N-1}\left\langle(\mathbf{I}-\gamma\mathbf{H}^{(q)})^k-(\mathbf{I}-\gamma\mathbf{H}^{(q)})^{N+k},\mathbf{H}^{(q)}\right\rangle, \\
            & I_{2}=\frac{1}{\gamma N^2}\sum_{k=0}^{N-1}\left\langle(\mathbf{I}-\gamma\mathbf{H}^{(q)})^k-(\mathbf{I}-\gamma\mathbf{H}^{(q)})^{N+k},\mathbf{B}_0\right\rangle.
        \end{aligned}
    \end{equation*}

    Regarding $I_1$, since $\mathbf{H}^{(q)}$ and $\mathbf{I}-\gamma\mathbf{H}^{(q)}$ can be diagonalized simultaneously, 
    \begin{equation*}
        \begin{aligned}
            I_{1}& =\frac{\alpha_B\operatorname{tr}(\mathbf{B}_{0,N})}{N^2(1-\gamma\alpha_B\operatorname{tr}(\mathbf{H}^{(q)}))}\sum_{k=0}^{N-1}\sum_i\left[(1-\gamma\lambda_i^{(q)})^k-(1-\gamma\lambda_i^{(q)})^{N+k}\right]\lambda_i^{(q)}\\
            & =\frac{\alpha_B\operatorname{tr}(\mathbf{B}_{0,N})}{\gamma N^2(1-\gamma\alpha_B\operatorname{tr}(\mathbf{H}^{(q)}))}\sum_i\left[1-(1-\gamma\lambda_i^{(q)})^N\right]^2 \\
            & \leq\frac{\alpha_B\operatorname{tr}(\mathbf{B}_{0,N})}{\gamma N^2(1-\gamma\alpha_B\operatorname{tr}(\mathbf{H}^{(q)}))}\sum_i\min\left\{1,\gamma^2N^2(\lambda_i^{(q)})^2\right\}\\
            &\leq\frac{\alpha_B\operatorname{tr}(\mathbf{B}_{0,N})}{\gamma(1-\gamma\alpha_B\operatorname{tr}(\mathbf{H}^{(q)}))}\cdot\left(\frac{k^*}{N^2}+\gamma^2\sum_{i>k^*}(\lambda_i^{(q)})^2\right),
        \end{aligned}
    \end{equation*}
    where $k^*=\max\{k:\lambda_k^{(q)}\geq\frac{1}{N\gamma}\}$. Then we tackle $\mathrm{tr}(\mathbf{B}_{0,N})$.
    \begin{equation}
    \label{trB0N}
        \begin{aligned}
            \mathrm{tr}(\mathbf{B}_{0,N})&=\mathrm{tr}\left(\mathbf{B}_{0}-(\mathbf{I}-\gamma\mathbf{H}^{(q)})^{N}\mathbf{B}_{0}(\mathbf{I}-\gamma\mathbf{H}^{(q)})^{N}\right)\\
            &=\sum_{i}\left(1-(1-\gamma\lambda_{i}^{(q)})^{2N}\right)\cdot\left(\langle\mathbf{w}_{0}-{\mathbf{w}^{(q)}}^{*},\mathbf{v}_{i}^{(q)}\rangle\right)^{2}\\
            &\leq 2\sum_i\min\{1,N\gamma\lambda_i^{(q)}\}\left(\langle\mathbf{w}_0-{\mathbf{w}^{(q)}}^*,\mathbf{v}_i^{(q)}\rangle\right)^2\\
            &\leq 2\left(\|\mathbf{w}_{0}-{\mathbf{w}^{(q)}}^{*}\|_{\mathbf{I}_{0:k^{*}}}^{2}+N\gamma\|\mathbf{w}_{0}-{\mathbf{w}^{(q)}}^{*}\|_{\mathbf{H}_{k^{*}:\infty}}^{2}\right).
        \end{aligned}
    \end{equation}
    Hence,
    \begin{equation*}
        I_1\leq\frac{2\alpha_B\left(\|\mathbf{w}_0-{\mathbf{w}^{(q)}}^*\|_{\mathbf{I}_{0:k^*}^{(q)}}^2+N\gamma\|\mathbf{w}_0-{\mathbf{w}^{(q)}}^*\|_{\mathbf{H}_{k^*:\infty}^{(q)}}^2\right)}{N\gamma(1-\gamma\alpha_B\operatorname{tr}(\mathbf{H}^{(q)}))}\cdot\left(\frac{k^*}{N}+N\gamma^2\sum_{i>k^*}(\lambda_i^{(q)})^2\right).
    \end{equation*}

    Regarding $I_2$, decompose $\mathbf{H}^{(q)}=\mathbf{V}^{(q)}\mathbf{\Lambda}^{(q)}{\mathbf{V}^{(q)}}^\top$, then
    \begin{equation*}
        I_2=\frac{1}{\gamma N^2}\sum_{k=0}^{N-1}\langle(\mathbf{I}-\gamma\mathbf{\Lambda}^{(q)})^k-(\mathbf{I}-\gamma\mathbf{\Lambda}^{(q)})^{N+k},{\mathbf{V}^{(q)}}^\top\mathbf{B}_0\mathbf{V}^{(q)}\rangle.
    \end{equation*}
    Note that $\mathbf{B}_0=\boldsymbol{\eta}_0\boldsymbol{\eta}_0^\top$, it can be shown that the diagonal entries of ${\mathbf{V}^{(q)}}^\top\mathbf{B}_0\mathbf{V}^{(q)}$ are $\omega_1^2,\dots$, where $\omega_i={\mathbf{v}_{i}^{(q)}}^{\top}\boldsymbol{\eta}_{0}={\mathbf{v}_{i}^{(q)}}^{\top}(\mathbf{w}_{0}-{\mathbf{w}^{(q)}}^{*})$. Hence,
    \begin{equation*}
        \begin{aligned}
            I_{2} & =\frac{1}{\gamma N^{2}}\sum_{k=0}^{N-1}\sum_{i}\left[(1-\gamma\lambda_{i}^{(q)})^{k}-(1-\gamma\lambda_{i}^{(q)})^{N+k}\right]\omega_{i}^{2} \\
            & =\frac{1}{\gamma^{2}N^{2}}\sum_{i}\frac{\omega_{i}^{2}}{\lambda_{i}^{(q)}}\left[1-(1-\gamma\lambda_{i}^{(q)})^{N}\right]^{2} \\
            & \leq\frac{1}{\gamma^{2}N^{2}}\sum_{i}\frac{\omega_{i}^{2}}{\lambda_{i}^{(q)}}\operatorname*{min}\left\{1,\gamma^{2}N^{2}(\lambda_{i}^{(q)})^{2}\right\} \\
            & \leq\frac{1}{\gamma^{2}N^{2}}\cdot\sum_{i\leq k^{*}}\frac{\omega_{i}^{2}}{\lambda_{i}^{(q)}}+\sum_{i>k^{*}}\lambda_{i}^{(q)}\omega_{i}^{2}\\
            &=\frac{1}{\gamma^2N^2}\cdot\|\mathbf{w}_0-{\mathbf{w}^{(q)}}^*\|_{(\mathbf{H}_{0:k^*}^{(q)})^{-1}}^2+\|\mathbf{w}_0-{\mathbf{w}^{(q)}}^*\|_{\mathbf{H}_{{k^*}:\infty}^{(q)}}^2.
        \end{aligned}
    \end{equation*}

    In conclusion, if the stepsize satisfies $\gamma < \frac{1}{\alpha_B\mathrm{tr}(\mathbf{H}^{(q)})}$,
    \begin{equation*}
        \begin{aligned}
            \mathrm{bias}\leq& I_1+I_2\\
            \leq & \frac{2\alpha_B\left(\|\mathbf{w}_0-{\mathbf{w}^{(q)}}^*\|_{\mathbf{I}_{0:k^*}^{(q)}}^2+N\gamma\|\mathbf{w}_0-{\mathbf{w}^{(q)}}^*\|_{\mathbf{H}_{k^*:\infty}^{(q)}}^2\right)}{N\gamma(1-\gamma\alpha_B\operatorname{tr}(\mathbf{H}^{(q)}))}\cdot\left(\frac{k^*}{N}+N\gamma^2\sum_{i>k^*}(\lambda_i^{(q)})^2\right)\\
            +& \frac{1}{\gamma^2N^2}\cdot\|\mathbf{w}_0-{\mathbf{w}^{(q)}}^*\|_{(\mathbf{H}_{0:k^*}^{(q)})^{-1}}^2+\|\mathbf{w}_0-{\mathbf{w}^{(q)}}^*\|_{\mathbf{H}_{k^*:\infty}^{(q)}}^2.
        \end{aligned}
    \end{equation*}
\end{proof}

\begin{lemma}[\rm A bound for bias under multiplicative quantization]
    \label{bias R2 bound multiplicative}
    Under Assumption \ref{ass1}, Assumption \ref{ass: addd quantized}, Assumption \ref{ass2}, and Assumption \ref{ass3}, if the stepsize satisfies $\gamma < \frac{1}{(1+\tilde{\epsilon})\alpha_B\mathrm{tr}(\mathbf{H}^{(q)})}$, if there exist $\epsilon_d,\epsilon_l,\epsilon_p,\epsilon_a$ and $\epsilon_o$ such that for any $i\in \{d,l,p,a,o\}$, quantization $\mathcal{Q}_i$ is $\epsilon_i$-multiplicative, then
    \begin{equation*}
        \begin{aligned}
            \mathrm{bias}\leq & \frac{2(1+\tilde{\epsilon})\alpha_B\left(\|\mathbf{w}_0-{\mathbf{w}^{(q)}}^*\|_{\mathbf{I}_{0:k^*}^{(q)}}^2+N\gamma\|\mathbf{w}_0-{\mathbf{w}^{(q)}}^*\|_{\mathbf{H}_{k^*:\infty}^{(q)}}^2\right)}{N\gamma(1-(1+\tilde{\epsilon})\gamma\alpha_B\operatorname{tr}(\mathbf{H}^{(q)}))}\cdot\left(\frac{k^*}{N}+N\gamma^2\sum_{i>k^*}(\lambda_i^{(q)})^2\right)\\
            +& \frac{1}{\gamma^2N^2}\cdot\|\mathbf{w}_0-{\mathbf{w}^{(q)}}^*\|_{(\mathbf{H}_{0:k^*}^{(q)})^{-1}}^2+\|\mathbf{w}_0-{\mathbf{w}^{(q)}}^*\|_{\mathbf{H}_{k^*:\infty}^{(q)}}^2.
        \end{aligned}
    \end{equation*}
\end{lemma}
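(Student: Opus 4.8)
The plan is to repeat the argument that proved Lemma~\ref{bias R2 bound}, feeding in the multiplicative-quantization analogues of the auxiliary lemmas. First, under $\epsilon_i$-multiplicative quantization the bias--variance decomposition of Lemma~\ref{Bias-Variance Decomposition under Multiplicative Quantization} produces the same bias expression as in (\ref{bias express}) but with $\mathbf{B}_t$ replaced by $\mathbf{B}_t^{(M)}$; since the telescoping identity $\sum_{k=t}^{N-1}(\mathbf{I}-\gamma\mathbf{H}^{(q)})^{k-t}\mathbf{H}^{(q)}=\gamma^{-1}(\mathbf{I}-(\mathbf{I}-\gamma\mathbf{H}^{(q)})^{N-t})$ is unchanged, I obtain
\[
    \mathrm{bias}\le\frac{1}{\gamma N^{2}}\bigl\langle\mathbf{I}-(\mathbf{I}-\gamma\mathbf{H}^{(q)})^{N},\,\mathbf{S}_N^{(M)}\bigr\rangle,\qquad \mathbf{S}_N^{(M)}=\sum_{t=0}^{N-1}\mathbf{B}_t^{(M)}.
\]

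Next, I would combine the recursion of Lemma~\ref{(Initial Study of $S_t$) mu} with the estimate of Lemma~\ref{A bound for M_B^{(q)} S_t^{(M)}} to get a closed form for $\mathbf{S}_t^{(M)}$ mirroring (\ref{Bound for S_t}):
\[
    \mathbf{S}_t^{(M)}\preceq\sum_{k=0}^{t-1}(\mathbf{I}-\gamma\mathbf{H}^{(q)})^{k}\Bigl(\tfrac{(1+\tilde{\epsilon})\gamma\alpha_B\,\mathrm{tr}(\mathbf{B}_{0,N})}{1-(1+\tilde{\epsilon})\gamma\alpha_B\operatorname{tr}(\mathbf{H}^{(q)})}\mathbf{H}^{(q)}+\mathbf{B}_0\Bigr)(\mathbf{I}-\gamma\mathbf{H}^{(q)})^{k},
\]
where $\mathbf{B}_{0,N}=\mathbf{B}_0-(\mathbf{I}-\gamma\mathbf{H}^{(q)})^N\mathbf{B}_0(\mathbf{I}-\gamma\mathbf{H}^{(q)})^N$. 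The only difference from the general case is the factor $(1+\tilde{\epsilon})$ in the numerator and the tightened denominator $1-(1+\tilde{\epsilon})\gamma\alpha_B\operatorname{tr}(\mathbf{H}^{(q)})$; both are legitimate under the stepsize assumption $\gamma<\frac{1}{(1+\tilde{\epsilon})\alpha_B\mathrm{tr}(\mathbf{H}^{(q)})}$, which is exactly what makes Lemma~\ref{A bound for M_B^{(q)} S_t^{(M)}} and the geometric series in its proof converge.

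Finally, I would substitute this bound into the displayed inequality for $\mathrm{bias}$ and split the result into the two terms $I_1$ and $I_2$ exactly as in the proof of Lemma~\ref{bias R2 bound}. The $\mathbf{B}_0$-part $I_2$ is untouched and gives $\frac{1}{\gamma^2N^2}\|\mathbf{w}_0-{\mathbf{w}^{(q)}}^*\|_{(\mathbf{H}_{0:k^*}^{(q)})^{-1}}^2+\|\mathbf{w}_0-{\mathbf{w}^{(q)}}^*\|_{\mathbf{H}_{k^*:\infty}^{(q)}}^2$. The $\mathbf{H}^{(q)}$-part $I_1$ differs only by the scalar $(1+\tilde{\epsilon})$ and the modified denominator; applying the same eigenvalue bookkeeping $\sum_i\min\{1,\gamma^2N^2(\lambda_i^{(q)})^2\}\le k^*+\gamma^2N^2\sum_{i>k^*}(\lambda_i^{(q)})^2$ and the trace bound (\ref{trB0N}) for $\mathrm{tr}(\mathbf{B}_{0,N})$ then yields the claimed estimate.

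The proof involves no genuinely new obstacle: the delicate operator-algebra estimate (commuting $\mathcal{M}_B^{(q)}$ through $(\widetilde{\mathcal{T}}^{(q)})^{-1}$ and summing the resulting geometric series) has already been absorbed into Lemma~\ref{A bound for M_B^{(q)} S_t^{(M)}}. The one point requiring care is carrying the factor $(1+\tilde{\epsilon})$ consistently through both the numerator and the stepsize-dependent denominator, so that the bound collapses to that of Lemma~\ref{bias R2 bound} when $\tilde{\epsilon}=0$.
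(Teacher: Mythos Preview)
Your proposal is correct and follows essentially the same approach as the paper: invoke Lemma~\ref{(Initial Study of $S_t$) mu} together with Lemma~\ref{A bound for M_B^{(q)} S_t^{(M)}} to obtain the closed-form bound on $\mathbf{S}_t^{(M)}$, then repeat the $I_1$/$I_2$ split and eigenvalue bookkeeping from the proof of Lemma~\ref{bias R2 bound}, carrying the extra $(1+\tilde{\epsilon})$ factor through the numerator and denominator.
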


\begin{proof}
    Recalling Lemma \ref{(Initial Study of $S_t$) mu}, we can derive an upper bound for $\mathbf{S}_t$ by Lemma \ref{A bound for M_B^{(q)} S_t^{(M)}}:
    \begin{equation*}
        \begin{aligned}
            \mathbf{S}_{t}\preceq&(\mathcal{I}-\gamma\tilde{\mathcal{T}}^{(q)})\circ\mathbf{S}_{t-1}+(1+\tilde{\epsilon})\gamma^{2}\mathcal{M}_B^{(q)}\circ\mathbf{S}_{N}+\mathbf{B}_{0}\\
            \preceq &(\mathcal{I}-\gamma\tilde{\mathcal{T}}^{(q)})\circ\mathbf{S}_{t-1}+\frac{(1+\tilde{\epsilon})\gamma\alpha_B\cdot\mathrm{tr}\left(\left[\mathcal{I}-(\mathcal{I}-\gamma\widetilde{\mathcal{T}}^{(q)})^N\right]\circ\mathbf{B}_0\right)}{(1-(1+\tilde{\epsilon})\gamma\alpha_B\operatorname{tr}(\mathbf{H}^{(q)}))}\cdot\mathbf{H}^{(q)}+\mathbf{B}_{0}\\
            \preceq&\sum_{k=0}^{t-1}(\mathcal{I}-\gamma\tilde{\mathcal{T}}^{(q)})^k\left(\frac{(1+\tilde{\epsilon})\gamma\alpha_B\cdot\mathrm{tr}\left(\left[\mathcal{I}-(\mathcal{I}-\gamma\widetilde{\mathcal{T}}^{(q)})^N\right]\circ\mathbf{B}_0\right)}{(1-(1+\tilde{\epsilon})\gamma\alpha_B\operatorname{tr}(\mathbf{H}^{(q)}))}\cdot\mathbf{H}^{(q)}+\mathbf{B}_{0}\right)\\
            =&\sum_{k=0}^{t-1}(\mathbf{I}-\gamma\mathbf{H}^{(q)})^k \left(\frac{(1+\tilde{\epsilon})\gamma\alpha_B\cdot\mathrm{tr}\left(\mathbf{B}_0-(\mathbf{I}-\gamma \mathbf{H}^{(q)})^N\mathbf{B}_0(\mathbf{I}-\gamma \mathbf{H}^{(q)})^N\right)}{(1-(1+\tilde{\epsilon})\gamma\alpha_B\operatorname{tr}(\mathbf{H}^{(q)}))}\cdot\mathbf{H}^{(q)}+\mathbf{B}_{0}\right) (\mathbf{I}-\gamma\mathbf{H}^{(q)})^k.
        \end{aligned}
    \end{equation*}
    Repeat the same computation in the proof of Lemma \ref{bias R2 bound}, we obtain
    \begin{equation}
        \label{eq: c21}
        \mathrm{bias}\leq\frac{1}{\gamma N^{2}}\sum_{k=0}^{N-1}\left\langle(\mathbf{I}-\gamma\mathbf{H}^{(q)})^{k}-(\mathbf{I}-\gamma\mathbf{H}^{(q)})^{N+k},\frac{(1+\tilde{\epsilon})\gamma\alpha_B\cdot\mathrm{tr}\left(\mathbf{B}_{0,N}\right)}{1-(1+\tilde{\epsilon})\gamma\alpha_B\operatorname{tr}(\mathbf{H}^{(q)})}\cdot\mathbf{H}^{(q)}+\mathbf{B}_{0}\right\rangle.
    \end{equation}
    Therefore, it suffices to upper bound the following two terms
    \begin{equation*}
        \begin{aligned}
            & I_{1}=\frac{(1+\tilde{\epsilon})\alpha_B\operatorname{tr}(\mathbf{B}_{0,N})}{N^2(1-(1+\tilde{\epsilon})\gamma\alpha\operatorname{tr}(\mathbf{H}^{(q)}))}\sum_{k=0}^{N-1}\left\langle(\mathbf{I}-\gamma\mathbf{H}^{(q)})^k-(\mathbf{I}-\gamma\mathbf{H}^{(q)})^{N+k},\mathbf{H}^{(q)}\right\rangle, \\
            & I_{2}=\frac{1}{\gamma N^2}\sum_{k=0}^{N-1}\left\langle(\mathbf{I}-\gamma\mathbf{H}^{(q)})^k-(\mathbf{I}-\gamma\mathbf{H}^{(q)})^{N+k},\mathbf{B}_0\right\rangle.
        \end{aligned}
    \end{equation*}

    Repeating the computation in the proof of Lemma \ref{bias R2 bound},
    \begin{equation*}
        I_1\leq\frac{2(1+\tilde{\epsilon})\alpha_B\left(\|\mathbf{w}_0-{\mathbf{w}^{(q)}}^*\|_{\mathbf{I}_{0:k^*}^{(q)}}^2+N\gamma\|\mathbf{w}_0-{\mathbf{w}^{(q)}}^*\|_{\mathbf{H}_{k^*:\infty}^{(q)}}^2\right)}{N\gamma(1-(1+\tilde{\epsilon})\gamma\alpha_B\operatorname{tr}(\mathbf{H}^{(q)}))}\cdot\left(\frac{k^*}{N}+N\gamma^2\sum_{i>k^*}(\lambda_i^{(q)})^2\right).
    \end{equation*}
    \begin{equation*}
        I_2 \leq \frac{1}{\gamma^2N^2}\cdot\|\mathbf{w}_0-{\mathbf{w}^{(q)}}^*\|_{(\mathbf{H}_{0:k^*}^{(q)})^{-1}}^2+\|\mathbf{w}_0-{\mathbf{w}^{(q)}}^*\|_{\mathbf{H}_{{k^*}:\infty}^{(q)}}^2.
    \end{equation*}

    In conclusion, if the stepsize satisfies $\gamma < \frac{1}{(1+\tilde{\epsilon})\alpha_B\mathrm{tr}(\mathbf{H}^{(q)})}$,
    \begin{equation*}
        \begin{aligned}
            \mathrm{bias}\leq & \frac{2(1+\tilde{\epsilon})\alpha_B\left(\|\mathbf{w}_0-{\mathbf{w}^{(q)}}^*\|_{\mathbf{I}_{0:k^*}^{(q)}}^2+N\gamma\|\mathbf{w}_0-{\mathbf{w}^{(q)}}^*\|_{\mathbf{H}_{k^*:\infty}^{(q)}}^2\right)}{N\gamma(1-(1+\tilde{\epsilon})\gamma\alpha_B\operatorname{tr}(\mathbf{H}^{(q)}))}\cdot\left(\frac{k^*}{N}+N\gamma^2\sum_{i>k^*}(\lambda_i^{(q)})^2\right)\\
            +& \frac{1}{\gamma^2N^2}\cdot\|\mathbf{w}_0-{\mathbf{w}^{(q)}}^*\|_{(\mathbf{H}_{0:k^*}^{(q)})^{-1}}^2+\|\mathbf{w}_0-{\mathbf{w}^{(q)}}^*\|_{\mathbf{H}_{k^*:\infty}^{(q)}}^2.
        \end{aligned}
    \end{equation*}
\end{proof}

\subsection{Bounding the Variance Error}
\label{Bounding the Variance Error}
Recalling Lemma \ref{Bias-Variance Decomposition under General Quantization} and Lemma \ref{Bias-Variance Decomposition under Multiplicative Quantization}, the key part of bounding the variance error is to derive an upper bound for $\mathbf{C}_t$ and $\mathbf{C}_t^{(M)}$, where
\begin{equation*}
    \mathbf{C}_t := (\mathcal{I}-\gamma \mathcal{T}_B^{(q)})\mathbf{C}_{t-1}+\gamma^2 {\sigma_G^{(q)}}^2\mathbf{H}^{(q)},\quad \mathbf{C}_0=\boldsymbol{0}.
\end{equation*}
\begin{equation*}
    \mathbf{C}_t^{(M)}: = (\mathcal{I}-\gamma \mathcal{T}_B^{(q)}+\tilde{\epsilon}\gamma^2\mathcal{M}_B^{(q)})\mathbf{C}_{t-1}^{(M)}+\gamma^2 {\sigma_M^{(q)}}^2\mathbf{H}^{(q)}, \quad \mathbf{C}_0^{(M)}=0.
\end{equation*}
We first estimate $\mathbf{C}_t$ by converting $\mathcal{T}_B^{(q)}$ to $\widetilde{\mathcal{T}}^{(q)}$.
\begin{equation}
\label{initial Ct}
    \begin{aligned}
        \mathbf{C}_t=& (\mathcal{I}-\gamma\mathcal{T}_B^{(q)}) \circ \mathbf{C}_{t-1}+\gamma^2{\sigma_G^{(q)}}^2\mathbf{H}^{(q)}\\
        =&(\mathcal{I}-\gamma\widetilde{\mathcal{T}}^{(q)})\circ \mathbf{C}_{t-1}+\gamma(\widetilde{\mathcal{T}}^{(q)}-\mathcal{T}_B^{(q)})\circ \mathbf{C}_{t-1}+\gamma^2{\sigma_G^{(q)}}^2\mathbf{H}^{(q)}\\
        =&(\mathcal{I}-\gamma\widetilde{\mathcal{T}}^{(q)})\circ \mathbf{C}_{t-1}+\gamma^2(\mathcal{M}_B^{(q)}-\widetilde{\mathcal{M}}^{(q)})\circ \mathbf{C}_{t-1}+\gamma^2{\sigma_G^{(q)}}^2\mathbf{H}^{(q)}\\
        \preceq&(\mathcal{I}-\gamma\widetilde{\mathcal{T}}^{(q)})\circ \mathbf{C}_{t-1}+\gamma^2\mathcal{M}_B^{(q)}\circ \mathbf{C}_{t-1}+\gamma^2{\sigma_G^{(q)}}^2\mathbf{H}^{(q)}.
    \end{aligned}    
\end{equation}
Similarly,
\begin{equation}
\label{initial CtM}
    \begin{aligned}
        \mathbf{C}_t^{(M)}=&(\mathcal{I}-\gamma \mathcal{T}_B^{(q)}+\tilde{\epsilon}\gamma^2\mathcal{M}_B^{(q)})\mathbf{C}_{t-1}^{(M)}+\gamma^2 {\sigma_M^{(q)}}^2\mathbf{H}^{(q)}\\
        =&(\mathcal{I}-\gamma\widetilde{\mathcal{T}}^{(q)})\circ \mathbf{C}_{t-1}^{(M)}+\gamma(\widetilde{\mathcal{T}}^{(q)}-\mathcal{T}_B^{(q)}+\tilde{\epsilon}\gamma\mathcal{M}_B^{(q)})\circ \mathbf{C}_{t-1}^{(M)}+\gamma^2{\sigma_M^{(q)}}^2\mathbf{H}^{(q)}\\
        =&(\mathcal{I}-\gamma\widetilde{\mathcal{T}}^{(q)})\circ \mathbf{C}_{t-1}^{(M)}+\gamma^2(\mathcal{M}_B^{(q)}-\widetilde{\mathcal{M}}^{(q)}+\tilde{\epsilon}\gamma\mathcal{M}_B^{(q)})\circ \mathbf{C}_{t-1}^{(M)}+\gamma^2{\sigma_M^{(q)}}^2\mathbf{H}^{(q)}\\
        \preceq&(\mathcal{I}-\gamma\widetilde{\mathcal{T}}^{(q)})\circ \mathbf{C}_{t-1}^{(M)}+\gamma^2(1+\tilde{\epsilon})\mathcal{M}_B^{(q)}\circ \mathbf{C}_{t-1}^{(M)}+\gamma^2{\sigma_M^{(q)}}^2\mathbf{H}^{(q)}.
    \end{aligned}
\end{equation}

The following two lemmas provide upper bounds for $\mathcal{M}_B^{(q)} \circ \mathbf{C}_t$ and $\mathcal{M}_B^{(q)} \circ \mathbf{C}_t^{(M)}$.

\begin{lemma}[\rm A bound for $\mathcal{M}_B^{(q)} \circ \mathbf{C}_t$]
    \label{A bound for M_B^{(q)} C_t}
    For $t \geq 1$, under Assumption \ref{ass1}, Assumption \ref{ass: addd quantized}, Assumption \ref{ass2}, and Assumption \ref{ass3}, if the stepsize $\gamma \leq \frac{1}{\alpha_B\mathrm{tr}(\mathbf{H}^{(q)})}$, then
    \begin{equation*}
        \mathcal{M}_B^{(q)} \circ \mathbf{C}_t \preceq \frac{\alpha_B\mathrm{tr}(\mathbf{H}^{(q)})\gamma{\sigma_G^{(q)}}^2}{1-\gamma\alpha_B\mathrm{tr}(\mathbf{H}^{(q)})} \mathbf{H}^{(q)}.
    \end{equation*}
\end{lemma}

\begin{proof}
    The main goal is to derive a crude upper bound for $\mathbf{C}_t$. Denote $\boldsymbol{\Sigma}={\sigma_G^{(q)}}^2\mathbf{H}^{(q)}$.
    
    \textbf{Step 1: $\mathbf{C}_t$ is increasing.} By definition,
    \begin{equation*}
        \begin{aligned}
            \mathbf{C}_{t} & =(\mathcal{I}-\gamma\mathcal{T}_B^{(q)})\circ\mathbf{C}_{t-1}+\gamma^{2}\boldsymbol{\Sigma} \\
            & =\gamma^2\sum_{k=0}^{t-1}(\mathcal{I}-\gamma\mathcal{T}_B^{(q)})^k\circ\boldsymbol{\Sigma}\quad\text{(solving the recursion)} \\
            & =\mathbf{C}_{t-1}+\gamma^{2}(\mathcal{I}-\gamma\mathcal{T}_B^{(q)})^{t-1}\circ\mathbf{\Sigma} \\
            & \succeq\mathbf{C}_{t-1}.\quad(\mathrm{since~}\mathcal{I}-\gamma\mathcal{T}_B^{(q)}\text{ is a PSD mapping}).
        \end{aligned}
    \end{equation*}

    \textbf{Step 2: $\mathbf{C}_\infty$ exists.}
    It suffices to show that $\mathrm{tr}(\mathbf{C}_t)$ is uniformly upper bounded. To be specific, for any $t\geq 1$,
    \begin{equation*}
        \mathbf{C}_t=\gamma^2\sum_{k=0}^{t-1}(\mathcal{I}-\gamma\mathcal{T}_B^{(q)})^k\circ\boldsymbol{\Sigma}\preceq\gamma^2\sum_{t=0}^\infty(\mathcal{I}-\gamma\mathcal{T}_B^{(q)})^t\circ\boldsymbol{\Sigma}.
    \end{equation*}
    Then
    \begin{equation*}
        \mathrm{tr}(\mathbf{C}_t)\leq\gamma^2\sum_{t=0}^\infty\mathrm{tr}\left((\mathcal{I}-\gamma\mathcal{T}_B^{(q)})^t\circ\boldsymbol{\Sigma}\right):=\gamma^2\sum_{t=0}^\infty\mathrm{tr}(\mathbf{E}_t)\leq\frac{\gamma\operatorname{tr}(\boldsymbol{\Sigma})}{\lambda_d^{(q)}}<\infty,
    \end{equation*}
    where the second inequality holds by the iteration:
    \begin{equation*}
        \begin{aligned}
            \operatorname{tr}(\mathbf{E}_{t}) & =\mathrm{tr}(\mathbf{E}_{t-1})-2\gamma\mathrm{tr}(\mathbf{H}^{(q)}\mathbf{E}_{t-1})+\gamma^{2}\mathrm{tr}\left(\mathbf{E}_{t-1}\mathbb{E}\left[\frac{1}{B^2} {\mathbf{X}^{(q)}}^\top{\mathbf{X}^{(q)}}{\mathbf{X}^{(q)}}^\top{\mathbf{X}^{(q)}} \right]\right) \\
            & \leq\operatorname{tr}(\mathbf{E}_{t-1})-(2\gamma-\gamma^{2}\alpha_B\mathrm{tr}(\mathbf{H}^{(q)}))\operatorname{tr}(\mathbf{H}^{(q)}\mathbf{E}_{t-1}) \\
            & \leq\mathrm{tr}\left((\mathbf{I}-\gamma\mathbf{H}^{(q)})\mathbf{E}_{t-1}\right) \\
            & \leq(1-\gamma\lambda_{d}^{(q)})\operatorname{tr}(\mathbf{E}_{t-1}),
        \end{aligned}
    \end{equation*} 
    where the first inequality holds by Assumption \ref{ass2} and the second inequality holds if $\gamma \leq \frac{1}{\alpha_B\mathrm{tr}(\mathbf{H}^{(q)})}$.

    \textbf{Step 3: upper bound $\mathbf{C}_\infty$.} By the update rule for $\mathbf{C}_t$,
    \begin{equation*}
        \mathbf{C}_\infty=(\mathcal{I}-\gamma\mathcal{T}_B^{(q)})\circ\mathbf{C}_\infty+\gamma^2\boldsymbol{\Sigma},
    \end{equation*}
    which immediately implies 
    \begin{equation}
    \label{solution of Cinfty}
        \mathbf{C}_\infty=\gamma{\mathcal{T}_B^{(q)}}^{-1}\circ\mathbf{\Sigma}.
    \end{equation}
    We provide the upper bound by applying $\widetilde{\mathcal{T}}^{(q)}$.
    \begin{equation*}
        \begin{aligned}
            \widetilde{\mathcal{T}}^{(q)}\circ\mathbf{C}_{\infty} & =\mathcal{T}_B^{(q)}\circ\mathbf{C}_{\infty}+\gamma\mathcal{M}_B^{(q)}\circ\mathbf{C}_{\infty}-\gamma\widetilde{\mathcal{M}}^{(q)}\circ\mathbf{C}_{\infty} \\
            & =\gamma\boldsymbol{\Sigma}+\gamma\mathcal{M}_B^{(q)}\circ\mathbf{C}_\infty-\gamma\widetilde{\mathcal{M}}^{(q)}\circ\mathbf{C}_\infty\\
            & \preceq\gamma\boldsymbol{\Sigma}+\gamma\mathcal{M}_B^{(q)}\circ\mathbf{C}_\infty,
        \end{aligned}
    \end{equation*}
    where the first equality holds by the definition of $\mathcal{T}_B^{(q)}$ and $\widetilde{\mathcal{T}}^{(q)}$ and the second equality holds by (\ref{solution of Cinfty}).
    Hence,
    \begin{equation*}
        \widetilde{\mathcal{T}}^{(q)}\circ\mathbf{C}_{\infty}\preceq \gamma{\sigma_G^{(q)}}^2\mathbf{H}^{(q)}+\gamma\mathcal{M}_B^{(q)}\circ\mathbf{C}_\infty.
    \end{equation*}
    Therefore, by applying $(\widetilde{\mathcal{T}}^{(q)})^{-1}$ we have
    \begin{equation}
    \label{202020}
        \begin{aligned}
            \mathbf{C}_{\infty} & \preceq\gamma{\sigma_G^{(q)}}^2\cdot(\widetilde{\mathcal{T}}^{(q)})^{-1}\circ\mathbf{H}^{(q)}+\gamma(\widetilde{\mathcal{T}}^{(q)})^{-1}\circ\mathcal{M}_B^{(q)}\circ\mathbf{C}_\infty \\
            & \preceq\gamma{\sigma_G^{(q)}}^2\cdot\sum_{t=0}^\infty\left(\gamma(\widetilde{\mathcal{T}}^{(q)})^{-1}\circ\mathcal{M}_B^{(q)}\right)^t\circ(\widetilde{\mathcal{T}}^{(q)})^{-1}\circ\mathbf{H}^{(q)}.\quad\text{(solving the recursion)}
        \end{aligned}
    \end{equation}
    We first deal with $(\widetilde{\mathcal{T}}^{(q)})^{-1}\circ\mathbf{H}^{(q)}$. 
    \begin{equation}
    \label{101010}
        \begin{aligned}
            (\widetilde{\mathcal{T}}^{(q)})^{-1}\circ\mathbf{H}^{(q)} & =\gamma\sum_{t=0}^\infty(\mathcal{I}-\gamma\widetilde{\mathcal{T}}^{(q)})^t\circ\mathbf{H}^{(q)} \\
            & =\gamma\sum_{t=0}^\infty(\mathbf{I}-\gamma\mathbf{H}^{(q)})^t\mathbf{H}^{(q)}(\mathbf{I}- \gamma\mathbf{H}^{(q)})^t \\
            & \preceq\gamma\sum_{t=0}^\infty(\mathbf{I}-\gamma\mathbf{H}^{(q)})^t\mathbf{H}^{(q)} \\
            &= \mathbf{I},
        \end{aligned}
    \end{equation}
    where the second equality uses the definition of $\widetilde{\mathcal{T}}^{(q)}$.
    Hence, by (\ref{202020}) and (\ref{101010}),
    \begin{equation}
        \label{eq: c26}
        \begin{aligned}
            \mathbf{C}_{\infty} & \preceq\gamma{\sigma_G^{(q)}}^2\cdot\sum_{t=0}^\infty(\gamma(\widetilde{\mathcal{T}}^{(q)})^{-1}\circ\mathcal{M}_B^{(q)})^t\circ\mathbf{I} \\
            &=\gamma{\sigma_G^{(q)}}^2\cdot\sum_{t=0}^\infty(\gamma(\widetilde{\mathcal{T}}^{(q)})^{-1}\circ\mathcal{M}_B^{(q)})^{t-1}\gamma(\widetilde{\mathcal{T}}^{(q)})^{-1}\circ\mathcal{M}_B^{(q)}\circ\mathbf{I} \\
            &\preceq\gamma{\sigma_G^{(q)}}^2\cdot\sum_{t=0}^\infty(\gamma(\widetilde{\mathcal{T}}^{(q)})^{-1}\circ\mathcal{M}_B^{(q)})^{t-1}\circ\gamma\alpha_B\mathrm{tr}(\mathbf{H}^{(q)})\mathbf{I} \\
            &\preceq\gamma{\sigma_G^{(q)}}^2\cdot\sum_{t=0}^\infty\left(\gamma\alpha_B\mathrm{tr}(\mathbf{H}^{(q)})\right)^{t}\mathbf{I} \\
            &= \frac{\gamma{\sigma_G^{(q)}}^2}{1-\gamma\alpha_B\mathrm{tr}(\mathbf{H}^{(q)})}\mathbf{I},
        \end{aligned}
    \end{equation}
    where the second inequality holds by the fact that $\mathcal{M}_B^{(q)}\circ\mathbf{I}\preceq\alpha_B\operatorname{tr}(\mathbf{H}^{(q)})\mathbf{H}^{(q)}$. 
    
    Here we complete deriving a crude upper bound for $\mathbf{C}_t$:
    \begin{equation*}
        \mathbf{C}_t \preceq \mathbf{C}_\infty \preceq\frac{\gamma{\sigma_G^{(q)}}^2}{1-\gamma\alpha_B\mathrm{tr}(\mathbf{H}^{(q)})}\mathbf{I}.
    \end{equation*}
    Then by $\mathcal{M}_B^{(q)}\circ\mathbf{I}\preceq\alpha_B\operatorname{tr}(\mathbf{H}^{(q)})\mathbf{H}^{(q)}$ again,
    \begin{equation*}
        \mathcal{M}_B^{(q)} \circ \mathbf{C}_t \preceq \frac{\alpha_B\mathrm{tr}(\mathbf{H}^{(q)})\gamma{\sigma_G^{(q)}}^2}{1-\gamma\alpha_B\mathrm{tr}(\mathbf{H}^{(q)})} \mathbf{H}^{(q)}.
    \end{equation*}
\end{proof}

\begin{lemma}[\rm A bound for $\mathcal{M}_B^{(q)} \circ \mathbf{C}_t^{(M)}$]
    \label{A bound for M_B^{(q)} C_t^{(M)}}
    For $t \geq 1$, under Assumption \ref{ass1}, Assumption \ref{ass: addd quantized}, Assumption \ref{ass2}, and Assumption \ref{ass3}, if the stepsize $\gamma \leq \frac{1}{(1+\tilde{\epsilon})\alpha_B\mathrm{tr}(\mathbf{H}^{(q)})}$, then
    \begin{equation*}
        \mathcal{M}_B^{(q)} \circ \mathbf{C}_t^{(M)} \preceq \frac{\alpha_B\mathrm{tr}(\mathbf{H}^{(q)})\gamma{\sigma_M^{(q)}}^2}{1-(1+\tilde{\epsilon})\gamma\alpha_B\mathrm{tr}(\mathbf{H}^{(q)})} \mathbf{H}^{(q)}.
    \end{equation*}
\end{lemma}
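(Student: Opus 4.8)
The plan is to mirror the proof of Lemma \ref{A bound for M_B^{(q)} C_t}, carrying the extra factor $(1+\tilde\epsilon)$ through each step. Write $\boldsymbol{\Sigma} := {\sigma_M^{(q)}}^2\mathbf{H}^{(q)}$ and recall from (\ref{initial CtM}) that $\mathbf{C}_t^{(M)} \preceq (\mathcal{I}-\gamma\mathcal{T}_B^{(q)}+\tilde\epsilon\gamma^2\mathcal{M}_B^{(q)})\circ\mathbf{C}_{t-1}^{(M)} + \gamma^2\boldsymbol{\Sigma}$ with $\mathbf{C}_0^{(M)} = \boldsymbol{0}$. First I would show that $\mathbf{C}_t^{(M)}$ is nondecreasing in $t$ in the PSD order: solving the recursion gives $\mathbf{C}_t^{(M)} = \gamma^2\sum_{k=0}^{t-1}(\mathcal{I}-\gamma\mathcal{T}_B^{(q)}+\tilde\epsilon\gamma^2\mathcal{M}_B^{(q)})^k\circ\boldsymbol{\Sigma}$, and since $\mathcal{I}-\gamma\mathcal{T}_B^{(q)}+\tilde\epsilon\gamma^2\mathcal{M}_B^{(q)}$ is a PSD mapping under $\gamma\le\frac{1}{(1+\tilde\epsilon)\alpha_B\mathrm{tr}(\mathbf{H}^{(q)})}$ (as noted via Lemma B.1 of \citet{zou2023benign} in the footnote to Lemma \ref{A bound for M_B^{(q)} S_t^{(M)}}), each summand is PSD, so $\mathbf{C}_t^{(M)}\succeq\mathbf{C}_{t-1}^{(M)}$.

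Next I would show the limit $\mathbf{C}_\infty^{(M)}$ exists by uniformly bounding $\mathrm{tr}(\mathbf{C}_t^{(M)})$. Setting $\mathbf{E}_t := (\mathcal{I}-\gamma\mathcal{T}_B^{(q)}+\tilde\epsilon\gamma^2\mathcal{M}_B^{(q)})^t\circ\boldsymbol{\Sigma}$, the trace recursion of Lemma \ref{A bound for M_B^{(q)} C_t} acquires one additional term, $\tilde\epsilon\gamma^2\mathrm{tr}(\mathbf{E}_{t-1}\mathbb{E}[\tfrac1{B^2}{\mathbf{X}^{(q)}}^\top\mathbf{X}^{(q)}{\mathbf{X}^{(q)}}^\top\mathbf{X}^{(q)}])\le\tilde\epsilon\gamma^2\alpha_B\mathrm{tr}(\mathbf{H}^{(q)})\,\mathrm{tr}(\mathbf{H}^{(q)}\mathbf{E}_{t-1})$ by Assumption \ref{ass2}, so that $\mathrm{tr}(\mathbf{E}_t)\le\mathrm{tr}(\mathbf{E}_{t-1})-(2\gamma-(1+\tilde\epsilon)\gamma^2\alpha_B\mathrm{tr}(\mathbf{H}^{(q)}))\mathrm{tr}(\mathbf{H}^{(q)}\mathbf{E}_{t-1})\le\mathrm{tr}((\mathbf{I}-\gamma\mathbf{H}^{(q)})\mathbf{E}_{t-1})\le(1-\gamma\lambda_d^{(q)})\mathrm{tr}(\mathbf{E}_{t-1})$, where the middle inequality uses exactly $\gamma\le\frac{1}{(1+\tilde\epsilon)\alpha_B\mathrm{tr}(\mathbf{H}^{(q)})}$. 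Hence $\mathrm{tr}(\mathbf{C}_t^{(M)})\le\gamma^2\sum_{t\ge0}\mathrm{tr}(\mathbf{E}_t)\le\frac{\gamma\mathrm{tr}(\boldsymbol{\Sigma})}{\lambda_d^{(q)}}<\infty$, and combined with monotonicity $\mathbf{C}_t^{(M)}$ converges to some PSD $\mathbf{C}_\infty^{(M)}$.

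Then I would bound $\mathbf{C}_\infty^{(M)}$. The fixed-point identity $\mathbf{C}_\infty^{(M)} = (\mathcal{I}-\gamma\mathcal{T}_B^{(q)}+\tilde\epsilon\gamma^2\mathcal{M}_B^{(q)})\circ\mathbf{C}_\infty^{(M)}+\gamma^2\boldsymbol{\Sigma}$ rearranges to $(\mathcal{T}_B^{(q)}-\tilde\epsilon\gamma\mathcal{M}_B^{(q)})\circ\mathbf{C}_\infty^{(M)} = \gamma\boldsymbol{\Sigma}$. Applying $\widetilde{\mathcal{T}}^{(q)}$ and using $\widetilde{\mathcal{T}}^{(q)} = \mathcal{T}_B^{(q)}+\gamma\mathcal{M}_B^{(q)}-\gamma\widetilde{\mathcal{M}}^{(q)}$ together with $\widetilde{\mathcal{M}}^{(q)}\circ\mathbf{C}_\infty^{(M)} = \mathbf{H}^{(q)}\mathbf{C}_\infty^{(M)}\mathbf{H}^{(q)}\succeq\boldsymbol{0}$ gives $\widetilde{\mathcal{T}}^{(q)}\circ\mathbf{C}_\infty^{(M)}\preceq\gamma\boldsymbol{\Sigma}+(1+\tilde\epsilon)\gamma\mathcal{M}_B^{(q)}\circ\mathbf{C}_\infty^{(M)}$. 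Applying $(\widetilde{\mathcal{T}}^{(q)})^{-1}$, unrolling the recursion, and using $(\widetilde{\mathcal{T}}^{(q)})^{-1}\circ\mathbf{H}^{(q)}\preceq\mathbf{I}$ and $\mathcal{M}_B^{(q)}\circ\mathbf{I}\preceq\alpha_B\mathrm{tr}(\mathbf{H}^{(q)})\mathbf{H}^{(q)}$ exactly as in Lemma \ref{A bound for M_B^{(q)} C_t} — now with geometric ratio $(1+\tilde\epsilon)\gamma\alpha_B\mathrm{tr}(\mathbf{H}^{(q)})<1$ — yields $\mathbf{C}_t^{(M)}\preceq\mathbf{C}_\infty^{(M)}\preceq\frac{\gamma{\sigma_M^{(q)}}^2}{1-(1+\tilde\epsilon)\gamma\alpha_B\mathrm{tr}(\mathbf{H}^{(q)})}\mathbf{I}$, and applying $\mathcal{M}_B^{(q)}$ once more via $\mathcal{M}_B^{(q)}\circ\mathbf{I}\preceq\alpha_B\mathrm{tr}(\mathbf{H}^{(q)})\mathbf{H}^{(q)}$ gives the stated bound. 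The only real obstacle is the bookkeeping of the $(1+\tilde\epsilon)$ factor: one must verify that the stepsize condition $\gamma\le\frac{1}{(1+\tilde\epsilon)\alpha_B\mathrm{tr}(\mathbf{H}^{(q)})}$ is precisely what makes $\mathcal{I}-\gamma\mathcal{T}_B^{(q)}+\tilde\epsilon\gamma^2\mathcal{M}_B^{(q)}$ a PSD (hence trace-contractive) mapping and simultaneously keeps the geometric series $\sum_t((1+\tilde\epsilon)\gamma\alpha_B\mathrm{tr}(\mathbf{H}^{(q)}))^t$ summable; beyond that the argument is a verbatim adaptation of Lemma \ref{A bound for M_B^{(q)} C_t} with the substitutions $\mathcal{M}_B^{(q)}\mapsto(1+\tilde\epsilon)\mathcal{M}_B^{(q)}$ and $\sigma_G^{(q)}\mapsto\sigma_M^{(q)}$.
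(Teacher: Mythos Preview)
Your proposal is correct and follows essentially the same approach as the paper: both arguments establish monotonicity and existence of $\mathbf{C}_\infty^{(M)}$, identify the fixed-point relation $(\mathcal{T}_B^{(q)}-\tilde\epsilon\gamma\mathcal{M}_B^{(q)})\circ\mathbf{C}_\infty^{(M)}=\gamma\boldsymbol{\Sigma}$, apply $\widetilde{\mathcal{T}}^{(q)}$ to obtain $\widetilde{\mathcal{T}}^{(q)}\circ\mathbf{C}_\infty^{(M)}\preceq\gamma\boldsymbol{\Sigma}+(1+\tilde\epsilon)\gamma\mathcal{M}_B^{(q)}\circ\mathbf{C}_\infty^{(M)}$, and then unroll via the geometric series with ratio $(1+\tilde\epsilon)\gamma\alpha_B\mathrm{tr}(\mathbf{H}^{(q)})$ to reach the identity bound before a final application of $\mathcal{M}_B^{(q)}$. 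The paper simply outsources the monotonicity and trace-boundedness steps to its earlier Lemma~\ref{A bound for M_B^{(q)} C_t}, whereas you spell them out explicitly with the modified trace recursion; the substance is the same.
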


\begin{proof}
    The proof idea is similar to the proof of Lemma \ref{A bound for M_B^{(q)} C_t} while the main goal is to derive a crude upper bound for $\mathbf{C}_t^{(M)}$. We deduce from the proof of Lemma \ref{A bound for M_B^{(q)} C_t} that \footnote{$({\mathcal{T}_B^{(q)}}-\tilde{\epsilon}\gamma \mathcal{M}_B^{(q)})^{-1}$ exists under the condition that $\gamma < \frac{1}{(1+\tilde{\epsilon})\alpha_B\mathrm{tr}(\mathbf{H}^{(q)})}$, which can be directly deduced by Lemma B.1 in \citet{zou2023benign}. We omit the proof here for simplicity.}
    \begin{equation}
    \label{express CinftyM}
        \mathbf{C}_t^{(M)}\preceq \mathbf{C}_\infty^{(M)}=\gamma({\mathcal{T}_B^{(q)}}-\tilde{\epsilon}\gamma\mathcal{M}_B^{(q)})^{-1}\circ{\sigma_M^{(q)}}^2\mathbf{H}^{(q)}.
    \end{equation}
    
    We provide the upper bound for $\mathbf{C}_\infty^{(M)}$ by applying $\widetilde{\mathcal{T}}^{(q)}$.
    \begin{equation*}
        \begin{aligned}
            \widetilde{\mathcal{T}}^{(q)}\circ\mathbf{C}_\infty^{(M)} & =(\mathcal{T}_B^{(q)}-\tilde{\epsilon}\gamma\mathcal{M}_B^{(q)})\circ\mathbf{C}_\infty^{(M)}+(1+\tilde{\epsilon})\gamma\mathcal{M}_B^{(q)}\circ\mathbf{C}_\infty^{(M)}-\gamma\widetilde{\mathcal{M}}^{(q)}\circ\mathbf{C}_\infty^{(M)} \\
            & =\gamma{\sigma_M^{(q)}}^2\mathbf{H}^{(q)}+(1+\tilde{\epsilon})\gamma\mathcal{M}_B^{(q)}\circ\mathbf{C}_\infty^{(M)}-\gamma\widetilde{\mathcal{M}}^{(q)}\circ\mathbf{C}_\infty^{(M)}\\
            & \preceq\gamma{\sigma_M^{(q)}}^2\mathbf{H}^{(q)}+(1+\tilde{\epsilon})\gamma\mathcal{M}_B^{(q)}\circ\mathbf{C}_\infty^{(M)},
        \end{aligned}
    \end{equation*}
    where the first equality holds by the definition of $\widetilde{\mathcal{T}}^{(q)}$ and the second equality holds by the definition of $\mathbf{C}_\infty^{(M)}$ (\ref{express CinftyM}).
    Therefore, applying $(\widetilde{\mathcal{T}}^{(q)})^{-1}$ we have
    \begin{equation}
    \label{303030}
        \begin{aligned}
            \mathbf{C}_{\infty}^{(M)} & \preceq\gamma{\sigma_M^{(q)}}^2\cdot(\widetilde{\mathcal{T}}^{(q)})^{-1}\circ\mathbf{H}^{(q)}+(1+\tilde{\epsilon})\gamma(\widetilde{\mathcal{T}}^{(q)})^{-1}\circ\mathcal{M}_B^{(q)}\circ\mathbf{C}_\infty^{(M)} \\
            & \preceq\gamma{\sigma_M^{(q)}}^2\cdot\sum_{t=0}^\infty((1+\tilde{\epsilon})\gamma(\widetilde{\mathcal{T}}^{(q)})^{-1}\circ\mathcal{M}_B^{(q)})^t\circ(\widetilde{\mathcal{T}}^{(q)})^{-1}\circ\mathbf{H}^{(q)}.\quad\text{(solving the recursion)}
        \end{aligned}
    \end{equation}
    By the computation (\ref{101010}) in the proof for Lemma \ref{A bound for M_B^{(q)} C_t},
    \begin{equation}
    \label{404040}
        \begin{aligned}
            (\widetilde{\mathcal{T}}^{(q)})^{-1}\circ\mathbf{H}^{(q)}  \preceq \mathbf{I}.
        \end{aligned}
    \end{equation}
    Hence, by (\ref{303030}) and (\ref{404040}),
    \begin{equation*}
        \begin{aligned}
            \mathbf{C}_{\infty}^{(M)} & \preceq\gamma{\sigma_M^{(q)}}^2\cdot\sum_{t=0}^\infty((1+\tilde{\epsilon})\gamma(\widetilde{\mathcal{T}}^{(q)})^{-1}\circ\mathcal{M}_B^{(q)})^t\circ\mathbf{I} \\
            &=\gamma{\sigma_M^{(q)}}^2\cdot\sum_{t=0}^\infty((1+\tilde{\epsilon})\gamma(\widetilde{\mathcal{T}}^{(q)})^{-1}\circ\mathcal{M}_B^{(q)})^{t-1}(1+\tilde{\epsilon})\gamma(\widetilde{\mathcal{T}}^{(q)})^{-1}\circ\mathcal{M}_B^{(q)}\circ\mathbf{I} \\
            &\preceq\gamma{\sigma_M^{(q)}}^2\cdot\sum_{t=0}^\infty((1+\tilde{\epsilon})\gamma(\widetilde{\mathcal{T}}^{(q)})^{-1}\circ\mathcal{M}_B^{(q)})^{t-1}\circ(1+\tilde{\epsilon})\gamma\alpha_B\mathrm{tr}(\mathbf{H}^{(q)})\mathbf{I} \\
            &\preceq\gamma{\sigma_M^{(q)}}^2\cdot\sum_{t=0}^\infty\left((1+\tilde{\epsilon})\gamma\alpha_B\mathrm{tr}(\mathbf{H}^{(q)})\right)^{t}\mathbf{I} \\
            &= \frac{\gamma{\sigma_M^{(q)}}^2}{1-(1+\tilde{\epsilon})\gamma\alpha_B\mathrm{tr}(\mathbf{H}^{(q)})}\mathbf{I},
        \end{aligned}
    \end{equation*}
    where the second inequality holds by the fact that $\mathcal{M}_B^{(q)}\circ\mathbf{I}\preceq\alpha_B\operatorname{tr}(\mathbf{H}^{(q)})\mathbf{H}^{(q)}$.
    
    Therefore, we complete the proof by
    \begin{equation*}
        \mathcal{M}_B^{(q)} \circ \mathbf{C}_t^{(M)} \preceq \frac{\alpha_B\mathrm{tr}(\mathbf{H}^{(q)})\gamma{\sigma_M^{(q)}}^2}{1-(1+\tilde{\epsilon})\gamma\alpha_B\mathrm{tr}(\mathbf{H}^{(q)})} \mathbf{H}^{(q)}.
    \end{equation*}
\end{proof}

By (\ref{initial Ct}), (\ref{initial CtM}), Lemma \ref{A bound for M_B^{(q)} C_t} and Lemma \ref{A bound for M_B^{(q)} C_t^{(M)}}, we can provide a refined bound for $\mathbf{C}_t$ and $\mathbf{C}_t^{(M)}$. Then we are ready to bound the variance error.

\begin{lemma}[\rm A bound for variance under general quantization]
    \label{variance R2 bound}
    Under Assumption \ref{ass1}, Assumption \ref{ass: addd quantized}, Assumption \ref{ass2}, and Assumption \ref{ass3}, if the stepsize satisfies $\gamma < \frac{1}{\alpha_B\mathrm{tr}(\mathbf{H}^{(q)})}$, then
    \begin{equation*}
        \begin{aligned}
            \mathrm{variance}\leq  \frac{{\sigma_G^{(q)}}^2}{1-\gamma\alpha_B\mathrm{tr}(\mathbf{H}^{(q)})}\left(\frac{k^*}{N}+N\gamma^2\cdot\sum_{i>k^*}(\lambda_i^{(q)})^2\right).
        \end{aligned}
    \end{equation*}
\end{lemma}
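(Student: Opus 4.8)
The plan is to follow the template of \citet{zou2023benign}: first collapse the $\mathcal{M}_B^{(q)}$-correction in the recursion for $\mathbf{C}_t$ into its driving term to get a clean closed form, then pass to the inner sum over $k$, diagonalize in the eigenbasis of $\mathbf{H}^{(q)}$, and finish with elementary geometric-sum estimates split at the effective dimension $k^*$.

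First I would refine the recursion for $\mathbf{C}_t$. Starting from inequality (\ref{initial Ct}) and substituting the bound of Lemma \ref{A bound for M_B^{(q)} C_t} for the term $\gamma^2\mathcal{M}_B^{(q)}\circ\mathbf{C}_{t-1}$, the two $\mathbf{H}^{(q)}$-terms combine exactly: $\gamma^2\!\left(\frac{\alpha_B\mathrm{tr}(\mathbf{H}^{(q)})\gamma}{1-\gamma\alpha_B\mathrm{tr}(\mathbf{H}^{(q)})}+1\right){\sigma_G^{(q)}}^2=\frac{\gamma^2{\sigma_G^{(q)}}^2}{1-\gamma\alpha_B\mathrm{tr}(\mathbf{H}^{(q)})}$, so that
\begin{equation*}
    \mathbf{C}_t\preceq(\mathcal{I}-\gamma\widetilde{\mathcal{T}}^{(q)})\circ\mathbf{C}_{t-1}+\frac{\gamma^2{\sigma_G^{(q)}}^2}{1-\gamma\alpha_B\mathrm{tr}(\mathbf{H}^{(q)})}\,\mathbf{H}^{(q)}.
\end{equation*}
Since $\mathcal{I}-\gamma\widetilde{\mathcal{T}}^{(q)}$ acts by two-sided conjugation with $\mathbf{I}-\gamma\mathbf{H}^{(q)}$ it preserves the PSD order, so unrolling from $\mathbf{C}_0=\boldsymbol 0$ and using that $\mathbf{H}^{(q)}$ commutes with $\mathbf{I}-\gamma\mathbf{H}^{(q)}$ gives the closed form $\mathbf{C}_t\preceq\frac{\gamma^2{\sigma_G^{(q)}}^2}{1-\gamma\alpha_B\mathrm{tr}(\mathbf{H}^{(q)})}\sum_{s=0}^{t-1}(\mathbf{I}-\gamma\mathbf{H}^{(q)})^{2s}\mathbf{H}^{(q)}$. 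Plugging this into the variance expression of Lemma \ref{Bias-Variance Decomposition under General Quantization}, carrying out the inner sum $\sum_{k=t}^{N-1}(\mathbf{I}-\gamma\mathbf{H}^{(q)})^{k-t}\mathbf{H}^{(q)}=\gamma^{-1}\big(\mathbf{I}-(\mathbf{I}-\gamma\mathbf{H}^{(q)})^{N-t}\big)$, and diagonalizing with $\mu_i:=1-\gamma\lambda_i^{(q)}\in(0,1)$ (note $\gamma\lambda_i^{(q)}\le\gamma\mathrm{tr}(\mathbf{H}^{(q)})<1/\alpha_B\le1$) reduces the bound to
\begin{equation*}
    \mathrm{variance}\le\frac{\gamma{\sigma_G^{(q)}}^2}{N^2\big(1-\gamma\alpha_B\mathrm{tr}(\mathbf{H}^{(q)})\big)}\sum_i\lambda_i^{(q)}\sum_{t=0}^{N-1}\big(1-\mu_i^{\,N-t}\big)\sum_{s=0}^{t-1}\mu_i^{2s}.
\end{equation*}

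The remaining step is purely elementary, splitting the sum over $i$ at $k^*$. For $i\le k^*$ (so $\gamma\lambda_i^{(q)}\ge1/N$) I would bound $\sum_{s=0}^{t-1}\mu_i^{2s}\le(1-\mu_i^2)^{-1}\le(\gamma\lambda_i^{(q)})^{-1}$ and $1-\mu_i^{N-t}\le1$, so the $t$-sum is at most $N/(\gamma\lambda_i^{(q)})$ and the coordinate contributes $O(1/N)$, i.e.\ $O(k^*/N)$ in total. For $i>k^*$ I would instead use $\sum_{s=0}^{t-1}\mu_i^{2s}\le t$ together with $1-\mu_i^{N-t}\le(N-t)(1-\mu_i)=(N-t)\gamma\lambda_i^{(q)}$, turning the $t$-sum into $\gamma\lambda_i^{(q)}\sum_{t=0}^{N-1}t(N-t)=\gamma\lambda_i^{(q)}\,\tfrac{N^3-N}{6}\le\gamma\lambda_i^{(q)}N^3/6$, so the coordinate contributes $O(N\gamma^2(\lambda_i^{(q)})^2)$, i.e.\ $O\!\big(N\gamma^2\sum_{i>k^*}(\lambda_i^{(q)})^2\big)$ in total; adding the two regimes gives exactly the claimed bound, with the constants absorbed by the $1/6$ slack.

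I expect the bookkeeping in this last step — tracking the two-sided geometric sums and the index split at $k^*$ — to be the only routine obstacle. The one genuinely substantive point is the observation used in Step 1: one must carry the \emph{unrolled} form of $\mathbf{C}_t$ into the variance sum rather than the crude uniform bound $\mathbf{C}_t\preceq\mathbf{C}_\infty\preceq\frac{\gamma{\sigma_G^{(q)}}^2}{1-\gamma\alpha_B\mathrm{tr}(\mathbf{H}^{(q)})}\mathbf{I}$ from Lemma \ref{A bound for M_B^{(q)} C_t}; the latter is too lossy along the tail directions and would degrade $N\gamma^2\sum_{i>k^*}(\lambda_i^{(q)})^2$ to the strictly larger $\gamma\sum_{i>k^*}\lambda_i^{(q)}$.
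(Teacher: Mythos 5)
Your proposal is correct and takes essentially the same route as the paper: both substitute Lemma~\ref{A bound for M_B^{(q)} C_t} back into the recursion~(\ref{initial Ct}) to obtain the refined driving term $\frac{\gamma^2{\sigma_G^{(q)}}^2}{1-\gamma\alpha_B\mathrm{tr}(\mathbf{H}^{(q)})}\mathbf{H}^{(q)}$, unroll $\mathbf{C}_t$, diagonalize in the eigenbasis of $\mathbf{H}^{(q)}$, and split at $k^*$. The only cosmetic difference is that the paper first relaxes $(\mathbf{I}-\gamma\mathbf{H}^{(q)})^s\mathbf{H}^{(q)}(\mathbf{I}-\gamma\mathbf{H}^{(q)})^s\preceq(\mathbf{I}-\gamma\mathbf{H}^{(q)})^s\mathbf{H}^{(q)}$ to get the closed form $\mathbf{C}_t\preceq\frac{\gamma{\sigma_G^{(q)}}^2}{1-\gamma\alpha_B\mathrm{tr}(\mathbf{H}^{(q)})}(\mathbf{I}-(\mathbf{I}-\gamma\mathbf{H}^{(q)})^t)$ and then bounds $(1-\mu_i^t)(1-\mu_i^{N-t})\le(1-\mu_i^N)^2\le\min\{1,N^2\gamma^2(\lambda_i^{(q)})^2\}$, whereas you keep the $\mu_i^{2s}$ sum and bound the two regimes of $i$ directly inside the $t$-sum; both yield the stated estimate (yours with the extra $1/6$ slack). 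Your closing caution about not substituting the crude uniform bound $\mathbf{C}_t\preceq\mathbf{C}_\infty\preceq\frac{\gamma{\sigma_G^{(q)}}^2}{1-\gamma\alpha_B\mathrm{tr}(\mathbf{H}^{(q)})}\mathbf{I}$ is well-taken and is indeed what the paper avoids.
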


\begin{proof}
    We first provide a refined upper bound for $\mathbf{C}_t$. By (\ref{initial Ct}),
    \begin{equation}
    \label{bound for C_t}
        \begin{aligned}
            \mathbf{C}_t \preceq &(\mathcal{I}-\gamma\widetilde{\mathcal{T}}^{(q)})\circ \mathbf{C}_{t-1}+\gamma^2\mathcal{M}_B^{(q)}\circ \mathbf{C}_{t-1}+\gamma^2{\sigma_G^{(q)}}^2\mathbf{H}^{(q)}\\
            \preceq & (\mathcal{I}-\gamma\widetilde{\mathcal{T}}^{(q)})\circ \mathbf{C}_{t-1} + \frac{\gamma^2\alpha_B\mathrm{tr}(\mathbf{H}^{(q)})\gamma{\sigma_G^{(q)}}^2}{1-\gamma\alpha_B\mathrm{tr}(\mathbf{H}^{(q)})} \mathbf{H}^{(q)}+\gamma^2{\sigma_G^{(q)}}^2\mathbf{H}^{(q)}\\
            =& (\mathcal{I}-\gamma\widetilde{\mathcal{T}}^{(q)})\circ \mathbf{C}_{t-1} + \frac{\gamma^2{\sigma_G^{(q)}}^2}{1-\gamma\alpha_B\mathrm{tr}(\mathbf{H}^{(q)})} \mathbf{H}^{(q)}\\
            \preceq&\frac{\gamma^2{\sigma_G^{(q)}}^2}{1-\gamma\alpha_B\mathrm{tr}(\mathbf{H}^{(q)})}\cdot\sum_{k=0}^{t-1}(\mathcal{I}-\gamma\widetilde{\mathcal{T}}^{(q)})^k\circ\mathbf{H}^{(q)}\quad\text{(solving the recursion)} \\
            =&\frac{\gamma^2{\sigma_G^{(q)}}^2}{1-\gamma\alpha_B\mathrm{tr}(\mathbf{H}^{(q)})}\cdot\sum_{k=0}^{t-1}(\mathbf{I}-\gamma \mathbf{H}^{(q)})^k\mathbf{H}^{(q)}(\mathbf{I}-\gamma \mathbf{H}^{(q)})^k \\
            \preceq&\frac{\gamma^2{\sigma_G^{(q)}}^2}{1-\gamma\alpha_B\mathrm{tr}(\mathbf{H}^{(q)})}\cdot\sum_{k=0}^{t-1}(\mathbf{I}-\gamma \mathbf{H}^{(q)})^k\mathbf{H}^{(q)} \\
            =&\frac{\gamma{\sigma_G^{(q)}}^2}{1-\gamma\alpha_B\mathrm{tr}(\mathbf{H}^{(q)})}\cdot\left(\mathbf{I}-(\mathbf{I}-\gamma\mathbf{H}^{(q)})^t\right),
        \end{aligned}
    \end{equation}
    where the second inequality holds by Lemma \ref{A bound for M_B^{(q)} C_t} and the second equality holds by the definition of $\widetilde{\mathcal{T}}^{(q)}$.

    After providing a refined bound for $\mathbf{C}_t$, we are ready to bound the variance. By Lemma \ref{Bias-Variance Decomposition under General Quantization},
    \begin{equation}
        \label{eq: c31}
        \begin{aligned}
            \mathrm{variance}=&\frac{1}{N^2}\cdot\sum_{t=0}^{N-1}\sum_{k=t}^{N-1}\left\langle(\mathbf{I}-\gamma\mathbf{H}^{(q)})^{k-t}\mathbf{H}^{(q)},\mathbf{C}_t\right\rangle\\
            =&\frac{1}{\gamma N^{2}}\sum_{t=0}^{N-1}\left\langle\mathbf{I}-(\mathbf{I}-\gamma\mathbf{H}^{(q)})^{N-t},\mathbf{C}_{t}\right\rangle\\
            \leq&\frac{1}{\gamma^2 N^{2}}\frac{\gamma^2{\sigma_G^{(q)}}^2}{1-\gamma\alpha_B\mathrm{tr}(\mathbf{H}^{(q)})}\sum_{t=0}^{N-1}\left\langle\mathbf{I}-(\mathbf{I}-\gamma\mathbf{H}^{(q)})^{N-t},\mathbf{I}-(\mathbf{I}-\gamma\mathbf{H}^{(q)})^t\right\rangle\\
            =&\frac{1}{\gamma^2 N^{2}}\frac{\gamma^2{\sigma_G^{(q)}}^2}{1-\gamma\alpha_B\mathrm{tr}(\mathbf{H}^{(q)})}\sum_i\sum_{t=0}^{N-1}\left[1-(1-\gamma\lambda_i^{(q)})^{N-t}\right]\left[1-(1-\gamma\lambda_i^{(q)})^t\right]\\
            \leq&\frac{1}{\gamma^2 N^{2}}\frac{\gamma^2{\sigma_G^{(q)}}^2}{1-\gamma\alpha_B\mathrm{tr}(\mathbf{H}^{(q)})}\sum_i\sum_{t=0}^{N-1}\left[1-(1-\gamma\lambda_i^{(q)})^{N}\right]\left[1-(1-\gamma\lambda_i^{(q)})^N\right]\\
            =&\frac{1}{\gamma^2 N}\frac{\gamma^2{\sigma_G^{(q)}}^2}{1-\gamma\alpha_B\mathrm{tr}(\mathbf{H}^{(q)})}\sum_i\left[1-(1-\gamma\lambda_i^{(q)})^{N}\right]^2\\
            \leq &\frac{1}{\gamma^2 N}\frac{\gamma^2{\sigma_G^{(q)}}^2}{1-\gamma\alpha_B\mathrm{tr}(\mathbf{H}^{(q)})}\sum_i \min\left\{1,\gamma^2 N^2(\lambda_i^{(q)})^2\right\}\\
            \leq &\frac{1}{\gamma^2 N}\frac{\gamma^2{\sigma_G^{(q)}}^2}{1-\gamma\alpha_B\mathrm{tr}(\mathbf{H}^{(q)})}\left(k^*+N^2\gamma^2\cdot\sum_{i>k^*}(\lambda_i^{(q)})^2\right)\\
            =&\frac{{\sigma_G^{(q)}}^2}{1-\gamma\alpha_B\mathrm{tr}(\mathbf{H}^{(q)})}\left(\frac{k^*}{N}+N\gamma^2\cdot\sum_{i>k^*}(\lambda_i^{(q)})^2\right),
        \end{aligned}
    \end{equation}
    where the first inequality holds by (\ref{bound for C_t}) and the last inequality holds by the definition of $k^*=\max\left\{k: \lambda_k^{(q)} \geq \frac{1}{N\gamma}\right\}$. This immediately completes the proof.
\end{proof}

\begin{lemma}[\rm A bound for variance under multiplicative quantization]
    \label{variance R2 bound multipicative}
    Under Assumption \ref{ass1}, Assumption \ref{ass: addd quantized}, Assumption \ref{ass2}, and Assumption \ref{ass3}, if the stepsize satisfies $\gamma < \frac{1}{(1+\tilde{\epsilon})\alpha_B\mathrm{tr}(\mathbf{H}^{(q)})}$, if there exist $\epsilon_d,\epsilon_l,\epsilon_p,\epsilon_a$ and $\epsilon_o$ such that for any $i\in \{d,l,p,a,o\}$, quantization $\mathcal{Q}_i$ is $\epsilon_i$-multiplicative, then
    \begin{equation*}
        \begin{aligned}
            \mathrm{variance}\leq  \frac{{\sigma_M^{(q)}}^2}{1-(1+\tilde{\epsilon})\gamma\alpha_B\mathrm{tr}(\mathbf{H}^{(q)})}\left(\frac{k^*}{N}+N\gamma^2\cdot\sum_{i>k^*}(\lambda_i^{(q)})^2\right).
        \end{aligned}
    \end{equation*}
\end{lemma}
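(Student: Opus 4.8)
The plan is to mirror, step for step, the proof of Lemma~\ref{variance R2 bound}, replacing the operator $\mathcal{M}_B^{(q)}$ (which there carries coefficient $\gamma^2$) by the same operator carrying the inflated coefficient $(1+\tilde{\epsilon})\gamma^2$, and replacing ${\sigma_G^{(q)}}^2$ by ${\sigma_M^{(q)}}^2$ throughout. Concretely, I would first derive a crude closed-form upper bound on $\mathbf{C}_t^{(M)}$ by plugging the estimate of Lemma~\ref{A bound for M_B^{(q)} C_t^{(M)}} into the one-step inequality~(\ref{initial CtM}): this turns the $\gamma^2(1+\tilde{\epsilon})\mathcal{M}_B^{(q)}\circ\mathbf{C}_{t-1}^{(M)}$ term into $\gamma^2(1+\tilde{\epsilon})\cdot\frac{\alpha_B\mathrm{tr}(\mathbf{H}^{(q)})\gamma{\sigma_M^{(q)}}^2}{1-(1+\tilde{\epsilon})\gamma\alpha_B\mathrm{tr}(\mathbf{H}^{(q)})}\mathbf{H}^{(q)}$, which, combined with the driving term $\gamma^2{\sigma_M^{(q)}}^2\mathbf{H}^{(q)}$, collapses (via $1+\frac{x}{1-x}=\frac{1}{1-x}$ with $x=(1+\tilde{\epsilon})\gamma\alpha_B\mathrm{tr}(\mathbf{H}^{(q)})$) to the single term $\frac{\gamma^2{\sigma_M^{(q)}}^2}{1-(1+\tilde{\epsilon})\gamma\alpha_B\mathrm{tr}(\mathbf{H}^{(q)})}\mathbf{H}^{(q)}$. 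Solving the resulting recursion $\mathbf{C}_t^{(M)}\preceq(\mathcal{I}-\gamma\widetilde{\mathcal{T}}^{(q)})\circ\mathbf{C}_{t-1}^{(M)}+\frac{\gamma^2{\sigma_M^{(q)}}^2}{1-(1+\tilde{\epsilon})\gamma\alpha_B\mathrm{tr}(\mathbf{H}^{(q)})}\mathbf{H}^{(q)}$, then using $(\mathbf{I}-\gamma\mathbf{H}^{(q)})^k\mathbf{H}^{(q)}(\mathbf{I}-\gamma\mathbf{H}^{(q)})^k\preceq(\mathbf{I}-\gamma\mathbf{H}^{(q)})^k\mathbf{H}^{(q)}$ together with $\sum_{k=0}^{t-1}(\mathbf{I}-\gamma\mathbf{H}^{(q)})^k\mathbf{H}^{(q)}=\gamma^{-1}(\mathbf{I}-(\mathbf{I}-\gamma\mathbf{H}^{(q)})^t)$, yields the refined bound $\mathbf{C}_t^{(M)}\preceq\frac{\gamma{\sigma_M^{(q)}}^2}{1-(1+\tilde{\epsilon})\gamma\alpha_B\mathrm{tr}(\mathbf{H}^{(q)})}\bigl(\mathbf{I}-(\mathbf{I}-\gamma\mathbf{H}^{(q)})^t\bigr)$, which is the exact analogue of~(\ref{bound for C_t}).

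With this in hand, I would substitute into the variance term of Lemma~\ref{Bias-Variance Decomposition under Multiplicative Quantization}, namely $\mathrm{variance}=\frac{1}{N^2}\sum_{t=0}^{N-1}\sum_{k=t}^{N-1}\langle(\mathbf{I}-\gamma\mathbf{H}^{(q)})^{k-t}\mathbf{H}^{(q)},\mathbf{C}_t^{(M)}\rangle$, perform the inner geometric sum in $k$ to get $\frac{1}{\gamma N^2}\sum_t\langle\mathbf{I}-(\mathbf{I}-\gamma\mathbf{H}^{(q)})^{N-t},\mathbf{C}_t^{(M)}\rangle$, insert the refined estimate for $\mathbf{C}_t^{(M)}$, and then carry out the identical eigenvalue bookkeeping as in Lemma~\ref{variance R2 bound}: diagonalize $\mathbf{H}^{(q)}$, use $1-(1-\gamma\lambda_i^{(q)})^{N-t}\le 1-(1-\gamma\lambda_i^{(q)})^N$ and $1-(1-\gamma\lambda_i^{(q)})^t\le 1-(1-\gamma\lambda_i^{(q)})^N$ to decouple the $t$-sum (contributing a factor $N$), bound $[1-(1-\gamma\lambda_i^{(q)})^N]^2\le\min\{1,\gamma^2N^2(\lambda_i^{(q)})^2\}$, and split the spectral sum at $k^*=\max\{k:\lambda_k^{(q)}\ge\frac{1}{N\gamma}\}$ to obtain $k^*+N^2\gamma^2\sum_{i>k^*}(\lambda_i^{(q)})^2$. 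Collecting constants produces exactly $\frac{{\sigma_M^{(q)}}^2}{1-(1+\tilde{\epsilon})\gamma\alpha_B\mathrm{tr}(\mathbf{H}^{(q)})}\bigl(\frac{k^*}{N}+N\gamma^2\sum_{i>k^*}(\lambda_i^{(q)})^2\bigr)$, as claimed.

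I do not expect a genuine obstacle here; the only points requiring care are (i) keeping track of the extra $(1+\tilde{\epsilon})$ factor so that it lands inside the denominator $1-(1+\tilde{\epsilon})\gamma\alpha_B\mathrm{tr}(\mathbf{H}^{(q)})$ and nowhere else, which is precisely why the hypothesis $\gamma<\frac{1}{(1+\tilde{\epsilon})\alpha_B\mathrm{tr}(\mathbf{H}^{(q)})}$ (rather than the weaker $\gamma<\frac{1}{\alpha_B\mathrm{tr}(\mathbf{H}^{(q)})}$) is required, ensuring positivity of that denominator and convergence of the geometric series invoked through Lemma~\ref{A bound for M_B^{(q)} C_t^{(M)}}; and (ii) checking that $\epsilon_i$-multiplicativity enters only through Lemma~\ref{Bias-Variance Decomposition under Multiplicative Quantization} and Lemma~\ref{A bound for M_B^{(q)} C_t^{(M)}}, so that no new structural assumption is needed. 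Everything else is a verbatim transcription of the computation in the proof of Lemma~\ref{variance R2 bound}.
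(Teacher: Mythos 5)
Your proposal is correct and matches the paper's proof essentially verbatim: the paper likewise plugs Lemma~\ref{A bound for M_B^{(q)} C_t^{(M)}} into the one-step inequality~(\ref{initial CtM}), collapses the two driving terms via $1+\tfrac{x}{1-x}=\tfrac{1}{1-x}$, solves the recursion to get the analogue of~(\ref{bound for C_t}), and then repeats the eigenvalue bookkeeping of Lemma~\ref{variance R2 bound}. Your added remarks on where the hypothesis $\gamma<\frac{1}{(1+\tilde{\epsilon})\alpha_B\mathrm{tr}(\mathbf{H}^{(q)})}$ and the multiplicativity assumption enter are accurate as well.
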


\begin{proof}
    Applying (\ref{initial CtM}), and repeating the computation in the proof of Lemma \ref{variance R2 bound},
    \begin{equation*}
        \begin{aligned}
            \mathbf{C}_t^{(M)}\preceq&(\mathcal{I}-\gamma\widetilde{\mathcal{T}}^{(q)})\circ \mathbf{C}_{t-1}^{(M)}+\gamma^2(1+\tilde{\epsilon})\mathcal{M}_B^{(q)}\circ \mathbf{C}_{t-1}^{(M)}+\gamma^2{\sigma_M^{(q)}}^2\mathbf{H}^{(q)}\\
            \preceq&(\mathcal{I}-\gamma\widetilde{\mathcal{T}}^{(q)})\circ \mathbf{C}_{t-1}^{(M)}+\gamma^2(1+\tilde{\epsilon})\frac{\alpha_B\mathrm{tr}(\mathbf{H}^{(q)})\gamma{\sigma_M^{(q)}}^2}{1-(1+\tilde{\epsilon})\gamma\alpha_B\mathrm{tr}(\mathbf{H}^{(q)})} \mathbf{H}^{(q)}+\gamma^2{\sigma_M^{(q)}}^2\mathbf{H}^{(q)}\\
            =&(\mathcal{I}-\gamma\widetilde{\mathcal{T}}^{(q)})\circ \mathbf{C}_{t-1}^{(M)}+\frac{\gamma^2{\sigma_M^{(q)}}^2\mathbf{H}^{(q)}}{1-(1+\tilde{\epsilon})\gamma\alpha_B\mathrm{tr}(\mathbf{H}^{(q)})} \mathbf{H}^{(q)}\\
            \preceq &\frac{\gamma{\sigma_M ^{(q)}}^2}{1-(1+\tilde{\epsilon})\gamma\alpha_B\mathrm{tr}(\mathbf{H}^{(q)})}\cdot\left(\mathbf{I}-(\mathbf{I}-\gamma\mathbf{H}^{(q)})^t\right),
        \end{aligned}
    \end{equation*}
    where the second inequality holds by Lemma \ref{A bound for M_B^{(q)} C_t^{(M)}} and the last inequality repeats the proof in (\ref{bound for C_t}).

    Therefore, repeating the procedure in the proof for Lemma \ref{variance R2 bound}, we directly deduce that
    \begin{equation*}
        \begin{aligned}
            \mathrm{variance}\leq  \frac{{\sigma_M^{(q)}}^2}{1-(1+\tilde{\epsilon})\gamma\alpha_B\mathrm{tr}(\mathbf{H}^{(q)})}\left(\frac{k^*}{N}+N\gamma^2\cdot\sum_{i>k^*}(\lambda_i^{(q)})^2\right),
        \end{aligned}
    \end{equation*}
    which immediately completes the proof.
\end{proof}

\begin{lemma}[\rm A bound for $R_N^{(0)}$ under general quantization]
\label{R_2 bound}
    Under Assumption \ref{ass1}, Assumption \ref{ass: addd quantized}, Assumption \ref{ass2}, and Assumption \ref{ass3}, if the stepsize satisfies $\gamma < \frac{1}{\alpha_B\mathrm{tr}(\mathbf{H}^{(q)})}$, then
    \begin{equation*}
        \begin{aligned}
            R_N^{(0)}/2 \leq &\frac{2\alpha_B\left(\|\mathbf{w}_0-{\mathbf{w}^{(q)}}^*\|_{\mathbf{I}_{0:k^*}^{(q)}}^2+N\gamma\|\mathbf{w}_0-{\mathbf{w}^{(q)}}^*\|_{\mathbf{H}_{k^*:\infty}^{(q)}}^2\right)}{N\gamma(1-\gamma\alpha_B\operatorname{tr}(\mathbf{H}^{(q)}))}\cdot\left(\frac{k^*}{N}+N\gamma^2\sum_{i>k^*}(\lambda_i^{(q)})^2\right)\\
            +& \frac{1}{\gamma^2N^2}\cdot\|\mathbf{w}_0-{\mathbf{w}^{(q)}}^*\|_{(\mathbf{H}_{0:k^*}^{(q)})^{-1}}^2+\|\mathbf{w}_0-{\mathbf{w}^{(q)}}^*\|_{\mathbf{H}_{k^*:\infty}^{(q)}}^2\\
            +& \frac{{\sigma_G^{(q)}}^2}{1-\gamma\alpha_B\mathrm{tr}(\mathbf{H}^{(q)})}\left(\frac{k^*}{N}+N\gamma^2\cdot\sum_{i>k^*}(\lambda_i^{(q)})^2\right),
        \end{aligned}
    \end{equation*}
    where $k^*=\max\left\{k: \lambda_k^{(q)} \geq \frac{1}{N\gamma}\right\}$ and
    \begin{gather*}
        {\sigma_G^{(q)}}^2=\frac{\sup_t \left\{\left\|\mathbb{E}\left[\boldsymbol{\epsilon}_t^{(o)}{\boldsymbol{\epsilon}_t^{(o)}}^\top|\mathbf{o}_t\right]+\mathbb{E}\left[\boldsymbol{\epsilon}_t^{(a)}{\boldsymbol{\epsilon}_t^{(a)}}^\top|\mathbf{a}_t\right] \right\|\right\}}{B}\\
        +\alpha_B\sup_t\mathbb{E}_{\mathbf{w}_{t-1}}\left[\mathrm{tr}\left(\mathbf{H}^{(q)} \mathbb{E}\left[\boldsymbol{\epsilon}_{t-1}^{(p)}{\boldsymbol{\epsilon}_{t-1}^{(p)}}^\top \big| \mathbf{w}_{t-1}\right]\right)\right]
        +\frac{\sigma^2}{B}.
    \end{gather*}
\end{lemma}

\begin{proof}
    The proof is immediately completed by Lemma \ref{Bias-Variance Decomposition under General Quantization}, Lemma \ref{bias R2 bound} and Lemma \ref{variance R2 bound}.
\end{proof}

\begin{lemma}[\rm A bound for $R_N^{(0)}$ under multiplicative quantization]
    \label{R_2 bound multi}
    Under Assumption \ref{ass1}, Assumption \ref{ass: addd quantized}, Assumption \ref{ass2}, and Assumption \ref{ass3}, if the stepsize satisfies $\gamma < \frac{1}{(1+\tilde{\epsilon})\alpha_B\mathrm{tr}(\mathbf{H}^{(q)})}$, if there exist $\epsilon_d,\epsilon_l,\epsilon_p,\epsilon_a$ and $\epsilon_o$ such that for any $i\in \{d,l,p,a,o\}$, quantization $\mathcal{Q}_i$ is $\epsilon_i$-multiplicative, then
    \begin{equation*}
        \begin{aligned}
            R_N^{(0)}/2\leq & \frac{2(1+\tilde{\epsilon})\alpha_B\left(\|\mathbf{w}_0-{\mathbf{w}^{(q)}}^*\|_{\mathbf{I}_{0:k^*}^{(q)}}^2+N\gamma\|\mathbf{w}_0-{\mathbf{w}^{(q)}}^*\|_{\mathbf{H}_{k^*:\infty}^{(q)}}^2\right)}{N\gamma(1-(1+\tilde{\epsilon})\gamma\alpha_B\operatorname{tr}(\mathbf{H}^{(q)}))}\cdot\left(\frac{k^*}{N}+N\gamma^2\sum_{i>k^*}(\lambda_i^{(q)})^2\right)\\
            +& \frac{1}{\gamma^2N^2}\cdot\|\mathbf{w}_0-{\mathbf{w}^{(q)}}^*\|_{(\mathbf{H}_{0:k^*}^{(q)})^{-1}}^2+\|\mathbf{w}_0-{\mathbf{w}^{(q)}}^*\|_{\mathbf{H}_{k^*:\infty}^{(q)}}^2\\
            +&\frac{{\sigma_M^{(q)}}^2}{1-(1+\tilde{\epsilon})\gamma\alpha_B\mathrm{tr}(\mathbf{H}^{(q)})}\left(\frac{k^*}{N}+N\gamma^2\cdot\sum_{i>k^*}(\lambda_i^{(q)})^2\right),
        \end{aligned}
    \end{equation*}
    where $k^*=\max\left\{k: \lambda_k^{(q)} \geq \frac{1}{N\gamma}\right\}$ and
    \begin{gather*}
        \tilde{\epsilon}=8\epsilon_o(1+\epsilon_p)(1+\epsilon_a)+4\epsilon_p+4\epsilon_a(1+\epsilon_p),\\
        {\sigma_M^{(q)}}^2=\frac{(1+4\epsilon_o)\sigma^2}{B} + \frac{\|\mathbf{w}^*\|_\mathbf{H}^2}{1+\epsilon_d}\alpha_B\left(4\epsilon_o[(1+\epsilon_p)(1+\epsilon_a)+1]+2\epsilon_a(1+\epsilon_p)+2\epsilon_p \right).
    \end{gather*}
\end{lemma}
\begin{proof}
    The proof is immediately completed by Lemma \ref{Bias-Variance Decomposition under Multiplicative Quantization}, Lemma \ref{bias R2 bound multiplicative} and Lemma \ref{variance R2 bound multipicative}.
\end{proof}

\section{Deferring Proofs}
\subsection{Proof for Theorem \ref{Theorem 4}}
\label{Proof for Theorem 4.1}
\begin{proof}
    By Lemma \ref{Refine excess risk decomposition}, (\ref{eq: b1}), (\ref{eq: b2}), (\ref{eq: c1}) and Lemma \ref{R_2 bound}, we have
    \begin{equation*}
        \begin{aligned}
            \mathbb{E}[\mathcal{E}(\overline{\mathbf{w}}_N)]\leq 2\mathrm{VarErr}+2\mathrm{BiasErr}+\mathrm{ApproxErr},
        \end{aligned}
    \end{equation*}
    where
    \begin{gather*}
        \mathrm{VarErr}=\frac{2\alpha_B\left(\frac{\|\mathbf{w}_0-{\mathbf{w}^{(q)}}^*\|_{\mathbf{I}_{0:k^*}^{(q)}}^2}{N\gamma}+\|\mathbf{w}_0-{\mathbf{w}^{(q)}}^*\|_{\mathbf{H}_{k^*:\infty}^{(q)}}^2\right)+{\sigma_G^{(q)}}^2}{1-\gamma\alpha_B\mathrm{tr}(\mathbf{H}^{(q)})}\left(\frac{k^*}{N}+N\gamma^2\cdot\sum_{i>k^*}(\lambda_i^{(q)})^2\right),\\
        \mathrm{BiasErr}=\frac{1}{\gamma^2N^2}\cdot\|\mathbf{w}_0-{\mathbf{w}^{(q)}}^*\|_{(\mathbf{H}_{0:k^*}^{(q)})^{-1}}^2+\|\mathbf{w}_0-{\mathbf{w}^{(q)}}^*\|_{\mathbf{H}_{k^*:\infty}^{(q)}}^2,\\
        \mathrm{ApproxErr}=\left\|\mathbf{w}^*\right\|_{\mathbf{D}_2^{\mathbf{H}}}^2
        +\frac{1}{2}\|\mathbf{w}^*\|_{\mathbf{D}_1^{\mathbf{H}}}^2,
    \end{gather*}
    with $\mathbf{D}=\mathbf{H}^{(q)}-\mathbf{H}$, $k^*=\max\left\{k: \lambda_k^{(q)} \geq \frac{1}{N\gamma}\right\}$,  and
    \begin{gather*}
        \mathbf{D}_2^{\mathbf{H}}=\mathbf{H}(\mathbf{H}^{(q)})^{-1}\frac{1}{N\gamma}\left(\mathbf{I}-(\mathbf{I}-\gamma\mathbf{H}^{(q)})^N\right)(\mathbf{H}^{(q)})^{-1}\mathbf{D}(\mathbf{H}^{(q)})^{-1}\mathbf{H},\ \mathbf{D}_1^{\mathbf{H}}=\mathbf{D}(\mathbf{H}^{(q)})^{-1}\mathbf{H}(\mathbf{H}^{(q)})^{-1}\mathbf{D},\\
        {\sigma_G^{(q)}}^2=\frac{\sup_t \left\{\left\|\mathbb{E}\left[\boldsymbol{\epsilon}_t^{(o)}{\boldsymbol{\epsilon}_t^{(o)}}^\top|\mathbf{o}_t\right]+\mathbb{E}\left[\boldsymbol{\epsilon}_t^{(a)}{\boldsymbol{\epsilon}_t^{(a)}}^\top|\mathbf{a}_t\right] \right\|\right\}}{B}\\
        +\alpha_B\sup_t\mathbb{E}_{\mathbf{w}_{t-1}}\left[\mathrm{tr}\left(\mathbf{H}^{(q)} \mathbb{E}\left[\boldsymbol{\epsilon}_{t-1}^{(p)}{\boldsymbol{\epsilon}_{t-1}^{(p)}}^\top \big| \mathbf{w}_{t-1}\right]\right)\right]
        +\frac{\sigma^2}{B}.
    \end{gather*}
    Let the initialization $\mathbf{w}_0=0$ completes the proof.
\end{proof}
\subsection{Proof for Theorem \ref{Theorem 1}}
\label{Proof for Theorem 4.2}
We prove a tighter excess risk bound under multiplicative quantization in this subsection:
\begin{theorem}[\rm {Multiplicative quantization}]
    \label{Theorem 1, proof}
    Under Assumption \ref{ass1}, \ref{ass: addd quantized}, \ref{ass2} and \ref{ass3}, if there exist $\epsilon_d,\epsilon_l,\epsilon_p,\epsilon_a$ and $\epsilon_o$ such that for any $i\in \{d,l,p,a,o\}$, quantization $\mathcal{Q}_i$ is $\epsilon_i$-multiplicative, and the stepsize satisfies $\gamma < \frac{1}{ \alpha_B (1+\epsilon_o)[1+\epsilon_p+\epsilon_a(1+\epsilon_p)](1+\epsilon_d)\mathrm{tr}(\mathbf{H})}$, then the excess risk can be upper bounded as follows.
    \begin{equation*}
        \mathbb{E}[\mathcal{E}(\overline{\mathbf{w}}_N)] \lesssim \mathrm{ApproxErr}+\mathrm{VarErr}+\mathrm{BiasErr},
    \end{equation*}
    where
    \begin{equation*}
        \begin{aligned}
            &\mathrm{ApproxErr}\lesssim \frac{\epsilon_d}{1+\epsilon_d}\left\|\mathbf{w}^*\right\|_\mathbf{H}^2,\quad \mathrm{BiasErr}\lesssim\frac{1}{\gamma^2N^2}\cdot\|{\mathbf{w}^{(q)}}^*\|_{(\mathbf{H}_{0:k^*}^{(q)})^{-1}}^2+\|{\mathbf{w}^{(q)}}^*\|_{\mathbf{H}_{k^*:\infty}^{(q)}}^2,\\
            &\mathrm{VarErr}\lesssim\left(\frac{k^*}{N}+N\gamma^2(1+\epsilon_d)^2\sum_{i>k^*}\lambda_i^2\right) \frac{\frac{(1+\epsilon_o)\sigma^2}{B}+\alpha_B \sigma_M^2}{1-\gamma\alpha_B(1+\epsilon_o)[1+\epsilon_p+\epsilon_a(1+\epsilon_p)](1+\epsilon_d)\mathrm{tr}\left(\mathbf{H}\right)},
        \end{aligned}
    \end{equation*}
    with 
    \begin{equation*}
        \sigma_M^2=[\epsilon_o+(1+\epsilon_o)(\epsilon_p+\epsilon_a(1+\epsilon_p))]\left\|\mathbf{w}^*\right\|_\mathbf{H}^2
    +(1+\epsilon_o)[1+\epsilon_p+\epsilon_a(1+\epsilon_p)]\left(\frac{\|{\mathbf{w}^{(q)}}^*\|_{\mathbf{I}_{0:k^*}^{(q)}}^2}{N\gamma}+\|{\mathbf{w}^{(q)}}^*\|_{\mathbf{H}_{k^*:\infty}^{(q)}}^2\right).
    \end{equation*}
\end{theorem}
\begin{proof}
    By Lemma \ref{Refine excess risk decomposition}, Lemma \ref{lem: b1}, (\ref{eq: c1}) and Lemma \ref{R_2 bound multi}, we have
    \begin{equation*}
        \begin{aligned}
            \mathbb{E}[\mathcal{E}(\overline{\mathbf{w}}_N)]\leq 2\mathrm{VarErr}+2\mathrm{BiasErr}+\mathrm{ApproxErr},
        \end{aligned}
    \end{equation*}
    where
    \begin{gather*}
        \mathrm{VarErr}=\frac{{\sigma_M^{(q)}}^2+2(1+\tilde{\epsilon})\alpha_B\left(\frac{\|\mathbf{w}_0-{\mathbf{w}^{(q)}}^*\|_{\mathbf{I}_{0:k^*}^{(q)}}^2}{N\gamma}+\|\mathbf{w}_0-{\mathbf{w}^{(q)}}^*\|_{\mathbf{H}_{k^*:\infty}^{(q)}}^2\right)}{1-(1+\tilde{\epsilon})\gamma\alpha_B\mathrm{tr}(\mathbf{H}^{(q)})}\left(\frac{k^*}{N}+N\gamma^2\cdot\sum_{i>k^*}(\lambda_i^{(q)})^2\right),\\
        \mathrm{BiasErr}=\frac{1}{\gamma^2N^2}\cdot\|\mathbf{w}_0-{\mathbf{w}^{(q)}}^*\|_{(\mathbf{H}_{0:k^*}^{(q)})^{-1}}^2+\|\mathbf{w}_0-{\mathbf{w}^{(q)}}^*\|_{\mathbf{H}_{k^*:\infty}^{(q)}}^2,\\
        \mathrm{ApproxErr}=\frac{\epsilon_d^2}{2(1+\epsilon_d)^2}\|\mathbf{w}^*\|_\mathbf{H}^2+\frac{\epsilon_d}{(1+\epsilon_d)^2}\left\|\mathbf{w}^*\right\|_\mathbf{H}^2\lesssim \frac{\epsilon_d}{1+\epsilon_d}\left\|\mathbf{w}^*\right\|_\mathbf{H}^2,
    \end{gather*}
    with $k^*=\max\left\{k: \lambda_k^{(q)} \geq \frac{1}{N\gamma}\right\}$ and
    \begin{gather*}
            \tilde{\epsilon}=8\epsilon_o(1+\epsilon_p)(1+\epsilon_a)+4\epsilon_p+4\epsilon_a(1+\epsilon_p)\lesssim \epsilon_o+(1+\epsilon_o)(\epsilon_p+\epsilon_a(1+\epsilon_p)),\\
            {\sigma_M^{(q)}}^2=\frac{(1+4\epsilon_o)\sigma^2}{B} + \frac{\|\mathbf{w}^*\|_\mathbf{H}^2}{1+\epsilon_d}\alpha_B\left(4\epsilon_o[(1+\epsilon_p)(1+\epsilon_a)+1]+2\epsilon_a(1+\epsilon_p)+2\epsilon_p \right)\\
            \lesssim \frac{(1+\epsilon_o)\sigma^2}{B} + {\|\mathbf{w}^*\|_\mathbf{H}^2}\alpha_B\left(\epsilon_o+(1+\epsilon_o)(\epsilon_p+\epsilon_a(1+\epsilon_p)) \right).
    \end{gather*}
    Let initialization $\mathbf{w}_0=0$. Regarding $\mathrm{VarErr}$, noticing that
    $1+\tilde{\epsilon}\lesssim (1+\epsilon_o)[1+\epsilon_p+\epsilon_a(1+\epsilon_p)]$,
    we have
    \begin{equation*}
        \begin{aligned}
            &{\sigma_M^{(q)}}^2+2(1+\tilde{\epsilon})\alpha_B\left(\frac{\|{\mathbf{w}^{(q)}}^*\|_{\mathbf{I}_{0:k^*}^{(q)}}^2}{N\gamma}+\|{\mathbf{w}^{(q)}}^*\|_{\mathbf{H}_{k^*:\infty}^{(q)}}^2\right)\\
            \lesssim&\frac{(1+\epsilon_o)\sigma^2}{B}+\alpha_B\left\|\mathbf{w}^*\right\|_\mathbf{H}^2\left(\epsilon_o+(1+\epsilon_o)(\epsilon_p+\epsilon_a(1+\epsilon_p)) \right)\\
            +&\alpha_B(1+\epsilon_o)[1+\epsilon_p+\epsilon_a(1+\epsilon_p)]\left(\frac{\|{\mathbf{w}^{(q)}}^*\|_{\mathbf{I}_{0:k^*}^{(q)}}^2}{N\gamma}+\|{\mathbf{w}^{(q)}}^*\|_{\mathbf{H}_{k^*:\infty}^{(q)}}^2\right).
        \end{aligned}
    \end{equation*}
    Then the proof is completed by $\mathrm{tr}(\mathbf{H}^{(q)})=(1+\epsilon_d)\mathrm{tr}(\mathbf{H})$ and $\lambda_i^{(q)}=(1+\epsilon_d)\lambda_i$.
\end{proof}
Theorem \ref{Theorem 1} can be deduced from Theorem \ref{Theorem 1, proof} by noticing that
\begin{equation*}
    \sigma_M^2\lesssim (1+\epsilon_o)[1+\epsilon_p+\epsilon_a(1+\epsilon_p)]\left\|\mathbf{w}^*\right\|_\mathbf{H}^2,
\end{equation*}
where we use
\begin{equation*}
    \frac{\|{\mathbf{w}^{(q)}}^*\|_{\mathbf{I}_{0:k^*}^{(q)}}^2}{N\gamma}+\|{\mathbf{w}^{(q)}}^*\|_{\mathbf{H}_{k^*:\infty}^{(q)}}^2\leq \|{\mathbf{w}^{(q)}}^*\|_{\mathbf{H}^{(q)}}^2=\frac{\|\mathbf{w}^*\|_\mathbf{H}^2}{1+\epsilon_d}\leq\|\mathbf{w}^*\|_\mathbf{H}^2.
\end{equation*}

\subsection{Proof for Corollary \ref{theorem new 2.5}}
\label{Proof for Theorem 4.3}
We provide a tighter excess risk bound under additive quantization in this subsection:
\begin{corollary}[\rm {Additive quantization}]
    \label{theorem new 2.5, proof}
    Under Assumption \ref{ass1}, \ref{ass: addd quantized}, \ref{ass2} and \ref{ass3}, if there exist $\epsilon_d,\epsilon_l,\epsilon_p,\epsilon_a$ and $\epsilon_o$ such that for any $i\in \{d,l,p,a,o\}$, quantization $\mathcal{Q}_i$ is $\epsilon_i$-additive, and the stepsize satisfies $\gamma < \frac{1}{\alpha_B [\mathrm{tr}(\mathbf{H})+d\epsilon_d]}$, then
    \begin{equation*}
        \mathbb{E}[\mathcal{E}(\overline{\mathbf{w}}_N)] \lesssim \mathrm{ApproxErr}+\mathrm{ VarErr}+\mathrm{BiasErr},
    \end{equation*}
    where
    \begin{align*}
            &\mathrm{ApproxErr}\lesssim \frac{\epsilon_d}{\lambda_d+\epsilon_d}\left\|\mathbf{w}^*\right\|_\mathbf{H}^2,\quad \mathrm{BiasErr}\lesssim\frac{1}{\gamma^2N^2}\cdot\|{\mathbf{w}^{(q)}}^*\|_{(\mathbf{H}_{0:k^*}^{(q)})^{-1}}^2+\|{\mathbf{w}^{(q)}}^*\|_{\mathbf{H}_{k^*:\infty}^{(q)}}^2,\\
            &\mathrm{VarErr}\lesssim \frac{\alpha_B\left(\frac{\|{\mathbf{w}^{(q)}}^*\|_{\mathbf{I}_{0:k^*}^{(q)}}^2}{N\gamma}+\|{\mathbf{w}^{(q)}}^*\|_{\mathbf{H}_{k^*:\infty}^{(q)}}^2\right)+\frac{\sigma^2+\epsilon_o+\epsilon_a}{B}+\alpha_B \epsilon_p[\mathrm{tr}(\mathbf{H})+d\epsilon_d]}{1-\gamma\alpha_B[\mathrm{tr}(\mathbf{H})+d\epsilon_d]}\left(\frac{k^*}{N}+N\gamma^2\cdot\sum_{i>k^*}(\lambda_i+\epsilon_d)^2\right).
    \end{align*}
\end{corollary}
\begin{proof}
    By Theorem \ref{Theorem 4},
    \begin{equation*}
        \mathbb{E}[\mathcal E(\overline{\mathbf{w}}_N)]\leq 2\mathrm{VarErr}+2\mathrm{BiasErr}+\mathrm{ApproxErr},
    \end{equation*}
    where
    \begin{equation*}
        \begin{aligned}
            &\mathrm{VarErr}\leq\frac{2\alpha_B\left(\frac{\|{\mathbf{w}^{(q)}}^*\|_{\mathbf{I}_{0:k^*}^{(q)}}^2}{N\gamma}+\|{\mathbf{w}^{(q)}}^*\|_{\mathbf{H}_{k^*:\infty}^{(q)}}^2\right)+{\sigma_G^{(q)}}^2}{1-\gamma\alpha_B\mathrm{tr}(\mathbf{H}^{(q)})}\left(\frac{k^*}{N}+N\gamma^2\cdot\sum_{i>k^*}(\lambda_i^{(q)})^2\right),\\
            &\mathrm{BiasErr}\leq\frac{1}{\gamma^2N^2}\cdot\|{\mathbf{w}^{(q)}}^*\|_{(\mathbf{H}_{0:k^*}^{(q)})^{-1}}^2+\|{\mathbf{w}^{(q)}}^*\|_{\mathbf{H}_{k^*:\infty}^{(q)}}^2,\\
            &\mathrm{ApproxErr}\leq\left\|\mathbf{w}^*\right\|_{\mathbf{D}(\mathbf{H}+\mathbf{D})^{-1}\mathbf{H}(\mathbf{H}+\mathbf{D})^{-1}\mathbf{D}}^2+\|\mathbf{w}^*\|_{\mathbf{D}_\mathbf{H}}^2,
        \end{aligned}
    \end{equation*}
    with ${\sigma_G^{(q)}}^2=\frac{\sigma^2+\sup_t \left\{\left\|\mathbb{E}\left[\boldsymbol{\epsilon}_t^{(o)}{\boldsymbol{\epsilon}_t^{(o)}}^\top|\mathbf{o}_t\right]+\mathbb{E}\left[\boldsymbol{\epsilon}_t^{(a)}{\boldsymbol{\epsilon}_t^{(a)}}^\top|\mathbf{a}_t\right] \right\|\right\}}{B}+\alpha_B\sup_t\mathbb{E}\left[\mathrm{tr}\left(\mathbf{H}^{(q)} \boldsymbol{\epsilon}_{t-1}^{(p)}{\boldsymbol{\epsilon}_{t-1}^{(p)}}^\top\right)\right]$ and $\mathbf{D}_{\mathbf{H}}=\mathbf{H}(\mathbf{H}^{(q)})^{-1}\frac{1}{N\gamma}\left(\mathbf{I}-(\mathbf{I}-\gamma\mathbf{H}^{(q)})^N\right)(\mathbf{H}^{(q)})^{-1}\mathbf{D}(\mathbf{H}^{(q)})^{-1}\mathbf{H}$.
    
    Under additive quantization, it holds
    \begin{equation*}
        \mathrm{tr}(\mathbf{H}^{(q)}) = \mathrm{tr}(\mathbf{H})+d\epsilon_d,\quad \sum_{i>k^*}(\lambda_i^{(q)})^2=\sum_{i>k^*}(\lambda_i+\epsilon_d)^2,
    \end{equation*}
    and
    \begin{equation*}
        {\sigma_G^{(q)}}^2=\frac{\sigma^2+\epsilon_o+\epsilon_a}{B}+\alpha_B \epsilon_p[\mathrm{tr}(\mathbf{H})+d\epsilon_d].
    \end{equation*}
    Then we have
    \begin{equation*}
        \mathrm{VarErr}\lesssim \frac{\alpha_B\left(\frac{\|{\mathbf{w}^{(q)}}^*\|_{\mathbf{I}_{0:k^*}^{(q)}}^2}{N\gamma}+\|{\mathbf{w}^{(q)}}^*\|_{\mathbf{H}_{k^*:\infty}^{(q)}}^2\right)+\frac{\sigma^2+\epsilon_o+\epsilon_a}{B}+\alpha_B \epsilon_p[\mathrm{tr}(\mathbf{H})+d\epsilon_d]}{1-\gamma\alpha_B[\mathrm{tr}(\mathbf{H})+d\epsilon_d]}\left(\frac{k^*}{N}+N\gamma^2\cdot\sum_{i>k^*}(\lambda_i+\epsilon_d)^2\right).
    \end{equation*}
    The proof is completed by Lemma \ref{lem: b2}:
    \begin{equation*}
        \mathrm{ApproxErr}\leq \frac{\epsilon_d^2}{2(\lambda_d+\epsilon_d)^2}\|\mathbf{w}^*\|_\mathbf{H}^2+\frac{\lambda_1\epsilon_d}{(\lambda_d+\epsilon_d)(\lambda_1+\epsilon_d)}\left\|\mathbf{w}^*\right\|_\mathbf{H}^2\lesssim \frac{\epsilon_d}{\lambda_d+\epsilon_d}\left\|\mathbf{w}^*\right\|_\mathbf{H}^2.
    \end{equation*}
\end{proof}
Corollary \ref{theorem new 2.5} can be deduced from Corollary \ref{theorem new 2.5, proof} by noticing that
    \begin{equation*}
        \frac{\|{\mathbf{w}^{(q)}}^*\|_{\mathbf{I}_{0:k^*}^{(q)}}^2}{N\gamma}+\|{\mathbf{w}^{(q)}}^*\|_{\mathbf{H}_{k^*:\infty}^{(q)}}^2\leq \|{\mathbf{w}^{(q)}}^*\|_{\mathbf{H}^{(q)}}^2={\mathbf{w}^*}^\top\mathbf{H}(\mathbf{H}^{(q)})^{-1}\mathbf{H}\mathbf{w}^*\leq\|\mathbf{w}^*\|_\mathbf{H}^2.
    \end{equation*}
\subsection{Proof for the Multiplicative Statement in Corollary \ref{coro:comparison}}
\label{Proof for Theorem 11}
\begin{proof}
    Recall that $k_0^*=\max\{k: \lambda_k \geq \frac{1}{N\gamma}\}$,
    \begin{equation*}
        \begin{aligned}
            R_0=&\underbrace{\left(\frac{k_0^*}{N}+N\gamma^2\cdot\sum_{i>k_0^*}\lambda_i^2\right)\frac{\alpha_B\left(\frac{1}{N\gamma}\|{\mathbf{w}}^*\|_{\mathbf{I}_{0:k_0^*}}^2+\|{\mathbf{w}}^*\|_{\mathbf{H}_{k_0^*:\infty}}^2\right)
            +\frac{\sigma^2}{B}}{1-\gamma\alpha_B\mathrm{tr}\left(\mathbf{H}\right)}}_{\rm EffectiveVar}\\
            +&\underbrace{\frac{1}{\gamma^2N^2}\cdot\|      {\mathbf{w}}^*\|_{(\mathbf{H}_{0:k_0^*})^{-1}}^2+\|{\mathbf{w}}^*\|_{\mathbf{H}_{k_0^*:\infty}}^2}_{\rm EffectiveBias},
        \end{aligned}
    \end{equation*}
    and by Theorem \ref{Theorem 1, proof},
    \begin{equation*}
        \mathbb{E}[\mathcal{E}(\overline{\mathbf{w}}_N)] \lesssim \mathrm{ApproxErr}+\mathrm{VarErr}+\mathrm{BiasErr},
    \end{equation*}
    where
    \begin{equation*}
        \begin{aligned}
            &\mathrm{ApproxErr}\lesssim \frac{\epsilon_d}{1+\epsilon_d}\left\|\mathbf{w}^*\right\|_\mathbf{H}^2,\quad \mathrm{BiasErr}\leq\frac{1}{\gamma^2N^2}\cdot\|{\mathbf{w}^{(q)}}^*\|_{(\mathbf{H}_{0:k^*}^{(q)})^{-1}}^2+\|{\mathbf{w}^{(q)}}^*\|_{\mathbf{H}_{k^*:\infty}^{(q)}}^2,\\
            &\mathrm{VarErr}\lesssim\left(\frac{k^*}{N}+N\gamma^2(1+\epsilon_d)^2\sum_{i>k^*}\lambda_i^2\right) \frac{\frac{(1+\epsilon_o)\sigma^2}{B}+\alpha_B\sigma_M^2}{1-\gamma\alpha_B(1+\epsilon_o)[1+\epsilon_p+\epsilon_a(1+\epsilon_p)](1+\epsilon_d)\mathrm{tr}\left(\mathbf{H}\right)},
        \end{aligned}
    \end{equation*}
    with 
    \begin{equation*}
        \sigma_M^2=[\epsilon_o+(1+\epsilon_o)(\epsilon_p+\epsilon_a(1+\epsilon_p))]\left\|\mathbf{w}^*\right\|_\mathbf{H}^2
        +(1+\epsilon_o)[1+\epsilon_p+\epsilon_a(1+\epsilon_p)]\left(\frac{\|{\mathbf{w}^{(q)}}^*\|_{\mathbf{I}_{0:k^*}^{(q)}}^2}{N\gamma}+\|{\mathbf{w}^{(q)}}^*\|_{\mathbf{H}_{k^*:\infty}^{(q)}}^2\right).
    \end{equation*}
    
    We then compare the upper bound of $\mathbb{E}[\mathcal{E}(\overline{\mathbf{w}}_N)]$ with $R_0$. Regarding $\mathrm{VarErr}$, we first analyze $\frac{k^*}{N}+N\gamma^2(1+\epsilon_d)^2\sum_{i>k^*}\lambda_i^2$. Note that for $k_0^*< i \leq k^*$, $\frac{1}{N\gamma(1+\epsilon_d)}\leq\lambda_i<\frac{1}{N\gamma}$, we have
    \begin{equation*}
        \begin{aligned}
            &\frac{k^*}{N}+N\gamma^2(1+\epsilon_d)^2\cdot\sum_{i>k^*}\lambda_i^2\\
            =&\frac{k_0^*}{N}+\frac{k^*-k_0^*}{N}-N\gamma^2(1+\epsilon_d)^2\cdot\sum_{k_0^*<i\leq k^*}\lambda_i^2+N\gamma^2(1+\epsilon_d)^2\cdot\sum_{i>k_0^*}\lambda_i^2\\
            \leq &\frac{k_0^*}{N}+\frac{k^*-k_0^*}{N}-N\gamma^2(1+\epsilon_d)^2(k^*-k_0^*)\frac{1}{N^2\gamma^2(1+\epsilon_d)^2}+N\gamma^2(1+\epsilon_d)^2\cdot\sum_{i>k_0^*}\lambda_i^2\\
            =&\frac{k_0^*}{N}+N\gamma^2(1+\epsilon_d)^2\cdot\sum_{i>k_0^*}\lambda_i^2.
        \end{aligned}
    \end{equation*}
    We then analyze $\frac{\|{\mathbf{w}^{(q)}}^*\|_{\mathbf{I}_{0:k^*}^{(q)}}^2}{N\gamma}+\|{\mathbf{w}^{(q)}}^*\|_{\mathbf{H}_{k^*:\infty}^{(q)}}^2$. Similarly,
    \begin{equation*}
        \begin{aligned}
            &\frac{\|{\mathbf{w}^{(q)}}^*\|_{\mathbf{I}_{0:k^*}^{(q)}}^2}{N\gamma}+\|{\mathbf{w}^{(q)}}^*\|_{\mathbf{H}_{k^*:\infty}^{(q)}}^2\\
            =&\frac{\|{\mathbf{w}^{(q)}}^*\|_{\mathbf{I}_{0:k_0^*}^{(q)}}^2}{N\gamma}+\frac{\|{\mathbf{w}^{(q)}}^*\|_{\mathbf{I}_{k_0^*:k^*}^{(q)}}^2}{N\gamma}-\|{\mathbf{w}^{(q)}}^*\|_{\mathbf{H}_{k_0^*:k^*}^{(q)}}^2+\|{\mathbf{w}^{(q)}}^*\|_{\mathbf{H}_{k_0^*:\infty}^{(q)}}^2\\
            \leq &\frac{\|{\mathbf{w}^{(q)}}^*\|_{\mathbf{I}_{0:k_0^*}^{(q)}}^2}{N\gamma}+\|{\mathbf{w}^{(q)}}^*\|_{\mathbf{H}_{k_0^*:\infty}^{(q)}}^2\\
            \leq &\frac{\|{\mathbf{w}}^*\|_{\mathbf{I}_{0:k_0^*}}^2}{N\gamma}+\|{\mathbf{w}}^*\|_{\mathbf{H}_{k_0^*:\infty}}^2.
        \end{aligned}
    \end{equation*}
    Therefore, the sufficient conditions for $\mathrm{VarErr}\lesssim \mathrm{EffectiveVar}$ are
    \begin{equation*}
        \epsilon_d\lesssim 1,\quad  \epsilon_o,\epsilon_a,\epsilon_p \lesssim  \left(\frac{\sigma^2}{B\alpha_B\|\mathbf{w}^*\|_\mathbf{H}^2}+\frac{\frac{1}{N\gamma}\|{\mathbf{w}}^*\|_{\mathbf{I}_{0:k_0^*}}^2+\|{\mathbf{w}}^*\|_{\mathbf{H}_{k_0^*:\infty}}^2}{\|\mathbf{w}^*\|_\mathbf{H}^2}\right)\wedge 1,
    \end{equation*}
    
    Secondly, we analyze $\mathrm{BiasErr}$. Similarly,
    \begin{equation}
        \label{eq: d1}
        \begin{aligned}
            & \frac{1}{\gamma^2N^2}\cdot\|{\mathbf{w}^{(q)}}^*\|_{(\mathbf{H}_{0:k^*}^{(q)})^{-1}}^2+\|{\mathbf{w}^{(q)}}^*\|_{\mathbf{H}_{k^*:\infty}^{(q)}}^2\\
            =&\frac{1}{\gamma^2N^2}\cdot\left(\|{\mathbf{w}^{(q)}}^*\|_{(\mathbf{H}_{0:k_0^*}^{(q)})^{-1}}^2+\|{\mathbf{w}^{(q)}}^*\|_{(\mathbf{H}_{k_0^*:k^*}^{(q)})^{-1}}^2\right)-\|{\mathbf{w}^{(q)}}^*\|_{\mathbf{H}_{k_0^*:k^*}^{(q)}}^2+\|{\mathbf{w}^{(q)}}^*\|_{\mathbf{H}_{k_0^*:\infty}^{(q)}}^2\\
            \leq &\frac{1}{\gamma^2N^2}\cdot\|{\mathbf{w}^{(q)}}^*\|_{(\mathbf{H}_{0:k_0^*}^{(q)})^{-1}}^2+\|{\mathbf{w}^{(q)}}^*\|_{\mathbf{H}_{k_0^*:\infty}^{(q)}}^2\\
            \leq&\frac{1}{\gamma^2N^2}\cdot\|{\mathbf{w}}^*\|_{(\mathbf{H}_{0:k_0^*})^{-1}}^2+\|{\mathbf{w}}^*\|_{\mathbf{H}_{k_0^*:\infty}}^2=\mathrm{EffectiveBias}.
        \end{aligned}
    \end{equation}
    Thirdly, the sufficient condition for $\mathrm{ApproxErr}\lesssim R_0$ is $\epsilon_d\lesssim \frac{R_0}{\|\mathbf{w}^*\|_{\mathbf{H}}^2}.$ Overall, we require
    \begin{equation*}
        \epsilon_d\lesssim 1\wedge \frac{R_0}{\|\mathbf{w}^*\|_{\mathbf{H}}^2},\quad\epsilon_o,\epsilon_a,\epsilon_p \lesssim  \left(\frac{\sigma^2}{B\alpha_B\|\mathbf{w}^*\|_\mathbf{H}^2}+\frac{\frac{1}{N\gamma}\|{\mathbf{w}}^*\|_{\mathbf{I}_{0:k_0^*}}^2+\|{\mathbf{w}}^*\|_{\mathbf{H}_{k_0^*:\infty}}^2}{\|\mathbf{w}^*\|_\mathbf{H}^2}\right)\wedge 1.
    \end{equation*}
\end{proof}

\subsection{Proof for the Additive Statement in Corollary \ref{coro:comparison}}
\label{Proof for Theorem 12}
\begin{proof}
    Recall that $k_0^*=\max\{k: \lambda_k \geq \frac{1}{N\gamma}\}$,
    \begin{equation*}
        \begin{aligned}
            R_0=&\underbrace{\left(\frac{k_0^*}{N}+N\gamma^2\cdot\sum_{i>k_0^*}\lambda_i^2\right)\frac{\alpha_B\left(\frac{1}{N\gamma}\|{\mathbf{w}}^*\|_{\mathbf{I}_{0:k_0^*}}^2+\|{\mathbf{w}}^*\|_{\mathbf{H}_{k_0^*:\infty}}^2\right)
            +\frac{\sigma^2}{B}}{1-\gamma\alpha_B\mathrm{tr}\left(\mathbf{H}\right)}}_{\rm EffectiveVar}\\
            +&\underbrace{\frac{1}{\gamma^2N^2}\cdot\|      {\mathbf{w}}^*\|_{(\mathbf{H}_{0:k_0^*})^{-1}}^2+\|{\mathbf{w}}^*\|_{\mathbf{H}_{k_0^*:\infty}}^2}_{\rm EffectiveBias},
        \end{aligned}
    \end{equation*}
    and by Corollary \ref{theorem new 2.5, proof},
    \begin{equation*}
        \mathbb{E}[\mathcal{E}(\overline{\mathbf{w}}_N)] \lesssim \mathrm{ApproxErr}+\mathrm{ VarErr}+\mathrm{BiasErr},
    \end{equation*}
    where
    \begin{align*}
        &\mathrm{ApproxErr}\lesssim \frac{\epsilon_d}{\lambda_d+\epsilon_d}\left\|\mathbf{w}^*\right\|_\mathbf{H}^2,\quad \mathrm{BiasErr}\lesssim\frac{1}{\gamma^2N^2}\cdot\|{\mathbf{w}^{(q)}}^*\|_{(\mathbf{H}_{0:k^*}^{(q)})^{-1}}^2+\|{\mathbf{w}^{(q)}}^*\|_{\mathbf{H}_{k^*:\infty}^{(q)}}^2,\\
        &\mathrm{VarErr}\lesssim \frac{\alpha_B\left(\frac{\|{\mathbf{w}^{(q)}}^*\|_{\mathbf{I}_{0:k^*}^{(q)}}^2}{N\gamma}+\|{\mathbf{w}^{(q)}}^*\|_{\mathbf{H}_{k^*:\infty}^{(q)}}^2\right)+\frac{\sigma^2+\epsilon_o+\epsilon_a}{B}+\alpha_B \epsilon_p[\mathrm{tr}(\mathbf{H})+d\epsilon_d]}{1-\gamma\alpha_B[\mathrm{tr}(\mathbf{H})+d\epsilon_d]}\left(\frac{k^*}{N}+N\gamma^2\cdot\sum_{i>k^*}(\lambda_i+\epsilon_d)^2\right).
    \end{align*}
    We then compare the upper bound of $\mathbb{E}[\mathcal{E}(\overline{\mathbf{w}}_N)]$ with $R_0$. Regarding $\mathrm{VarErr}$, we first analyze $\frac{k^*}{N}+N\gamma^2\cdot\sum_{i>k^*}(\lambda_i+\epsilon_d)^2$. Recall that for $k_0^*< i \leq k^*$, $\frac{1}{N\gamma}-\epsilon_d\leq\lambda_i<\frac{1}{N\gamma}$,
    \begin{equation*}
        \begin{aligned}
            \frac{k^*}{N}+N\gamma^2\sum_{i>k^*}(\lambda_i+\epsilon_d)^2
            =&\frac{k_0^*}{N}+\frac{k^*-k_0^*}{N}-N\gamma^2\sum_{k_0^*<i\leq k^*}(\lambda_i+\epsilon_d)^2+N\gamma^2\sum_{i>k_0^*}(\lambda_i+\epsilon_d)^2\\
            \leq&\frac{k_0^*}{N}+N\gamma^2\sum_{i>k_0^*}(\lambda_i+\epsilon_d)^2.
        \end{aligned}
    \end{equation*}
    We then analyze $\frac{\|{\mathbf{w}^{(q)}}^*\|_{\mathbf{I}_{0:k^*}^{(q)}}^2}{N\gamma}+\|{\mathbf{w}^{(q)}}^*\|_{\mathbf{H}_{k^*:\infty}^{(q)}}^2$. Similarly,
    \begin{equation}
        \label{eq: d1'}
        \begin{aligned}
            \frac{\|{\mathbf{w}^{(q)}}^*\|_{\mathbf{I}_{0:k^*}^{(q)}}^2}{N\gamma}+\|{\mathbf{w}^{(q)}}^*\|_{\mathbf{H}_{k^*:\infty}^{(q)}}^2\leq \frac{\|{\mathbf{w}}^*\|_{\mathbf{I}_{0:k_0^*}}^2}{N\gamma}+\|{\mathbf{w}}^*\|_{\mathbf{H}_{k_0^*:\infty}}^2.
        \end{aligned}
    \end{equation}
    Therefore, the sufficient conditions for $\mathrm{VarErr}\lesssim \mathrm{EffectiveVar}$ are
    \begin{gather*}
        \epsilon_d\lesssim \sqrt{\frac{\frac{k_0^*}{N}+N\gamma^2\cdot\sum_{i>k_0^*}\lambda_i^2}{N\gamma^2(d-k_0^*)}},\quad \epsilon_p\lesssim \frac{\sigma^2}{B\alpha_B[\mathrm{tr}(\mathbf{H})+d\epsilon_d]}+\frac{\frac{\|{\mathbf{w}}^*\|_{\mathbf{I}_{0:k_0^*}}^2}{N\gamma}+\|{\mathbf{w}}^*\|_{\mathbf{H}_{k_0^*:\infty}}^2}{\mathrm{tr}(\mathbf{H})+d\epsilon_d}, \\ \epsilon_a,\epsilon_o\lesssim \sigma^2+B\alpha_B\left(\frac{\|{\mathbf{w}}^*\|_{\mathbf{I}_{0:k_0^*}}^2}{N\gamma}+\|{\mathbf{w}}^*\|_{\mathbf{H}_{k_0^*:\infty}}^2\right).
    \end{gather*}

    Secondly, we analyze $\mathrm{BiasErr}$. Similarly,
    \begin{equation*}
        \begin{aligned}
            & \frac{1}{\gamma^2N^2}\cdot\|{\mathbf{w}^{(q)}}^*\|_{(\mathbf{H}_{0:k^*}^{(q)})^{-1}}^2+\|{\mathbf{w}^{(q)}}^*\|_{\mathbf{H}_{k^*:\infty}^{(q)}}^2\\
            =&\frac{1}{\gamma^2N^2}\cdot\left(\|{\mathbf{w}^{(q)}}^*\|_{(\mathbf{H}_{0:k_0^*}^{(q)})^{-1}}^2+\|{\mathbf{w}^{(q)}}^*\|_{(\mathbf{H}_{k_0^*:k^*}^{(q)})^{-1}}^2\right)-\|{\mathbf{w}^{(q)}}^*\|_{\mathbf{H}_{k_0^*:k^*}^{(q)}}^2+\|{\mathbf{w}^{(q)}}^*\|_{\mathbf{H}_{k_0^*:\infty}^{(q)}}^2\\
            \leq &\frac{1}{\gamma^2N^2}\cdot\|{\mathbf{w}^{(q)}}^*\|_{(\mathbf{H}_{0:k_0^*}^{(q)})^{-1}}^2+\|{\mathbf{w}^{(q)}}^*\|_{\mathbf{H}_{k_0^*:\infty}^{(q)}}^2\\
            \leq&\frac{1}{\gamma^2N^2}\cdot\|{\mathbf{w}}^*\|_{(\mathbf{H}_{0:k_0^*})^{-1}}^2+\|{\mathbf{w}}^*\|_{\mathbf{H}_{k_0^*:\infty}}^2=\mathrm{EffectiveBias}.
        \end{aligned}
    \end{equation*}
    Thirdly, the sufficient condition for $\mathrm{ApproxErr}\lesssim R_0$ is $\epsilon_d\lesssim \frac{R_0}{\|\mathbf{w}^*\|_{\mathbf{H}}^2}\lambda_d.$ Overall, we require
    \begin{gather*}
        \epsilon_d\lesssim \sqrt{\frac{\frac{k_0^*}{N}+N\gamma^2\cdot\sum_{i>k_0^*}\lambda_i^2}{N\gamma^2(d-k_0^*)}}\wedge \frac{R_0\lambda_d}{\|\mathbf{w}^*\|_{\mathbf{H}}^2},\quad \epsilon_a,\epsilon_o\lesssim \sigma^2+B\alpha_B\left(\frac{\|{\mathbf{w}}^*\|_{\mathbf{I}_{0:k_0^*}}^2}{N\gamma}+\|{\mathbf{w}}^*\|_{\mathbf{H}_{k_0^*:\infty}}^2\right), \\ 
        \epsilon_p\lesssim \frac{\sigma^2}{B\alpha_B[\mathrm{tr}(\mathbf{H})+d\epsilon_d]}+\frac{\frac{\|{\mathbf{w}}^*\|_{\mathbf{I}_{0:k_0^*}}^2}{N\gamma}+\|{\mathbf{w}}^*\|_{\mathbf{H}_{k_0^*:\infty}}^2}{\mathrm{tr}(\mathbf{H})+d\epsilon_d}.
    \end{gather*}
\end{proof}
\subsection{Proof for the Multiplicative Statement in Corollary \ref{coro: case study}}
\begin{proof}
    We prove by applying Theorem \ref{Theorem 1}:
    \begin{equation*}
        \mathbb{E}[\mathcal{E}(\overline{\mathbf{w}}_N)] \lesssim \mathrm{ApproxErr}+\mathrm{VarErr}+\mathrm{BiasErr},
    \end{equation*}
    where
    \begin{equation*}
        \begin{aligned}
            &\mathrm{ApproxErr}\lesssim \frac{\epsilon_d}{1+\epsilon_d}\left\|\mathbf{w}^*\right\|_\mathbf{H}^2,\quad \mathrm{BiasErr}\lesssim\frac{1}{\gamma^2N^2}\cdot\|{\mathbf{w}^{(q)}}^*\|_{(\mathbf{H}_{0:k^*}^{(q)})^{-1}}^2+\|{\mathbf{w}^{(q)}}^*\|_{\mathbf{H}_{k^*:\infty}^{(q)}}^2,\\
            &\mathrm{VarErr}\lesssim\left(\frac{k^*}{N}+N\gamma^2(1+\epsilon_d)^2\sum_{i>k^*}\lambda_i^2\right) \frac{\frac{(1+\epsilon_o)\sigma^2}{B}+\alpha_B(1+\epsilon_o) [1+\epsilon_p+\epsilon_a(1+\epsilon_p)]\left\|\mathbf{w}^*\right\|_\mathbf{H}^2}{1-\gamma\alpha_B(1+\epsilon_o)[1+\epsilon_p+\epsilon_a(1+\epsilon_p)](1+\epsilon_d)\mathrm{tr}\left(\mathbf{H}\right)}.
        \end{aligned}
    \end{equation*}

    We first deal with $\mathrm{VarErr}$ under power-law spectrum Assumption \ref{assumption 1}. Under multiplicative quantization, we can estimate $k^*$ by 
    \begin{equation*}
        (1+\epsilon_d){k^*}^{-a}\approx \frac{1}{N\gamma},
    \end{equation*}
    that is
    \begin{equation}
    \label{estimate k*}
        k^*\approx \left[N\gamma (1+\epsilon_d)\right]^{\frac{1}{a}}.
    \end{equation}
    Further, the power-law Assumption \ref{assumption 1} also implies that for any positive $k$,
    \begin{equation}
    \label{estimate tail}
        \sum_{i>k} i^{-a}\approx k^{1-a}.
    \end{equation}
    By (\ref{estimate k*}) and (\ref{estimate tail}),
    \begin{equation*}
        \begin{aligned}
            \frac{k^*}{N}+N\gamma^2(1+\epsilon_d)^2\sum_{i>k^*}\lambda_i^2\lesssim &\frac{\min\left\{d,\left[N\gamma (1+\epsilon_d)\right]^{\frac{1}{a}}+N\gamma^2(1+\epsilon_d)^2\left[N\gamma (1+\epsilon_d)\right]^{\frac{1-2a}{a}}\right\}}{N}\\
            \lesssim&\frac{\min \left\{d,\left[N\gamma (1+\epsilon_d)\right]^{\frac{1}{a}}\right\}}{N}.
        \end{aligned}
    \end{equation*}
    Moreover, under polynomial spectrum Assumption \ref{assumption 1}, 
    \begin{equation*}
        \mathrm{tr}(\mathbf{H})\eqsim 1,\quad \mathbb{E}\|\mathbf{w}^*\|_\mathbf{H}^2 \eqsim 1.
    \end{equation*}
    Therefore, under Assumption \ref{assumption 1}, by applying stepsize $\gamma < \frac{1}{2\alpha_B (1+\epsilon_o)[1+\epsilon_p+\epsilon_a(1+\epsilon_p)](1+\epsilon_d)\mathrm{tr}(\mathbf{H})}$ and taking expectation on $\mathbf{w}^*$, it holds that
    \begin{equation}
        \label{eq: d4}
        \mathbb{E}_{\mathbf{w}^*}\mathrm{VarErr}\lesssim \frac{\min \left\{d,\left[N\gamma (1+\epsilon_d)\right]^{\frac{1}{a}}\right\}}{N}(1+\epsilon_o) [1+\epsilon_p+\epsilon_a(1+\epsilon_p)],
    \end{equation}
    where we use $\sigma^2\lesssim 1$.

    We secondly deal with $\mathrm{BiasErr}$. Under Assumption \ref{assumption 1}, using (\ref{eq: d1}),
    \begin{equation}
        \label{eq: d5}
        \begin{aligned}
            \mathbb{E}_{\mathbf{w}^*}\mathrm{BiasErr}\leq&
            \mathbb{E}_{\mathbf{w}^*}\left[\frac{1}{\gamma^2N^2}\cdot\|{\mathbf{w}}^*\|_{(\mathbf{H}_{0:k_0^*})^{-1}}^2+\|{\mathbf{w}}^*\|_{\mathbf{H}_{k_0^*:\infty}}^2\right]\\
            =&\frac{1}{N^2\gamma^2}\sum_{i=1}^{k_0^*}\lambda_i^{-1}+\sum_{i>k_0^*}^{d}\lambda_i\\
            \leq& \frac{k_0^*}{N\gamma}+\sum_{i>k_0^*}^{d}\lambda_i\\
            \eqsim&\frac{k_0^*}{N\gamma}+(k_0^*)^{1-a}\\
            \lesssim&\max\left\{d^{1-a},(N\gamma)^{1/a-1}\right\}.
        \end{aligned}
    \end{equation}
    
    Therefore, together with (\ref{eq: d4}) and (\ref{eq: d5}), and taking expectation on $\mathbf{w}^*$, we have
    \begin{equation*}
        \begin{aligned}
            &\mathbb{E}[\mathcal{E}(\overline{\mathbf{w}}_N)]\lesssim  \frac{\epsilon_d}{1+\epsilon_d}+\max\left\{d^{1-a},(N\gamma)^{1/a-1}\right\}+\frac{\min \left\{d,\left[N\gamma (1+\epsilon_d)\right]^{\frac{1}{a}}\right\}}{N}(1+\epsilon_o) [1+\epsilon_p+\epsilon_a(1+\epsilon_p)].
        \end{aligned}
    \end{equation*}
    Denote $R=\mathbb{E}[\mathcal{E}(\overline{\mathbf{w}}_N)]-\frac{\epsilon_d}{1+\epsilon_d}$.
    \begin{itemize}[leftmargin=*,nosep]
        \item $d>\left[N\gamma (1+\epsilon_d)\right]^{\frac{1}{a}}$

        In this case,
        \begin{equation*}
            \begin{aligned}
                R\lesssim& (N\gamma)^{1/a-1}+\frac{\left[N\gamma (1+\epsilon_d)\right]^{\frac{1}{a}}}{N}(1+\epsilon_o) [1+\epsilon_p+\epsilon_a(1+\epsilon_p)]\\
                \lesssim& N^{1/a-1}(1+\epsilon_o) [1+\epsilon_p+\epsilon_a(1+\epsilon_p)](1+\epsilon_d)^{1/a}.
            \end{aligned}
        \end{equation*}
        \item $(N\gamma)^{1/a}<d\leq \left[N\gamma (1+\epsilon_d)\right]^{\frac{1}{a}}$

        In this case,
        \begin{equation*}
            \begin{aligned}
                R\lesssim& (N\gamma)^{1/a-1}+(1+\epsilon_o) [1+\epsilon_p+\epsilon_a(1+\epsilon_p)]\frac{d}{N}\\
                \lesssim& (N\gamma)^{1/a-1}+(1+\epsilon_o) [1+\epsilon_p+\epsilon_a(1+\epsilon_p)]\frac{\left[N\gamma (1+\epsilon_d)\right]^{\frac{1}{a}}}{N}\\
                \lesssim&N^{1/a-1}(1+\epsilon_o) [1+\epsilon_p+\epsilon_a(1+\epsilon_p)](1+\epsilon_d)^{1/a}.
            \end{aligned}
        \end{equation*}
        \item $d\leq (N\gamma)^{1/a}$

        In this case,
        \begin{equation*}
            \begin{aligned}
                R\lesssim& d^{1-a}+(1+\epsilon_o) [1+\epsilon_p+\epsilon_a(1+\epsilon_p)]\frac{d}{N}\\
                \lesssim&d^{1-a}+(1+\epsilon_o) [1+\epsilon_p+\epsilon_a(1+\epsilon_p)]N^{1/a-1}.
            \end{aligned}
        \end{equation*}
    \end{itemize}
    Overall,
    \begin{equation*}
        \mathbb{E}\left[\mathcal{E}(\overline{\mathbf{w}}_N)\right]\lesssim\frac{\epsilon_d}{1+\epsilon_d}+d^{1-a}+N^{1/a-1}(1+\epsilon_o) [1+\epsilon_p+\epsilon_a(1+\epsilon_p)](1+\epsilon_d)^{1/a}.
    \end{equation*}
\end{proof}

\subsection{Proof for the Additive Statement in Corollary \ref{coro: case study}}
\begin{proof}
    We prove by applying Corollary \ref{theorem new 2.5}:
    \begin{equation*}
        \mathbb{E}[\mathcal{E}(\overline{\mathbf{w}}_N)] \lesssim \mathrm{ApproxErr}+\mathrm{ VarErr}+\mathrm{BiasErr},
    \end{equation*}
    where
    \begin{align*}
            &\mathrm{ApproxErr}\lesssim \frac{\epsilon_d}{\lambda_d+\epsilon_d}\left\|\mathbf{w}^*\right\|_\mathbf{H}^2,\quad \mathrm{BiasErr}\lesssim\frac{1}{\gamma^2N^2}\cdot\|{\mathbf{w}^{(q)}}^*\|_{(\mathbf{H}_{0:k^*}^{(q)})^{-1}}^2+\|{\mathbf{w}^{(q)}}^*\|_{\mathbf{H}_{k^*:\infty}^{(q)}}^2,\\
            &\mathrm{VarErr}\lesssim \frac{\alpha_B\|\mathbf{w}^*\|_\mathbf{H}^2+\frac{\sigma^2+\epsilon_o+\epsilon_a}{B}+\alpha_B \epsilon_p[\mathrm{tr}(\mathbf{H})+p\epsilon_d]}{1-\gamma\alpha_B[\mathrm{tr}(\mathbf{H})+p\epsilon_d]}\left(\frac{k^*}{N}+N\gamma^2\cdot\sum_{i>k^*}(\lambda_i+\epsilon_d)^2\right).
    \end{align*}
    We first deal with $\mathrm{BiasErr}$.
    Under Assumption \ref{assumption 1}, using (\ref{eq: d1'}),
    \begin{equation}
        \label{eq: d6}
        \begin{aligned}
            \mathbb{E}_{\mathbf{w}^*}\mathrm{BiasErr}\leq&
            \mathbb{E}_{\mathbf{w}^*}\left[\frac{1}{\gamma^2N^2}\cdot\|{\mathbf{w}}^*\|_{(\mathbf{H}_{0:k_0^*})^{-1}}^2+\|{\mathbf{w}}^*\|_{\mathbf{H}_{k_0^*:\infty}}^2\right]\\
            =&\frac{1}{N^2\gamma^2}\sum_{i=1}^{k_0^*}\lambda_i^{-1}+\sum_{i>k_0^*}^{d}\lambda_i\\
            \leq& \frac{k_0^*}{N\gamma}+\sum_{i>k_0^*}^{d}\lambda_i\\
            =&\frac{k_0^*}{N\gamma}+(k_0^*)^{1-a}\\
            \lesssim&\max\left\{d^{1-a},(N\gamma)^{1/a-1}\right\}.
        \end{aligned}
    \end{equation}
    We then analyze $\mathrm{VarErr}$. If $\epsilon_d+d^{-a}\geq \frac{1}{N\gamma}$, then
    \begin{equation*}
        \begin{aligned}
            \frac{k^*}{N}+N\gamma^2\cdot\sum_{i>k^*}(\lambda_i+\epsilon_d)^2=\frac{d}{N}.
        \end{aligned}
    \end{equation*}
    Otherwise,
    \begin{equation*}
        \begin{aligned}
            &\frac{k^*}{N}+N\gamma^2\cdot\sum_{i>k^*}(\lambda_i+\epsilon_d)^2\\
            \lesssim&\frac{\left(\frac{1}{N\gamma}-\epsilon_d\right)^{-1/a}+N^2\gamma^2\left(\frac{1}{N\gamma}-\epsilon_d\right)^{-(1-2a)/a}+\epsilon_d^2N^2\gamma^2\left[d-\left(\frac{1}{N\gamma}-\epsilon_d\right)^{-1/a}\right]}{N}\\
            \lesssim&\frac{\left(\frac{1}{N\gamma}-\epsilon_d\right)^{-1/a}+\epsilon_d^2N^2\gamma^2\left[d-\left(\frac{1}{N\gamma}-\epsilon_d\right)^{-1/a}\right]}{N}.
        \end{aligned}
    \end{equation*}
    Denote $k_{\rm eff}=\left[d^{-a}\vee \left(\frac{1}{N\gamma}-\epsilon_d\right)\right]^{-1/a}$, it follows that
    \begin{equation*}
        \frac{k^*}{N}+N\gamma^2\cdot\sum_{i>k^*}(\lambda_i+\epsilon_d)^2\lesssim\frac{k_{\rm eff}+\epsilon_d^2N^2\gamma^2(d-k_{\rm eff})}{N}.
    \end{equation*}
    Hence, under Assumption \ref{assumption 1}, taking expectation on $\mathbf{w}^*$ and applying stepsize $\gamma < \frac{1}{2 \alpha_B [\mathrm{tr}(\mathbf{H})+p\epsilon_d]}$,
    \begin{equation}
        \label{eq: d8}\mathbb{E}_{\mathbf{w}^*}\mathrm{VarErr}\lesssim \left(1+\frac{\epsilon_o+\epsilon_a}{B}+\epsilon_p(1+d\epsilon_d)\right)\frac{k_{\rm eff}+\epsilon_d^2N^2\gamma^2(d-k_{\rm eff})}{N}.
    \end{equation}
    
    Therefore, together with (\ref{eq: d6}) and (\ref{eq: d8}), and taking expectation on $\mathbf{w}^*$, we have
    \begin{equation*}
        \mathbb{E}[\mathcal{E}(\overline{\mathbf{w}}_N)]\lesssim  \frac{\epsilon_d}{d^{-a}+\epsilon_d}+\max\left\{d^{1-a},(N\gamma)^{1/a-1}\right\}+\left(1+\frac{\epsilon_o+\epsilon_a}{B}+\epsilon_p(1+d\epsilon_d)\right)\frac{k_{\rm eff}+\epsilon_d^2N^2\gamma^2(d-k_{\rm eff})}{N}.
    \end{equation*}
    Denote $R=\mathbb{E}[\mathcal{E}(\overline{\mathbf{w}}_N)]-  \frac{\epsilon_d}{d^{-a}+\epsilon_d}$.
    \begin{itemize}[leftmargin=*,nosep]
        \item $d^{-a}\leq {1}/{(N\gamma)}-\epsilon_d$

        In this case, let $\epsilon_d'=d^a\epsilon_d$. Then $d^{-a}\leq {1}/{(N\gamma)}-d^{-a}\epsilon_d'$. That is, $d^{-a}\leq \frac{1}{N\gamma(1+\epsilon_d')}$.
        \begin{equation*}
            \begin{aligned}
                R\lesssim& (N\gamma)^{1/a-1}+\left(1+\frac{\epsilon_o+\epsilon_a}{B}+\epsilon_p(1+d\epsilon_d)\right)\frac{\left(\frac{1}{N\gamma}-\epsilon_d\right)^{-1/a}+\epsilon_d^2N^2\gamma^2\left(d-\left(\frac{1}{N\gamma}-\epsilon_d\right)^{-1/a}\right)}{N}\\
                =& (N\gamma)^{1/a-1}+\left(1+\frac{\epsilon_o+\epsilon_a}{B}+\epsilon_p(1+d\epsilon_d)\right)\frac{\left(\frac{1}{N\gamma}-d^{-a}\epsilon_d'\right)^{-1/a}(1-\epsilon_d^2N^2\gamma^2)+\epsilon_d^2N^2\gamma^2d}{N}\\
                \leq& (N\gamma)^{1/a-1}+\left(1+\frac{\epsilon_o+\epsilon_a}{B}+\epsilon_p(1+d\epsilon_d)\right)\frac{(1+\epsilon_d')^{1/a}(N\gamma)^{1/a}(1-\epsilon_d^2N^2\gamma^2)+\epsilon_d^2N^2\gamma^2d}{N}\\
                \leq& (N\gamma)^{1/a-1}+\left(1+\frac{\epsilon_o+\epsilon_a}{B}+\epsilon_p(1+d\epsilon_d)\right)\left[\frac{(1+\epsilon_d')^{1/a}(N\gamma)^{1/a}}{N}+\epsilon_d^2N\gamma^2d\right].
            \end{aligned}
        \end{equation*}
        We then focus on $\epsilon_d^2N\gamma^2d$. By $N\gamma \leq \frac{d^{a}}{1+\epsilon_d'}$, we have
        \begin{equation*}
            \epsilon_d^2N\gamma^2d \lesssim \frac{\epsilon_d^2}{1+\epsilon_d'}d^{1+a}=\frac{(\epsilon_d')^2}{1+\epsilon_d'}d^{1-a}.
        \end{equation*}
        Therefore,
        \begin{equation*}
            \begin{aligned}
                R\lesssim N^{1/a-1}(1+d^a\epsilon_d)^{1/a}\left(1+\frac{\epsilon_o+\epsilon_a}{B}+\epsilon_p(1+d\epsilon_d)\right)+\left(1+\frac{\epsilon_o+\epsilon_a}{B}+\epsilon_p(1+d\epsilon_d)\right)\frac{(d^a\epsilon_d)^2}{1+d^a\epsilon_d}d^{1-a}.
            \end{aligned}
        \end{equation*}
        \item $\frac{1}{N\gamma}-\epsilon_d<d^{-a}\leq \frac{1}{N\gamma}$

        In this case, $\frac{1}{N\gamma}-d^{-a}\epsilon_d'<d^{-a}$, that is, $d^{a}<N\gamma(1+\epsilon_d')$. Consequently,
        \begin{equation*}
            \begin{aligned}
                R\lesssim&(N\gamma)^{1/a-1}+\left(1+\frac{\epsilon_o+\epsilon_a}{B}+\epsilon_p(1+d\epsilon_d)\right)\frac{d}{N}\\
                \lesssim&(N\gamma)^{1/a-1}+\left(1+\frac{\epsilon_o+\epsilon_a}{B}+\epsilon_p(1+d\epsilon_d)\right)(1+\epsilon_d')^{1/a}N^{1/a-1}\\
                \lesssim&\left(1+\frac{\epsilon_o+\epsilon_a}{B}+\epsilon_p(1+d\epsilon_d)\right)(1+d^a\epsilon_d)^{1/a}N^{1/a-1}.
            \end{aligned}
        \end{equation*}
        \item $d^{-a}>\frac{1}{N\gamma}$

        In this case,
        \begin{equation*}
            \begin{aligned}
                R\lesssim&d^{1-a}+\left(1+\frac{\epsilon_o+\epsilon_a}{B}+\epsilon_p(1+d\epsilon_d)\right)\frac{d}{N}\\
                \lesssim&d^{1-a}+\left(1+\frac{\epsilon_o+\epsilon_a}{B}+\epsilon_p(1+d\epsilon_d)\right)N^{1/a-1}.
            \end{aligned}
        \end{equation*}
    \end{itemize}
    Overall,
    \begin{equation*}
        \mathbb{E}[\mathcal{E}(\overline{\mathbf{w}}_N)]\lesssim \frac{d^a\epsilon_d}{1+d^a\epsilon_d}+\left(1+\frac{\epsilon_o+\epsilon_a}{B}+\epsilon_p(1+d\epsilon_d)\right)\left[N^{\frac{1}{a}-1}(1+d^a\epsilon_d)^{\frac{1}{a}}+d^{1-a}\left(1+\frac{(d^a\epsilon_d)^2}{1+d^a\epsilon_d}\right)\right].
    \end{equation*}
\end{proof}

\section{Discussion of Assumptions}
\label{Discussion of Assumptions}
In this section, we verify Assumption \ref{ass2} and Assumption \ref{ass3} under the standard fourth moment and noise assumptions made on the full-precision data \citep{zou2023benign}.
\begin{assumption}
\label{ass2 raw data}
    Assume there exists a positive constant $\alpha_0>0$, such that for any PSD matrix $\mathbf{A}$, it holds that
    \begin{equation*}
        \mathbb{E}\left[{\mathbf{x}}{\mathbf{x}}^\top\mathbf{A}{\mathbf{x}}{\mathbf{x}}^\top\right]\preceq\alpha_0\operatorname{tr}(\mathbf{H}\mathbf{A})\mathbf{H}.
    \end{equation*}
\end{assumption}
\begin{assumption}
\label{assumption noise raw}
    Assume there exists a constant $\sigma_0^2$ such that
    \begin{equation*}
        \mathbb{E}\left[(y-\langle \mathbf{w}^*,\mathbf{x}\rangle)^2\mathbf{x}\mathbf{x}^\top\right]\preceq\sigma_0^{2}\mathbf{H}.
    \end{equation*}
\end{assumption}
We consider specific quantization schemes.
\begin{example}[\rm \textbf{Strong multiplicative quantization}]
\label{example G1}
    We consider a strong multiplicative quantization. In this case, there exist constants $\epsilon_d,\epsilon_d'$ such that
    \begin{equation*}
          \mathbb{E}\left[\boldsymbol{\epsilon}^{(d)}{\boldsymbol{\epsilon}^{(d)}}^\top\big| \mathbf{x}\right]=\epsilon_d\mathbf{x}\mathbf{x}^\top,\quad\mathbb{E}\left[\boldsymbol{\epsilon}^{(d)}{\boldsymbol{\epsilon}^{(d)}}^\top\mathbf{A}\boldsymbol{\epsilon}^{(d)}{\boldsymbol{\epsilon}^{(d)}}^\top\Big| \mathbf{x}\right]\preceq \epsilon_d' \mathbf{x}\mathbf{x}^\top \mathbf{A}\mathbf{x}\mathbf{x}^\top.
    \end{equation*}
\end{example}
\begin{example}[\rm \textbf{Strong additive quantization}]
\label{exampleG2}
    We consider a strong additive quantization. In this case, there exist constants $\epsilon_d, \epsilon_d'$ such that
    \begin{equation}                    
        \mathbb{E}\left[\boldsymbol{\epsilon}^{(d)}{\boldsymbol{\epsilon}^{(d)}}^\top\big| \mathbf{x}\right]=\epsilon_d\mathbf{I},\quad\mathbb{E}\left[\boldsymbol{\epsilon}^{(d)}{\boldsymbol{\epsilon}^{(d)}}^\top\mathbf{A}\boldsymbol{\epsilon}^{(d)}{\boldsymbol{\epsilon}^{(d)}}^\top\Big| \mathbf{x}\right]\preceq \epsilon_d' \mathrm{tr}(\mathbf{A})\mathbf{I}.
    \end{equation}
\end{example}
\subsection{Discussion of Assumption \ref{ass2}}
Under Assumption \ref{ass2 raw data}, we are ready to verify if Assumption \ref{ass2} can be satisfied. We begin by:
\begin{equation}
\label{G0}
    \begin{aligned}
        \mathbb{E}\left[{\mathbf{x}^{(q)}}{\mathbf{x}^{(q)}}^\top\mathbf{A}{\mathbf{x}^{(q)}}{\mathbf{x}^{(q)}}^\top\right]=&\mathbb{E}\left[\left({\mathbf{x}^{(q)}}^\top\mathbf{A}{\mathbf{x}^{(q)}}\right){\mathbf{x}^{(q)}}{\mathbf{x}^{(q)}}^\top\right]\\
        \preceq &2\mathbb{E}\left[\left({\mathbf{x}^{(q)}}^\top\mathbf{A}{\mathbf{x}^{(q)}}\right)({\mathbf{x}}{\mathbf{x}}^\top+\boldsymbol{\epsilon}^{(d)}{\boldsymbol{\epsilon}^{(d)}}^\top)\right]\\
        \preceq& 4\mathbb{E}\left[\left({\mathbf{x}}^\top\mathbf{A}{\mathbf{x}}+{\boldsymbol{\epsilon}^{(d)}}^\top \mathbf{A}\boldsymbol{\epsilon}^{(d)}\right)({\mathbf{x}}{\mathbf{x}}^\top+\boldsymbol{\epsilon}^{(d)}{\boldsymbol{\epsilon}^{(d)}}^\top)\right]\\
        =&4\mathbb{E}\left[\mathbf{x}{\mathbf{x}}^\top\mathbf{A}{\mathbf{x}}{\mathbf{x}}^\top\right]+4\mathbb{E}\left[\boldsymbol{\epsilon}^{(d)}{\boldsymbol{\epsilon}^{(d)}}^\top\mathbf{A}\boldsymbol{\epsilon}^{(d)}{\boldsymbol{\epsilon}^{(d)}}^\top\right]\\
        +&4\mathbb{E}\left[\left({\mathbf{x}}^\top\mathbf{A}{\mathbf{x}}\right)\boldsymbol{\epsilon}^{(d)}{\boldsymbol{\epsilon}^{(d)}}^\top\right]+4\mathbb{E}\left[\left({\boldsymbol{\epsilon}^{(d)}}^\top\mathbf{A}{\boldsymbol{\epsilon}^{(d)}}\right){\mathbf{x}}{\mathbf{x}}^\top\right].
    \end{aligned}
\end{equation}
\begin{lemma}
    Under strong multiplicative quantization \ref{example G1} and Assumption \ref{ass2 raw data}, 
    \begin{equation*}
        \begin{aligned}
            \mathbb{E}\left[{\mathbf{x}^{(q)}}{\mathbf{x}^{(q)}}^\top\mathbf{A}{\mathbf{x}^{(q)}}{\mathbf{x}^{(q)}}^\top\right]\lesssim\alpha_0{(1+\epsilon_d+\epsilon_d')}\mathrm{tr}(\mathbf{H}^{(q)}\mathbf{A})\mathbf{H}^{(q)}.
        \end{aligned}
    \end{equation*}
\end{lemma}
\begin{proof}
    We proof by (\ref{G0}).
    From Assumption \ref{ass2 raw data},
    \begin{equation}
        \label{G1}
        \mathbb{E}\left[{\mathbf{x}}{\mathbf{x}}^\top\mathbf{A}{\mathbf{x}}{\mathbf{x}}^\top\right]\preceq\alpha_0\operatorname{tr}(\mathbf{H}\mathbf{A})\mathbf{H}.
    \end{equation}
    Under strong multiplicative quantization \ref{example G1}, we have
    \begin{equation}
        \label{G2}
        \mathbb{E}\left[\left({\boldsymbol{\epsilon}^{(d)}}^\top\mathbf{A}{\boldsymbol{\epsilon}^{(d)}}\right){\mathbf{x}}{\mathbf{x}}^\top\right]=\epsilon_d\mathbb{E}\left[{\mathbf{x}}{\mathbf{x}}^\top\mathbf{A}{\mathbf{x}}{\mathbf{x}}^\top\right]\preceq\epsilon_d\alpha_0\operatorname{tr}(\mathbf{H}\mathbf{A})\mathbf{H},
    \end{equation}
    \begin{equation}
        \label{G3}
        \mathbb{E}\left[\left({\mathbf{x}}^\top\mathbf{A}{\mathbf{x}}\right)\boldsymbol{\epsilon}^{(d)}{\boldsymbol{\epsilon}^{(d)}}^\top\right]=\epsilon_d\mathbb{E}\left[{\mathbf{x}}{\mathbf{x}}^\top\mathbf{A}{\mathbf{x}}{\mathbf{x}}^\top\right]\preceq\epsilon_d\alpha_0\operatorname{tr}(\mathbf{H}\mathbf{A})\mathbf{H},
    \end{equation}
    and
    \begin{equation}
        \label{G4}
        \mathbb{E}\left[\boldsymbol{\epsilon}^{(d)}{\boldsymbol{\epsilon}^{(d)}}^\top\mathbf{A}\boldsymbol{\epsilon}^{(d)}{\boldsymbol{\epsilon}^{(d)}}^\top\right]\preceq\epsilon_d'\mathbb{E}\left[{\mathbf{x}}{\mathbf{x}}^\top\mathbf{A}{\mathbf{x}}{\mathbf{x}}^\top\right]\preceq\epsilon_d'\alpha_0\operatorname{tr}(\mathbf{H}\mathbf{A})\mathbf{H}.
    \end{equation}
    Therefore, together with (\ref{G0}), (\ref{G1}), (\ref{G2}), (\ref{G3}) and (\ref{G4}), we have
    \begin{equation*}
        \begin{aligned}
            \mathbb{E}\left[{\mathbf{x}^{(q)}}{\mathbf{x}^{(q)}}^\top\mathbf{A}{\mathbf{x}^{(q)}}{\mathbf{x}^{(q)}}^\top\right]\lesssim\alpha_0(1+\epsilon_d+\epsilon_d')\mathrm{tr}(\mathbf{H}\mathbf{A})\mathbf{H}\leq\alpha_0{(1+\epsilon_d+\epsilon_d')}\mathrm{tr}(\mathbf{H}^{(q)}\mathbf{A})\mathbf{H}^{(q)}.
        \end{aligned}
    \end{equation*}
    That is, under strong multiplicative quantization Example \ref{example G1} and fourth moment Assumption \ref{ass2 raw data} on full-precision data, Assumption \ref{ass2} is verified.
\end{proof}
\begin{lemma}
    Under strong additive quantization \ref{exampleG2} and Assumption \ref{ass2 raw data},
    \begin{equation*}
        \begin{aligned}
            \mathbb{E}\left[{\mathbf{x}^{(q)}}{\mathbf{x}^{(q)}}^\top\mathbf{A}{\mathbf{x}^{(q)}}{\mathbf{x}^{(q)}}^\top\right]\lesssim(1+\alpha_0)\left(1+\frac{\epsilon_d'}{\epsilon_d^2}\right)\mathrm{tr}(\mathbf{H}^{(q)}\mathbf{A})\mathbf{H}^{(q)}.
        \end{aligned}
    \end{equation*}
\end{lemma}
\begin{proof}
    We proof by (\ref{G0}). Under strong additive quantization \ref{exampleG2},
    \begin{equation}
        \label{G5}
        \mathbb{E}\left[\left({\boldsymbol{\epsilon}^{(d)}}^\top\mathbf{A}{\boldsymbol{\epsilon}^{(d)}}\right){\mathbf{x}}{\mathbf{x}}^\top\right]\preceq \epsilon_d\operatorname{tr}(\mathbf{A})\mathbf{H},
    \end{equation}
    \begin{equation}
        \label{G6}
        \mathbb{E}\left[\left({\mathbf{x}}^\top\mathbf{A}{\mathbf{x}}\right)\boldsymbol{\epsilon}^{(d)}{\boldsymbol{\epsilon}^{(d)}}^\top\right]\preceq \epsilon_d\operatorname{tr}(\mathbf{H}\mathbf{A})\mathbf{I},
    \end{equation}
    and
    \begin{equation}
        \label{G7}
        \mathbb{E}\left[\boldsymbol{\epsilon}^{(d)}{\boldsymbol{\epsilon}^{(d)}}^\top\mathbf{A}\boldsymbol{\epsilon}^{(d)}{\boldsymbol{\epsilon}^{(d)}}^\top\right]\preceq\epsilon_d'\operatorname{tr}(\mathbf{A})\mathbf{I}.
    \end{equation}
    
    Therefore, together with (\ref{G0}), (\ref{G5}), (\ref{G6}) and (\ref{G7}), we have
    \begin{equation*}
        \begin{aligned}
            \mathbb{E}\left[{\mathbf{x}^{(q)}}{\mathbf{x}^{(q)}}^\top\mathbf{A}{\mathbf{x}^{(q)}}{\mathbf{x}^{(q)}}^\top\right]\lesssim(1+\alpha_0)\left(1+\frac{\epsilon_d'}{\epsilon_d^2}\right)\mathrm{tr}(\mathbf{H}^{(q)}\mathbf{A})\mathbf{H}^{(q)}.
        \end{aligned}
    \end{equation*}
    That is, under strong additive quantization Example \ref{exampleG2} and fourth moment Assumption \ref{ass2 raw data} on full-precision data, Assumption \ref{ass2} is verified.
\end{proof}

\subsection{Discussion of Assumption \ref{ass3}}
Under Assumption \ref{assumption noise raw}, we are ready to verify if Assumption \ref{ass3} can be satisfied. We begin by:
\begin{equation}
\label{G8}
    \begin{aligned}
        &\mathbb{E}\left[(y^{(q)}-\langle {\mathbf{w}^{(q)}}^*,\mathbf{x}^{(q)}\rangle)^2{\mathbf{x}^{(q)}}{\mathbf{x}^{(q)}}^\top\right]\\
        = & \mathbb{E}\left[(y^{(q)}-y+y-\langle \mathbf{w}^*,\mathbf{x}\rangle+\langle \mathbf{w}^*,\mathbf{x}\rangle-\langle {\mathbf{w}^{(q)}}^*,\mathbf{x}^{(q)}\rangle)^2{\mathbf{x}^{(q)}}{\mathbf{x}^{(q)}}^\top\right]\\
        \preceq &3\mathbb{E}\left[(y^{(q)}-y)^2{\mathbf{x}^{(q)}}{\mathbf{x}^{(q)}}^\top\right]+3\mathbb{E}\left[(y-\langle \mathbf{w}^*,\mathbf{x}\rangle)^2{\mathbf{x}^{(q)}}{\mathbf{x}^{(q)}}^\top\right]\\
        +&3\mathbb{E}\left[(\langle \mathbf{w}^*,\mathbf{x}\rangle-\langle {\mathbf{w}^{(q)}}^*,\mathbf{x}^{(q)}\rangle)^2{\mathbf{x}^{(q)}}{\mathbf{x}^{(q)}}^\top\right]\\
        \preceq &3\mathbb{E}\left[(y^{(q)}-y)^2{\mathbf{x}^{(q)}}{\mathbf{x}^{(q)}}^\top\right]+3\mathbb{E}\left[(y-\langle \mathbf{w}^*,\mathbf{x}\rangle)^2{\mathbf{x}^{(q)}}{\mathbf{x}^{(q)}}^\top\right]\\
        +& 6\mathbb{E}\left[\langle {\mathbf{w}^{(q)}}^*- \mathbf{w}^*,\mathbf{x}\rangle^2{\mathbf{x}^{(q)}}{\mathbf{x}^{(q)}}^\top\right]+6\mathbb{E}\left[\langle {\mathbf{w}^{(q)}}^*,\boldsymbol{\epsilon}^{(d)}\rangle^2{\mathbf{x}^{(q)}}{\mathbf{x}^{(q)}}^\top\right].
    \end{aligned}
\end{equation}
\begin{lemma}
    Under strong multiplicative quantization \ref{example G1}, Assumption \ref{ass2 raw data}, and Assumption \ref{assumption noise raw},
    \begin{equation*}
        \begin{aligned}
            \mathbb{E}\left[(y^{(q)}-\langle {\mathbf{w}^{(q)}}^*,\mathbf{x}^{(q)}\rangle)^2{\mathbf{x}^{(q)}}{\mathbf{x}^{(q)}}^\top\right]
            \precsim \left(\sigma_0^2+\epsilon_l+\frac{1+\epsilon_d+\epsilon_d'}{1+\epsilon_d}\alpha_0\|\mathbf{w}^*\|_\mathbf{H}^2\right)\mathbf{H}^{(q)}.
        \end{aligned}
    \end{equation*}
\end{lemma}
\begin{proof}
    Regarding $\mathbb{E}\left[(y-\langle \mathbf{w}^*,\mathbf{x}\rangle)^2{\mathbf{x}^{(q)}}{\mathbf{x}^{(q)}}^\top\right]$,
    \begin{equation}
    \label{G9}
        \begin{aligned}
            \mathbb{E}\left[(y-\langle \mathbf{w}^*,\mathbf{x}\rangle)^2{\mathbf{x}^{(q)}}{\mathbf{x}^{(q)}}^\top\right] \preceq& 2\mathbb{E}\left[(y-\langle \mathbf{w}^*,\mathbf{x}\rangle)^2{\mathbf{x}}{\mathbf{x}}^\top\right]+2\mathbb{E}\left[(y-\langle \mathbf{w}^*,\mathbf{x}\rangle)^2{\boldsymbol{\epsilon}^{(d)}}{\boldsymbol{\epsilon}^{(d)}}^\top\right]\\
            \preceq & 2(1+\epsilon_d)\mathbb{E}\left[(y-\langle \mathbf{w}^*,\mathbf{x}\rangle)^2{\mathbf{x}}{\mathbf{x}}^\top\right]\\
            \preceq & 2(1+\epsilon_d)\sigma_0^2\mathbf{H},
        \end{aligned}
    \end{equation}
    where the second inequality holds by the definition of Example \ref{example G1} and the last inequality holds by Assumption \ref{assumption noise raw}. Regarding $\mathbb{E}\left[\langle {\mathbf{w}^{(q)}}^*,\boldsymbol{\epsilon}^{(d)}\rangle^2{\mathbf{x}^{(q)}}{\mathbf{x}^{(q)}}^\top\right]$,
    \begin{equation}
    \label{G10}
        \begin{aligned}
            &\mathbb{E}\left[\langle {\mathbf{w}^{(q)}}^*,\boldsymbol{\epsilon}^{(d)}\rangle^2{\mathbf{x}^{(q)}}{\mathbf{x}^{(q)}}^\top\right]\\
            =&\mathbb{E}\left[{\boldsymbol{\epsilon}^{(d)}}^\top{\mathbf{w}^{(q)}}^*{{\mathbf{w}^{(q)}}^*}^\top \boldsymbol{\epsilon}^{(d)}{\mathbf{x}^{(q)}}{\mathbf{x}^{(q)}}^\top\right]\\
            \preceq & 2\mathbb{E}\left[{\boldsymbol{\epsilon}^{(d)}}^\top{\mathbf{w}^{(q)}}^*{{\mathbf{w}^{(q)}}^*}^\top \boldsymbol{\epsilon}^{(d)}{\mathbf{x}}{\mathbf{x}}^\top\right]+2\mathbb{E}\left[{\boldsymbol{\epsilon}^{(d)}}^\top{\mathbf{w}^{(q)}}^*{{\mathbf{w}^{(q)}}^*}^\top \boldsymbol{\epsilon}^{(d)}{\boldsymbol{\epsilon}^{(d)}}{\boldsymbol{\epsilon}^{(d)}}^\top\right]\\
            \preceq & 2\epsilon_d\alpha_0\mathrm{tr}({\mathbf{w}^{(q)}}^*{{\mathbf{w}^{(q)}}^*}^\top\mathbf{H})\mathbf{H}+2\epsilon_d' \alpha_0\mathrm{tr}({\mathbf{w}^{(q)}}^*{{\mathbf{w}^{(q)}}^*}^\top\mathbf{H})\mathbf{H},
        \end{aligned}
    \end{equation}
    where the last inequality holds by the definition of Example \ref{example G1} and Assumption \ref{ass2 raw data}. Regarding the term
    \begin{equation}
    \label{G11}
        \begin{aligned}
            \mathbb{E}\left[\langle {\mathbf{w}^{(q)}}^*- \mathbf{w}^*,\mathbf{x}\rangle^2{\mathbf{x}^{(q)}}{\mathbf{x}^{(q)}}^\top\right]\preceq &2\mathbb{E}\left[\langle {\mathbf{w}^{(q)}}^*- \mathbf{w}^*,\mathbf{x}\rangle^2{\mathbf{x}}{\mathbf{x}}^\top\right]+2\mathbb{E}\left[\langle {\mathbf{w}^{(q)}}^*- \mathbf{w}^*,\mathbf{x}\rangle^2{\boldsymbol{\epsilon}^{(d)}}{\boldsymbol{\epsilon}^{(d)}}^\top\right]\\
            \preceq &2(1+\epsilon_d)\alpha_0 \mathrm{tr}\left(({\mathbf{w}^{(q)}}^*- \mathbf{w}^*)({\mathbf{w}^{(q)}}^*- \mathbf{w}^*)^\top\mathbf{H}\right)\mathbf{H},
        \end{aligned}
    \end{equation}
    where the last inequality holds by the definition of Example \ref{example G1} and Assumption \ref{ass2 raw data}. Regarding $\mathbb{E}\left[(y^{(q)}-y)^2{\mathbf{x}^{(q)}}{\mathbf{x}^{(q)}}^\top\right]$, if we further assume that there exists a constant $C$ such that $\mathbb{E}\left[y^2 \mathbf{x}\mathbf{x}^\top\right]\preceq C\mathbf{H}$, then
    \begin{equation}
    \label{G12}
        \begin{aligned}
            \mathbb{E}\left[(y^{(q)}-y)^2{\mathbf{x}^{(q)}}{\mathbf{x}^{(q)}}^\top\right]\preceq & 2\mathbb{E}\left[(y^{(q)}-y)^2{\mathbf{x}}{\mathbf{x}}^\top\right]+2\mathbb{E}\left[(y^{(q)}-y)^2{\boldsymbol{\epsilon}^{(d)}}{\boldsymbol{\epsilon}^{(d)}}^\top\right]\\
            \preceq & 2(1+\epsilon_d)\mathbb{E}\left[(y^{(q)}-y)^2\mathbf{x}\mathbf{x}^\top\right]\\
            = & 2(1+\epsilon_d)\epsilon_l \mathbb{E}[y^2\mathbf{x}\mathbf{x}^\top]\\
            \preceq & 2(1+\epsilon_d)\epsilon_lC\mathbf{H}.
        \end{aligned}
    \end{equation}
    
    Therefore, together with (\ref{G8}), (\ref{G9}), (\ref{G10}), (\ref{G11}) and (\ref{G12}), we have
    \begin{equation*}
        \begin{aligned}
            &\mathbb{E}\left[(y^{(q)}-\langle {\mathbf{w}^{(q)}}^*,\mathbf{x}^{(q)}\rangle)^2{\mathbf{x}^{(q)}}{\mathbf{x}^{(q)}}^\top\right]\\
            \precsim & (1+\epsilon_d)\sigma_0^2\mathbf{H}+(\epsilon_d+\epsilon_d')\alpha_0\left\|{\mathbf{w}^{(q)}}^*\right\|_\mathbf{H}^2\mathbf{H}
            +(1+\epsilon_d)\alpha_0\left\|{\mathbf{w}^{(q)}}^*- \mathbf{w}^*\right\|_\mathbf{H}^2\mathbf{H}+(1+\epsilon_d)\epsilon_l\mathbf{H}\\
            \precsim& \left[(1+\epsilon_d)(\sigma_0^2+\epsilon_l)+(1+\epsilon_d+\epsilon_d')\alpha_0\|\mathbf{w}^*\|_\mathbf{H}^2\right]\mathbf{H}\\
            =&\left(\sigma_0^2+\epsilon_l+\frac{1+\epsilon_d+\epsilon_d'}{1+\epsilon_d}\alpha_0\|\mathbf{w}^*\|_\mathbf{H}^2\right)\mathbf{H}^{(q)}.
        \end{aligned}
    \end{equation*}
    That is, under strong multiplicative quantization Example \ref{example G1} and fourth moment Assumption \ref{assumption noise raw} on full-precision data, Assumption \ref{ass3} is verified.
\end{proof}
\begin{lemma}
    Under strong additive quantization \ref{exampleG2}, Assumption \ref{ass2 raw data} and Assumption \ref{assumption noise raw},
    \begin{equation*}
        \begin{aligned}
            \mathbb{E}\left[(y^{(q)}-\langle {\mathbf{w}^{(q)}}^*,\mathbf{x}^{(q)}\rangle)^2{\mathbf{x}^{(q)}}{\mathbf{x}^{(q)}}^\top\right]
            \precsim\left[\sigma_0^2+\epsilon_l+\epsilon_d(1+\frac{\epsilon_d'}{\epsilon_d^2})\left\|{\mathbf{w}}^*\right\|^2+(1+\alpha_0)\left\|{\mathbf{w}}^*\right\|_\mathbf{H}^2\right]\mathbf{H}^{(q)}.
        \end{aligned}
    \end{equation*}
\end{lemma}
\begin{proof}
    Regarding $\mathbb{E}\left[(y-\langle \mathbf{w}^*,\mathbf{x}\rangle)^2{\mathbf{x}^{(q)}}{\mathbf{x}^{(q)}}^\top\right]$, if we further assume that $\mathbb{E}\left[(y-\langle \mathbf{w}^*,\mathbf{x}\rangle)^2\right]\leq \sigma_0^2$, then
    \begin{equation}
    \label{G13}
        \begin{aligned}
            \mathbb{E}\left[(y-\langle \mathbf{w}^*,\mathbf{x}\rangle)^2{\mathbf{x}^{(q)}}{\mathbf{x}^{(q)}}^\top\right] \preceq& 2\mathbb{E}\left[(y-\langle \mathbf{w}^*,\mathbf{x}\rangle)^2{\mathbf{x}}{\mathbf{x}}^\top\right]+2\mathbb{E}\left[(y-\langle \mathbf{w}^*,\mathbf{x}\rangle)^2{\boldsymbol{\epsilon}^{(d)}}{\boldsymbol{\epsilon}^{(d)}}^\top\right]\\
            \preceq & 2\sigma_0^2 \mathbf{H}+2\epsilon_d\sigma_0^2  \mathbf{I}.
        \end{aligned}
    \end{equation}
    Regarding $\mathbb{E}\left[(y^{(q)}-y)^2{\mathbf{x}^{(q)}}{\mathbf{x}^{(q)}}^\top\right]$,
    \begin{equation}
    \label{G14}
        \mathbb{E}\left[(y^{(q)}-y)^2{\mathbf{x}^{(q)}}{\mathbf{x}^{(q)}}^\top\right] \leq \epsilon_d\epsilon_l\mathbf{I}.
    \end{equation}
    Regarding $\mathbb{E}\left[\langle {\mathbf{w}^{(q)}}^*,\boldsymbol{\epsilon}^{(d)}\rangle^2{\mathbf{x}^{(q)}}{\mathbf{x}^{(q)}}^\top\right]$,
    \begin{equation}
    \label{G15}
        \begin{aligned}
            &\mathbb{E}\left[\langle {\mathbf{w}^{(q)}}^*,\boldsymbol{\epsilon}^{(d)}\rangle^2{\mathbf{x}^{(q)}}{\mathbf{x}^{(q)}}^\top\right]\\
            =&\mathbb{E}\left[{\boldsymbol{\epsilon}^{(d)}}^\top{\mathbf{w}^{(q)}}^*{{\mathbf{w}^{(q)}}^*}^\top \boldsymbol{\epsilon}^{(d)}{\mathbf{x}^{(q)}}{\mathbf{x}^{(q)}}^\top\right]\\
            \preceq & 2\mathbb{E}\left[{\boldsymbol{\epsilon}^{(d)}}^\top{\mathbf{w}^{(q)}}^*{{\mathbf{w}^{(q)}}^*}^\top \boldsymbol{\epsilon}^{(d)}{\mathbf{x}}{\mathbf{x}}^\top\right]+2\mathbb{E}\left[{\boldsymbol{\epsilon}^{(d)}}^\top{\mathbf{w}^{(q)}}^*{{\mathbf{w}^{(q)}}^*}^\top \boldsymbol{\epsilon}^{(d)}{\boldsymbol{\epsilon}^{(d)}}{\boldsymbol{\epsilon}^{(d)}}^\top\right]\\
            \preceq & 2\epsilon_d\mathrm{tr}({\mathbf{w}^{(q)}}^*{{\mathbf{w}^{(q)}}^*}^\top)\mathbf{H}+2\epsilon_d' \mathrm{tr}({\mathbf{w}^{(q)}}^*{{\mathbf{w}^{(q)}}^*}^\top)\mathbf{I}.
        \end{aligned}
    \end{equation}
    Regarding $\mathbb{E}\left[\langle {\mathbf{w}^{(q)}}^*- \mathbf{w}^*,\mathbf{x}\rangle^2{\mathbf{x}^{(q)}}{\mathbf{x}^{(q)}}^\top\right]$,
    \begin{equation}
    \label{G16}
        \begin{aligned}
            &\mathbb{E}\left[\langle {\mathbf{w}^{(q)}}^*- \mathbf{w}^*,\mathbf{x}\rangle^2{\mathbf{x}^{(q)}}{\mathbf{x}^{(q)}}^\top\right]\\
            =&\mathbb{E}\left[\mathbf{x}^\top ({\mathbf{w}^{(q)}}^*- \mathbf{w}^*)({\mathbf{w}^{(q)}}^*- \mathbf{w}^*)^\top\mathbf{x}{\mathbf{x}^{(q)}}{\mathbf{x}^{(q)}}^\top\right]\\
            \preceq & 2\mathbb{E}\left[\mathbf{x}^\top ({\mathbf{w}^{(q)}}^*- \mathbf{w}^*)({\mathbf{w}^{(q)}}^*- \mathbf{w}^*)^\top\mathbf{x}{\mathbf{x}}{\mathbf{x}}^\top\right]+2\mathbb{E}\left[\mathbf{x}^\top ({\mathbf{w}^{(q)}}^*- \mathbf{w}^*)({\mathbf{w}^{(q)}}^*- \mathbf{w}^*)^\top\mathbf{x}{\boldsymbol{\epsilon}^{(d)}}{\boldsymbol{\epsilon}^{(d)}}^\top\right]\\
            \preceq& 2\alpha_0\mathrm{tr}\left(({\mathbf{w}^{(q)}}^*- \mathbf{w}^*)({\mathbf{w}^{(q)}}^*- \mathbf{w}^*)^\top\mathbf{H}\right)\mathbf{H}+2\epsilon_d\mathrm{tr}\left(({\mathbf{w}^{(q)}}^*- \mathbf{w}^*)({\mathbf{w}^{(q)}}^*- \mathbf{w}^*)^\top\mathbf{H}\right)\mathbf{I}.
        \end{aligned}
    \end{equation}

    Therefore, together with (\ref{G8}), (\ref{G13}), (\ref{G14}), (\ref{G15}) and (\ref{G16}), we have
    \begin{equation*}
        \begin{aligned}
            &\mathbb{E}\left[(y^{(q)}-\langle {\mathbf{w}^{(q)}}^*,\mathbf{x}^{(q)}\rangle)^2{\mathbf{x}^{(q)}}{\mathbf{x}^{(q)}}^\top\right]\\
            \precsim &(\sigma_0^2+\epsilon_l)\mathbf{H}^{(q)}+\epsilon_d(1+\frac{\epsilon_d'}{\epsilon_d^2})\left\|{\mathbf{w}^{(q)}}^*\right\|^2\mathbf{H}^{(q)}+(1+\alpha_0)\left\|{\mathbf{w}^{(q)}}^*-\mathbf{w}^*\right\|_\mathbf{H}^2\mathbf{H}^{(q)}\\
            \leq& \left[\sigma_0^2+\epsilon_l+\epsilon_d(1+\frac{\epsilon_d'}{\epsilon_d^2})\left\|{\mathbf{w}}^*\right\|^2+(1+\alpha_0)\left\|{\mathbf{w}}^*\right\|_\mathbf{H}^2\right]\mathbf{H}^{(q)}.
        \end{aligned}
    \end{equation*}
    That is, under strong additive quantization Example \ref{exampleG2} and noise Assumption \ref{assumption noise raw} on full-precision data, Assumption \ref{ass3} is verified.
\end{proof}
\section{Extension to Quantized Master Weights}
\label{sec: master weight}
In the quantized SGD algorithm (\ref{eq:SGD_quantized}), the master weight maintains full precision. In this section, we demonstrate that our theoretical framework can naturally extend to the setting where the master weight is also quantized. For simplicity, we only discuss the bounds for $R_N^{(0)}$. The theoretical bounds are presented in Theorem \ref{thm: master general}, Theorem \ref{thm: master additive} and Theorem \ref{thm: master multiplicative} for general quantization, additive quantization and multiplicative quantization, respectively. These results demonstrate that when the master weights are quantized, quantized SGD requires stricter conditions on the step size to ensure convergence. Furthermore, the final excess risk bounds incorporate additional error terms, which degrades generalization performance.

We first present the algorithm and the propagation of $\mathbb{E}\left[\boldsymbol{\eta}_t\boldsymbol{\eta}_t^\top\right]$.
Specifically, we consider
\begin{align}
    \mathbf{w}_t=\mathcal{Q}_p(\mathbf{w}_{t-1})+\gamma \frac{1}{B} \mathcal{Q}_d(\mathbf{X}_t)^\top \mathcal{Q}_o\Big(\mathcal{Q}_l(\mathbf{y}_t)-\mathcal{Q}_a\big(\mathcal{Q}_d(\mathbf{X}_t)\mathcal{Q}_p(\mathbf{w}_{t-1})\big)\Big), \quad t=1,...,N.
\end{align}
When master weight is quantized, Lemma \ref{update rule for eta_t} changes to
\begin{equation}
    \label{eq: F2}
    \boldsymbol{\eta}_t = \left(\mathbf{I}-\frac{1}{B}\gamma {\mathcal{Q}_d(\mathbf{X}_t)}^\top \mathcal{Q}_d(\mathbf{X}_t) \right)\boldsymbol{\eta}_{t-1}+\gamma\frac{1}{B}{\mathcal{Q}_d(\mathbf{X}_t)}^\top \left[\boldsymbol{\xi}_t+\boldsymbol{\epsilon}_t^{(o)}-\boldsymbol{\epsilon}_t^{(a)}-\mathcal{Q}_d(\mathbf{X}_t)\boldsymbol{\epsilon}_{t-1}^{(p)}\right]+\boldsymbol{\epsilon}_{t-1}^{(p)}.
\end{equation}
In particular, the coefficient for parameter quantization error $\boldsymbol{\epsilon}_{t-1}^{(p)}$ changes from $-\frac{1}{B}\gamma {\mathcal{Q}_d(\mathbf{X}_t)}^\top \mathcal{Q}_d(\mathbf{X}_t)$ to $\mathbf{I}-\frac{1}{B}\gamma {\mathcal{Q}_d(\mathbf{X}_t)}^\top \mathcal{Q}_d(\mathbf{X}_t)$. Therefore, we can rewrite Lemma \ref{update rule for eta_t^2} and Lemma \ref{update rule for eta_t^2 multiplicative} as follows.
\begin{lemma}[\rm Update rule under general quantization with quantized master weight]
    \label{update rule for eta_t^2, master}
    Under Assumption \ref{ass1}, Assumption \ref{ass: addd quantized}, Assumption \ref{ass2}, and Assumption \ref{ass3},
    \begin{equation*}
        \mathbb{E}\left[\boldsymbol{\eta}_t\boldsymbol{\eta}_t^\top\right]\preceq 2(\mathbf{B}_t+\mathbf{C}_t),
    \end{equation*}
    where
    \begin{gather*}
            \mathbf{C}_t
            \preceq \mathbb{E}\left[\left(\mathbf{I}-\gamma\frac{1}{B} {\mathcal{Q}_d(\mathbf{X})}^\top \mathcal{Q}_d(\mathbf{X}) \right)\mathbf{C}_{t-1}\left(\mathbf{I}-\gamma\frac{1}{B} {\mathcal{Q}_d(\mathbf{X})}^\top \mathcal{Q}_d(\mathbf{X}) \right)\right]+\gamma^2 {\sigma_G^{(q)}}^2\mathbf{H}^{(q)}+2\mathbb{E}\left[\boldsymbol{\epsilon}_{t-1}^{(p)}{\boldsymbol{\epsilon}_{t-1}^{(p)}}^\top\right],\\
            \mathbf{B}_t
            = \mathbb{E}\left[\left(\mathbf{I}-\gamma\frac{1}{B} {\mathcal{Q}_d(\mathbf{X})}^\top \mathcal{Q}_d(\mathbf{X}) \right)\mathbf{B}_{t-1}\left(\mathbf{I}-\gamma\frac{1}{B} {\mathcal{Q}_d(\mathbf{X})}^\top \mathcal{Q}_d(\mathbf{X}) \right)\right],
    \end{gather*}
    with $\mathbf{C}_0=\boldsymbol{0},\ \mathbf{B}_0=\mathbb{E}\left[\boldsymbol{\eta}_0\boldsymbol{\eta}_0^\top\right]$ and
    \begin{equation*}
        \begin{aligned}
            {\sigma_G^{(q)}}^2=&\frac{\sup_t \left\{\left\|\mathbb{E}\left[\boldsymbol{\epsilon}_t^{(o)}{\boldsymbol{\epsilon}_t^{(o)}}^\top|\mathbf{o}_t\right]+\mathbb{E}\left[\boldsymbol{\epsilon}_t^{(a)}{\boldsymbol{\epsilon}_t^{(a)}}^\top|\mathbf{a}_t\right] \right\|\right\}}{B}\\
            +&2\alpha_B\sup_t\mathbb{E}_{\mathbf{w}_{t-1}}\left[\mathrm{tr}\left(\mathbf{H}^{(q)} \mathbb{E}\left[\boldsymbol{\epsilon}_{t-1}^{(p)}{\boldsymbol{\epsilon}_{t-1}^{(p)}}^\top \big| \mathbf{w}_{t-1}\right]\right)\right]
            +\frac{\sigma^2}{B},
        \end{aligned}
    \end{equation*}
\end{lemma}
\begin{proof}
    Noticing that
    \begin{equation}
        \label{eq: F3}
        \begin{aligned}
            &\mathbb{E}\left[\left(\mathbf{I}-\frac{1}{B}\gamma {\mathcal{Q}_d(\mathbf{X}_t)}^\top \mathcal{Q}_d(\mathbf{X}_t)\right)\boldsymbol{\epsilon}_{t-1}^{(p)}{\boldsymbol{\epsilon}_{t-1}^{(p)}}^\top\left(\mathbf{I}-\frac{1}{B}\gamma {\mathcal{Q}_d(\mathbf{X}_t)}^\top \mathcal{Q}_d(\mathbf{X}_t)\right)\right]\\
            \preceq&2\frac{\gamma^2}{B^2}\mathbb{E}\left[{\mathcal{Q}_d(\mathbf{X}_t)}^\top {\mathcal{Q}_d(\mathbf{X}_t)}\boldsymbol{\epsilon}_{t-1}^{(p)}{\boldsymbol{\epsilon}_{t-1}^{(p)}}^\top {\mathcal{Q}_d(\mathbf{X}_t)}^\top\mathcal{Q}_d(\mathbf{X}_t)\right]+2\mathbb{E}\left[\boldsymbol{\epsilon}_{t-1}^{(p)}{\boldsymbol{\epsilon}_{t-1}^{(p)}}^\top\right],
        \end{aligned}
    \end{equation}
    together with (\ref{eq: F2}) and Lemma \ref{update rule for eta_t^2}, we complete the proof.
\end{proof}
\begin{lemma}[\rm Update rule under multiplicative quantization with quantized master weight]
    \label{update rule for eta_t^2 multiplicative, master}
    If there exist $\epsilon_d,\epsilon_l,\epsilon_p,\epsilon_a$ and $\epsilon_o$ such that for any $i\in \{d,l,p,a,o\}$, quantization $\mathcal{Q}_i$ is $\epsilon_i$-multiplicative, then under Assumption \ref{ass1}, Assumption \ref{ass: addd quantized}, Assumption \ref{ass2}, and Assumption \ref{ass3}, it holds
    \begin{equation*}
        \mathbb{E}\left[\boldsymbol{\eta}_t\boldsymbol{\eta}_t^\top\right]\preceq 2(\mathbf{B}_t+\mathbf{C}_t),
    \end{equation*}
    where
    \begin{equation*}
        \begin{aligned}
            \mathbf{C}_t
            \preceq&\mathbb{E}\left[\left(\mathbf{I}-\frac{1}{B}\gamma {\mathcal{Q}_d(\mathbf{X})}^\top \mathcal{Q}_d(\mathbf{X}) \right)\mathbf{C}_{t-1}\left(\mathbf{I}-\frac{1}{B}\gamma {\mathcal{Q}_d(\mathbf{X})}^\top \mathcal{Q}_d(\mathbf{X}) \right)\right]+8\epsilon_p\left(\mathbf{C}_{t-1}+\mathbf{B}_{t-1}\right)\\
            +&\tilde{\epsilon}\mathbb{E}\left[\frac{\gamma}{B}{\mathcal{Q}_d(\mathbf{X})}^\top {\mathcal{Q}_d(\mathbf{X})}(\mathbf{B}_{t-1}+\mathbf{C}_{t-1})\frac{\gamma}{B}{\mathcal{Q}_d(\mathbf{X})}^\top\mathcal{Q}_d(\mathbf{X})\right]
            +\gamma^2 {\sigma_M^{(q)}}^2\mathbf{H}^{(q)}+4\epsilon_p{\mathbf{w}^{(q)}}^*({\mathbf{w}^{(q)}}^*)^\top,\\
            \mathbf{B}_t=& \mathbb{E}\left[\left(\mathbf{I}-\frac{1}{B}\gamma {\mathcal{Q}_d(\mathbf{X})}^\top \mathcal{Q}_d(\mathbf{X}) \right)\mathbf{B}_{t-1}\left(\mathbf{I}-\frac{1}{B}\gamma {\mathcal{Q}_d(\mathbf{X})}^\top \mathcal{Q}_d(\mathbf{X}) \right)\right],
        \end{aligned}
    \end{equation*}
    with
    \begin{gather*}
            \tilde{\epsilon}=8\epsilon_o(1+\epsilon_p)(1+\epsilon_a)+8\epsilon_p+4\epsilon_a(1+\epsilon_p),\\
            {\sigma_M^{(q)}}^2=\frac{(1+4\epsilon_o)\sigma^2}{B} + \frac{\|\mathbf{w}^*\|_\mathbf{H}^2}{1+\epsilon_d}\alpha_B\left(4\epsilon_o[(1+\epsilon_p)(1+\epsilon_a)+1]+2\epsilon_a(1+\epsilon_p)+4\epsilon_p \right).
    \end{gather*}
\end{lemma}
\begin{proof}
    Under multiplicative quantization,
    \begin{equation}
        \label{eq: ffds}
        \mathbb{E}\left[\boldsymbol{\epsilon}_{t-1}^{(p)}{\boldsymbol{\epsilon}_{t-1}^{(p)}}^\top\right]=\epsilon_p\mathbb{E}\left[\mathbf{w}_{t-1}\mathbf{w}_{t-1}^\top\right]\preceq2\epsilon_p\mathbb{E}\left[\boldsymbol{\eta}_{t-1}\boldsymbol{\eta}_{t-1}^\top\right]+2\epsilon_p{\mathbf{w}^{(q)}}^*({\mathbf{w}^{(q)}}^*)^\top.
    \end{equation}
    By (\ref{eq: F2}), (\ref{eq: F3}), (\ref{eq: ffds}) and Lemma \ref{update rule for eta_t^2 multiplicative},
    the proof is immediately completed.
\end{proof}

\subsection{General Quantization}
In this section, we derive upper bounds for $R_N^{(0)}$ under general quantization.
We first perform bias-variance decomposition under general quantization. By Lemma \ref{update rule for eta_t^2, master}, we have 
\begin{equation*}
    \mathbb{E}\left[\boldsymbol{\eta}_t\boldsymbol{\eta}_t^\top\right]\preceq2\left(\mathbf{B}_t+\mathbf{C}_t\right),
\end{equation*}
where
\begin{equation*}
    \mathbf{B}_t:=(\mathcal{I}-\gamma\mathcal{T}_B^{(q)})^t \circ \mathbf{B}_0, \quad \mathbf{B}_0=\mathbb{E}\left[\boldsymbol{\eta}_0 \boldsymbol{\eta}_0^\top\right],
\end{equation*}
and
\begin{equation*}
    \mathbf{C}_t := (\mathcal{I}-\gamma \mathcal{T}_B^{(q)})\circ\mathbf{C}_{t-1}+\gamma^2 {\sigma_G^{(q)}}^2\mathbf{H}^{(q)}+2\mathbb{E}\left[\boldsymbol{\epsilon}_{t-1}^{(p)}{\boldsymbol{\epsilon}_{t-1}^{(p)}}^\top\right],\quad \mathbf{C}_0=\boldsymbol{0}.
\end{equation*}

Then by Lemma \ref{Initial study of R_2},
\begin{equation}
    \label{eq: f}
    R_N^{(0)}/2\leq \underbrace{\frac{1}{N^2}\cdot\sum_{t=0}^{N-1}\sum_{k=t}^{N-1}\left\langle(\mathbf{I}-\gamma\mathbf{H}^{(q)})^{k-t}\mathbf{H}^{(q)},\mathbf{B}_t\right\rangle}_{\mathrm{bias}}+\underbrace{\frac{1}{N^2}\cdot\sum_{t=0}^{N-1}\sum_{k=t}^{N-1}\left\langle(\mathbf{I}-\gamma\mathbf{H}^{(q)})^{k-t}\mathbf{H}^{(q)},\mathbf{C}_t\right\rangle}_{\mathrm{variance}}.
\end{equation}
Noticing that the bias error when master weight is quantized is the same as the bias error when master weight maintains full precision, we then only need to derive bounds for variance error. 
Similar to (\ref{initial Ct}),
\begin{equation}
    \label{eq: ff}
    \mathbf{C}_t\preceq (\mathcal{I}-\gamma\widetilde{\mathcal{T}}^{(q)})\circ \mathbf{C}_{t-1}+\gamma^2\mathcal{M}_B^{(q)}\circ \mathbf{C}_{t-1}+\gamma^2{\sigma_G^{(q)}}^2\mathbf{H}^{(q)}+2\mathbb{E}\left[\boldsymbol{\epsilon}_{t-1}^{(p)}{\boldsymbol{\epsilon}_{t-1}^{(p)}}^\top\right].
\end{equation}

\begin{lemma}[\rm A bound for $\mathcal{M}_B^{(q)} \circ \mathbf{C}_t$ with quantized master weight]
    \label{A bound for M_B^{(q)} C_t, master}
    For $t \geq 1$, under Assumption \ref{ass1}, Assumption \ref{ass: addd quantized}, Assumption \ref{ass2}, and Assumption \ref{ass3}, if the stepsize $\gamma \leq \frac{1}{\alpha_B\mathrm{tr}(\mathbf{H}^{(q)})}$, then
    \begin{equation*}
        \mathcal{M}_B^{(q)} \circ \mathbf{C}_t \preceq \frac{\alpha_B\mathrm{tr}(\mathbf{H}^{(q)})\left(\gamma{\sigma_G^{(q)}}^2+\frac{2d\mu}{\gamma\mathrm{tr}(\mathbf{H}^{(q)})}\right)}{1-\gamma\alpha_B\mathrm{tr}(\mathbf{H}^{(q)})} \mathbf{H}^{(q)}.
    \end{equation*}
\end{lemma}
\begin{proof}
We first derive a crude bound for $\mathbf{C}_t$. Denote $\boldsymbol{\Sigma}={\sigma_G^{(q)}}^2\mathbf{H}^{(q)}+\frac{2}{\gamma^2}\sup_t\mathbb{E}\left[\boldsymbol{\epsilon}_{t-1}^{(p)}{\boldsymbol{\epsilon}_{t-1}^{(p)}}^\top\right]$. By (\ref{solution of Cinfty}),
\begin{equation*}
    \mathbf{C}_t \preceq \mathbf{C}_\infty=\gamma{\mathcal{T}_B^{(q)}}^{-1}\circ\mathbf{\Sigma}.
\end{equation*}
Applying $\widetilde{\mathcal{T}}^{(q)}$, we have
\begin{equation*}
        \begin{aligned}
            \widetilde{\mathcal{T}}^{(q)}\circ\mathbf{C}_{\infty} & =\mathcal{T}_B^{(q)}\circ\mathbf{C}_{\infty}+\gamma\mathcal{M}_B^{(q)}\circ\mathbf{C}_{\infty}-\gamma\widetilde{\mathcal{M}}^{(q)}\circ\mathbf{C}_{\infty} \\
            & =\gamma\boldsymbol{\Sigma}+\gamma\mathcal{M}_B^{(q)}\circ\mathbf{C}_\infty-\gamma\widetilde{\mathcal{M}}^{(q)}\circ\mathbf{C}_\infty\\
            & \preceq\gamma\boldsymbol{\Sigma}+\gamma\mathcal{M}_B^{(q)}\circ\mathbf{C}_\infty\\
            &=\gamma{\sigma_G^{(q)}}^2\mathbf{H}^{(q)}+\frac{2}{\gamma}\sup_t\mathbb{E}\left[\boldsymbol{\epsilon}_{t-1}^{(p)}{\boldsymbol{\epsilon}_{t-1}^{(p)}}^\top\right]+\gamma\mathcal{M}_B^{(q)}\circ\mathbf{C}_\infty.
        \end{aligned}
\end{equation*}
Applying $(\widetilde{\mathcal{T}}^{(q)})^{-1}$ and by solving recursion, we have
\begin{equation}
    \label{eq: f4}
    \begin{aligned}
        \mathbf{C}_\infty& \preceq \gamma{\sigma_G^{(q)}}^2(\widetilde{\mathcal{T}}^{(q)})^{-1}\circ\mathbf{H}^{(q)}+\frac{2}{\gamma}(\widetilde{\mathcal{T}}^{(q)})^{-1}\circ\sup_t\mathbb{E}\left[\boldsymbol{\epsilon}_{t-1}^{(p)}{\boldsymbol{\epsilon}_{t-1}^{(p)}}^\top\right]+\gamma(\widetilde{\mathcal{T}}^{(q)})^{-1}\circ\mathcal{M}_B^{(q)}\circ\mathbf{C}_\infty\\
        & \preceq\sum_{t=0}^\infty\left(\gamma(\widetilde{\mathcal{T}}^{(q)})^{-1}\circ\mathcal{M}_B^{(q)}\right)^t\circ\left(\gamma{\sigma_G^{(q)}}^2(\widetilde{\mathcal{T}}^{(q)})^{-1}\circ\mathbf{H}^{(q)}+\frac{2}{\gamma}(\widetilde{\mathcal{T}}^{(q)})^{-1}\circ\sup_t\mathbb{E}\left[\boldsymbol{\epsilon}_{t-1}^{(p)}{\boldsymbol{\epsilon}_{t-1}^{(p)}}^\top\right]\right).
    \end{aligned}
\end{equation}
By (\ref{eq: c26}),
\begin{equation}
    \label{eq: f5}
    \sum_{t=0}^\infty\left(\gamma(\widetilde{\mathcal{T}}^{(q)})^{-1}\circ\mathcal{M}_B^{(q)}\right)^t\circ\gamma{\sigma_G^{(q)}}^2(\widetilde{\mathcal{T}}^{(q)})^{-1}\circ\mathbf{H}^{(q)}\preceq \frac{\gamma{\sigma_G^{(q)}}^2}{1-\gamma\alpha_B\mathrm{tr}(\mathbf{H}^{(q)})}\mathbf{I}.
\end{equation}
Noticing that
\begin{equation*}
    \begin{aligned}
        (\widetilde{\mathcal{T}}^{(q)})^{-1}\circ\sup_t\mathbb{E}\left[\boldsymbol{\epsilon}_{t-1}^{(p)}{\boldsymbol{\epsilon}_{t-1}^{(p)}}^\top\right]=&\gamma\sum_{t=0}^\infty(\mathcal{I}-\gamma\widetilde{\mathcal{T}}^{(q)})^t\circ\sup_t\mathbb{E}\left[\boldsymbol{\epsilon}_{t-1}^{(p)}{\boldsymbol{\epsilon}_{t-1}^{(p)}}^\top\right] \\
        =&\gamma\sum_{t=0}^\infty(\mathbf{I}-\gamma\mathbf{H}^{(q)})^t\sup_t\mathbb{E}\left[\boldsymbol{\epsilon}_{t-1}^{(p)}{\boldsymbol{\epsilon}_{t-1}^{(p)}}^\top\right](\mathbf{I}- \gamma\mathbf{H}^{(q)})^t \\
        \preceq& \sup_t\left\|\mathbb{E}\left[\boldsymbol{\epsilon}_{t-1}^{(p)}{\boldsymbol{\epsilon}_{t-1}^{(p)}}^\top\right]\right\| (\mathbf{H}^{(q)})^{-1}:=\mu\cdot(\mathbf{H}^{(q)})^{-1},
    \end{aligned}
\end{equation*}
we have
\begin{equation}
    \label{eq: f6}
    \begin{aligned}
        &\sum_{t=0}^\infty\left(\gamma(\widetilde{\mathcal{T}}^{(q)})^{-1}\circ\mathcal{M}_B^{(q)}\right)^t\circ\frac{2}{\gamma}(\widetilde{\mathcal{T}}^{(q)})^{-1}\circ\mathbb{E}\left[\boldsymbol{\epsilon}_{t-1}^{(p)}{\boldsymbol{\epsilon}_{t-1}^{(p)}}^\top\right]\\
        =&\sum_{t=0}^\infty\left(\gamma(\widetilde{\mathcal{T}}^{(q)})^{-1}\circ\mathcal{M}_B^{(q)}\right)^t\circ\frac{2\mu}{\gamma}(\mathbf{H}^{(q)})^{-1}\\
        =&\sum_{t=0}^\infty\left(\gamma(\widetilde{\mathcal{T}}^{(q)})^{-1}\circ\mathcal{M}_B^{(q)}\right)^{t-1}\gamma(\widetilde{\mathcal{T}}^{(q)})^{-1}\circ\mathcal{M}_B^{(q)}\circ\frac{2\mu}{\gamma}(\mathbf{H}^{(q)})^{-1}\\
        \preceq&\sum_{t=0}^\infty\left(\gamma(\widetilde{\mathcal{T}}^{(q)})^{-1}\circ\mathcal{M}_B^{(q)}\right)^{t-1}\gamma(\widetilde{\mathcal{T}}^{(q)})^{-1}\circ\alpha_B\mathrm{tr}(\mathbf{I})\frac{2\mu}{\gamma}\mathbf{H}^{(q)}\\
        \preceq&\sum_{t=0}^\infty\left(\gamma(\widetilde{\mathcal{T}}^{(q)})^{-1}\circ\mathcal{M}_B^{(q)}\right)^{t-1}2\mu\alpha_Bd\cdot\mathbf{I}\\
        \preceq&2d\mu \alpha_B \cdot \sum_{t=0}^\infty \left(\gamma \alpha_B\mathrm{tr}(\mathbf{H}^{(q)})\right)^{t-1}\mathbf{I}\\
        =&\frac{2d\mu}{\gamma\mathrm{tr}(\mathbf{H}^{(q)})} \cdot \sum_{t=0}^\infty \left(\gamma \alpha_B\mathrm{tr}(\mathbf{H}^{(q)})\right)^{t}\mathbf{I}\\
        =&\frac{2d\mu}{\gamma\mathrm{tr}(\mathbf{H}^{(q)})} \frac{1}{1-\gamma\alpha_B\mathrm{tr}\left(\mathbf{H}^{(q)}\right)}\mathbf{I}.
    \end{aligned}
\end{equation}

Therefore, together with (\ref{eq: f4}), (\ref{eq: f5}) and (\ref{eq: f6}), we have
\begin{equation*}
    \mathbf{C}_t\preceq\mathbf{C}_\infty\preceq \frac{\gamma{\sigma_G^{(q)}}^2+\frac{2d\mu}{\gamma\mathrm{tr}(\mathbf{H}^{(q)})}}{1-\gamma\alpha_B\mathrm{tr}(\mathbf{H}^{(q)})}\mathbf{I}.
\end{equation*}
It follows that
\begin{equation*}
    \mathcal{M}_B^{(q)} \circ \mathbf{C}_t \preceq \frac{\alpha_B\mathrm{tr}(\mathbf{H}^{(q)})\left(\gamma{\sigma_G^{(q)}}^2+\frac{2d\mu}{\gamma\mathrm{tr}(\mathbf{H}^{(q)})}\right)}{1-\gamma\alpha_B\mathrm{tr}(\mathbf{H}^{(q)})} \mathbf{H}^{(q)}.
\end{equation*}
\end{proof}

Together with (\ref{eq: ff}), we can provide a refined bound for $\mathbf{C}_t$ and we are now ready to bound the variance error.

\begin{lemma}[\rm A bound for variance under general quantization with quantized master weight]
    \label{variance R2 bound, master}
    Under Assumption \ref{ass1}, Assumption \ref{ass: addd quantized}, Assumption \ref{ass2}, and Assumption \ref{ass3}, if the stepsize satisfies $\gamma < \frac{1}{\alpha_B\mathrm{tr}(\mathbf{H}^{(q)})}$, then
    \begin{equation*}
        \mathrm{variance}\leq \frac{{\sigma_G^{(q)}}^2+\frac{2\alpha_Bd\mu}{\gamma}}{1-\gamma\alpha_B\mathrm{tr}(\mathbf{H}^{(q)})}\left(\frac{k^*}{N}+N\gamma^2\cdot\sum_{i>k^*}(\lambda_i^{(q)})^2\right)+\frac{2\mu}{\gamma}\left(\sum_{i\leq k^*}\frac{1}{N\gamma\lambda_i^{(q)}}+N\gamma\sum_{i>k^*}\lambda_i^{(q)}\right),
    \end{equation*}
    where $\mu=\sup_t\left\|\mathbb{E}\left[\boldsymbol{\epsilon}_{t-1}^{(p)}{\boldsymbol{\epsilon}_{t-1}^{(p)}}^\top\right]\right\|$.
\end{lemma}
\begin{proof}
    We first provide a refined upper bound for $\mathbf{C}_t$. By (\ref{eq: ff}) and Lemma \ref{A bound for M_B^{(q)} C_t, master},
    \begin{equation*}
        \begin{aligned}
            \mathbf{C}_t \preceq &(\mathcal{I}-\gamma\widetilde{\mathcal{T}}^{(q)})\circ \mathbf{C}_{t-1}+\gamma^2\mathcal{M}_B^{(q)}\circ \mathbf{C}_{t-1}+\gamma^2{\sigma_G^{(q)}}^2\mathbf{H}^{(q)}+2\mathbb{E}\left[\boldsymbol{\epsilon}_{t-1}^{(p)}{\boldsymbol{\epsilon}_{t-1}^{(p)}}^\top\right]\\
            \preceq & (\mathcal{I}-\gamma\widetilde{\mathcal{T}}^{(q)})\circ \mathbf{C}_{t-1} + \frac{\gamma^2\alpha_B\mathrm{tr}(\mathbf{H}^{(q)})\left(\gamma{\sigma_G^{(q)}}^2+\frac{2d\mu}{\gamma\mathrm{tr}(\mathbf{H}^{(q)})}\right)}{1-\gamma\alpha_B\mathrm{tr}(\mathbf{H}^{(q)})} \mathbf{H}^{(q)}+\gamma^2{\sigma_G^{(q)}}^2\mathbf{H}^{(q)}+2\mu\mathbf{I}\\
            =& (\mathcal{I}-\gamma\widetilde{\mathcal{T}}^{(q)})\circ \mathbf{C}_{t-1} + \frac{\gamma^2{\sigma_G^{(q)}}^2+2\gamma\alpha_Bd\mu}{1-\gamma\alpha_B\mathrm{tr}(\mathbf{H}^{(q)})} \mathbf{H}^{(q)}+2\mu\mathbf{I}\\
            =&\frac{\gamma^2{\sigma_G^{(q)}}^2+2\gamma\alpha_Bd\mu}{1-\gamma\alpha_B\mathrm{tr}(\mathbf{H}^{(q)})}\cdot\sum_{k=0}^{t-1}(\mathcal{I}-\gamma\widetilde{\mathcal{T}}^{(q)})^k\circ\mathbf{H}^{(q)}+2\mu\sum_{k=0}^{t-1}(\mathcal{I}-\gamma\widetilde{\mathcal{T}}^{(q)})^k\circ\mathbf{I}\\
            =&\frac{\gamma^2{\sigma_G^{(q)}}^2+2\gamma\alpha_Bd\mu}{1-\gamma\alpha_B\mathrm{tr}(\mathbf{H}^{(q)})}\cdot\sum_{k=0}^{t-1}(\mathbf{I}-\gamma \mathbf{H}^{(q)})^k\mathbf{H}^{(q)}(\mathbf{I}-\gamma \mathbf{H}^{(q)})^k+2\mu\sum_{k=0}^{t-1}(\mathbf{I}-\gamma \mathbf{H}^{(q)})^{2k} \\
            \preceq&\frac{\gamma^2{\sigma_G^{(q)}}^2+2\gamma\alpha_Bd\mu}{1-\gamma\alpha_B\mathrm{tr}(\mathbf{H}^{(q)})}\cdot\sum_{k=0}^{t-1}(\mathbf{I}-\gamma \mathbf{H}^{(q)})^k\mathbf{H}^{(q)}+2\mu\sum_{k=0}^{t-1}(\mathbf{I}-\gamma \mathbf{H}^{(q)})^{k} \\
            =&\underbrace{\frac{\gamma{\sigma_G^{(q)}}^2+2\alpha_Bd\mu}{1-\gamma\alpha_B\mathrm{tr}(\mathbf{H}^{(q)})}\cdot\left(\mathbf{I}-(\mathbf{I}-\gamma\mathbf{H}^{(q)})^t\right)}_{\mathbf{V}_1}+\underbrace{\frac{2\mu}{\gamma}\left(\mathbf{I}-(\mathbf{I}-\gamma\mathbf{H}^{(q)})^t\right)(\mathbf{H}^{(q)})^{-1}}_{\mathbf{V}_2}.
        \end{aligned}
    \end{equation*}
    By (\ref{eq: c31}),
    \begin{equation}
        \label{eq: f8}
        \begin{aligned}
            \frac{1}{N^2}\cdot\sum_{t=0}^{N-1}\sum_{k=t}^{N-1}\left\langle(\mathbf{I}-\gamma\mathbf{H}^{(q)})^{k-t}\mathbf{H}^{(q)},\mathbf{V}_1\right\rangle\leq\frac{{\sigma_G^{(q)}}^2+\frac{2\alpha_Bd\mu}{\gamma}}{1-\gamma\alpha_B\mathrm{tr}(\mathbf{H}^{(q)})}\left(\frac{k^*}{N}+N\gamma^2\cdot\sum_{i>k^*}(\lambda_i^{(q)})^2\right).
        \end{aligned}
    \end{equation}
    Regarding $\mathbf{V}_2$,
    \begin{equation}
        \label{eq: f9}
        \begin{aligned}
            &\frac{1}{N^2}\cdot\sum_{t=0}^{N-1}\sum_{k=t}^{N-1}\left\langle(\mathbf{I}-\gamma\mathbf{H}^{(q)})^{k-t}\mathbf{H}^{(q)},\mathbf{V}_2\right\rangle\\
            =&\frac{2\mu}{\gamma^2 N^2}\cdot\sum_{t=0}^{N-1}\left\langle\mathbf{I}-(\mathbf{I}-\gamma\mathbf{H}^{(q)})^{N-t},\left(\mathbf{I}-(\mathbf{I}-\gamma\mathbf{H}^{(q)})^t\right)(\mathbf{H}^{(q)})^{-1}\right\rangle\\
            =&\frac{2\mu}{\gamma^2 N^2}\sum_i\sum_{t=0}^{N-1}\left[1-(1-\gamma\lambda_i^{(q)})^{N-t}\right]\left[1-(1-\gamma\lambda_i^{(q)})^t\right]\frac{1}{\lambda_i^{(q)}}\\
            \leq &\frac{2\mu}{\gamma^2 N}\sum_i \min\left\{\frac{1}{\lambda_i^{(q)}},\gamma^2 N^2\lambda_i^{(q)}\right\}\\
            \leq&\frac{2\mu}{\gamma}\left(\sum_{i\leq k^*}\frac{1}{N\gamma\lambda_i^{(q)}}+N\gamma\sum_{i>k^*}\lambda_i^{(q)}\right).
        \end{aligned}
    \end{equation}
    Together with (\ref{eq: f}), (\ref{eq: f8}) and (\ref{eq: f9}), we have
    \begin{equation*}
        \mathrm{variance}\leq \frac{{\sigma_G^{(q)}}^2+\frac{2\alpha_Bd\mu}{\gamma}}{1-\gamma\alpha_B\mathrm{tr}(\mathbf{H}^{(q)})}\left(\frac{k^*}{N}+N\gamma^2\cdot\sum_{i>k^*}(\lambda_i^{(q)})^2\right)+2\frac{\mu}{\gamma}\left(\sum_{i\leq k^*}\frac{1}{N\gamma\lambda_i^{(q)}}+N\gamma\sum_{i>k^*}\lambda_i^{(q)}\right),
    \end{equation*}
    where $\mu=\sup_t\left\|\mathbb{E}\left[\boldsymbol{\epsilon}_{t-1}^{(p)}{\boldsymbol{\epsilon}_{t-1}^{(p)}}^\top\right]\right\|$.
\end{proof}

\begin{theorem}[\rm A bound for $R_N^{(0)}$ under general quantization with quantized master weight]
    \label{thm: master general}
    Under Assumption \ref{ass1}, Assumption \ref{ass: addd quantized}, Assumption \ref{ass2}, and Assumption \ref{ass3}, if the stepsize satisfies $\gamma < \frac{1}{\alpha_B\mathrm{tr}(\mathbf{H}^{(q)})}$, then
    \begin{equation*}
        \begin{aligned}
            R_N^{(0)}/2 \leq &\frac{2\alpha_B\left(\|\mathbf{w}_0-{\mathbf{w}^{(q)}}^*\|_{\mathbf{I}_{0:k^*}^{(q)}}^2+N\gamma\|\mathbf{w}_0-{\mathbf{w}^{(q)}}^*\|_{\mathbf{H}_{k^*:\infty}^{(q)}}^2\right)}{N\gamma(1-\gamma\alpha_B\operatorname{tr}(\mathbf{H}^{(q)}))}\cdot\left(\frac{k^*}{N}+N\gamma^2\sum_{i>k^*}(\lambda_i^{(q)})^2\right)\\
            +& \frac{1}{\gamma^2N^2}\cdot\|\mathbf{w}_0-{\mathbf{w}^{(q)}}^*\|_{(\mathbf{H}_{0:k^*}^{(q)})^{-1}}^2+\|\mathbf{w}_0-{\mathbf{w}^{(q)}}^*\|_{\mathbf{H}_{k^*:\infty}^{(q)}}^2\\
            +& \frac{{\sigma_G^{(q)}}^2+\frac{2\alpha_Bd\mu}{\gamma}}{1-\gamma\alpha_B\mathrm{tr}(\mathbf{H}^{(q)})}\left(\frac{k^*}{N}+N\gamma^2\cdot\sum_{i>k^*}(\lambda_i^{(q)})^2\right)+\frac{2\mu}{\gamma}\left(\sum_{i\leq k^*}\frac{1}{N\gamma\lambda_i^{(q)}}+N\gamma\sum_{i>k^*}\lambda_i^{(q)}\right),
        \end{aligned}
    \end{equation*}
    where $k^*=\max\left\{k: \lambda_k^{(q)} \geq \frac{1}{N\gamma}\right\}$, $\mu=\sup_t\left\|\mathbb{E}\left[\boldsymbol{\epsilon}_{t-1}^{(p)}{\boldsymbol{\epsilon}_{t-1}^{(p)}}^\top\right]\right\|$ and
    \begin{gather*}
        {\sigma_G^{(q)}}^2=\frac{\sup_t \left\{\left\|\mathbb{E}\left[\boldsymbol{\epsilon}_t^{(o)}{\boldsymbol{\epsilon}_t^{(o)}}^\top|\mathbf{o}_t\right]+\mathbb{E}\left[\boldsymbol{\epsilon}_t^{(a)}{\boldsymbol{\epsilon}_t^{(a)}}^\top|\mathbf{a}_t\right] \right\|\right\}}{B}\\
        +2\alpha_B\sup_t\mathbb{E}_{\mathbf{w}_{t-1}}\left[\mathrm{tr}\left(\mathbf{H}^{(q)} \mathbb{E}\left[\boldsymbol{\epsilon}_{t-1}^{(p)}{\boldsymbol{\epsilon}_{t-1}^{(p)}}^\top \big| \mathbf{w}_{t-1}\right]\right)\right]
        +\frac{\sigma^2}{B}.
    \end{gather*}
\end{theorem}
\begin{proof}
    The proof is completed by (\ref{eq: f}), Lemma \ref{bias R2 bound} and Lemma \ref{variance R2 bound, master}.
\end{proof}
Next, we deduce an upper bound for $R_N^{(0)}$ under additive quantization from Theorem \ref{thm: master general}.
\subsection{Additive Quantization}
\begin{theorem}
    [\rm A bound for $R_N^{(0)}$ under additive quantization with quantized master weight]
    \label{thm: master additive}
    Under Assumption \ref{ass1}, Assumption \ref{ass: addd quantized}, Assumption \ref{ass2}, and Assumption \ref{ass3}, if there exist $\epsilon_d,\epsilon_l,\epsilon_p,\epsilon_a$ and $\epsilon_o$ such that for any $i\in \{d,l,p,a,o\}$, quantization $\mathcal{Q}_i$ is $\epsilon_i$-additive, if the stepsize satisfies $\gamma < \frac{1}{\alpha_B[\mathrm{tr}(\mathbf{H})+d\epsilon_d]}$, then
    \begin{equation*}
        \begin{aligned}
            R_N^{(0)}/2 \leq &\frac{2\alpha_B\left(\|\mathbf{w}_0-{\mathbf{w}^{(q)}}^*\|_{\mathbf{I}_{0:k^*}^{(q)}}^2+N\gamma\|\mathbf{w}_0-{\mathbf{w}^{(q)}}^*\|_{\mathbf{H}_{k^*:\infty}^{(q)}}^2\right)}{N\gamma(1-\gamma\alpha_B[\operatorname{tr}(\mathbf{H})+d\epsilon_d])}\cdot\left(\frac{k^*}{N}+N\gamma^2\sum_{i>k^*}(\lambda_i+\epsilon_d)^2\right)\\
            +& \frac{1}{\gamma^2N^2}\cdot\|\mathbf{w}_0-{\mathbf{w}^{(q)}}^*\|_{(\mathbf{H}_{0:k^*}^{(q)})^{-1}}^2+\|\mathbf{w}_0-{\mathbf{w}^{(q)}}^*\|_{\mathbf{H}_{k^*:\infty}^{(q)}}^2\\
            +& \frac{\frac{\sigma^2+\epsilon_o+\epsilon_a}{B}+2\epsilon_p\alpha_B\left(\frac{d}{\gamma}+\mathrm{tr}(\mathbf{H})+d\epsilon_d\right)}{1-\gamma\alpha_B[\mathrm{tr}(\mathbf{H})+d\epsilon_d]}\left(\frac{k^*}{N}+N\gamma^2\cdot\sum_{i>k^*}(\lambda_i+\epsilon_d)^2\right)\\
            +&\frac{2\epsilon_p}{\gamma}\left(\sum_{i\leq k^*}\frac{1}{N\gamma(\lambda_i+\epsilon
            _d)}+N\gamma\sum_{i>k^*}(\lambda_i+\epsilon
            _d)\right).
        \end{aligned}
    \end{equation*}
\end{theorem}
\begin{proof}
    The proof is completed by Theorem \ref{thm: master general}.
\end{proof}
\subsection{Multiplicative Quantization}
In this section, we derive upper bounds for $R_N^{(0)}$ under multiplicative quantization, i.e., there exist $\epsilon_d,\epsilon_l,\epsilon_p,\epsilon_a$ and $\epsilon_o$ such that for any $i\in \{d,l,p,a,o\}$, quantization $\mathcal{Q}_i$ is $\epsilon_i$-multiplicative.
We first perform bias-variance decomposition under multiplicative quantization. 
Denote 
\begin{equation*}
    \mathbf{B}_t^{(M)}:=((1+8\epsilon_p)\mathcal{I}-\gamma\mathcal{T}_B^{(q)}+\tilde{\epsilon}\gamma^2\mathcal{M}_B^{(q)})^t \circ \mathbf{B}_0^{(M)}, \quad \mathbf{B}_0^{(M)}=\mathbb{E}\left[\boldsymbol{\eta}_0 \boldsymbol{\eta}_0^\top\right].
\end{equation*}
\begin{equation*}
    \mathbf{C}_t^{(M)} := ((1+8\epsilon_p)\mathcal{I}-\gamma \mathcal{T}_B^{(q)}+\tilde{\epsilon}\gamma^2\mathcal{M}_B^{(q)})\circ\mathbf{C}_{t-1}^{(M)}+\gamma^2 {\sigma_M^{(q)}}^2\mathbf{H}^{(q)}+4\epsilon_p{\mathbf{w}^{(q)}}^*({\mathbf{w}^{(q)}}^*)^\top,\ \mathbf{C}_0^{(M)}=\boldsymbol{0}.
\end{equation*}
By Lemma \ref{update rule for eta_t^2 multiplicative, master}, we have 
\begin{equation*}
    \mathbb{E}\left[\boldsymbol{\eta}_t\boldsymbol{\eta}_t^\top\right]\preceq2\left(\mathbf{B}_t^{(M)}+\mathbf{C}_t^{(M)}\right).
\end{equation*}
Then by Lemma \ref{Initial study of R_2},
\begin{equation}
    \label{eq: f mul}
    R_N^{(0)}/2\leq \underbrace{\frac{1}{N^2}\cdot\sum_{t=0}^{N-1}\sum_{k=t}^{N-1}\left\langle(\mathbf{I}-\gamma\mathbf{H}^{(q)})^{k-t}\mathbf{H}^{(q)},\mathbf{B}_t^{(M)}\right\rangle}_{\mathrm{bias}}+\underbrace{\frac{1}{N^2}\cdot\sum_{t=0}^{N-1}\sum_{k=t}^{N-1}\left\langle(\mathbf{I}-\gamma\mathbf{H}^{(q)})^{k-t}\mathbf{H}^{(q)},\mathbf{C}_t^{(M)}\right\rangle}_{\mathrm{variance}}.
\end{equation}
\subsubsection{Analysis of Variance Error}
Similar to (\ref{initial CtM}),
\begin{equation*}
    \begin{aligned}
        \mathbf{C}_t^{(M)}=&((1+8\epsilon_p)\mathcal{I}-\gamma \mathcal{T}_B^{(q)}+\tilde{\epsilon}\gamma^2\mathcal{M}_B^{(q)})\mathbf{C}_{t-1}^{(M)}+\gamma^2 {\sigma_M^{(q)}}^2\mathbf{H}^{(q)}+4\epsilon_p{\mathbf{w}^{(q)}}^*({\mathbf{w}^{(q)}}^*)^\top\\
        \preceq&((1+8\epsilon_p)\mathcal{I}-\gamma\widetilde{\mathcal{T}}^{(q)})\circ \mathbf{C}_{t-1}^{(M)}+\gamma^2(1+\tilde{\epsilon})\mathcal{M}_B^{(q)}\circ \mathbf{C}_{t-1}^{(M)}\\
        +&\gamma^2{\sigma_M^{(q)}}^2\mathbf{H}^{(q)}+4\epsilon_p{\mathbf{w}^{(q)}}^*({\mathbf{w}^{(q)}}^*)^\top.
    \end{aligned}
\end{equation*}
Recall that $\widetilde{\mathcal{T}}^{(q)}=\mathbf{H}^{(q)}\otimes\mathbf{I}+\mathbf{I}\otimes\mathbf{H}^{(q)}-\gamma\mathbf{H}^{(q)}\otimes\mathbf{H}^{(q)}$. 
Denote $\widetilde{\mathcal{T}}_2^{(q)}=\mathbf{H}^{(q)}\otimes\mathbf{I}+\mathbf{I}\otimes\mathbf{H}^{(q)}-\gamma/2\mathbf{H}^{(q)}\otimes\mathbf{H}^{(q)}$.
Before proceeding, we find conditions for step size such that
$8\epsilon_p\mathcal{I}\preceq\gamma \widetilde{\mathcal{T}}^{(q)}-\frac{\gamma}{2}\widetilde{\mathcal{T}}_2^{(q)}$. It suffices to restrict:
\begin{equation}
    \label{eq: stepsize}
    \gamma < \frac{2}{3\lambda_1^{(q)}},\quad 8\epsilon_p\leq \gamma \lambda_d^{(q)}-\frac{3}{4}\gamma^2{\lambda_d^{(q)}}^2.
\end{equation}
Equipped with (\ref{eq: stepsize}),
\begin{equation*}
    \mathbf{C}_t^{(M)}\preceq(\mathcal{I}-\frac{\gamma}{2}\widetilde{\mathcal{T}}_2^{(q)})\circ \mathbf{C}_{t-1}^{(M)}+\gamma^2(1+\tilde{\epsilon})\mathcal{M}_B^{(q)}\circ \mathbf{C}_{t-1}^{(M)}
    +\gamma^2{\sigma_M^{(q)}}^2\mathbf{H}^{(q)}+4\epsilon_p{\mathbf{w}^{(q)}}^*({\mathbf{w}^{(q)}}^*)^\top.
\end{equation*}
We would like to remark that, in the analysis of variance, to simplify ${\mathbf{w}^{(q)}}^*({\mathbf{w}^{(q)}}^*)^\top$, we assume the parameter prior
\begin{equation*}
    \mathbb{E}\left[\mathbf{w}^*{\mathbf{w}^*}^\top\right]=\mathbf{I},
\end{equation*}
and take expectation over $\mathbf{w}^*$ on variance error. It follows that \footnote{Actually, (\ref{initial CtM, master}) holds under the expectation of $\mathbf{w}^*$.
Slightly abusing notations, we omit $\mathbb{E}_{\mathbf{w}^*}$.}
\begin{equation}
    \label{initial CtM, master}
    \mathbf{C}_t^{(M)}\preceq(\mathcal{I}-\frac{\gamma}{2}\widetilde{\mathcal{T}}_2^{(q)})\circ \mathbf{C}_{t-1}^{(M)}+\gamma^2(1+\tilde{\epsilon})\mathcal{M}_B^{(q)}\circ \mathbf{C}_{t-1}^{(M)}
    +\gamma^2{\sigma_M^{(q)}}^2\mathbf{H}^{(q)}+4\epsilon_p\mathbf{I}.
\end{equation}

In subsequent analysis, we first derive a crude bound for $\mathbf{C}_t^{(M)}$, then establish a refined bound for $\mathbf{C}_t^{(M)}$ via (\ref{initial CtM, master}).
\begin{lemma}[\rm A bound for $\mathcal{M}_B^{(q)} \circ \mathbf{C}_t^{(M)}$ with quantized master weight]
    \label{A bound for M_B^{(q)} C_t^{(M)}, master}
    For $t \geq 1$, under Assumption \ref{ass1}, Assumption \ref{ass: addd quantized}, Assumption \ref{ass2}, and Assumption \ref{ass3}, if the step size satisfies
    \begin{equation*}
        1>2(1+\tilde{\epsilon})\gamma\alpha_B\mathrm{tr}(\mathbf{H}^{(q)}),\quad 8\epsilon_p\leq \gamma \lambda_d^{(q)}-\frac{3}{4}\gamma^2{\lambda_d^{(q)}}^2,
    \end{equation*}
    then
    \begin{equation*}
        \mathcal{M}_B^{(q)} \circ \mathbf{C}_t^{(M)} \preceq\frac{2\gamma{\sigma_M^{(q)}}^2+\frac{8d\epsilon_p}{\gamma\mathrm{tr}(\mathbf{H}^{(q)})}}{1-2(1+\tilde{\epsilon})\gamma\alpha_B\mathrm{tr}(\mathbf{H}^{(q)})}\alpha_B\mathrm{tr}(\mathbf{H}^{(q)})\mathbf{H}^{(q)}.
    \end{equation*}
\end{lemma}
\begin{proof}
    By (\ref{initial CtM, master}),
    \begin{equation*}
        \begin{aligned}
            \mathbf{C}_t^{(M)}\preceq \mathbf{C}_\infty^{(M)}\preceq \gamma\left(\frac{1}{2}\widetilde{\mathcal{T}}_2^{(q)}-\gamma(1+\widetilde{\epsilon})\mathcal{M}_B^{(q)}\right)^{-1}\circ\underbrace{\left({\sigma_M^{(q)}}^2\mathbf{H}^{(q)}+4\frac{\epsilon_p}{\gamma^2}\mathbf{I}\right)}_{\boldsymbol{\Sigma}}.
        \end{aligned}
    \end{equation*}
    Applying $\widetilde{\mathcal{T}}_2^{(q)}$,
    \begin{equation*}
        \begin{aligned}
            \widetilde{\mathcal{T}}_2^{(q)}\circ\mathbf{C}_\infty^{(M)} & =(\widetilde{\mathcal{T}}_2^{(q)}-2(1+\tilde{\epsilon})\gamma\mathcal{M}_B^{(q)})\circ\mathbf{C}_\infty^{(M)}+2(1+\tilde{\epsilon})\gamma\mathcal{M}_B^{(q)}\circ\mathbf{C}_\infty^{(M)} \\
            & \preceq2\gamma\boldsymbol{\Sigma}+2(1+\tilde{\epsilon})\gamma\mathcal{M}_B^{(q)}\circ\mathbf{C}_\infty^{(M)}.
        \end{aligned}
    \end{equation*}
    Applying $(\widetilde{\mathcal{T}}_2^{(q)})^{-1}$ we have
    \begin{equation*}
        \begin{aligned}
            \mathbf{C}_{\infty}^{(M)}  \preceq&2\gamma{\sigma_M^{(q)}}^2\cdot(\widetilde{\mathcal{T}}_2^{(q)})^{-1}\circ\mathbf{H}^{(q)}+\frac{8\epsilon_p}{\gamma}(\widetilde{\mathcal{T}}_2^{(q)})^{-1}\circ\mathbf{I}
            +2(1+\tilde{\epsilon})\gamma(\widetilde{\mathcal{T}}_2^{(q)})^{-1}\circ\mathcal{M}_B^{(q)}\circ\mathbf{C}_\infty^{(M)}\\
            \preceq&\sum_{t=0}^\infty\left(2(1+\tilde{\epsilon})\gamma(\widetilde{\mathcal{T}}_2^{(q)})^{-1}\circ\mathcal{M}_B^{(q)}\right)^t\circ(\widetilde{\mathcal{T}}_2^{(q)})^{-1}\circ\left(2\gamma{\sigma_M^{(q)}}^2\mathbf{H}^{(q)}+\frac{8\epsilon_p}{\gamma}\mathbf{I}\right).
        \end{aligned}
    \end{equation*}
    Noticing that $(\widetilde{\mathcal{T}}_2^{(q)})^{-1}\circ\mathbf{H}^{(q)}  \preceq \mathbf{I}$ and $(\widetilde{\mathcal{T}}_2^{(q)})^{-1}\circ\mathbf{I}\preceq  (\mathbf{H}^{(q)})^{-1}$,
    we have
    \begin{equation}
        \label{eq: f13}
        \begin{aligned}
            \mathbf{C}_{\infty}^{(M)}\preceq&\sum_{t=0}^\infty\left(2(1+\tilde{\epsilon})\gamma(\widetilde{\mathcal{T}}_2^{(q)})^{-1}\circ\mathcal{M}_B^{(q)}\right)^t\circ(\widetilde{\mathcal{T}}_2^{(q)})^{-1}\circ\left(2\gamma{\sigma_M^{(q)}}^2\mathbf{H}^{(q)}+\frac{8\epsilon_p}{\gamma}\mathbf{I}\right)\\
            \preceq&\sum_{t=0}^\infty\left(2(1+\tilde{\epsilon})\gamma(\widetilde{\mathcal{T}}_2^{(q)})^{-1}\circ\mathcal{M}_B^{(q)}\right)^t\circ\left(2\gamma{\sigma_M^{(q)}}^2\mathbf{I}+\frac{8\epsilon_p}{\gamma}(\mathbf{H}^{(q)})^{-1}\right).
        \end{aligned}
    \end{equation}
    Firstly,
    \begin{equation}
        \label{eq: f14}
        \begin{aligned}
            2(1+\tilde{\epsilon})\gamma(\widetilde{\mathcal{T}}_2^{(q)})^{-1}\circ\mathcal{M}_B^{(q)}\circ \mathbf{I}\preceq &2(1+\tilde{\epsilon})\gamma\alpha_B\mathrm{tr}(\mathbf{H}^{(q)})\mathbf{I}.
        \end{aligned}
    \end{equation}
    Secondly,
    \begin{equation}
        \label{eq: f15}
        \begin{aligned}
            2(1+\tilde{\epsilon})\gamma(\widetilde{\mathcal{T}}_2^{(q)})^{-1}\circ\mathcal{M}_B^{(q)}\circ (\mathbf{H}^{(q)})^{-1}\preceq&2(1+\tilde{\epsilon})\gamma\alpha_B \mathrm{tr}(\mathbf{I})\mathbf{I}.
        \end{aligned}
    \end{equation}

    Therefore, together with (\ref{eq: f13}), (\ref{eq: f14}) and (\ref{eq: f15}), we have
    \begin{equation*}
        \begin{aligned}
            \mathbf{C}_{\infty}^{(M)}\preceq&\sum_{t=0}^\infty\left(2(1+\tilde{\epsilon})\gamma(\widetilde{\mathcal{T}}_2^{(q)})^{-1}\circ\mathcal{M}_B^{(q)}\right)^{t-1}\circ \mathbf{I}
            \cdot\left[4(1+\tilde{\epsilon})\gamma^2{\sigma_{M}^{(q)}}^2\alpha_B\mathrm{tr}(\mathbf{H}^{(q)})+16(1+\tilde{\epsilon})\alpha_B d\epsilon_p\right]\\
            =&\left(2\gamma{\sigma_M^{(q)}}^2+\frac{8d\epsilon_p}{\gamma\mathrm{tr}(\mathbf{H}^{(q)})}\right) \sum_{t=0}^\infty\left(2(1+\tilde{\epsilon})\gamma(\widetilde{\mathcal{T}}_2^{(q)})^{-1}\circ\mathcal{M}_B^{(q)}\right)^{t-1}\circ \mathbf{I}\cdot2(1+\tilde{\epsilon})\gamma\alpha_B\mathrm{tr}(\mathbf{H}^{(q)})\\
            \preceq&\left(2\gamma{\sigma_M^{(q)}}^2+\frac{8d\epsilon_p}{\gamma\mathrm{tr}(\mathbf{H}^{(q)})}\right) \sum_{t=0}^\infty\left[2(1+\tilde{\epsilon})\gamma\alpha_B\mathrm{tr}(\mathbf{H}^{(q)})\right]^{t} \mathbf{I}\\
            =&\frac{2\gamma{\sigma_M^{(q)}}^2+\frac{8d\epsilon_p}{\gamma\mathrm{tr}(\mathbf{H}^{(q)})}}{1-2(1+\tilde{\epsilon})\gamma\alpha_B\mathrm{tr}(\mathbf{H}^{(q)})}\mathbf{I},
        \end{aligned}
    \end{equation*}
    where the second inequality uses (\ref{eq: f14}). At last,
    \begin{equation*}
        \mathcal{M}_B^{(q)} \circ \mathbf{C}_t^{(M)} \preceq\frac{2\gamma{\sigma_M^{(q)}}^2+\frac{8d\epsilon_p}{\gamma\mathrm{tr}(\mathbf{H}^{(q)})}}{1-2(1+\tilde{\epsilon})\gamma\alpha_B\mathrm{tr}(\mathbf{H}^{(q)})}\alpha_B\mathrm{tr}(\mathbf{H}^{(q)})\mathbf{H}^{(q)}.
    \end{equation*}
\end{proof}

\begin{lemma}[\rm A bound for variance under multiplicative quantization with quantized master weight]
    \label{lem: f6}
    Under Assumption \ref{ass1}, Assumption \ref{ass: addd quantized}, Assumption \ref{ass2}, and Assumption \ref{ass3}, if the step size satisfies
    \begin{equation*}
        1>2(1+\tilde{\epsilon})\gamma\alpha_B\mathrm{tr}(\mathbf{H}^{(q)}),\quad 8\epsilon_p\leq \gamma \lambda_d^{(q)}-\frac{3}{4}\gamma^2{\lambda_d^{(q)}}^2,
    \end{equation*}
    then
    \begin{equation*}
        \begin{aligned}
            \mathbb{E}_{\mathbf{w}^*}\mathrm{variance}\leq &\frac{4{\sigma_M^{(q)}}^2+32(1+\tilde{\epsilon}){d\epsilon_p}\alpha_B/\gamma}{1-2(1+\tilde{\epsilon})\gamma\alpha_B\mathrm{tr}(\mathbf{H}^{(q)})}\left(\frac{k^*}{N}+N\gamma^2\cdot\sum_{i>k^*}(\lambda_i^{(q)})^2\right)\\
            +
            &\frac{16\epsilon_p}{\gamma}\left(\sum_{i\leq k^*}\frac{1}{N\gamma\lambda_i^{(q)}}+N\gamma\sum_{i>k^*}\lambda_i^{(q)}\right).
        \end{aligned}
    \end{equation*}
\end{lemma}
\begin{proof}
    We first provide a refined upper bound for $\mathbf{C}_t$. 
    By (\ref{initial CtM, master}) and Lemma \ref{A bound for M_B^{(q)} C_t^{(M)}, master},
    \begin{equation}
        \label{eq: f17}
        \begin{aligned}
            \mathbf{C}_t^{(M)}\preceq&(\mathcal{I}-\frac{\gamma}{2}\widetilde{\mathcal{T}}_2^{(q)})\circ \mathbf{C}_{t-1}^{(M)}+\gamma^2(1+\tilde{\epsilon})\mathcal{M}_B^{(q)}\circ \mathbf{C}_{t-1}^{(M)}
            +\gamma^2{\sigma_M^{(q)}}^2\mathbf{H}^{(q)}+4\epsilon_p\mathbf{I}\\
            \preceq&(\mathcal{I}-\frac{\gamma}{2}\widetilde{\mathcal{T}}_2^{(q)})\circ \mathbf{C}_{t-1}^{(M)}
            +\gamma^2{\sigma_M^{(q)}}^2\mathbf{H}^{(q)}+4\epsilon_p\mathbf{I}\\
            +&\gamma^2(1+\tilde{\epsilon})\frac{2\gamma{\sigma_M^{(q)}}^2+\frac{8d\epsilon_p}{\gamma\mathrm{tr}(\mathbf{H}^{(q)})}}{1-2(1+\tilde{\epsilon})\gamma\alpha_B\mathrm{tr}(\mathbf{H}^{(q)})}\alpha_B\mathrm{tr}(\mathbf{H}^{(q)})\mathbf{H}^{(q)}\\
            =&(\mathcal{I}-\frac{\gamma}{2}\widetilde{\mathcal{T}}_2^{(q)})\circ \mathbf{C}_{t-1}^{(M)}+\frac{\gamma^2{\sigma_M^{(q)}}^2+8\gamma(1+\tilde{\epsilon}){d\epsilon_p}\alpha_B}{1-2(1+\tilde{\epsilon})\gamma\alpha_B\mathrm{tr}(\mathbf{H}^{(q)})}\mathbf{H}^{(q)}+4\epsilon_p\mathbf{I}\\
            =&\frac{\gamma^2{\sigma_M^{(q)}}^2+8\gamma(1+\tilde{\epsilon}){d\epsilon_p}\alpha_B}{1-2(1+\tilde{\epsilon})\gamma\alpha_B\mathrm{tr}(\mathbf{H}^{(q)})}\sum_{k=0}^{t-1}(\mathcal{I}-\frac{\gamma}{2}\widetilde{\mathcal{T}}_2^{(q)})^k\circ\mathbf{H}^{(q)}+4\epsilon_p\sum_{k=0}^{t-1}(\mathcal{I}-\frac{\gamma}{2}\widetilde{\mathcal{T}}_2^{(q)})^k\circ\mathbf{I}\\
            \preceq&\frac{\gamma^2{\sigma_M^{(q)}}^2+8\gamma(1+\tilde{\epsilon}){d\epsilon_p}\alpha_B}{1-2(1+\tilde{\epsilon})\gamma\alpha_B\mathrm{tr}(\mathbf{H}^{(q)})}\sum_{k=0}^{t-1}(\mathbf{I}-\frac{\gamma}{2} \mathbf{H}^{(q)})^{k}\mathbf{H}^{(q)}+4\epsilon_p\sum_{k=0}^{t-1}(\mathbf{I}-\frac{\gamma}{2} \mathbf{H}^{(q)})^{k}\\
            =&\underbrace{\frac{2\gamma{\sigma_M^{(q)}}^2+16(1+\tilde{\epsilon}){d\epsilon_p}\alpha_B}{1-2(1+\tilde{\epsilon})\gamma\alpha_B\mathrm{tr}(\mathbf{H}^{(q)})}(\mathbf{I}-(\mathbf{I}-\frac{\gamma}{2} \mathbf{H}^{(q)})^{t})}_{\mathbf{V}_1}+\underbrace{\frac{8\epsilon_p}{\gamma}(\mathbf{I}-(\mathbf{I}-\frac{\gamma}{2} \mathbf{H}^{(q)})^{t})(\mathbf{H}^{(q)})^{-1}}_{\mathbf{V}_2}.
        \end{aligned}
    \end{equation}

    Recall (\ref{eq: f mul}),
    \begin{equation*}
        \mathbb{E}_{\mathbf{w}^*}\mathrm{variance}\leq\frac{1}{N^2}\cdot\sum_{t=0}^{N-1}\sum_{k=t}^{N-1}\left\langle(\mathbf{I}-\gamma\mathbf{H}^{(q)})^{k-t}\mathbf{H}^{(q)},\mathbf{V}_1+\mathbf{V}_2\right\rangle.
    \end{equation*}
    Regarding $\mathbf{V}_1$, 
    \begin{equation}
    \label{eq: f18}
        \begin{aligned}
            &\frac{1}{N^2}\cdot\sum_{t=0}^{N-1}\sum_{k=t}^{N-1}\left\langle(\mathbf{I}-\gamma\mathbf{H}^{(q)})^{k-t}\mathbf{H}^{(q)},\mathbf{V}_1\right\rangle\\
            \leq&\frac{1}{N^2}\cdot\sum_{t=0}^{N-1}\sum_{k=t}^{N-1}\left\langle(\mathbf{I}-\frac{\gamma}{2}\mathbf{H}^{(q)})^{k-t}\mathbf{H}^{(q)},\mathbf{V}_1\right\rangle\\
            \leq&\frac{2}{N^2\gamma}\frac{2\gamma{\sigma_M^{(q)}}^2+16(1+\tilde{\epsilon}){d\epsilon_p}\alpha_B}{1-2(1+\tilde{\epsilon})\gamma\alpha_B\mathrm{tr}(\mathbf{H}^{(q)})}\sum_{t=0}^{N-1}\left\langle\mathbf{I}-(\mathbf{I}-\frac{\gamma}{2}\mathbf{H}^{(q)})^{N-t},\mathbf{I}-(\mathbf{I}-\frac{\gamma}{2} \mathbf{H}^{(q)})^{t}\right\rangle\\
            =&\frac{2}{N^2\gamma}\frac{2\gamma{\sigma_M^{(q)}}^2+16(1+\tilde{\epsilon}){d\epsilon_p}\alpha_B}{1-2(1+\tilde{\epsilon})\gamma\alpha_B\mathrm{tr}(\mathbf{H}^{(q)})}\sum_{t=0}^{N-1}\sum_i(1-(1-\frac{\gamma}{2}\lambda_i^{(q)})^{N-t})(1-(1-\frac{\gamma}{2} \lambda_i^{(q)})^{t})\\
            \leq&\frac{2}{N^2\gamma}\frac{2\gamma{\sigma_M^{(q)}}^2+16(1+\tilde{\epsilon}){d\epsilon_p}\alpha_B}{1-2(1+\tilde{\epsilon})\gamma\alpha_B\mathrm{tr}(\mathbf{H}^{(q)})}\sum_{t=0}^{N-1}\sum_i\left[1-(1-\frac{\gamma}{2}\lambda_i^{(q)})^{N}\right]^2\\
            \leq&\frac{2}{N\gamma}\frac{2\gamma{\sigma_M^{(q)}}^2+16(1+\tilde{\epsilon}){d\epsilon_p}\alpha_B}{1-2(1+\tilde{\epsilon})\gamma\alpha_B\mathrm{tr}(\mathbf{H}^{(q)})}\sum_i\min\left\{1,\frac{\gamma^2N^2}{4}{\lambda_i^{(q)}}^2\right\}\\
            \leq&\frac{2}{N\gamma}\frac{2\gamma{\sigma_M^{(q)}}^2+16(1+\tilde{\epsilon}){d\epsilon_p}\alpha_B}{1-2(1+\tilde{\epsilon})\gamma\alpha_B\mathrm{tr}(\mathbf{H}^{(q)})}\sum_i\min\left\{1,{\gamma^2N^2}{\lambda_i^{(q)}}^2\right\}\\
            =&\frac{4{\sigma_M^{(q)}}^2+32(1+\tilde{\epsilon}){d\epsilon_p}\alpha_B/\gamma}{1-2(1+\tilde{\epsilon})\gamma\alpha_B\mathrm{tr}(\mathbf{H}^{(q)})}\left(\frac{k^*}{N}+N\gamma^2\cdot\sum_{i>k^*}(\lambda_i^{(q)})^2\right).
        \end{aligned}
    \end{equation}
    
    Regarding $\mathbf{V}_2$,
    \begin{equation}
        \label{eq: f19}
        \begin{aligned}
            &\frac{1}{N^2}\cdot\sum_{t=0}^{N-1}\sum_{k=t}^{N-1}\left\langle(\mathbf{I}-\gamma\mathbf{H}^{(q)})^{k-t}\mathbf{H}^{(q)},\mathbf{V}_2\right\rangle\\
            \leq&\frac{2}{N^2\gamma}\cdot\sum_{t=0}^{N-1}\left\langle\mathbf{I}-(\mathbf{I}-\frac{\gamma}{2}\mathbf{H}^{(q)})^{N-t},\mathbf{V}_2\right\rangle\\
            =&\frac{8\epsilon_p}{\gamma}\frac{2}{N^2\gamma}\cdot\sum_{t=0}^{N-1}\left\langle\mathbf{I}-(\mathbf{I}-\frac{\gamma}{2}\mathbf{H}^{(q)})^{N-t},(\mathbf{I}-(\mathbf{I}-\frac{\gamma}{2} \mathbf{H}^{(q)})^{t})(\mathbf{H}^{(q)})^{-1}\right\rangle\\
            =&\frac{8\epsilon_p}{\gamma}\frac{2}{N^2\gamma}\cdot\sum_{t=0}^{N-1}\sum_i\left[1-(1-\frac{\gamma}{2}\lambda_i^{(q)})^{N-t}\right]\left[1-(1-\frac{\gamma}{2}\lambda_i^{(q)})^t\right]\frac{1}{\lambda_i^{(q)}}\\
            \leq&\frac{16\epsilon_p}{N\gamma^2}\sum_i\min\left\{ \frac{1}{\lambda_i^{(q)}}, N^2\gamma^2{\lambda_i^{(q)}}\right\}\\
            =&\frac{16\epsilon_p}{\gamma}\left(\sum_{i\leq k^*}\frac{1}{N\gamma\lambda_i^{(q)}}+N\gamma\sum_{i>k^*}\lambda_i^{(q)}\right).
        \end{aligned}
    \end{equation}
    
    Together with (\ref{eq: f mul}), (\ref{eq: f17}), (\ref{eq: f18}) and (\ref{eq: f19}),
    \begin{gather*}
        \mathbb{E}_{\mathbf{w}^*}\mathrm{variance}\leq \frac{4{\sigma_M^{(q)}}^2+32(1+\tilde{\epsilon}){d\epsilon_p}\alpha_B/\gamma}{1-2(1+\tilde{\epsilon})\gamma\alpha_B\mathrm{tr}(\mathbf{H}^{(q)})}\left(\frac{k^*}{N}+N\gamma^2\cdot\sum_{i>k^*}(\lambda_i^{(q)})^2\right)+\frac{16\epsilon_p}{\gamma}\left(\sum_{i\leq k^*}\frac{1}{N\gamma\lambda_i^{(q)}}+N\gamma\sum_{i>k^*}\lambda_i^{(q)}\right).
    \end{gather*}
\end{proof}

\subsubsection{Analysis of Bias Error}
Recall that
\begin{equation*}
    \mathbf{B}_t^{(M)}:=((1+8\epsilon_p)\mathcal{I}-\gamma\mathcal{T}_B^{(q)}+\tilde{\epsilon}\gamma^2\mathcal{M}_B^{(q)})^t \circ \mathbf{B}_0^{(M)}, \quad \mathbf{B}_0^{(M)}=\mathbf{B}_0=\mathbb{E}\left[\boldsymbol{\eta}_0 \boldsymbol{\eta}_0^\top\right],
\end{equation*}
From (\ref{bias express}) we deduce that
\begin{equation}
    \label{eq: bias m}
    \mathrm{bias}\leq \frac{1}{\gamma N^{2}}\left\langle\mathbf{I}-(\mathbf{I}-\gamma\mathbf{H}^{(q)})^{N},\sum_{t=0}^{N-1}\mathbf{B}_{t}^{(M)}\right\rangle.
\end{equation}
Denote $\mathbf{S}_n^{(M)}=\sum_{t=0}^{n-1}\mathbf{B}_t^{(M)}$. Motivated by Lemma \ref{(Initial Study of $S_t$) mu},
\begin{equation*}
    \begin{aligned}
        \mathbf{S}_{t}^{(M)} =& ((1+8\epsilon_p)\mathcal{I}-\gamma\mathcal{T}_B^{(q)}+\tilde{\epsilon}\gamma^2\mathcal{M}_B^{(q)})\circ\mathbf{S}_{t-1}^{(M)}+\mathbf{B}_0\\
        =& ((1+8\epsilon_p)\mathcal{I}-\gamma\widetilde{\mathcal{T}}^{(q)})\circ\mathbf{S}_{t-1}^{(M)}+\gamma (\widetilde{\mathcal{T}}^{(q)}-{\mathcal{T}}_B^{(q)})\circ\mathbf{S}_{t-1}^{(M)}+\tilde{\epsilon}\gamma^2\mathcal{M}_B^{(q)}\circ \mathbf{S}_{t-1}^{(M)}+\mathbf{B}_0\\
        =& ((1+8\epsilon_p)\mathcal{I}-\gamma\widetilde{\mathcal{T}}^{(q)})\circ\mathbf{S}_{t-1}^{(M)}+\gamma^2 ((1+\tilde{\epsilon}){\mathcal{M}}_B^{(q)}-\widetilde{\mathcal{M}}^{(q)})\circ\mathbf{S}_{t-1}^{(M)}+\mathbf{B}_0\\
        \preceq&((1+8\epsilon_p)\mathcal{I}-\gamma\tilde{\mathcal{T}}^{(q)})\circ\mathbf{S}_{t-1}^{(M)}+(1+\tilde{\epsilon})\gamma^{2}\mathcal{M}_B^{(q)}\circ\mathbf{S}_{t-1}^{(M)}+\mathbf{B}_{0}.
    \end{aligned}
\end{equation*}
Similar to the analysis of variance error, we consider
\begin{equation*}
    \gamma < \frac{2}{3\lambda_1^{(q)}},\quad 8\epsilon_p\leq \gamma \lambda_d^{(q)}-\frac{3}{4}\gamma^2{\lambda_d^{(q)}}^2.
\end{equation*}
It follows that $8\epsilon_p\mathcal{I}\preceq\gamma \widetilde{\mathcal{T}}^{(q)}-\frac{\gamma}{2}\widetilde{\mathcal{T}}_2^{(q)}$ and hence
\begin{equation}
    \label{eq: f20}
    \mathbf{S}_{t}^{(M)}\preceq(\mathcal{I}-\frac{\gamma}{2}\widetilde{\mathcal{T}}_2^{(q)})\circ\mathbf{S}_{t-1}^{(M)}+(1+\tilde{\epsilon})\gamma^{2}\mathcal{M}_B^{(q)}\circ\mathbf{S}_{t-1}^{(M)}+\mathbf{B}_{0}.
\end{equation}

\begin{lemma}[\rm A bound for $\mathcal{M}_B^{(q)}\circ\mathbf{S}_t^{(M)}$ with quantized master weight]
    \label{lem: f7}
    For $1 \leq t\leq N$, under Assumption \ref{ass1}, Assumption \ref{ass: addd quantized}, Assumption \ref{ass2}, and Assumption \ref{ass3}, if the step size satisfies
    \begin{equation*}
        1>2(1+\tilde{\epsilon})\gamma\alpha_B\mathrm{tr}(\mathbf{H}^{(q)}),\quad 8\epsilon_p\leq \gamma \lambda_d^{(q)}-\frac{3}{4}\gamma^2{\lambda_d^{(q)}}^2,
    \end{equation*}
    then
    \begin{equation*}
         \mathcal{M}_B^{(q)} \circ \mathbf{S}_t^{(M)} \preceq          \frac{2\alpha_B\cdot\mathrm{tr}\left(\left[\mathcal{I}-(\mathcal{I}-\frac{\gamma}{2}\widetilde{\mathcal{T}}_2^{(q)})^t\right]\circ\mathbf{B}_0\right)}{\gamma(1-2(1+\tilde{\epsilon})\gamma\alpha_B\operatorname{tr}(\mathbf{H}^{(q)}))}\cdot\mathbf{H}^{(q)}.
    \end{equation*}
\end{lemma}
\begin{proof}
    From (\ref{eq: f20}),
    \begin{equation*}
        \begin{aligned}
            \mathbf{S}_t^{(M)}\preceq& \sum_{k=0}^{t-1}\left(\mathcal{I}-\frac{\gamma}{2}\widetilde{\mathcal{T}}_2^{(q)}+(1+\widetilde{\epsilon})\gamma^2\mathcal{M}_B^{(q)}\right)^k\circ \mathbf{B}_0\\
            =&\frac{1}{\gamma}\left(\frac{1}{2}\widetilde{\mathcal{T}}_2^{(q)}-(1+\widetilde{\epsilon})\gamma\mathcal{M}_B^{(q)}\right)^{-1}\circ\underbrace{\left[\mathcal{I}-\left(\mathcal{I}-\frac{\gamma}{2}\widetilde{\mathcal{T}}_2^{(q)}+(1+\widetilde{\epsilon})\gamma^2\mathcal{M}_B^{(q)}\right)^t\right]\circ \mathbf{B}_0}_{\mathbf{A}}.
        \end{aligned}
    \end{equation*}
    Applying $\widetilde{\mathcal{T}}_2^{(q)}$, we have
    \begin{equation*}
        \begin{aligned}
            &\widetilde{\mathcal{T}}_2^{(q)}\circ\left(\frac{1}{2}\widetilde{\mathcal{T}}_2^{(q)}-(1+\widetilde{\epsilon})\gamma\mathcal{M}_B^{(q)}\right)^{-1}\circ\mathbf{A}\\
            =&2\mathbf{A}+2(1+\widetilde{\epsilon})\gamma\mathcal{M}_B^{(q)}\circ \left(\frac{1}{2}\widetilde{\mathcal{T}}_2^{(q)}-(1+\widetilde{\epsilon})\gamma\mathcal{M}_B^{(q)}\right)^{-1}\circ\mathbf{A}.
        \end{aligned}
    \end{equation*}
    Applying $(\widetilde{\mathcal{T}}_2^{(q)})^{-1}$, we have
    \begin{equation*}
        \begin{aligned}
            &\left(\frac{1}{2}\widetilde{\mathcal{T}}_2^{(q)}-(1+\widetilde{\epsilon})\gamma\mathcal{M}_B^{(q)}\right)^{-1}\circ\mathbf{A}\\
            =&2(\widetilde{\mathcal{T}}_2^{(q)})^{-1}\circ \mathbf{A}+2(\widetilde{\mathcal{T}}_2^{(q)})^{-1}\circ (1+\widetilde{\epsilon})\gamma\mathcal{M}_B^{(q)}\circ \left(\frac{1}{2}\widetilde{\mathcal{T}}_2^{(q)}-(1+\widetilde{\epsilon})\gamma\mathcal{M}_B^{(q)}\right)^{-1}\circ\mathbf{A}.
        \end{aligned}
    \end{equation*}
    Applying $\mathcal{M}_B^{(q)}$, we have
    \begin{equation*}
        \begin{aligned}
            &\mathcal{M}_B^{(q)}\circ \left(\frac{1}{2}\widetilde{\mathcal{T}}_2^{(q)}-(1+\widetilde{\epsilon})\gamma\mathcal{M}_B^{(q)}\right)^{-1}\circ\mathbf{A}\\
            =&2\mathcal{M}_B^{(q)}\circ(\widetilde{\mathcal{T}}_2^{(q)})^{-1}\circ (1+\widetilde{\epsilon})\gamma\mathcal{M}_B^{(q)}\circ \left(\frac{1}{2}\widetilde{\mathcal{T}}_2^{(q)}-(1+\widetilde{\epsilon})\gamma\mathcal{M}_B^{(q)}\right)^{-1}\circ\mathbf{A}+2\mathcal{M}_B^{(q)}\circ(\widetilde{\mathcal{T}}_2^{(q)})^{-1}\circ \mathbf{A}.
        \end{aligned}
    \end{equation*}
    It follows that
    \begin{equation}
        \label{eq: f21}
        \begin{aligned}
            &\mathcal{M}_B^{(q)}\circ \left(\frac{1}{2}\widetilde{\mathcal{T}}_2^{(q)}-(1+\widetilde{\epsilon})\gamma\mathcal{M}_B^{(q)}\right)^{-1}\circ\mathbf{A}\\
            =&\sum_{t=0}^\infty \left(2(1+\widetilde{\epsilon})\gamma\mathcal{M}_B^{(q)}\circ(\widetilde{\mathcal{T}}_2^{(q)})^{-1}\right)^t\circ 2\mathcal{M}_B^{(q)}\circ(\widetilde{\mathcal{T}}_2^{(q)})^{-1}\circ \mathbf{A}.
        \end{aligned}
    \end{equation}
    By Assumption \ref{ass2}, 
    \begin{equation*}
        \begin{aligned}
            {\mathcal{M}_B^{(q)}}\circ{(\widetilde{\mathcal{T}}_2^{(q)})}^{-1}\circ\mathbf{A}&\preceq\alpha_B\operatorname{tr}(\mathbf{H}^{(q)}{(\widetilde{\mathcal{T}}_2^{(q)})}^{-1}\circ\mathbf{A})\mathbf{H}^{(q)}\\
            &= \alpha_B \frac{\gamma}{2}\operatorname{tr}\left(\sum_{t=0}^{\infty}\mathbf{H}^{(q)}(\mathbf{I}-\frac{\gamma}{2}\mathbf{H}^{(q)})^{t}\mathbf{A}(\mathbf{I}-\frac{\gamma}{2}\mathbf{H}^{(q)})^{t}\right) \mathbf{H}^{(q)}\\
            &= \alpha_B \mathrm{tr}\left(\mathbf{H}^{(q)}(2\mathbf{H}^{(q)}-\frac{\gamma}{2}(\mathbf{H}^{(q)})^{2})^{-1}\mathbf{A}\right)\mathbf{H}^{(q)}\\
            &\preceq \alpha_B \mathrm{tr}(\mathbf{A})\mathbf{H}^{(q)},
        \end{aligned}
    \end{equation*}
    together with (\ref{eq: f21}), $(\widetilde{\mathcal{T}}_2^{(q)})^{-1}\circ\mathbf{H}^{(q)}  \preceq \mathbf{I}$ and $\mathcal{M}_B^{(q)}\circ\mathbf{I}\preceq\alpha_B\operatorname{tr}(\mathbf{H}^{(q)})\mathbf{H}^{(q)}$, we have
    \begin{equation*}
        \begin{aligned}
            &\mathcal{M}_B^{(q)}\circ \left(\frac{1}{2}\widetilde{\mathcal{T}}_2^{(q)}-(1+\widetilde{\epsilon})\gamma\mathcal{M}_B^{(q)}\right)^{-1}\circ\mathbf{A}\\
            \preceq&\sum_{t=0}^\infty \left(2(1+\widetilde{\epsilon})\gamma\mathcal{M}_B^{(q)}\circ(\widetilde{\mathcal{T}}_2^{(q)})^{-1}\right)^t\circ 2\alpha_B\mathrm{tr}(\mathbf{A})\mathbf{H}^{(q)}\\
            =&2\alpha_B\mathrm{tr}(\mathbf{A})\sum_{t=0}^\infty \left(2(1+\widetilde{\epsilon})\gamma\mathcal{M}_B^{(q)}\circ(\widetilde{\mathcal{T}}_2^{(q)})^{-1}\right)^{t}\circ \mathbf{H}^{(q)}\\
            \preceq&2\alpha_B\mathrm{tr}(\mathbf{A})\sum_{t=0}^\infty \left(2(1+\widetilde{\epsilon})\gamma\alpha_B\mathrm{tr}(\mathbf{H}^{(q)})\right)^{t} \mathbf{H}^{(q)}\\
            =&\frac{2\alpha_B\mathrm{tr}(\mathbf{A})}{1-2(1+\widetilde{\epsilon})\gamma\alpha_B\mathrm{tr}(\mathbf{H}^{(q)})}\mathbf{H}^{(q)}.
        \end{aligned}
    \end{equation*}

    Therefore, recall that
    \begin{equation*}
        \mathbf{A}=\left[\mathcal{I}-\left(\mathcal{I}-\frac{\gamma}{2}\widetilde{\mathcal{T}}_2^{(q)}+(1+\widetilde{\epsilon})\gamma^2\mathcal{M}_B^{(q)}\right)^t\right]\circ \mathbf{B}_0\preceq\left[\mathcal{I}-\left(\mathcal{I}-\frac{\gamma}{2}\widetilde{\mathcal{T}}_2^{(q)}\right)^t\right]\circ \mathbf{B}_0,
    \end{equation*}
    we have
    \begin{equation*}
        \begin{aligned}
            \mathcal{M}_B^{(q)} \circ \mathbf{S}_t^{(M)} \preceq \gamma^{-1}\frac{2\alpha_B\operatorname{tr}(\mathbf{A})}{1-2(1+\tilde{\epsilon})\gamma\alpha_B\operatorname{tr}(\mathbf{H}^{(q)})}\cdot\mathbf{H}^{(q)}\preceq \frac{2\alpha_B\cdot\mathrm{tr}\left(\left[\mathcal{I}-(\mathcal{I}-\frac{\gamma}{2}\widetilde{\mathcal{T}}_2^{(q)})^t\right]\circ\mathbf{B}_0\right)}{\gamma(1-2(1+\tilde{\epsilon})\gamma\alpha_B\operatorname{tr}(\mathbf{H}^{(q)}))}\cdot\mathbf{H}^{(q)}.
        \end{aligned}
    \end{equation*}
\end{proof}

Together with (\ref{eq: f20}) and Lemma \ref{lem: f7}, we are now ready to bound the bias error with quantized master weight.

\begin{lemma}[\rm A bound for bias under multiplicative quantization with quantized master weight]
    \label{bias R2 bound multiplicative, master}
    Under Assumption \ref{ass1}, Assumption \ref{ass: addd quantized}, Assumption \ref{ass2}, and Assumption \ref{ass3}, if the stepsize satisfies 
    \begin{equation*}
        1>2(1+\tilde{\epsilon})\gamma\alpha_B\mathrm{tr}(\mathbf{H}^{(q)}),\quad 8\epsilon_p\leq \gamma \lambda_d^{(q)}-\frac{3}{4}\gamma^2{\lambda_d^{(q)}}^2,
    \end{equation*}
    then
    \begin{equation*}
        \begin{aligned}
            \mathrm{bias}
            \leq& \frac{8(1+\tilde{\epsilon})\alpha_B\cdot\left(\frac{\|\mathbf{w}_{0}-{\mathbf{w}^{(q)}}^{*}\|_{\mathbf{I}_{0:k^{*}}}^{2}}{N\gamma}+\|\mathbf{w}_{0}-{\mathbf{w}^{(q)}}^{*}\|_{\mathbf{H}_{k^{*}:\infty}}^{2}\right)}{1-2(1+\tilde{\epsilon})\gamma\alpha_B\operatorname{tr}(\mathbf{H}^{(q)})}\left(\frac{k^*}{N} + {N\gamma^2}\sum_{i> k^*}{\lambda_i^{(q)}}^2\right)\\
            +& \frac{4}{\gamma^2N^2}\cdot\|\mathbf{w}_0-{\mathbf{w}^{(q)}}^*\|_{(\mathbf{H}_{0:k^*}^{(q)})^{-1}}^2+4\|\mathbf{w}_0-{\mathbf{w}^{(q)}}^*\|_{\mathbf{H}_{{k^*}:\infty}^{(q)}}^2.
        \end{aligned}
    \end{equation*}
\end{lemma}
\begin{proof}
    By (\ref{eq: f20}) and Lemma \ref{lem: f7},
    \begin{equation*}
        \begin{aligned}
            \mathbf{S}_{t}^{(M)}\preceq&(\mathcal{I}-\frac{\gamma}{2}\widetilde{\mathcal{T}}_2^{(q)})\circ\mathbf{S}_{t-1}^{(M)}+(1+\tilde{\epsilon})\gamma^{2}\mathcal{M}_B^{(q)}\circ\mathbf{S}_{t-1}^{(M)}+\mathbf{B}_{0}\\
            \preceq&(\mathcal{I}-\frac{\gamma}{2}\widetilde{\mathcal{T}}_2^{(q)})\circ\mathbf{S}_{t-1}^{(M)}+(1+\tilde{\epsilon})\gamma^{2}\frac{2\alpha_B\cdot\mathrm{tr}\left(\left[\mathcal{I}-(\mathcal{I}-\frac{\gamma}{2}\widetilde{\mathcal{T}}_2^{(q)})^N\right]\circ\mathbf{B}_0\right)}{\gamma(1-2(1+\tilde{\epsilon})\gamma\alpha_B\operatorname{tr}(\mathbf{H}^{(q)}))}\cdot\mathbf{H}^{(q)}+\mathbf{B}_{0}\\
            =&\sum_{k=0}^{t-1}(\mathcal{I}-\frac{\gamma}{2}\widetilde{\mathcal{T}}_2^{(q)})^k\circ\left((1+\tilde{\epsilon})\gamma^{2}\frac{2\alpha_B\cdot\mathrm{tr}\left(\left[\mathcal{I}-(\mathcal{I}-\frac{\gamma}{2}\widetilde{\mathcal{T}}_2^{(q)})^N\right]\circ\mathbf{B}_0\right)}{\gamma(1-2(1+\tilde{\epsilon})\gamma\alpha_B\operatorname{tr}(\mathbf{H}^{(q)}))}\cdot\mathbf{H}^{(q)}+\mathbf{B}_{0}\right)\\
            =&\sum_{k=0}^{t-1}(\mathbf{I}-\frac{\gamma}{2}\mathbf{H}^{(q)})^k\left((1+\tilde{\epsilon})\gamma^{2}\frac{2\alpha_B\cdot\mathrm{tr}\left(\left[\mathcal{I}-(\mathcal{I}-\frac{\gamma}{2}\widetilde{\mathcal{T}}_2^{(q)})^N\right]\circ\mathbf{B}_0\right)}{\gamma(1-2(1+\tilde{\epsilon})\gamma\alpha_B\operatorname{tr}(\mathbf{H}^{(q)}))}\cdot\mathbf{H}^{(q)}+\mathbf{B}_{0}\right)(\mathbf{I}-\frac{\gamma}{2}\mathbf{H}^{(q)})^k.
        \end{aligned}
    \end{equation*}

    Before providing our upper bound for the bias error, we denote \begin{equation*}
        \mathbf{B}_{a,b}:=\mathbf{B}_a-(\mathbf{I}-\frac{\gamma}{2}\mathbf{H}^{(q)})^{b-a}\mathbf{B}_a(\mathbf{I}-\frac{\gamma}{2}\mathbf{H}^{(q)})^{b-a}.
    \end{equation*}
    Recall from (\ref{eq: bias m}) that
    \begin{equation*}
        \mathrm{bias}\leq \frac{1}{\gamma N^{2}}\left\langle\mathbf{I}-(\mathbf{I}-\gamma\mathbf{H}^{(q)})^{N},\sum_{t=0}^{N-1}\mathbf{B}_{t}^{(M)}\right\rangle,
    \end{equation*}
    we have
    \begin{equation*}
        \begin{aligned}
            \mathrm{bias}\leq &\frac{2}{\gamma N^{2}}\left\langle\mathbf{I}-(\mathbf{I}-\frac{\gamma}{2}\mathbf{H}^{(q)})^{N},\sum_{t=0}^{N-1}\mathbf{B}_{t}^{(M)}\right\rangle\\
            \leq&\frac{2}{\gamma N^{2}}\left\langle\mathbf{I}-(\mathbf{I}-\frac{\gamma}{2}\mathbf{H}^{(q)})^{N},\sum_{k=0}^{N-1}(\mathbf{I}-\frac{\gamma}{2}\mathbf{H}^{(q)})^k\left(\frac{2(1+\tilde{\epsilon})\gamma\alpha_B\cdot\mathrm{tr}\left(\mathbf{B}_{0,N}\right)}{1-2(1+\tilde{\epsilon})\gamma\alpha_B\operatorname{tr}(\mathbf{H}^{(q)})}\cdot\mathbf{H}^{(q)}+\mathbf{B}_{0}\right)(\mathbf{I}-\frac{\gamma}{2}\mathbf{H}^{(q)})^k\right\rangle\\
            =&\frac{2}{\gamma N^{2}}\sum_{k=0}^{N-1}\left\langle(\mathbf{I}-\frac{\gamma}{2}\mathbf{H}^{(q)})^{2k}-(\mathbf{I}-\frac{\gamma}{2}\mathbf{H}^{(q)})^{N+2k},\frac{2(1+\tilde{\epsilon})\gamma\alpha_B\cdot\mathrm{tr}\left(\mathbf{B}_{0,N}\right)}{1-2(1+\tilde{\epsilon})\gamma\alpha_B\operatorname{tr}(\mathbf{H}^{(q)})}\cdot\mathbf{H}^{(q)}+\mathbf{B}_{0}\right\rangle.
        \end{aligned}
    \end{equation*}
    Note that 
    \begin{equation*}
        \begin{aligned}
            (\mathbf{I}-\frac{\gamma}{2}\mathbf{H}^{(q)})^{2k}-(\mathbf{I}-\frac{\gamma}{2}\mathbf{H}^{(q)})^{N+2k}  \preceq(\mathbf{I}-\frac{\gamma}{2}\mathbf{H}^{(q)})^{k}-(\mathbf{I}-\frac{\gamma}{2}\mathbf{H}^{(q)})^{N+k},
        \end{aligned}
    \end{equation*}
    we have
    \begin{equation*}
        \mathrm{bias}\leq \frac{2}{\gamma N^{2}}\sum_{k=0}^{N-1}\left\langle(\mathbf{I}-\frac{\gamma}{2}\mathbf{H}^{(q)})^{k}-(\mathbf{I}-\frac{\gamma}{2}\mathbf{H}^{(q)})^{N+k},\frac{2(1+\tilde{\epsilon})\gamma\alpha_B\cdot\mathrm{tr}\left(\mathbf{B}_{0,N}\right)}{1-2(1+\tilde{\epsilon})\gamma\alpha_B\operatorname{tr}(\mathbf{H}^{(q)})}\cdot\mathbf{H}^{(q)}+\mathbf{B}_{0}\right\rangle.
    \end{equation*}

    Therefore it suffices to bound the following two terms:
    \begin{equation*}
        I_1=\frac{2}{\gamma N^{2}}\sum_{k=0}^{N-1}\left\langle(\mathbf{I}-\frac{\gamma}{2}\mathbf{H}^{(q)})^{k}-(\mathbf{I}-\frac{\gamma}{2}\mathbf{H}^{(q)})^{N+k},\frac{2(1+\tilde{\epsilon})\gamma\alpha_B\cdot\mathrm{tr}\left(\mathbf{B}_{0,N}\right)}{1-2(1+\tilde{\epsilon})\gamma\alpha_B\operatorname{tr}(\mathbf{H}^{(q)})}\cdot\mathbf{H}^{(q)}\right\rangle,
    \end{equation*}
    \begin{equation*}
        I_2=\frac{2}{\gamma N^{2}}\sum_{k=0}^{N-1}\left\langle(\mathbf{I}-\frac{\gamma}{2}\mathbf{H}^{(q)})^{k}-(\mathbf{I}-\frac{\gamma}{2}\mathbf{H}^{(q)})^{N+k},\mathbf{B}_{0}\right\rangle.
    \end{equation*}
    Regarding $I_1$,
    \begin{equation*}
        \begin{aligned}
            &\sum_{k=0}^{N-1}\left\langle(\mathbf{I}-\frac{\gamma}{2}\mathbf{H}^{(q)})^{k}-(\mathbf{I}-\frac{\gamma}{2}\mathbf{H}^{(q)})^{N+k},\mathbf{H}^{(q)}\right\rangle\\
            =&\sum_{k=0}^{N-1}\sum_i \left[(1-\frac{\gamma}{2}\lambda_i^{(q)})^k-(1-\frac{\gamma}{2}\lambda_i^{(q)})^{N+k}\right]\lambda_i^{(q)}\\
            =&\frac{2}{\gamma}\sum_i \left[[1-(1-\frac{\gamma}{2}\lambda_i^{(q)})^N]-[(1-\frac{\gamma}{2}\lambda_i^{(q)})^N-(1-\frac{\gamma}{2}\lambda_i^{(q)})^{2N}]\right]\\
            =&\frac{2}{\gamma}\sum_i \left(1-(1-\frac{\gamma}{2}\lambda_i^{(q)})^N\right)^2\\
            \leq&\frac{2}{\gamma}\sum_i \min\left\{1, {N^2\gamma^2}{\lambda_i^{(q)}}^2\right\}\\
            \leq&\frac{2}{\gamma}\left(k^* + {N^2\gamma^2}\sum_{i> k^*}{\lambda_i^{(q)}}^2\right),
        \end{aligned}
    \end{equation*}
    Hence,
    \begin{equation*}
        I_1\leq \frac{4}{\gamma^2N^2}\frac{2(1+\tilde{\epsilon})\gamma\alpha_B\cdot\mathrm{tr}\left(\mathbf{B}_{0,N}\right)}{1-2(1+\tilde{\epsilon})\gamma\alpha_B\operatorname{tr}(\mathbf{H}^{(q)})}\left(k^* + {N^2\gamma^2}\sum_{i> k^*}{\lambda_i^{(q)}}^2\right).
    \end{equation*}
    Then we tackle $\mathrm{tr}(\mathbf{B}_{0,N})$.
    \begin{equation}
        \begin{aligned}
            \mathrm{tr}(\mathbf{B}_{0,N})&=\mathrm{tr}\left(\mathbf{B}_{0}-(\mathbf{I}-\frac{\gamma}{2}\mathbf{H}^{(q)})^{N}\mathbf{B}_{0}(\mathbf{I}-\frac{\gamma}{2}\mathbf{H}^{(q)})^{N}\right)\\
            &=\sum_{i}\left(1-(1-\frac{\gamma}{2}\lambda_{i}^{(q)})^{2N}\right)\cdot\left(\langle\mathbf{w}_{0}-{\mathbf{w}^{(q)}}^{*},\mathbf{v}_{i}^{(q)}\rangle\right)^{2}\\
            &\leq \sum_i\min\{1,N\gamma\lambda_i^{(q)}\}\left(\langle\mathbf{w}_0-{\mathbf{w}^{(q)}}^*,\mathbf{v}_i^{(q)}\rangle\right)^2\\
            &\leq \|\mathbf{w}_{0}-{\mathbf{w}^{(q)}}^{*}\|_{\mathbf{I}_{0:k^{*}}}^{2}+N\gamma\|\mathbf{w}_{0}-{\mathbf{w}^{(q)}}^{*}\|_{\mathbf{H}_{k^{*}:\infty}}^{2}.
        \end{aligned}
    \end{equation}
    Therefore,
    \begin{equation*}
        I_1\leq \frac{4}{\gamma^2N^2}\frac{2(1+\tilde{\epsilon})\gamma\alpha_B\cdot\left(\|\mathbf{w}_{0}-{\mathbf{w}^{(q)}}^{*}\|_{\mathbf{I}_{0:k^{*}}}^{2}+N\gamma\|\mathbf{w}_{0}-{\mathbf{w}^{(q)}}^{*}\|_{\mathbf{H}_{k^{*}:\infty}}^{2}\right)}{1-2(1+\tilde{\epsilon})\gamma\alpha_B\operatorname{tr}(\mathbf{H}^{(q)})}\left(k^* + {N^2\gamma^2}\sum_{i> k^*}{\lambda_i^{(q)}}^2\right).
    \end{equation*}

    Regarding $I_2$,
    \begin{equation*}
        \begin{aligned}
            I_2=&\frac{2}{\gamma N^{2}}\sum_{k=0}^{N-1}\left\langle(\mathbf{I}-\frac{\gamma}{2}\mathbf{H}^{(q)})^{k}-(\mathbf{I}-\frac{\gamma}{2}\mathbf{H}^{(q)})^{N+k},\mathbf{B}_{0}\right\rangle\\
            =&\frac{2}{\gamma N^{2}}\sum_{k=0}^{N-1}\sum_i\left[(1-\frac{\gamma}{2}\lambda_i^{(q)})^{k}-(1-\frac{\gamma}{2}\lambda_i^{(q)})^{N+k}\right]\omega_i^2\\
            =&\frac{4}{\gamma^2 N^{2}}\sum_i\left[1-(1-\frac{\gamma}{2}\lambda_i^{(q)})^{N}\right]^2\frac{\omega_i^2}{\lambda_i^{(q)}}\\
            \leq&\frac{4}{\gamma^2 N^{2}}\sum_i\frac{\omega_i^2}{\lambda_i^{(q)}}\operatorname*{min}\left\{1,\gamma^{2}N^{2}(\lambda_{i}^{(q)})^{2}\right\}\\
            =&\frac{4}{\gamma^2N^2}\cdot\|\mathbf{w}_0-{\mathbf{w}^{(q)}}^*\|_{(\mathbf{H}_{0:k^*}^{(q)})^{-1}}^2+4\|\mathbf{w}_0-{\mathbf{w}^{(q)}}^*\|_{\mathbf{H}_{{k^*}:\infty}^{(q)}}^2.
        \end{aligned}
    \end{equation*}

    Overall, 
    \begin{equation*}
        \begin{aligned}
            &\mathrm{bias}\leq I_1+I_2\\
            \leq& \frac{8(1+\tilde{\epsilon})\alpha_B\cdot\left(\frac{\|\mathbf{w}_{0}-{\mathbf{w}^{(q)}}^{*}\|_{\mathbf{I}_{0:k^{*}}}^{2}}{N\gamma}+\|\mathbf{w}_{0}-{\mathbf{w}^{(q)}}^{*}\|_{\mathbf{H}_{k^{*}:\infty}}^{2}\right)}{1-2(1+\tilde{\epsilon})\gamma\alpha_B\operatorname{tr}(\mathbf{H}^{(q)})}\left(\frac{k^*}{N} + {N\gamma^2}\sum_{i> k^*}{\lambda_i^{(q)}}^2\right)\\
            +& \frac{4}{\gamma^2N^2}\cdot\|\mathbf{w}_0-{\mathbf{w}^{(q)}}^*\|_{(\mathbf{H}_{0:k^*}^{(q)})^{-1}}^2+4\|\mathbf{w}_0-{\mathbf{w}^{(q)}}^*\|_{\mathbf{H}_{{k^*}:\infty}^{(q)}}^2.
        \end{aligned}
    \end{equation*}
\end{proof}

Based on the analysis for bias and variance error, we are ready to present the bounds for $R_N^{(0)}$ under multiplicative quantization with quantized master weight.
\begin{theorem}[\rm A bound for $R_N^{(0)}$ under multiplicative quantization with quantized master weight]
    \label{thm: master multiplicative}
    Suppose there exist $\epsilon_d,\epsilon_l,\epsilon_p,\epsilon_a$ and $\epsilon_o$ such that for any $i\in \{d,l,p,a,o\}$, quantization $\mathcal{Q}_i$ is $\epsilon_i$-multiplicative. Under Assumption \ref{ass1}, Assumption \ref{ass: addd quantized}, Assumption \ref{ass2}, and Assumption \ref{ass3}, if the step size satisfies
    \begin{equation*}
        1>2(1+\tilde{\epsilon})\gamma\alpha_B\mathrm{tr}(\mathbf{H}^{(q)}),\quad 8\epsilon_p\leq \gamma \lambda_d^{(q)}-\frac{3}{4}\gamma^2{\lambda_d^{(q)}}^2,
    \end{equation*}
    then taking expectation over $\mathbf{w}^*$ on $\mathrm{variance}$ \footnote{Here we assume that $\mathbb{E}\mathbf{w}^*{\mathbf{w}^*}^\top=\mathbf{I}$.},
    \begin{equation*}
        \begin{aligned}
            R_N^{(0)}/2\leq &\frac{4{\sigma_M^{(q)}}^2+32(1+\tilde{\epsilon}){d\epsilon_p}\alpha_B/\gamma}{1-2(1+\tilde{\epsilon})\gamma\alpha_B\mathrm{tr}(\mathbf{H}^{(q)})}\left(\frac{k^*}{N}+N\gamma^2\cdot\sum_{i>k^*}(\lambda_i^{(q)})^2\right)\\
            +
            &\frac{16\epsilon_p}{\gamma}\left(\sum_{i\leq k^*}\frac{1}{N\gamma\lambda_i^{(q)}}+N\gamma\sum_{i>k^*}\lambda_i^{(q)}\right)\\
            +& \frac{8(1+\tilde{\epsilon})\alpha_B\cdot\left(\frac{\|\mathbf{w}_{0}-{\mathbf{w}^{(q)}}^{*}\|_{\mathbf{I}_{0:k^{*}}}^{2}}{N\gamma}+\|\mathbf{w}_{0}-{\mathbf{w}^{(q)}}^{*}\|_{\mathbf{H}_{k^{*}:\infty}}^{2}\right)}{1-2(1+\tilde{\epsilon})\gamma\alpha_B\operatorname{tr}(\mathbf{H}^{(q)})}\left(\frac{k^*}{N} + {N\gamma^2}\sum_{i> k^*}{\lambda_i^{(q)}}^2\right)\\
            +& \frac{4}{\gamma^2N^2}\cdot\|\mathbf{w}_0-{\mathbf{w}^{(q)}}^*\|_{(\mathbf{H}_{0:k^*}^{(q)})^{-1}}^2+4\|\mathbf{w}_0-{\mathbf{w}^{(q)}}^*\|_{\mathbf{H}_{{k^*}:\infty}^{(q)}}^2,
        \end{aligned}
    \end{equation*}
    where
    \begin{gather*}
            \tilde{\epsilon}=8\epsilon_o(1+\epsilon_p)(1+\epsilon_a)+8\epsilon_p+4\epsilon_a(1+\epsilon_p),\\
            {\sigma_M^{(q)}}^2=\frac{(1+4\epsilon_o)\sigma^2}{B} + \frac{\mathbb{E}_{\mathbf{w}^*}\|\mathbf{w}^*\|_\mathbf{H}^2}{1+\epsilon_d}\alpha_B\left(4\epsilon_o[(1+\epsilon_p)(1+\epsilon_a)+1]+2\epsilon_a(1+\epsilon_p)+4\epsilon_p \right).
    \end{gather*}
\end{theorem}
\begin{proof}
    The proof is completed by (\ref{eq: f mul}), Lemma \ref{lem: f6} and Lemma \ref{bias R2 bound multiplicative, master}.
\end{proof}
\section{Details of Additional Experiments}
\label{sec: exp}

\subsection{Additional Datasets}

For the supplementary experiments, we consider both synthetic and real-world datasets.

\paragraph{Synthetic dataset.}
We construct a synthetic regression dataset whose covariance spectrum follows an exponential decay. Specifically, the eigenvalues are given by
\[
    \lambda_i = e^{-i}, \qquad i = 1,2,\dots,d.
\]
This allows us to examine the behavior of our method under rapidly decaying spectral structures, complementing the polynomial-decay setting used in the main paper.

\paragraph{Real-world dataset: \texttt{Communities and Crime}.}
We additionally evaluate on the publicly available \texttt{Communities and Crime} dataset, which contains community-level statistics from across the United States. The features integrate socio-economic indicators from the 1990 U.S. Census, law-enforcement statistics from the 1990 LEMAS survey, and crime records from the 1995 FBI Uniform Crime Reporting (UCR) program.  
The task is a standard regression problem: predicting the \emph{per-capita violent crime rate} from community attributes. The dataset contains about $2000$ instances with $122$ features.

\subsection{Experimental Settings and Results}

We describe below the protocol for each additional experiment and corresponding results. In both real-world dataset and synthetic datasets, we examine how do additive vs. multiplicative quantization affect learning (generalization) performance.

\begin{itemize}[leftmargin=*]
    \item \textbf{Real-world regression (Communities and Crime).}  
    We apply both \emph{additive} and \emph{multiplicative} quantization schemes (with fixed quantization error level $\varepsilon=0.01$) to the regression task on \texttt{Communities and Crime} dataset. For each quantization method, we evaluate the resulting population risk $\mathbb{E}_{\mathbf{x},y}\left[\left(y-\langle \mathbf{w},\mathbf{x} \rangle \right)^2\right]$. As illustrated in Figure \ref{community_plot}, the results demonstrate that, unlike additive quantization, the multiplicative scheme successfully maintains the performance of full-precision SGD. This aligns with our theoretical finding that multiplicative quantization exhibits greater tolerance to quantization error level.

    \item \textbf{Effect of quantization on data spectrum.}  
    Using the same settings on \texttt{Communities and Crime} dataset, we record the resulting empirical covariance spectra to study how each quantization type perturbs the underlying eigenvalue structure. Results are shown in Figure \ref{spectrum}. It is shown that additive errors errors dramatically distort the spectrum of effective data covariance while multiplicative quantization errors largely preserve the spectral structure. This visualization corroborates the specific mechanism by which additive and multiplicative quantization lead to distinct generalization behaviors.

    \item \textbf{Sensitivity analysis on batch size and spectral decay.} To demonstrate the robustness of our findings, we conduct additional experiments varying the batch size and data spectrum. First, we extend the batch size to $B=10$ (with $d=200$) and vary the quantization error level $\varepsilon\in\{0.001,\,0.005,\,0.01\}$. Second, we replace the polynomial-decay spectrum with an exponential-decay synthetic dataset while keeping other settings identical. The results, shown in Figures \ref{bs10 mul}--\ref{bs10 int} (batch size) and Figures \ref{exp mul}--\ref{exp add} (spectral decay), consistently mirror the findings in the main paper: multiplicative quantization preserves the generalization performance of full-precision SGD across various quantization error levels, whereas additive quantization suffers from performance degradation as the error level increases.
\end{itemize}

\begin{figure}[t!]
  \centering
  \subfigure[\textbf{Real-world regression}]{
    \includegraphics[width=.45\textwidth]{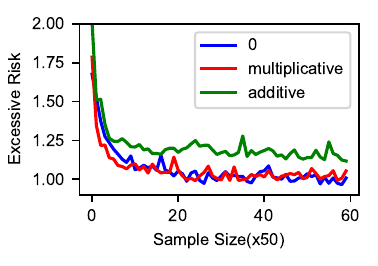}
    \label{community_plot}
  }\hspace{-.15in}
  \subfigure[\textbf{Effects on spectrum}]{
    \includegraphics[width=.45\textwidth]{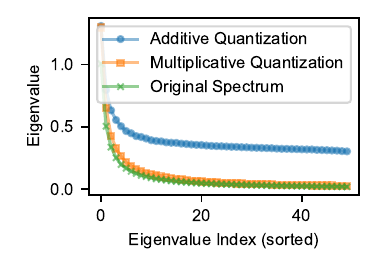}
    \label{spectrum}
  }
  \vskip -.15in
  \caption{\textbf{Comparison between multiplicative quantization and additive quantization}. (a): Real-world regression (Communities and Crime). (b): Effect of quantization on data spectrum.}\vspace{-.2in}
\end{figure}
\begin{figure}[t!]
  \centering
  \subfigure[\textbf{Multiplicative} (FP-like)]{
    \includegraphics[width=.24\textwidth]{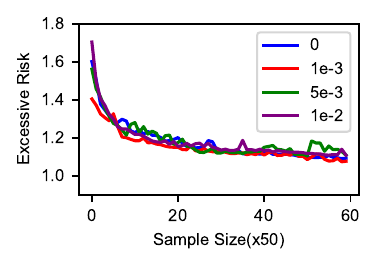}
    \label{bs10 mul}
  }\hspace{-.15in}
  \subfigure[\textbf{Additive} (INT-like) ]{
    \includegraphics[width=.24\textwidth]{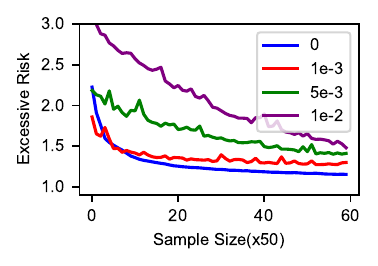}
    \label{bs10 int}
  }\hspace{-.15in}
  \subfigure[\textbf{Multiplicative} (FP-like)]{
    \includegraphics[width=.24\textwidth]{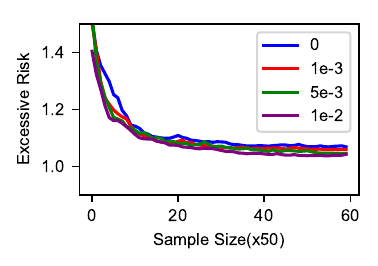}
    \label{exp mul}
  }\hspace{-.15in}
  \subfigure[\textbf{Additive} (INT-like)]{
    \includegraphics[width=.24\textwidth]{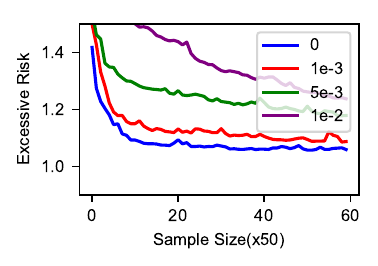}
    \label{exp add}
  }
  \vskip -.15in
  \caption{\textbf{Generalization under quantization.}
  Test risk for SGD with iterate averaging under multiplicative (FP-like) vs.\ additive (INT-like) quantization.
(a) and (b): vary the quantization level at fixed $B=10$.
(c) and (d): vary the quantization level under exponential decay.}\vspace{-.2in}
\end{figure}

\end{document}